\def\eqref#1{equation~\ref{#1}}
\def\1{\bm{1}}
\DeclareMathAlphabet{\mathsfit}{\encodingdefault}{\sfdefault}{m}{sl}
\SetMathAlphabet{\mathsfit}{bold}{\encodingdefault}{\sfdefault}{bx}{n}
\newtheorem{lemma}{Lemma}
\newcommand{\miniscule}{\fontsize{6}{7}\selectfont}
\title{
Linear combinations of latents in generative models: subspaces and beyond
}
\author{
  Erik Bodin\textsuperscript{1}
  \And
  Alexandru Stere\textsuperscript{4}
  \And
  Dragos D. Margineantu\textsuperscript{5} 
  \AND
  \hspace{8em}
  Carl Henrik Ek\textsuperscript{1,3}
  \And
  Henry Moss\textsuperscript{1,2}
}
\begin{document}

\maketitle

\vspace{-1.75em}
\hspace{2.25em}
\textsuperscript{1}University of Cambridge
\hspace{3em}
\textsuperscript{2}Lancaster University
\hspace{3em}
\textsuperscript{3}Karolinska Institutet
\hspace{3em}
\vspace{0.75em}
\\
\makebox[\textwidth][l]{\hspace{8.75em} 
\textsuperscript{4}Boeing Commercial Airplanes
\hspace{3em}
\textsuperscript{5}Boeing AI}
\vspace{2.5em}

\begin{abstract}
Sampling from generative models 
has become a crucial tool for applications like data synthesis and augmentation. Diffusion, Flow Matching and Continuous Normalising Flows have shown effectiveness across various modalities, and rely on latent variables for generation. For experimental design or creative applications that require more control over the generation process, it has become common to manipulate the latent variable directly. However, existing approaches for performing such manipulations (e.g. interpolation or forming low-dimensional representations) only work well in special cases or are network or data-modality specific. 
We propose Latent Optimal Linear combinations (LOL) as a general-purpose method to form linear combinations of latent variables that adhere to the assumptions of the generative model. As LOL is easy to implement and naturally addresses the broader task of forming \textbf{any} linear combinations, e.g. the construction of subspaces of the latent space, LOL dramatically simplifies the creation of expressive low-dimensional representations of high-dimensional objects.
\end{abstract}

\section{Introduction}

Generative models are a cornerstone of machine learning, with diverse applications including image synthesis, data augmentation, and creative content generation. Diffusion models~\citep{ho2020denoising, song2020denoising, song2020score} have emerged as a particularly effective approach to generative modeling for various modalities, such as for images~\citep{ho2020denoising}, audio~\citep{kong2020diffwave}, video~\citep{ho2022video}, and 3D models~\citep{luo2021diffusion}. A yet more recent approach to generative modelling is Flow Matching~\citep{lipman2022flow}, built upon Continuous Normalising Flows~\citep{chen2018neural}, that generalises diffusion to allow for different probability paths between data and latent distribution, e.g. defined through optimal transport~\citep{gulrajani2017improved, villani2009optimal}. 

As well as generation, these models allow inversion where, by running the generative procedure in the opposite direction, data objects can be transformed deterministically into a corresponding realisation in the latent space. Such invertible connections between latent and data space provide a convenient mechanism for controlling generated objects by manipulating their latent vectors.  
The most common manipulation is to attempt semantically meaningful interpolation of two generated objects~\citep{song2020denoising, song2020score, luo2021diffusion} by interpolating their corresponding latent vectors. However, the optimal choice of interpolant remains an open question, with simple approaches like linear interpolation leading to intermediates for which the model fails to generate plausible objects.

\cite{white2016sampling} argues that poor quality generation under linear interpolation is due to a mismatch between the norms of the intermediate vectors and those of the Gaussian vectors that the model has been trained to expect. Indeed, it is well-known that the squared norm of a $D$-dimensional unit Gaussian follows the chi-squared distribution. Consequently, likely samples are concentrated in a tight annulus around a radius $\sqrt D$ --- a set which is not closed under linear interpolation.
Because of this it is common to rely instead on spherical interpolation~\citep{shoemake1985animating} (SLERP) that maintains similar norms as the endpoints.
Motivated by poor performance of interpolation between the latent vectors provided by inverting natural images, alternatives to SLERP have recently been proposed~\citep{samuel2024norm,zheng2024noisediffusion}. 
However,  
these procedures are costly, have parameters to tune, and --- like SLERP --- are challenging to generalise to other types of manipulations beyond interpolation, such as constructing the subspaces needed for latent space optimisation methods~\citep{gomez2018automatic}, or adapting to more diverse latent distributions beyond Gaussian.

In this work, we demonstrate that successful generation from latent vectors is strongly dependent on whether their broader statistical characteristics match those of samples --- a stronger condition than just having e.g. likely norms. We show that effective manipulation of latent spaces can be achieved by following the simple guiding principle of \textit{adhering to the modelling assumptions of the generation process}. Our primary contributions are as follows:
\begin{itemize}
    \item We show that even if a latent leads to a valid object upon generation, 
    it can fail to provide effective interpolation if it lacks the typical characteristics of samples from the latent distribution --- as diagnosed by statistical distribution tests.
    \item We introduce Latent Optimal Linear combinations (LOL) --- an easy-to-implement method for ensuring that interpolation intermediates continue to match the latent distribution.
        \item 
    We show that LOL --- a closed-form transform that makes no assumptions about the network structure or data modality --- constitutes a Monge optimal transport map for linear combinations to a general class of latent distributions.

    \item We demonstrate that LOL can be applied to define general linear combinations beyond interpolations, including centroids (Figure~\ref{fig:car_wheel_centroids}) and, in particular, to define meaningful low-dimensional latent representations (Figure~\ref{fig:car_grid}) --- a goal achieved previously with significant expense and only for specific architectures.

    Implementation and examples:~\href{https://github.com/bodin-e/linear-combinations-of-latents}{https://github.com/bodin-e/linear-combinations-of-latents}

\end{itemize}

\begin{figure}[ht]
    \centering
    \begin{minipage}[b]{0.058\textwidth}
        \centering
        {\tiny $\bm{x}_1$}      
        \includegraphics[width=\textwidth]{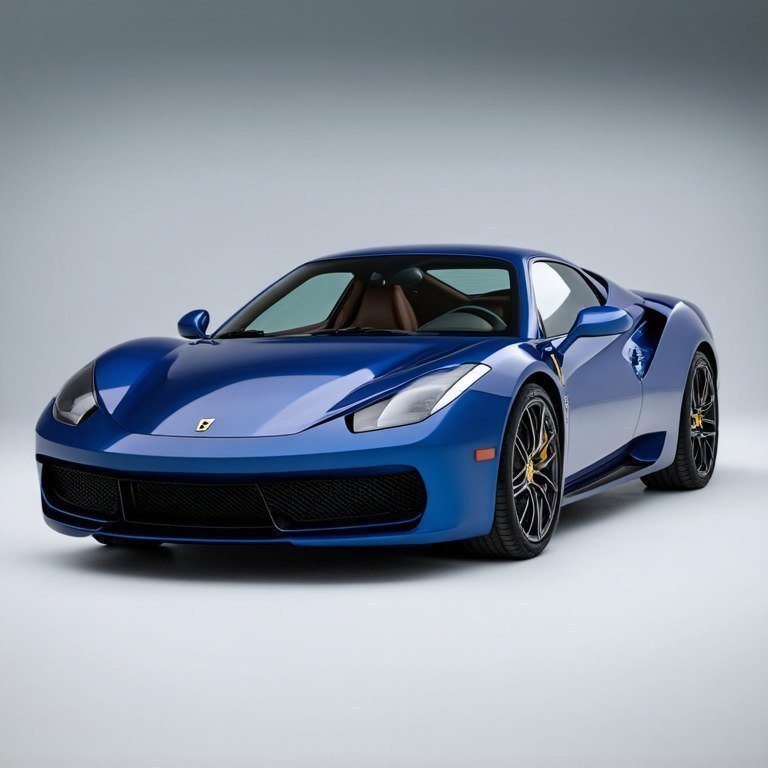}
        
        {\tiny $\bm{x}_2$}
        \includegraphics[width=\textwidth]{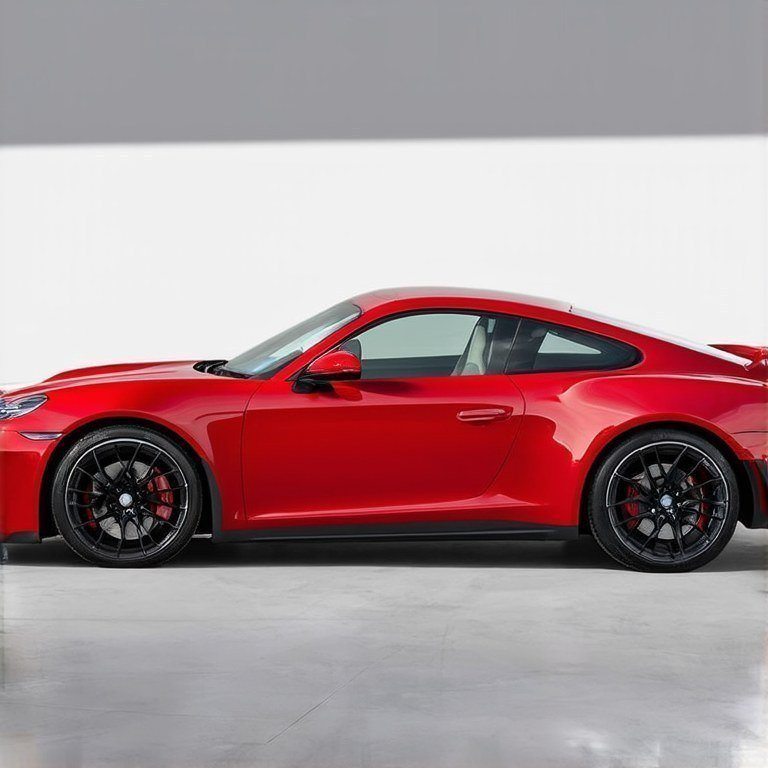}
        {\tiny $\bm{x}_3$}
        \includegraphics[width=\textwidth]{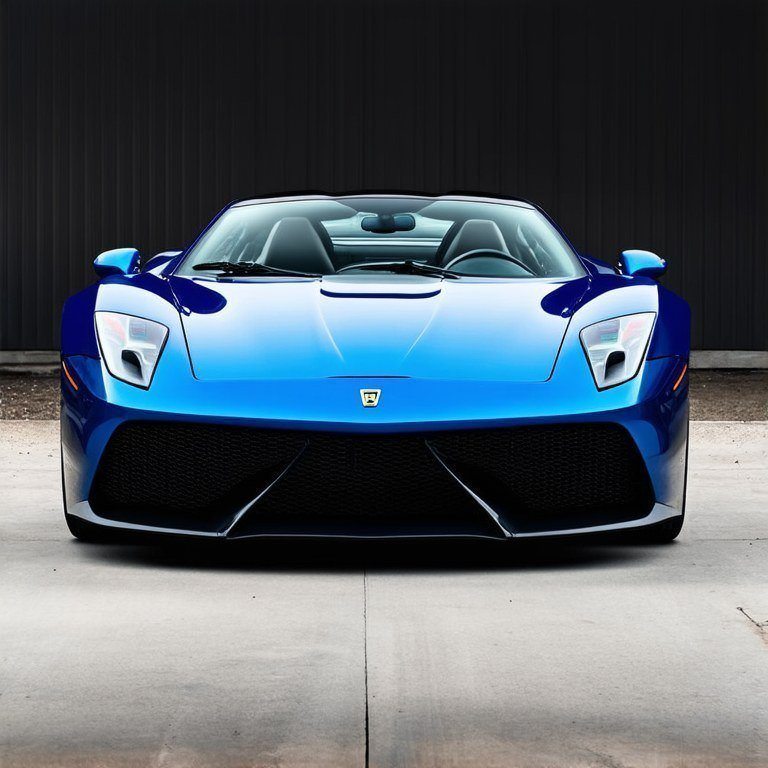}
        {\tiny $\bm{x}_4$}
        \includegraphics[width=\textwidth]{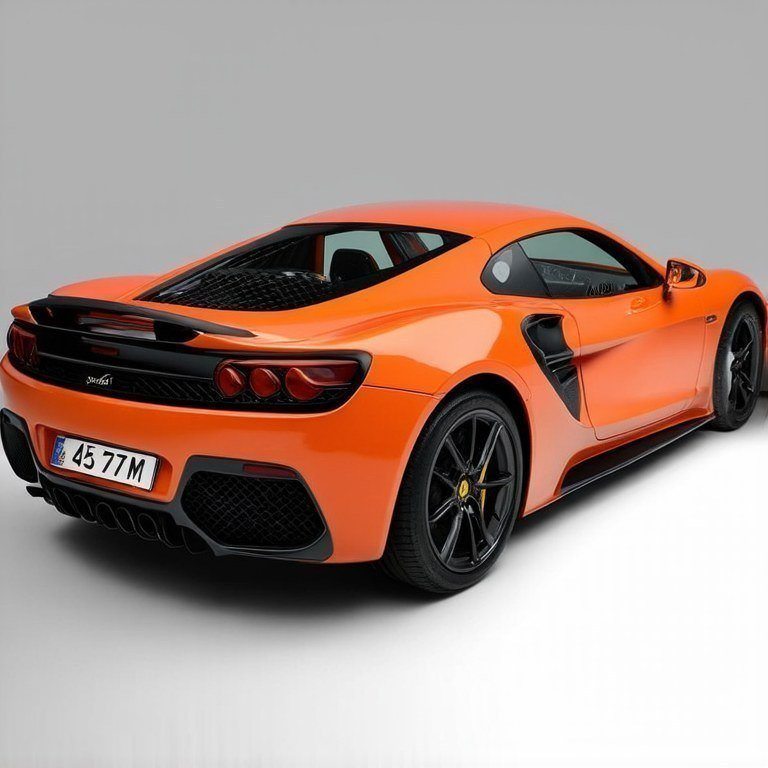}
        {\tiny $\bm{x}_5$}
        \includegraphics[width=\textwidth]{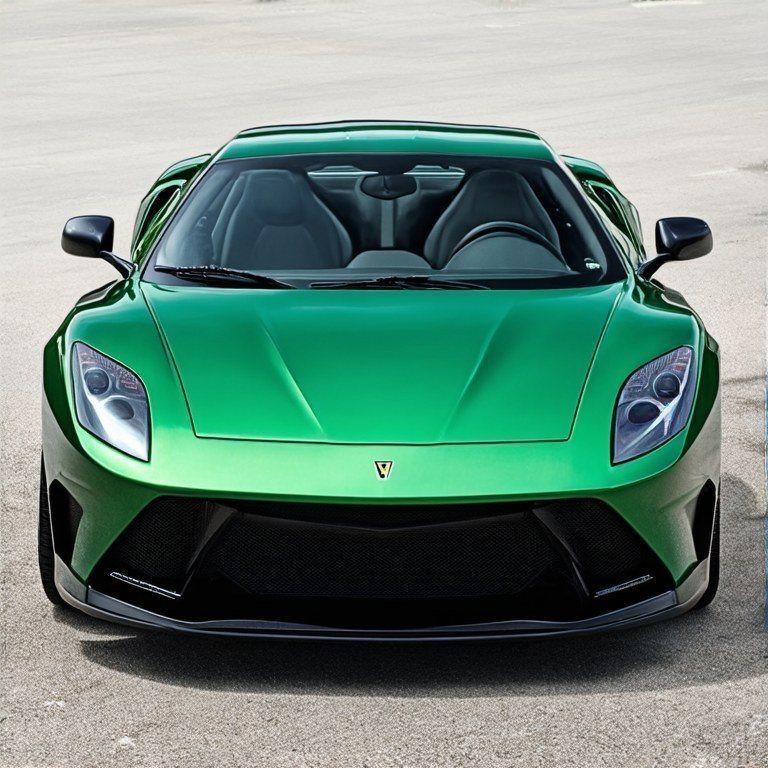}
    \end{minipage}
    \hspace{0.001\textwidth} 
    \begin{minipage}[b]{0.44\textwidth}
    \includegraphics[width=\textwidth]{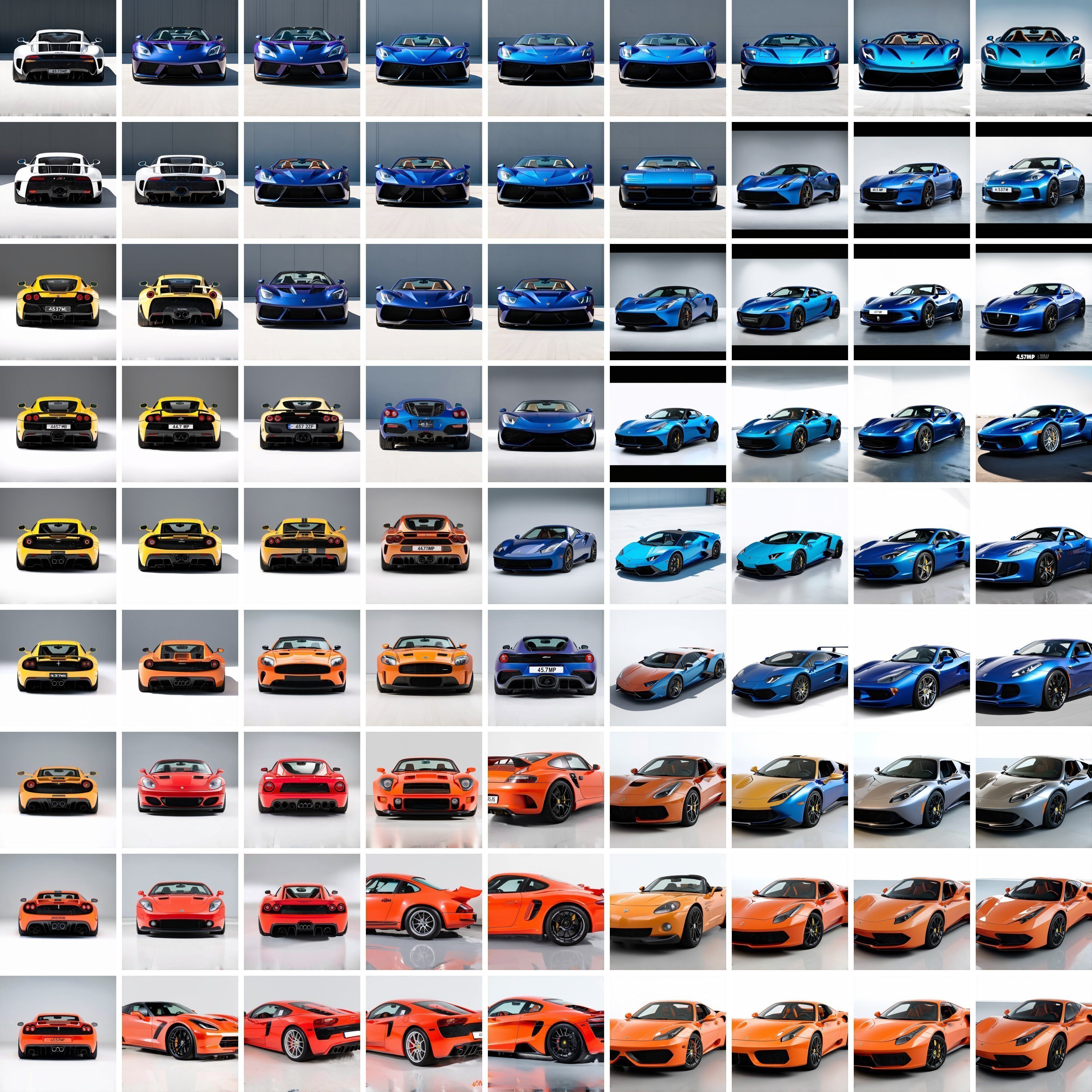}
    \end{minipage}
    \hspace{0.001\textwidth} 
    \begin{minipage}[b]{0.44\textwidth}
    \includegraphics[width=\textwidth]{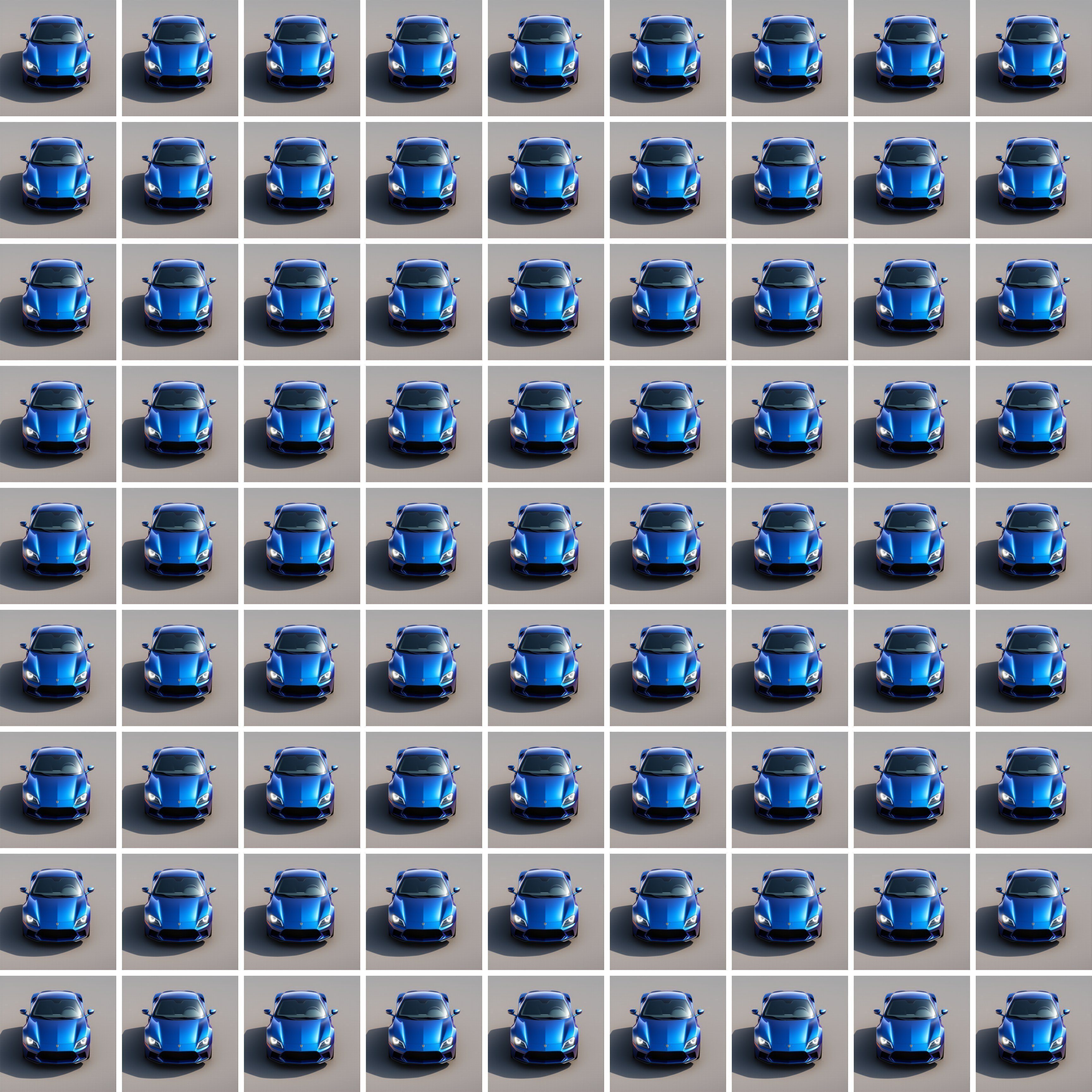}
    \end{minipage}
    \caption{
    \textbf{Low-dimensional latent subspaces}. 
    A 5-dimensional subspace from the flow matching model Stable Diffusion 3~\citep{esser2024scaling} extracted using LOL (left) from the latents corresponding to images $\bm{x}_1, \dots, \bm{x}_5$. The left plot show generations from uniform grid points across an axis-aligned slice of the subspace coordinate system, centered around the coordinate for $\bm{x}_1$. Each coordinate in the subspace correspond to a linear combination of latents, which define basis vectors. The right plot shows the corresponding subspace \emph{without} the proposed LOL transformation. See Figure~\ref{fig:rocking_chair}, Figure~\ref{fig:shapenet_subspace} and Section~\ref{sec:additional_qual} in the appendix for additional examples.
    }
    \label{fig:car_grid}
\end{figure}

\begin{figure}[ht]
    \vspace{-1em}
    \centering
    \begin{minipage}[b]{0.1190\textwidth}
    \centering
    {\tiny $\bm{x}_1$}
    \includegraphics[width=\textwidth]{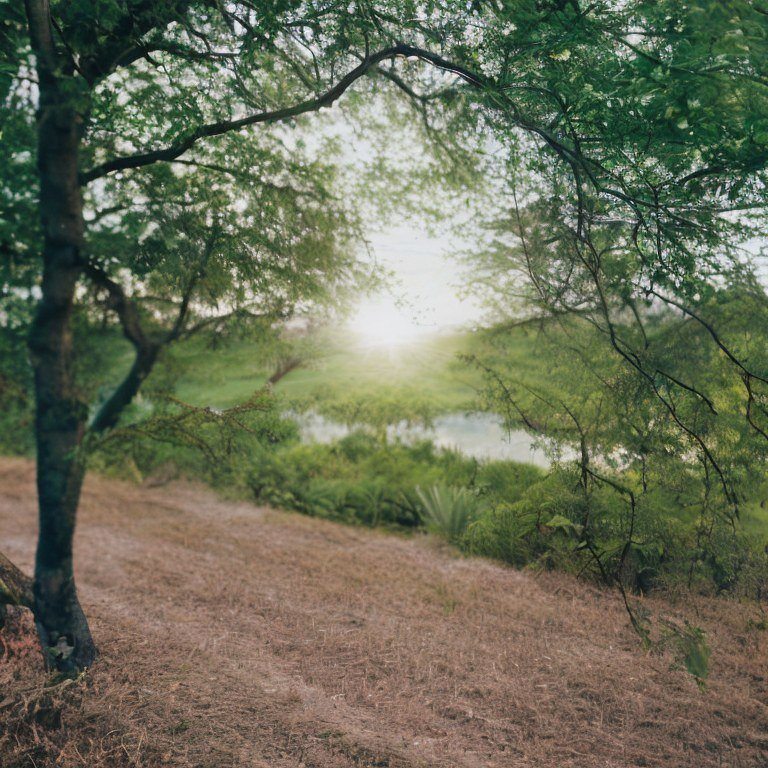}
    \end{minipage}
    \begin{minipage}[b]{0.1190\textwidth}
    \centering
    {\tiny $\bm{x}_2$}
    \includegraphics[width=\textwidth]{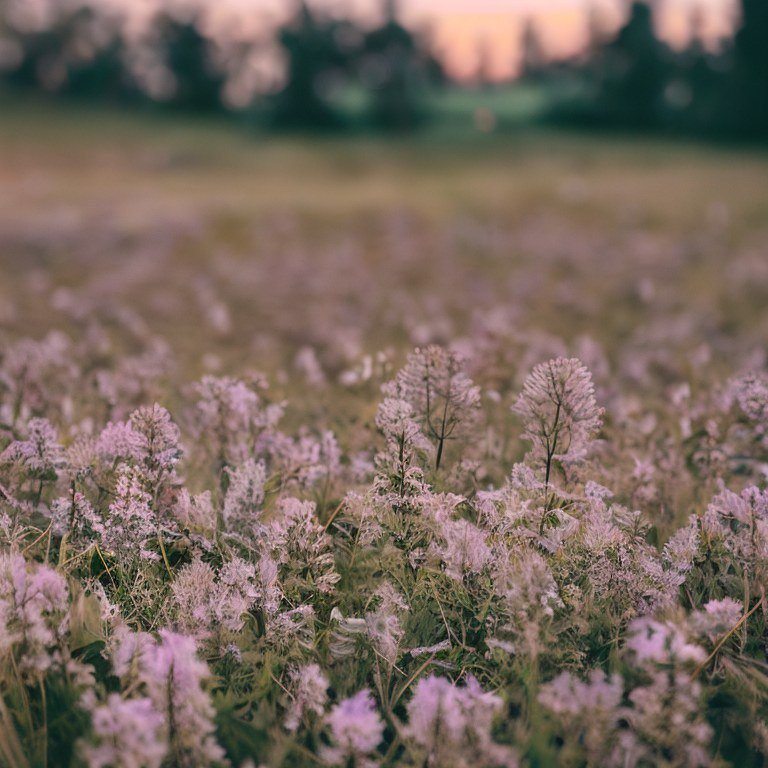}
    \end{minipage}
    \begin{minipage}[b]{0.1190\textwidth}
    \centering
    {\tiny $\bm{x}_3$}
    \includegraphics[width=\textwidth]{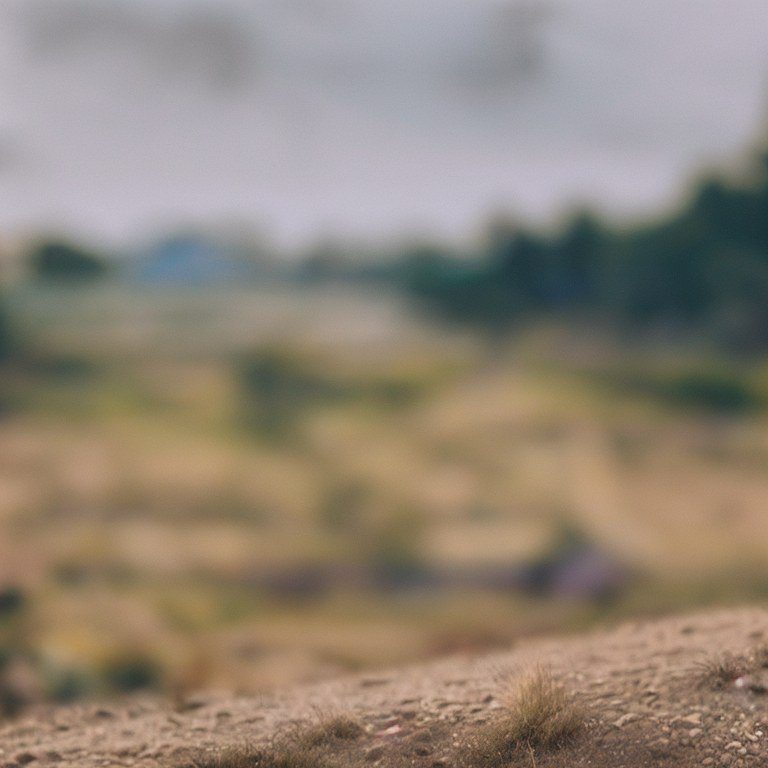}
    \end{minipage}
    \begin{minipage}[b]{0.1190\textwidth}
    \centering
    {\tiny Euclidean}
    \includegraphics[width=\textwidth]{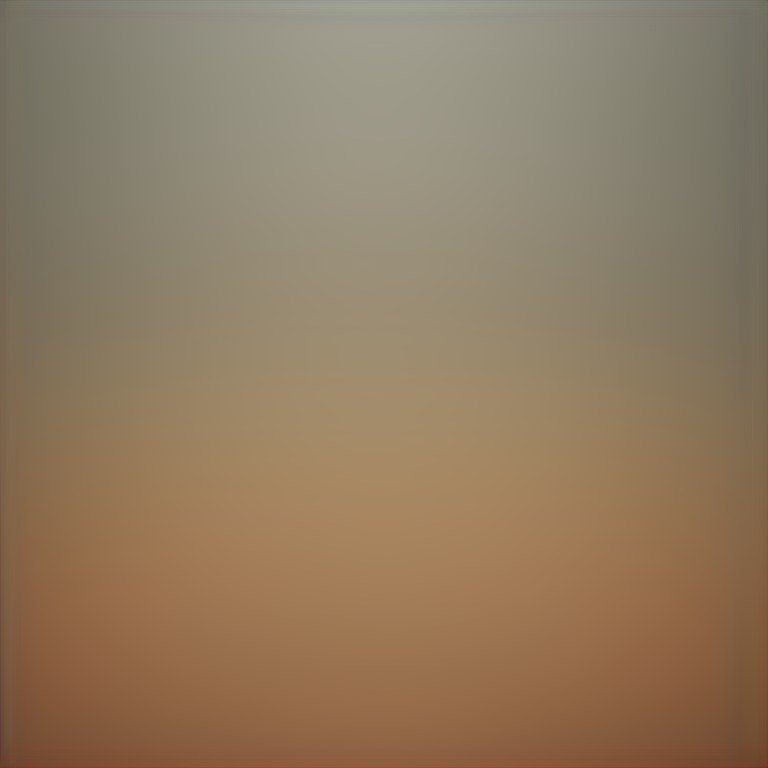}
    \end{minipage}
    \begin{minipage}[b]{0.1190\textwidth}
    \centering
    {\tiny S. Euclidean}
    \includegraphics[width=\textwidth]{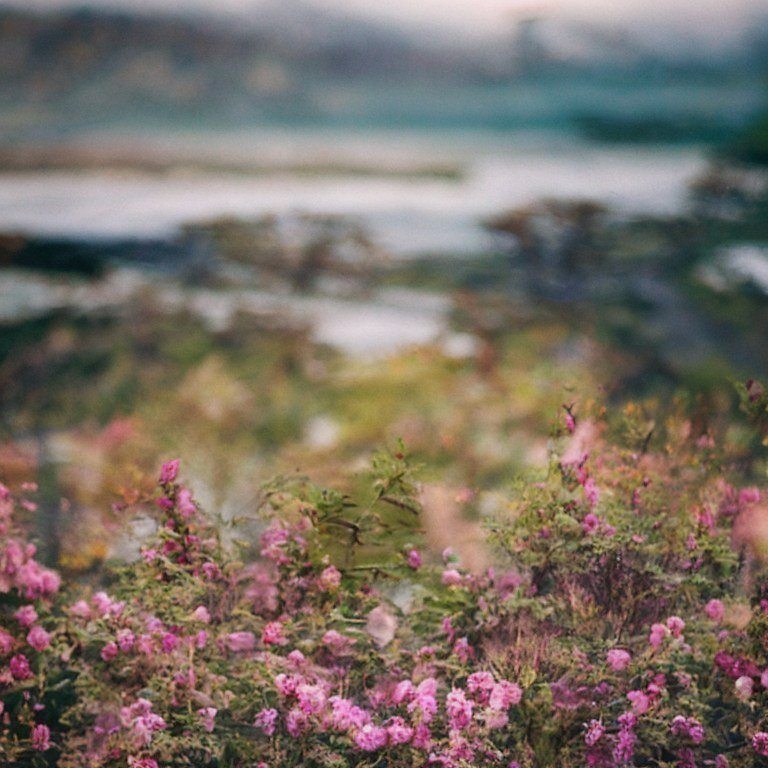}
    \end{minipage}
    \begin{minipage}[b]{0.1190\textwidth}
    \centering
    {\tiny M. n. Euclidean}
    \includegraphics[width=\textwidth]{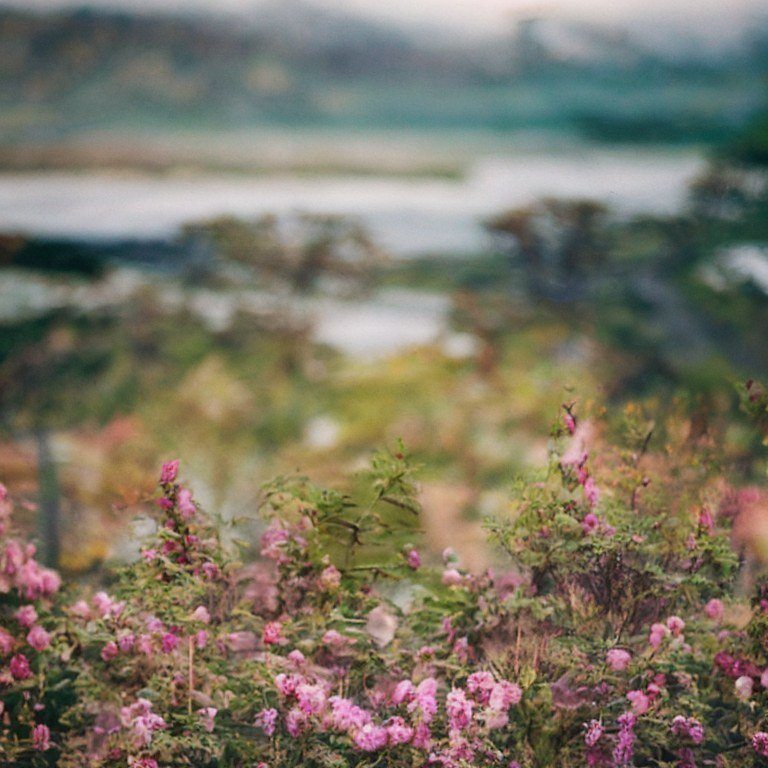}
    \end{minipage}
    \begin{minipage}[b]{0.1190\textwidth}
    \centering
    {\tiny NAO}
    \includegraphics[width=\textwidth]{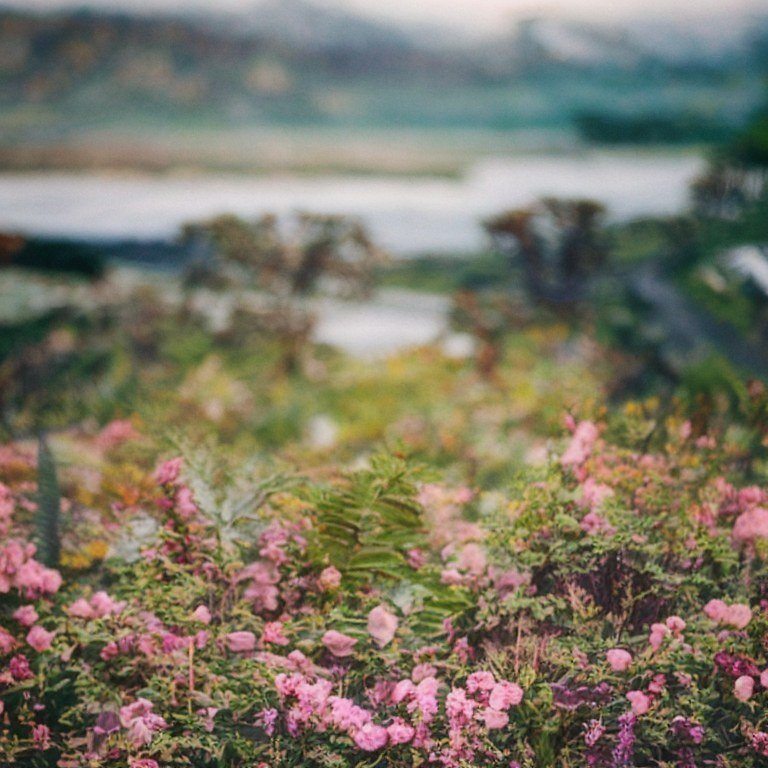}
    \end{minipage}
    \begin{minipage}[b]{0.1190\textwidth}
    \centering
    {\tiny LOL}
    \includegraphics[width=\textwidth]{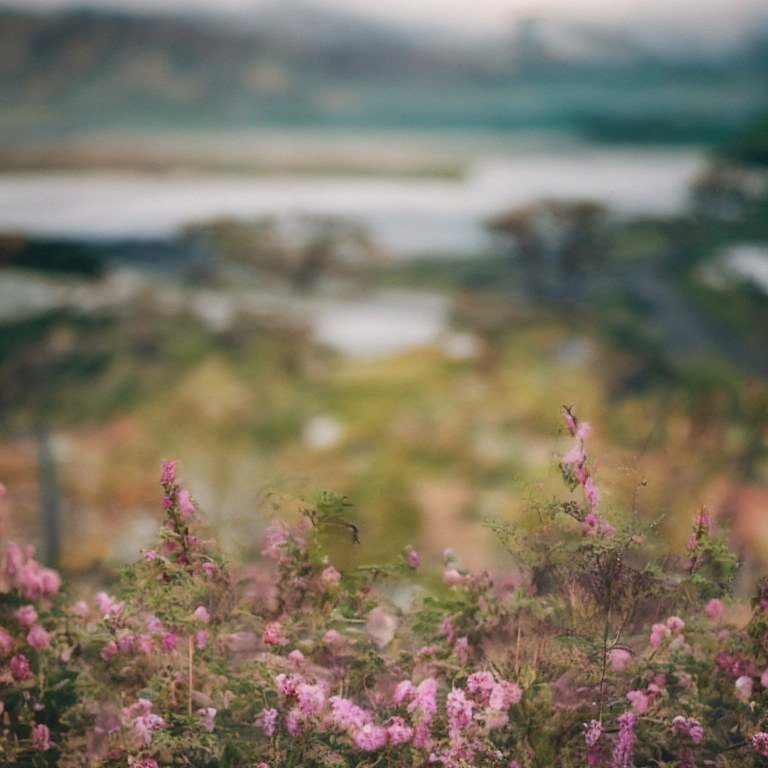}
    \end{minipage}
    \centering
    \begin{minipage}[b]{0.1190\textwidth}
    \centering
    \includegraphics[width=\textwidth]{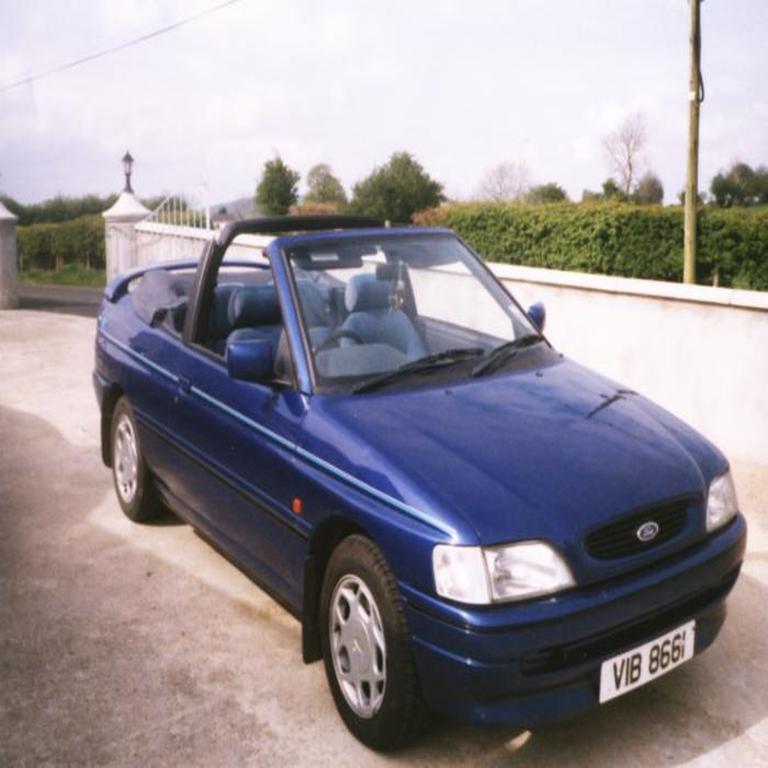}
    \end{minipage}
    \begin{minipage}[b]{0.1190\textwidth}
    \centering
    \includegraphics[width=\textwidth]{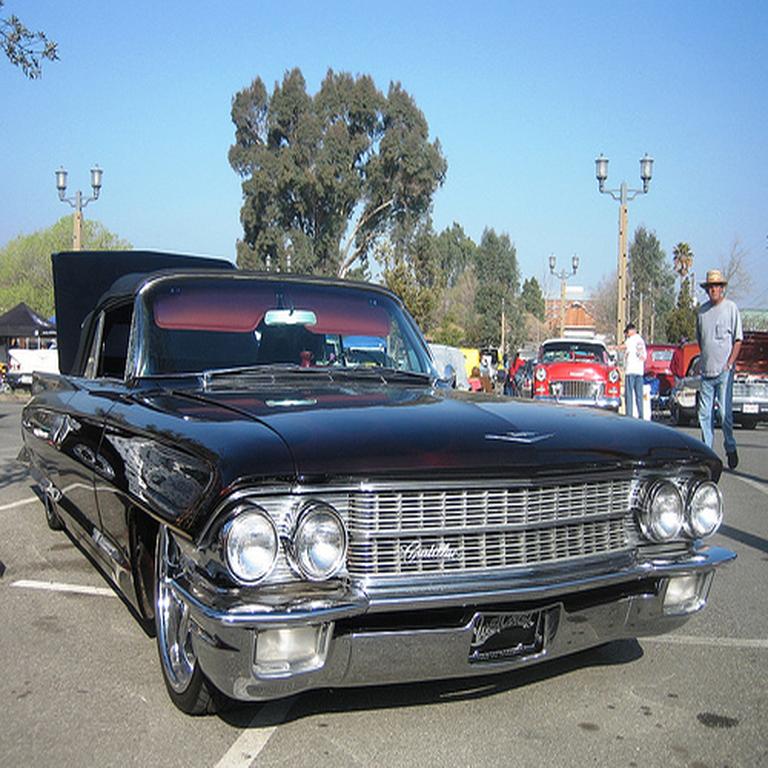}
    \end{minipage}
    \begin{minipage}[b]{0.1190\textwidth}
    \centering
    \includegraphics[width=\textwidth]{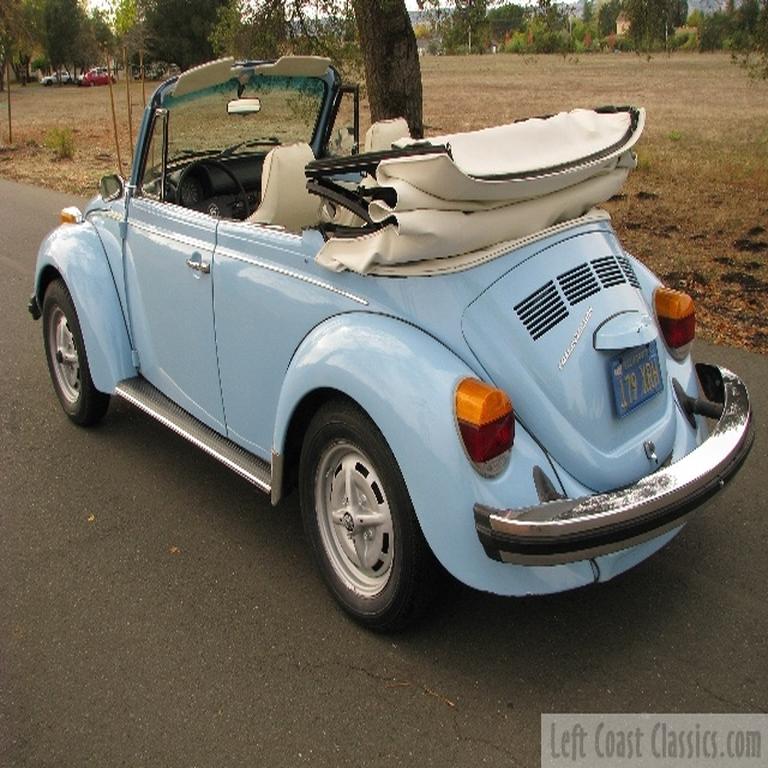}
    \end{minipage}
    \begin{minipage}[b]{0.1190\textwidth}
    \centering
    \includegraphics[width=\textwidth]{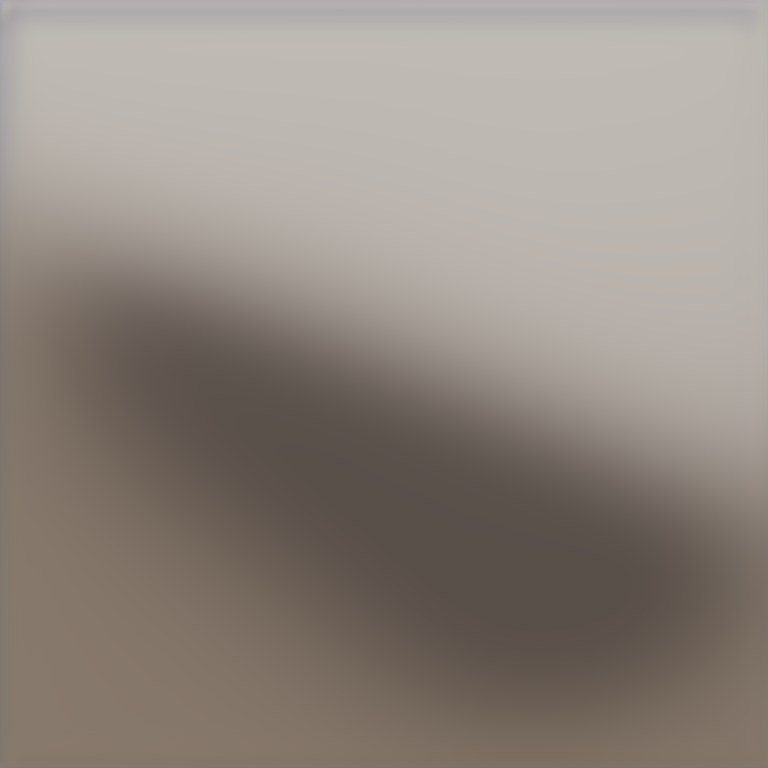}
    \end{minipage}
    \begin{minipage}[b]{0.1190\textwidth}
    \centering
    \includegraphics[width=\textwidth]{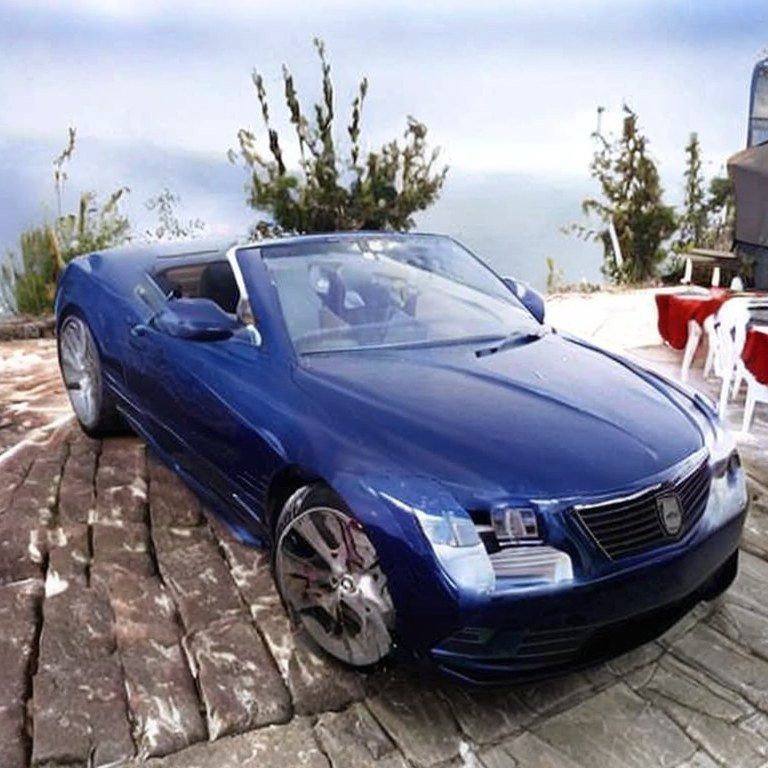}
    \end{minipage}
    \begin{minipage}[b]{0.1190\textwidth}
    \centering
    \includegraphics[width=\textwidth]{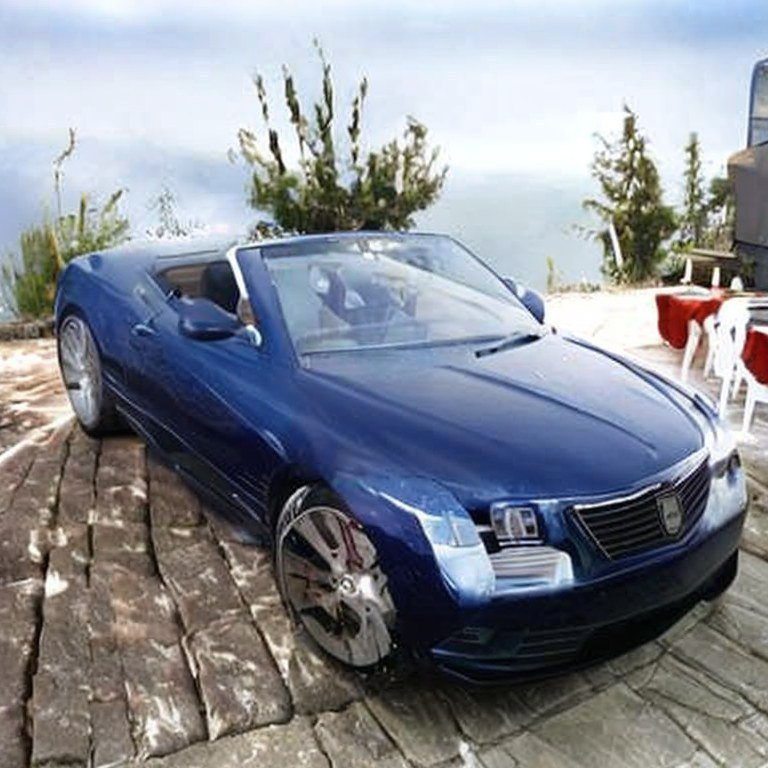}
    \end{minipage}
    \begin{minipage}[b]{0.1190\textwidth}
    \centering
    \includegraphics[width=\textwidth]{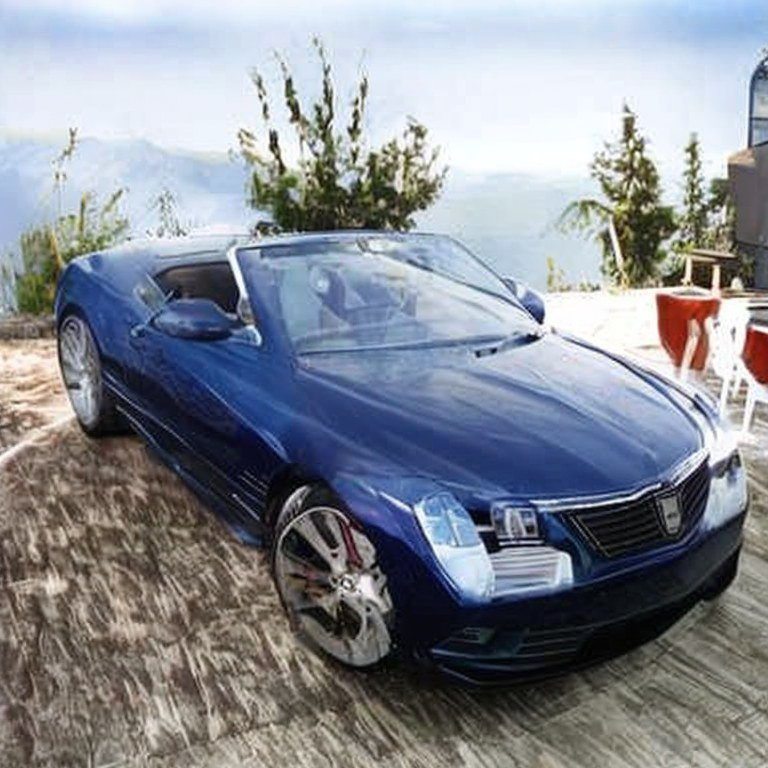}
    \end{minipage}
    \begin{minipage}[b]{0.1190\textwidth}
    \centering
    \includegraphics[width=\textwidth]{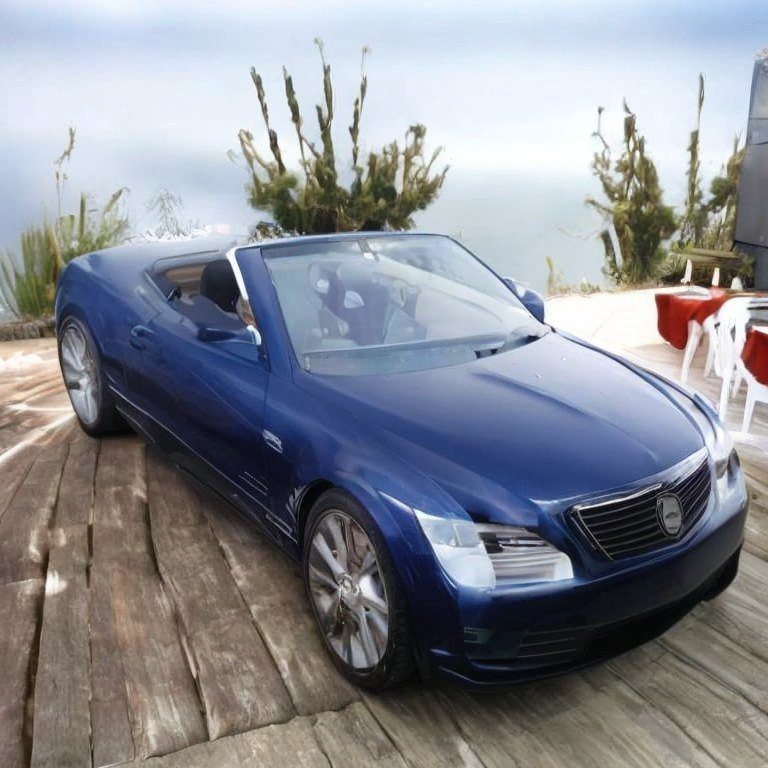}
    \end{minipage}
    \caption{
    \textbf{Centroid determination}.
    Generation using Stable Diffusion 2.1~\citep{rombach2022high} from the centroid of the latents corresponding to images $\bm{x}_1$, $\bm{x}_2$, $\bm{x}_3$ using different methods. 
    Note that our proposed method removes several artifacts, such as unrealistic headlights and chassi texture. 
    }
    \label{fig:car_wheel_centroids}
\end{figure}

\section{Background}
\label{sec:background}

\subsection{Generative Modelling with Latent Variables}

The methodology in this paper applies to any generative model that transforms samples from a known distribution in order to model another distribution --- typically, a complicated distribution of high-dimensional objects. For clarity of exposition, we focus on popular Diffusion and Flow Matching models using Gaussian latents, and then extend to general latent distributions in the appendix.

\textbf{Diffusion models} learn a process that reverses the effect of gradually adding noise to training data~\citep{ho2020denoising}. The noise level is governed by a schedule designed so that the noised samples at the final time index $T$ follow the latent distribution $\bm{x}^{(T)}\sim p(\bm{x})$. To sample from the diffusion model, one starts by drawing a sample from the latent distribution before iteratively evaluating the reverse process with learnt parameters $\bm{\theta}$, denoising step-by-step, generating a sequence  $\{\bm{x}^{(T)}, \bm{x}^{(T-1)}, \ldots, \bm{x}^{(0)}\}$
\begin{equation*}
\bm{x}^{(t-1)} \sim p_{\bm{\theta}}(\bm{x}^{(t-1)} \mid \bm{x}^{(t)}),
\end{equation*}
finally producing a generated object $\bm{x}^{(0)}$. Diffusion has also been extended to the continuous time setting by ~\cite{song2020score}, where the diffusion is expressed as a stochastic differential equation. Note that, by using the Denoising Diffusion Implicit Model (DDIM)~\citep{song2020denoising} or the probability flow formulation in~\cite{song2020score}, the generation process can be made deterministic, i.e. the latent representation $\bm{x}^{(T)}\sim p(\bm{x})$ completely specifies the generated object $\bm{x}^{(0)}$.

\textbf{Flow Matching} \cite{lipman2022flow} is an efficient approach to train Continuous Normalising Flows (CNF)~\cite{chen2018neural} --- an alternative class of generative models that builds complicated distributions from simple latent distributions using differential equations. CNFs model the evolution of data points over continuous time using an ordinary differential equation (ODE) parameterized by a neural network, providing a deterministic relationship between latent and object space. Mathematically, the transformation of a latent sample \(\bm{x}^{(T)} \sim p(\bm{x})\) at time \(t=T\) to \(\bm{x}^{(t)}\) at time \(t\) is governed by $\bm{f}(\bm{x}^{(t)}, t; \bm{\theta})$,
where \(\bm{f}(\cdot; \bm{\theta})\) is a neural network with parameters \(\bm{\theta}\). 
Flow matching allows the transformation dynamics of the CNF to be learnt without needing to simulate the entire forward process during training, which improves stability and scalability.

Note that both diffusion and flow matching models can generate in a deterministic manner, where the realisation of the latent variable completely specifies the generated data object, e.g. the image. Moreover, by running their deterministic generation formulation in reverse, one can obtain the corresponding latent representation associated with a known object, referred to as ``inversion''. In this paper we focus on the effect of properties of the latent vectors, and their manipulation, on the generation process. 
For simplicity of notation, henceforth we drop the time index and refer to the latent variable $\bm{x}^{(T)}$ as $\bm{x}$, and use subscript indexing to denote realisations of the latent variable. 

\subsection{Interpolation of Latents}
\textbf{Linear Interpolation}. The most common latent space manipulation is interpolation, where we are given two latent vectors $\bm{x}_1$ and $\bm{x}_2$, referred to as \emph{seed latents} (or seeds), 
and obtain intermediate latent vectors. 
The simplest approach is to interpolate linearly between the seeds to get intermediates
\begin{align*}
    \bm{y}_{\textrm{lin}}^w = w \bm{x}_1 + (1 - w) \bm{x}_2\quad \textrm{for}\quad w\in [0, 1].
\end{align*}
\textbf{Spherical Interpolation}. However, as discussed in~\citep{white2016sampling} in the context of Variational Autoencoders~\citep{kingma2013auto} and Generative Adversarial Networks~\citep{goodfellow2014generative}, linear interpolation yields intermediates with unlikely norms for Gaussian samples and results in highly implausible generated objects (e.g. all-green images). Consequently, it is common to instead use spherical interpolation (SLERP)~\citep{shoemake1985animating}, which instead builds interpolants via
\begin{align*}
    \bm{y}_{\textrm{SLERP}}^w = \frac{\sin w \theta}{\sin \theta} \bm{x}_1 + \frac{\sin ((1-w) \theta)}{\sin \theta} \bm{x}_2\quad \textrm{for}\quad w\in [0, 1]  \quad \cos \theta = \frac{\langle\bm{x}_1, \bm{x}_2\rangle}{||\bm{x}_1|| ||\bm{x}_2||},
\end{align*} to maintain similar norms for the intermediates as the endpoints. SLERP has become popular and is the standard method for interpolation also in the more recent  models~\citep{song2020denoising, song2020score}.

\textbf{Norm-aware Optimisation}. Motivated by poor performance of SLERP when interpolating between the latent vectors corresponding to inverted natural images (rather than using those sampled directly from the model's latent distribution), ~\cite{zheng2024noisediffusion} propose to add additional Gaussian noise to interpolants via the inversion procedure to control a trade-off between generation quality and adherence to the seed images~(i.e. the images corresponding to the latents at the interpolation end-points). 
Alternatively, \cite{samuel2024norm} advocate for Norm-Aware Optimisation (NAO) which uses back-propagation to identify interpolation paths $\gamma:[0,1]\rightarrow\mathds{R}^d$ that solve
\begin{equation*}
    \inf_{\gamma} -\int\log\mathds{P}(\gamma(s))ds \quad \textrm{s.t.} \quad \gamma(0)=\bm{x}_1, \gamma(1)=\bm{x}_2,
\end{equation*}
where $\log\mathds{P}:\mathds{R}^d\rightarrow\mathds{R}$ is the log likelihood of the squared norm under its sampling distribution $\chi^2(D)$ for unit Gaussian samples. NAO can also calculate centroids of $K$ latents by simultaneous optimisation of paths between the centroid and each respective latent.

All the proposed interpolation methods above aim to make the intermediates adhere to statistics of Gaussian samples, however, require significant computation, have hyperparameters to tune, or lack theoretical guarantees. Moreover --- in common with spherical interpolation --- they are difficult to generalise beyond Gaussian latent variables or to more general expressions of the latents beyond interpolation, such as building subspaces based on their span --- as required to build expressive low-dimensional representations. 

In this work we will propose a simple technique that guarantees that interpolations follow the latent distribution if the original (seed) latents do, applies to a general class of latent distributions, and enables us go beyond interpolation.
Moreover, we will address how to assess the distributional assumption of the seed latents, to simplify the diagnosis of errors at the source.

\newlength{\mylength}
\setlength{\mylength}{0.11\textwidth}
\begin{figure}[ht]
    \centering
    \begin{minipage}[b]{0.485\textwidth}
    \centering
    {\tiny SD2.1 (diffusion)}
    \hrule
    \end{minipage}
    \hfill 
    \begin{minipage}[b]{0.497\textwidth}
    \centering
    {\tiny SD3 (flow matching)}
    \hrule
    \end{minipage}
    \begin{minipage}[b]{0.49\textwidth}
        \begin{minipage}[b]{0.989\textwidth}
            \centering
            \begin{minipage}[b]{\mylength}
            \centering
            {\miniscule \scalebox{0.8}{$\bm{x}^{(i)} \sim \mathcal{N}(\bm{\mu}, \bm{\Sigma})$}}
            \includegraphics[width=\textwidth]{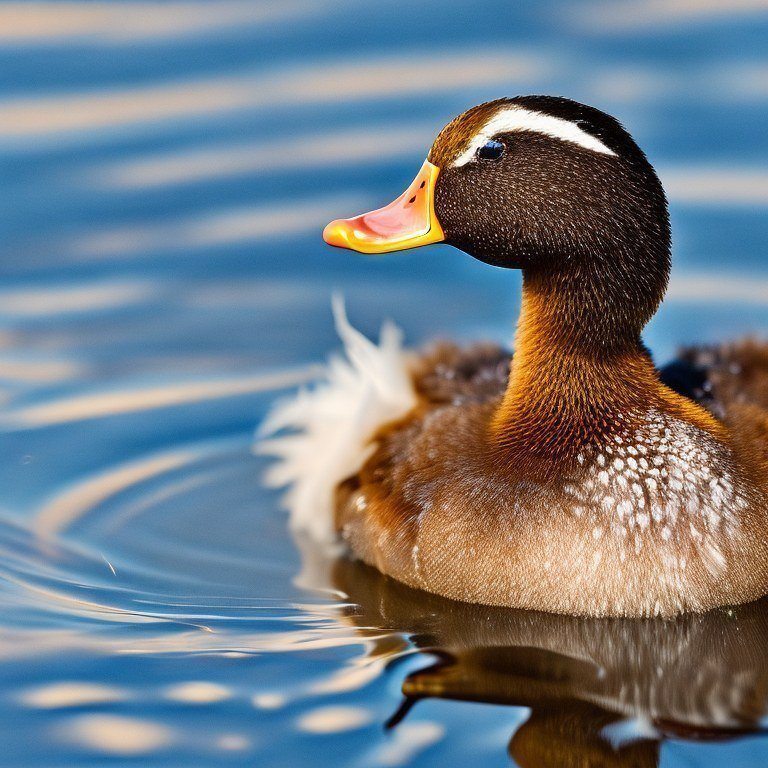}
            \raisebox{0.5em}{\miniscule 1} 
            \end{minipage}
            \begin{minipage}[b]{\mylength}
            \centering
            {\miniscule argmax \scalebox{0.8}{$p(||\cdot||)$}}
            \includegraphics[width=\textwidth]{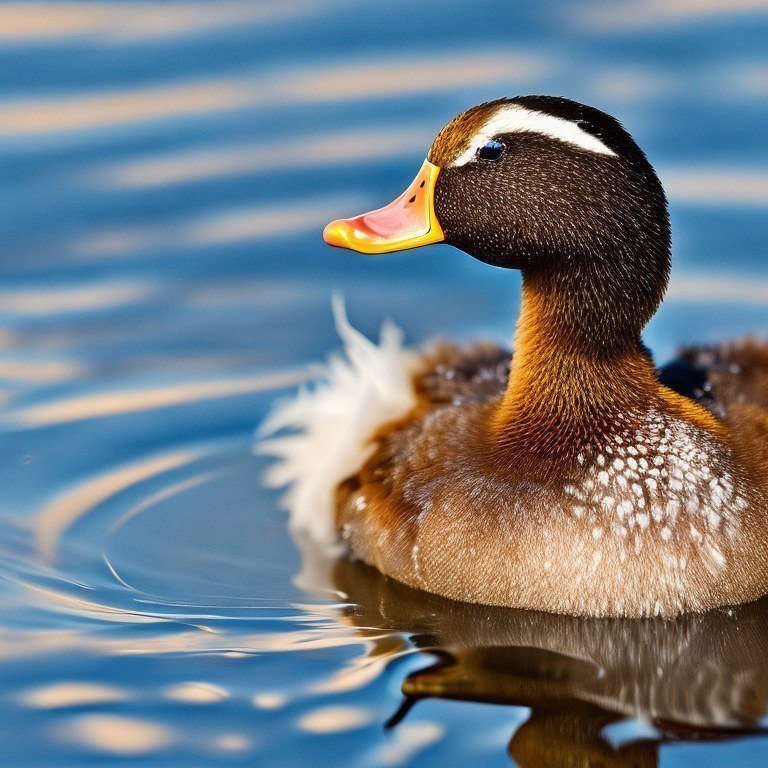}
            \raisebox{0.5em}{\miniscule 2} 
            \end{minipage}
            \begin{minipage}[b]{\mylength}
            \centering
            {\miniscule argmax \scalebox{0.8}{$p(||\cdot||)$}}
            \includegraphics[width=\textwidth]{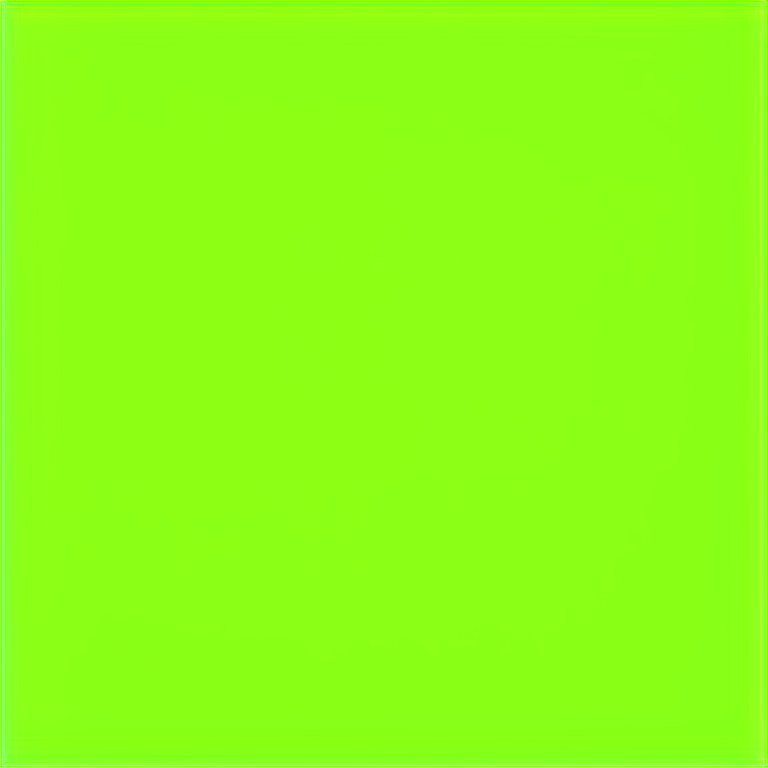}
            \raisebox{0.5em}{\miniscule 3} 
            \end{minipage}
            \begin{minipage}[b]{\mylength}
            \centering
            {\miniscule \scalebox{0.8}{$\mathcal{N}(\bm{x}_{\ast}; \bm{\mu}, \bm{\Sigma})$}}
            \includegraphics[width=\textwidth]{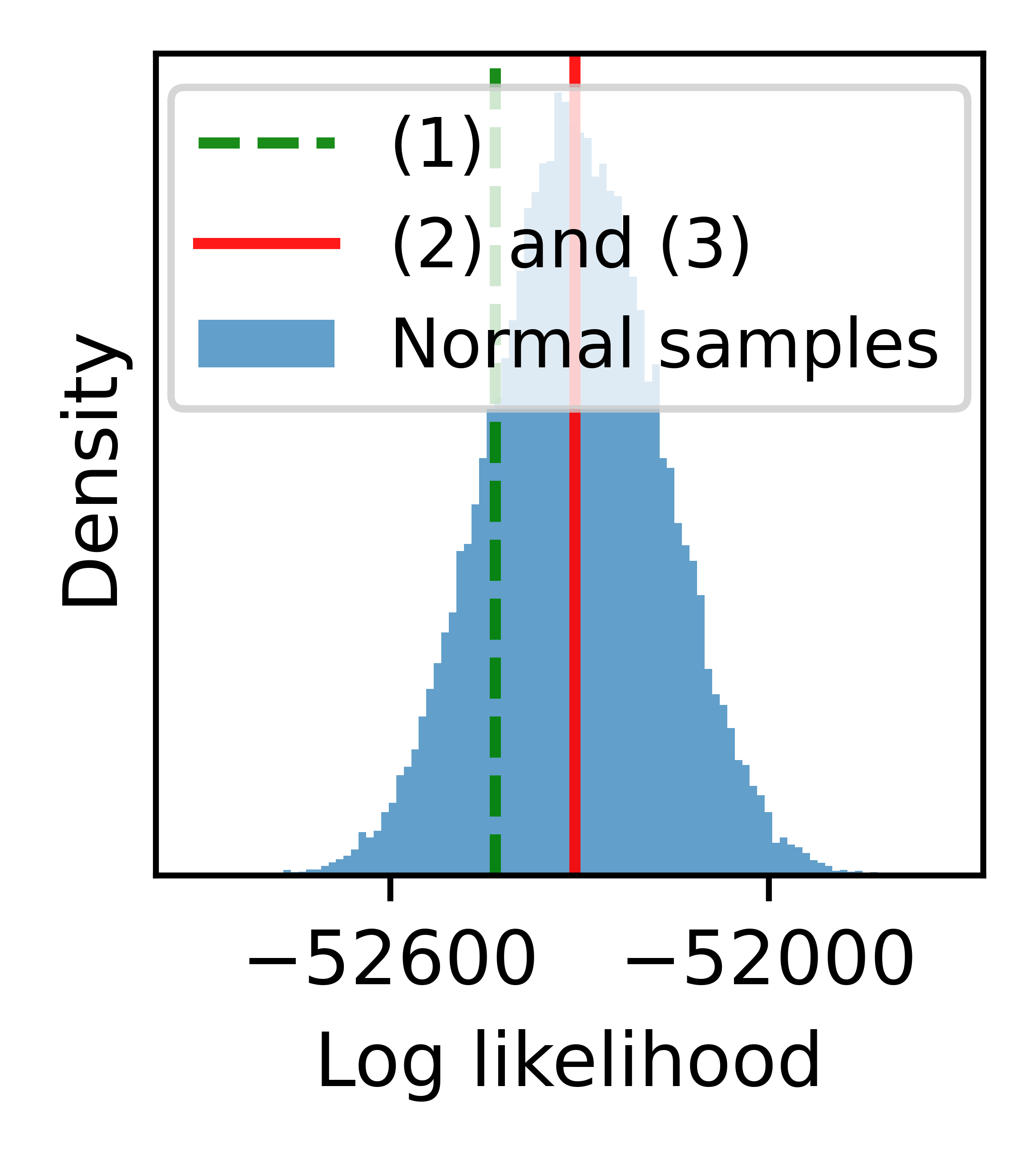}
            \vspace{-0.56em} 
            \end{minipage}
        \end{minipage}
    \end{minipage}
    \begin{minipage}[b]{0.49\textwidth}
        \begin{minipage}[b]{0.989\textwidth}
            \centering
            \begin{minipage}[b]{\mylength}
            \centering
            {\miniscule \scalebox{0.8}{$\bm{x}^{(i)} \sim \mathcal{N}(\bm{\mu}, \bm{\Sigma})$}}
            \includegraphics[width=\textwidth]{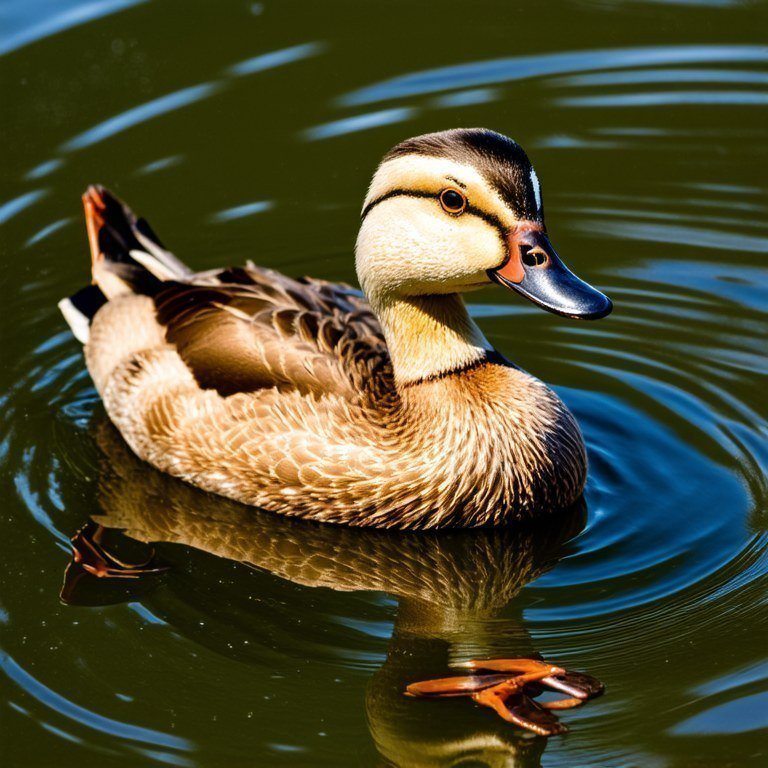}
            \raisebox{0.5em}{\miniscule 1} 
            \end{minipage}
            \begin{minipage}[b]{\mylength}
            \centering
            {\miniscule argmax \scalebox{0.8}{$p(||\cdot||)$}}
            \includegraphics[width=\textwidth]{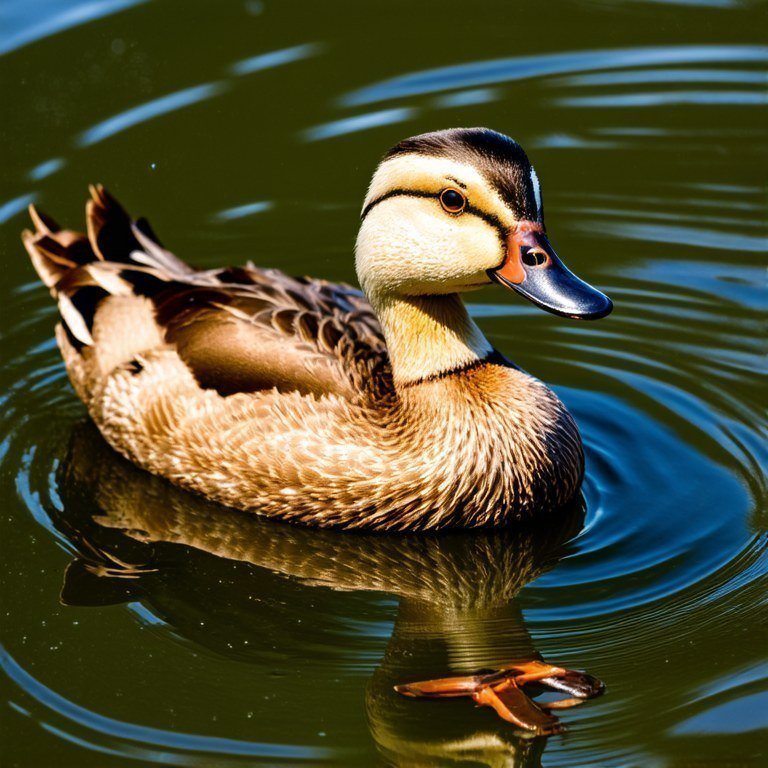}
            \raisebox{0.5em}{\miniscule 2} 
            \end{minipage}
            \begin{minipage}[b]{\mylength}
            \centering
            {\miniscule argmax \scalebox{0.8}{$p(||\cdot||)$}}
            \includegraphics[width=\textwidth]{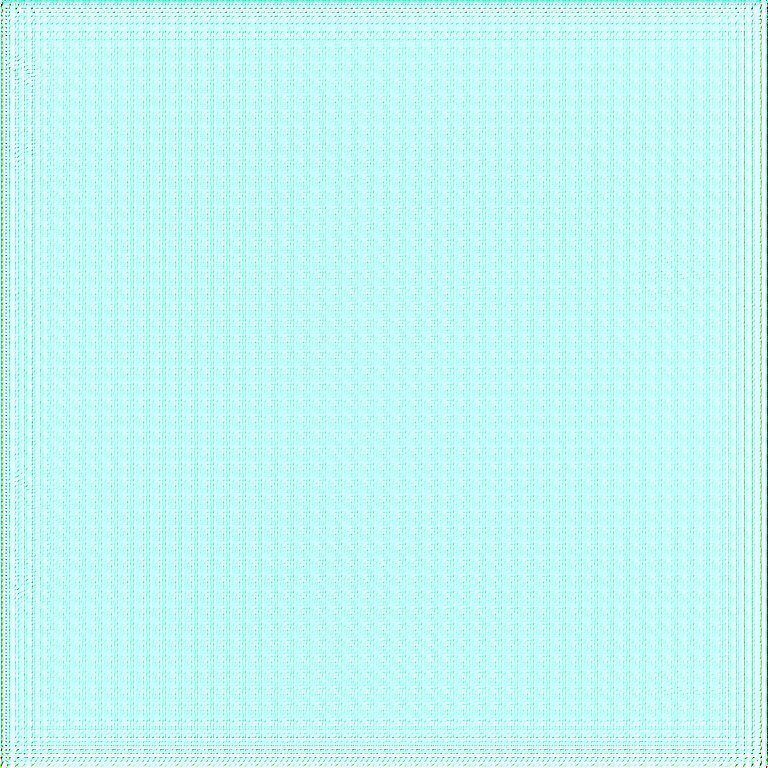}
            \raisebox{0.5em}{\miniscule 3} 
            \end{minipage}
            \begin{minipage}[b]{\mylength}
            \centering
            {\miniscule \scalebox{0.8}{$\mathcal{N}(\bm{x}_{\ast}; \bm{\mu}, \bm{\Sigma})$}}
            \includegraphics[width=\textwidth]{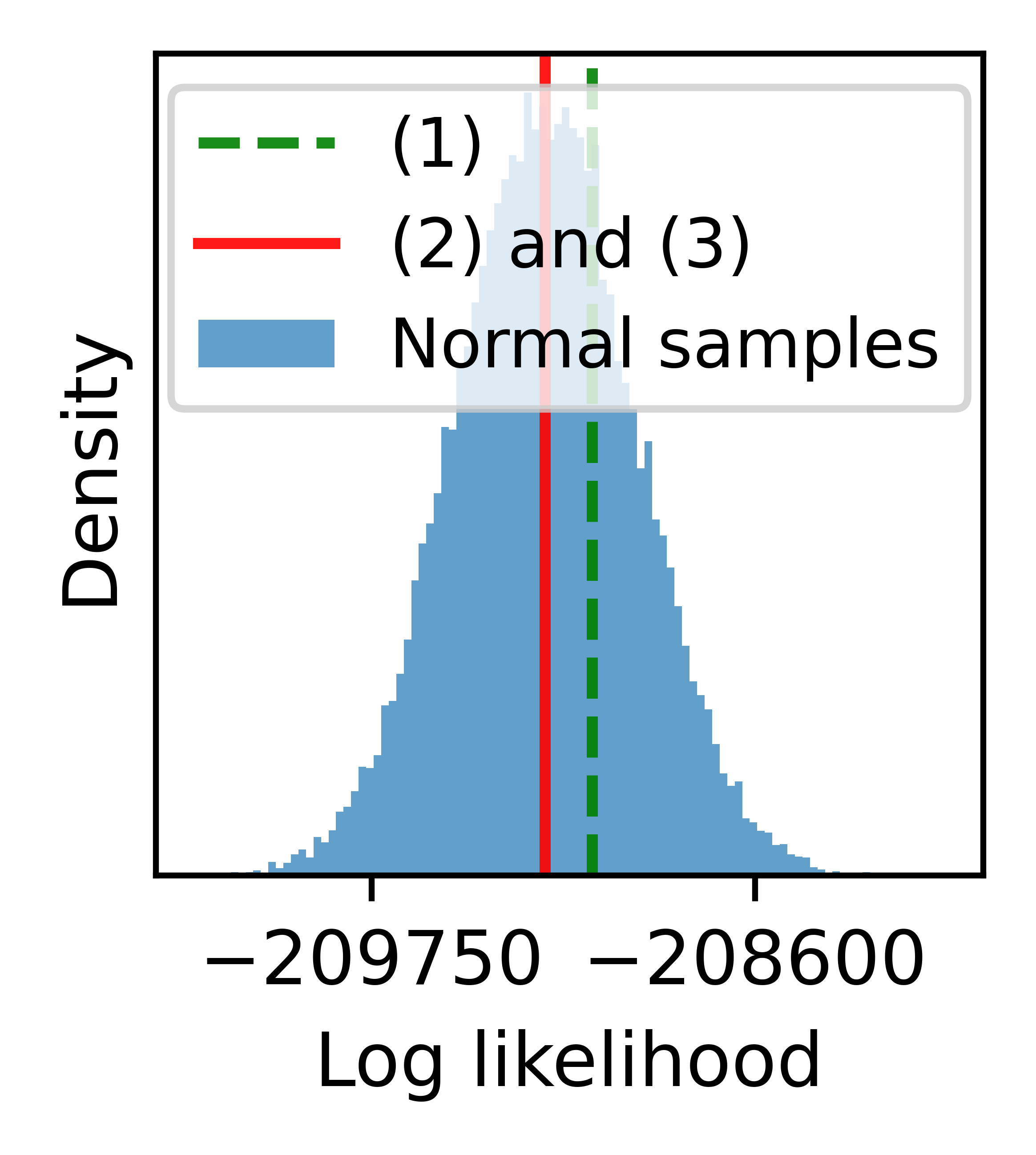}
            \vspace{-0.56em} 
            \end{minipage}
        \end{minipage}
    \end{minipage}
    \caption{
    \textbf{Likelihood and norm insufficient}.
    Column (1) of each panel shows an image generated using a random sample from the associated Gaussian latent distribution for the diffusion model of~\cite{rombach2022high} (left side) and the flow matching model of~\cite{esser2024scaling} (right side). 
    Columns (2) and (3) both show images generated from latents with the most likely norm according to their respective latent distribution. 
    Columns (2) use the same Gaussian samples as in columns (1) but rescaled to have this norm, also yielding realistic images. 
    Meanwhile, columns (3) show the failed generation from constant vectors $s\bm{I}$ scaled to have the most likely norm according to the latent distribution but lacking other characteristics (e.g. not having all-equal values) that the network was trained to expect, even though its likelihood $\mathcal{N}(s\bm{I}; \bm{\mu}, \bm{\Sigma})$ is typical of real samples. Moreover, the distribution mode $\bm{\mu}$, which also lacks needed characteristics , has vastly higher log likelihood than any realistic sample; -33875 and -135503 
    for the two models, respectively. See  Table~\ref{table:test_cases} for an example of failed generation using the mode $\bm{\mu}$. 
    }
\label{fig:norm_is_not_enough} 
\end{figure}

\section{Assessing validity of latents via distribution testing}
\label{sec:norm_not_sufficient}
 
Before introducing our proposed method, we first explore evidence for our central hypothesis that tackles a misconception underpinning current interpolation methods for Gaussian latents:

\textit{Having a latent vector with a norm that is likely for a sample from the latent distribution $\mathcal{N}(\bm{\mu}, \bm{\Sigma})$ is not a sufficient condition for plausible sample generation. Rather, plausible generation requires a latent vector with characteristics that match those of a sample from $\mathcal{N}(\bm{\mu}, \bm{\Sigma})$ more generally (as evaluated by normality tests), with a likely norm being only one such characteristic. }

The characteristics of the random variable $\bm{x}$ may be described by a collection of statistics, e.g. its mean, variance or norm. Above, we hypothesise that if a specified latent $\bm{x}_{\ast}$ has an extremely unlikely value for a characteristic for which the network has come to rely, then implausible generation will follow. A norm is unlikely if it has a low likelihood under its sampling distribution; for unit Gaussians it is $\chi(D)$, see Section~1.
While the norm has been shown to often be an important statistic~\citep{samuel2024norm}, our Figure~\ref{fig:norm_is_not_enough} illustrates on a popular diffusion model~\citep{rombach2022high} and a flow matching model~\citep{esser2024scaling} that a latent with a likely norm --- even the most likely --- can still induce failure in the generative process if other evidently critical characteristics are unmet. 

\setlength{\fboxrule}{1pt} 
\setlength{\fboxsep}{0pt}
\begin{figure}[ht]
    \centering
    \begin{minipage}[b]{\textwidth}
        \begin{minipage}[b]{0.010\textwidth}
            \raisebox{0.85\height}{\rotatebox{90}{\scriptsize }}
        \end{minipage}
        \hfill
        \begin{minipage}[b]{0.989\textwidth}
        \hspace{3.1em}
        \begin{minipage}[b]{0.910\textwidth}
        \centering
        {\tiny Inversion budget}
        \hrule
        \end{minipage}
        \end{minipage}
        \begin{minipage}[b]{0.989\textwidth}
            \hspace{0.6em} 
            \begin{minipage}[b]{0.0765\textwidth}
                \centering
                {\tiny Original}
            \end{minipage}%
            \hspace{0.010em}
            \begin{minipage}[b]{0.0765\textwidth}
                \centering
                {\tiny  1 step}
            \end{minipage}%
            \hspace{0.010em}
            \begin{minipage}[b]{0.0765\textwidth}
                \centering
                {\tiny 2 steps}
            \end{minipage}%
            \hspace{0.010em}
            \begin{minipage}[b]{0.0765\textwidth}
                \centering
                {\tiny 4 steps}
            \end{minipage}%
            \hspace{0.010em}
            \begin{minipage}[b]{0.0765\textwidth}
                \centering
                {\tiny 8 steps}
            \end{minipage}%
            \hspace{0.010em}
            \begin{minipage}[b]{0.0765\textwidth}
                \centering
                {\tiny \hspace{0.25em} 16 steps}
            \end{minipage}%
            \hspace{0.010em}
            \begin{minipage}[b]{0.0765\textwidth}
                \centering
                {\tiny \hspace{0.3em} 32 steps}
            \end{minipage}%
            \hspace{0.010em}
            \begin{minipage}[b]{0.0765\textwidth}
                \centering
                {\tiny \hspace{0.40em} 64 steps}
            \end{minipage}%
            \hspace{0.010em}
            \begin{minipage}[b]{0.0765\textwidth}
                \centering
                {\tiny \hspace{0.9em} 100 steps}
            \end{minipage}%
            \hspace{0.010em}
            \begin{minipage}[b]{0.0765\textwidth}
                \centering
                {\tiny \hspace{1.0em} 200 steps}
            \end{minipage}%
            \hspace{0.010em}
            \begin{minipage}[b]{0.0765\textwidth}
                \centering
                {\tiny \hspace{1.0em} 500 steps}
            \end{minipage}%
            \hspace{0.010em}
            \begin{minipage}[b]{0.0765\textwidth}
                \centering
                {\tiny \hspace{1.0em} 999 steps}
            \end{minipage}
        \end{minipage}
    \end{minipage}
    \begin{minipage}[b]{\textwidth}
        \begin{minipage}[b]{0.010\textwidth}
            \raisebox{0.5\height}{\rotatebox{90}{\tiny Image}}
        \end{minipage}
        \hfill
        \begin{minipage}[b]{0.989\textwidth}
        \centering
            \begin{overpic}[width=0.0765\textwidth]{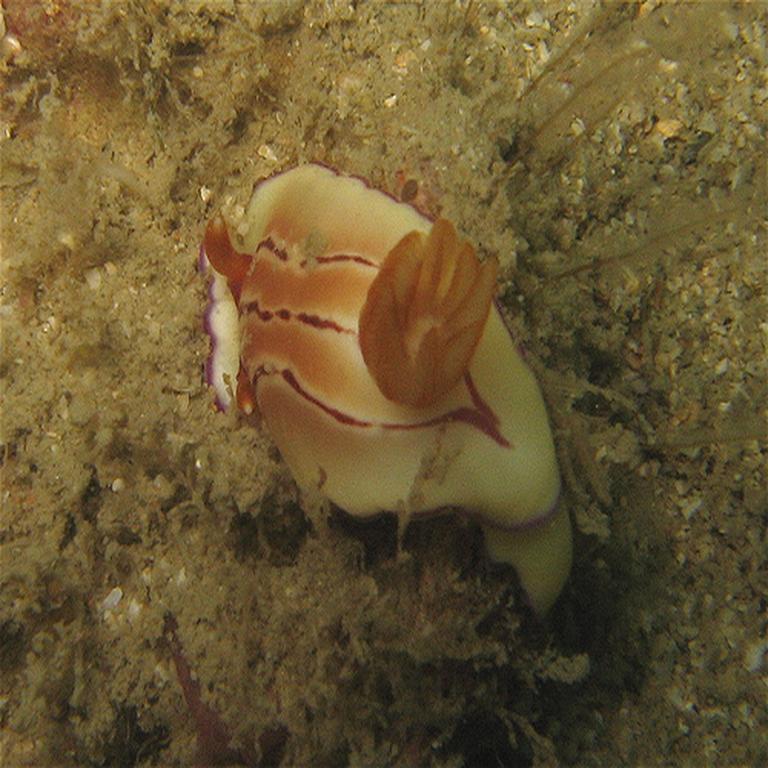}
                \linethickness{0.5pt} 
                \put(0, 0){\color{white}\framebox(100,100){}}
            \end{overpic}%
            \hspace{0.010em}
            \begin{overpic}[width=0.0765\textwidth]{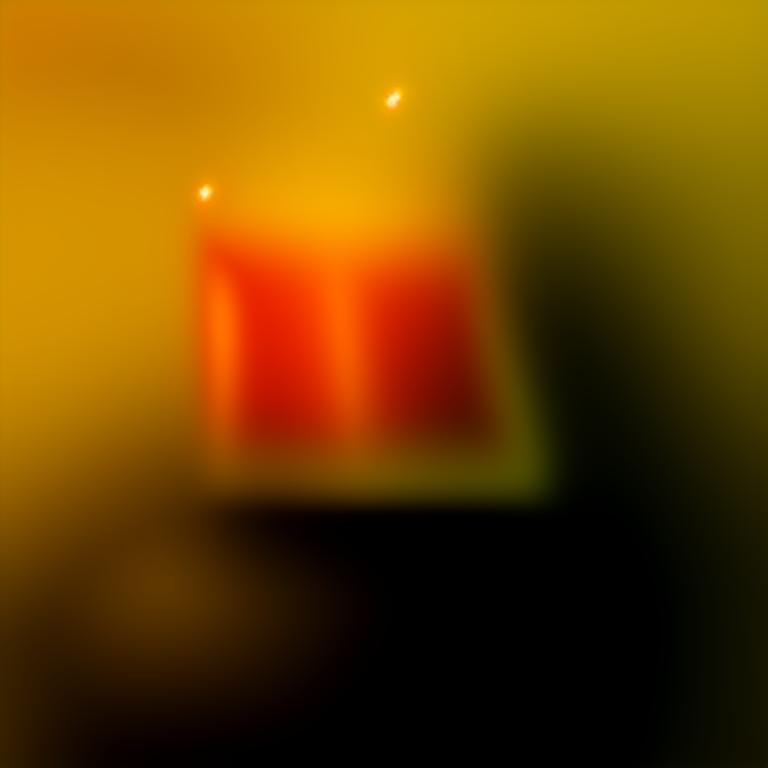}
                \linethickness{0.5pt} 
                \put(0, 0){\color{red}\framebox(100,100){}}
            \end{overpic}%
            \hspace{0.010em}
            \begin{overpic}[width=0.0765\textwidth]{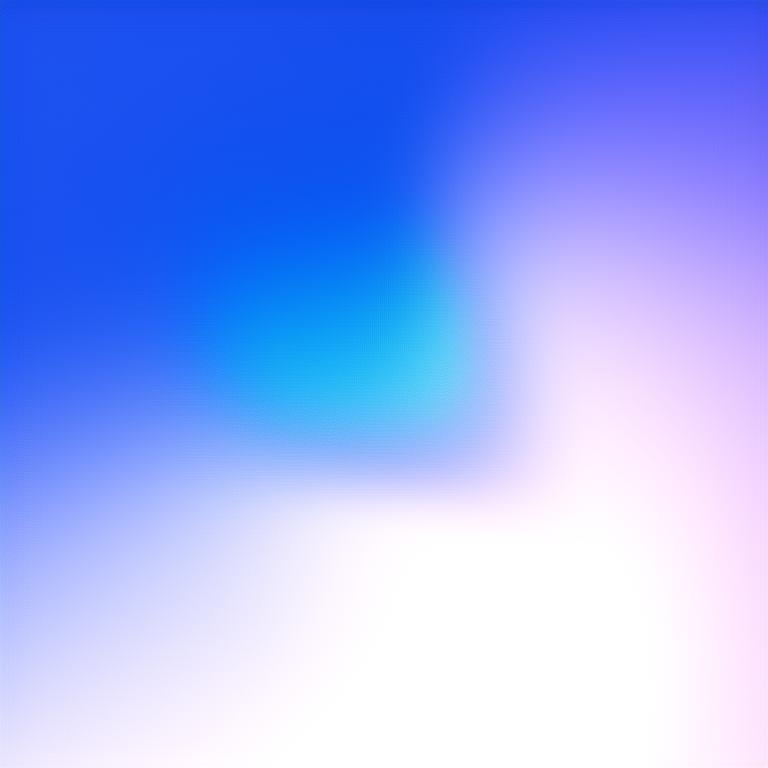}
                \linethickness{0.5pt} 
                \put(0, 0){\color{red}\framebox(100,100){}}
            \end{overpic}%
            \hspace{0.010em}
            \begin{overpic}[width=0.0765\textwidth]{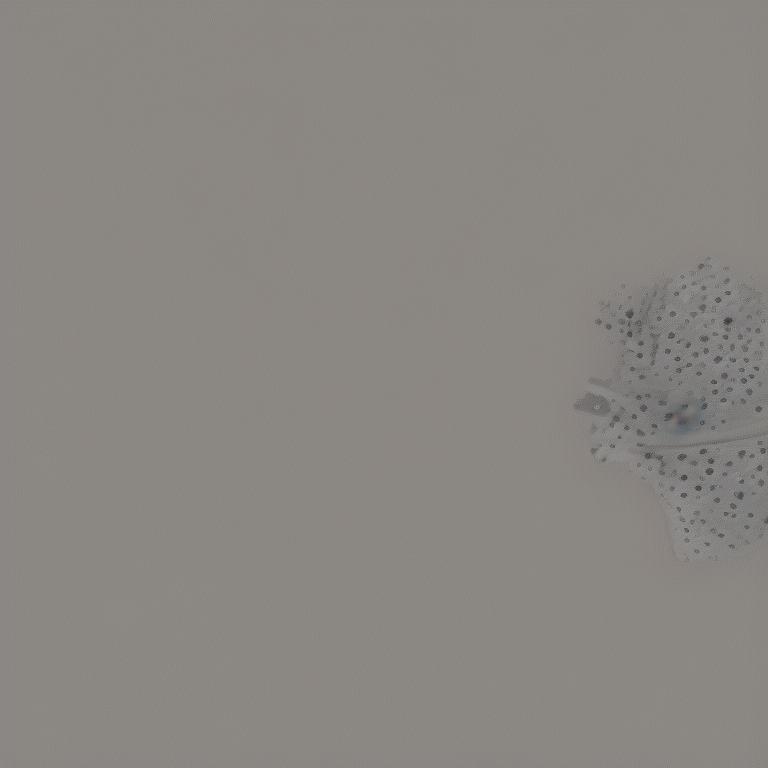}
                \linethickness{0.5pt} 
                \put(0, 0){\color{red}\framebox(100,100){}}
            \end{overpic}%
            \hspace{0.010em}
            \begin{overpic}[width=0.0765\textwidth]{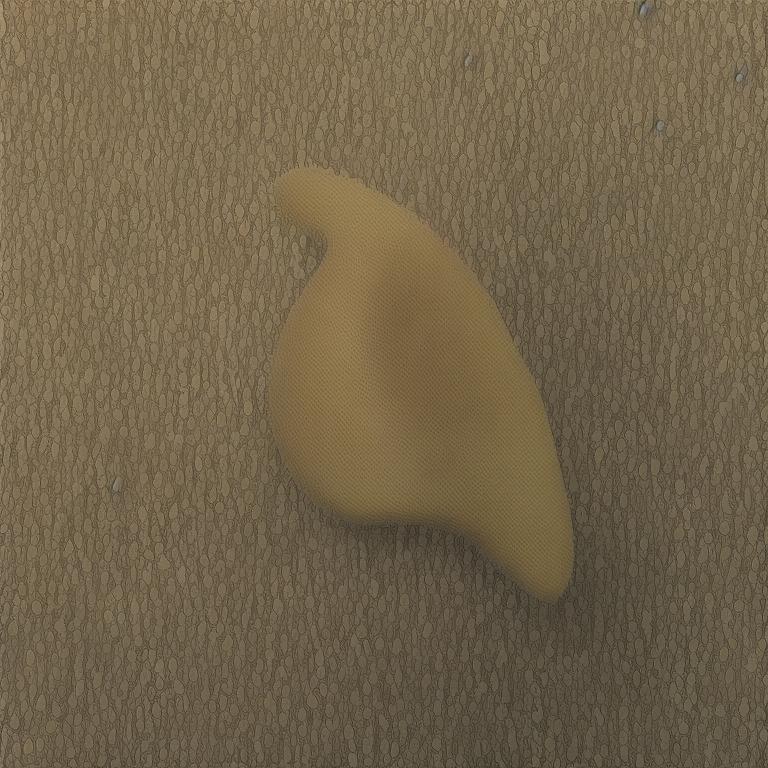}
                \linethickness{0.5pt} 
                \put(0, 0){\color{red}\framebox(100,100){}}
            \end{overpic}%
            \hspace{0.010em}
            \begin{overpic}[width=0.0765\textwidth]{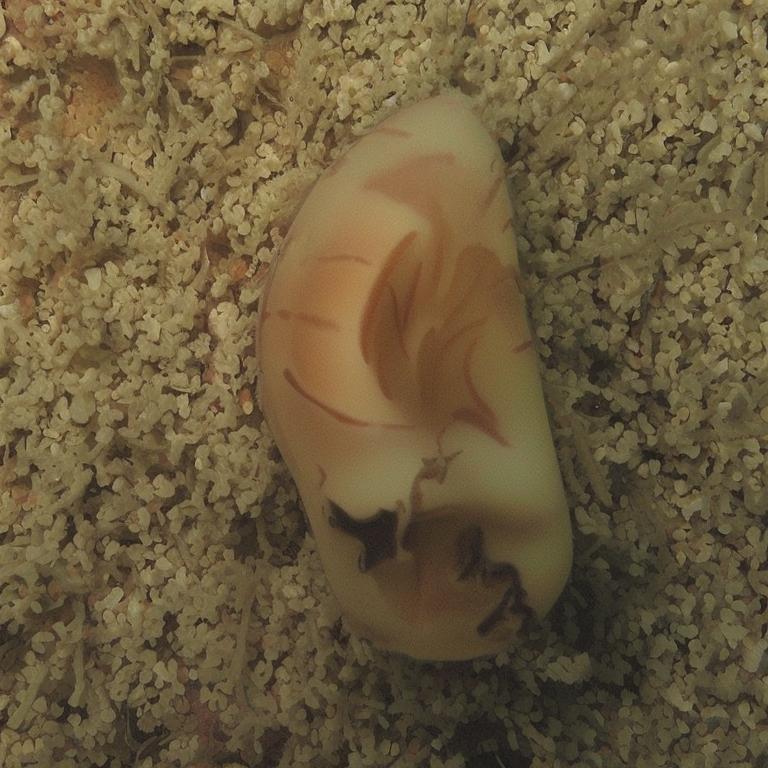}
                \linethickness{0.5pt} 
                \put(0, 0){\color{red}\framebox(100,100){}}
            \end{overpic}%
            \hspace{0.010em}
            \begin{overpic}[width=0.0765\textwidth]{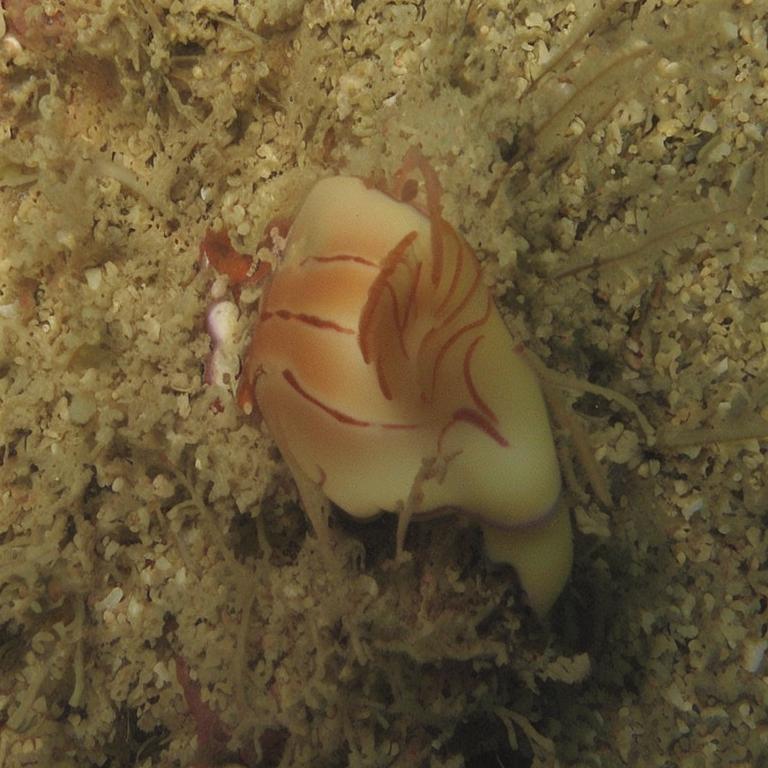}
                \linethickness{0.5pt} 
                \put(0, 0){\color{green}\framebox(100,100){}}
            \end{overpic}%
            \hspace{0.010em}
            \begin{overpic}[width=0.0765\textwidth]{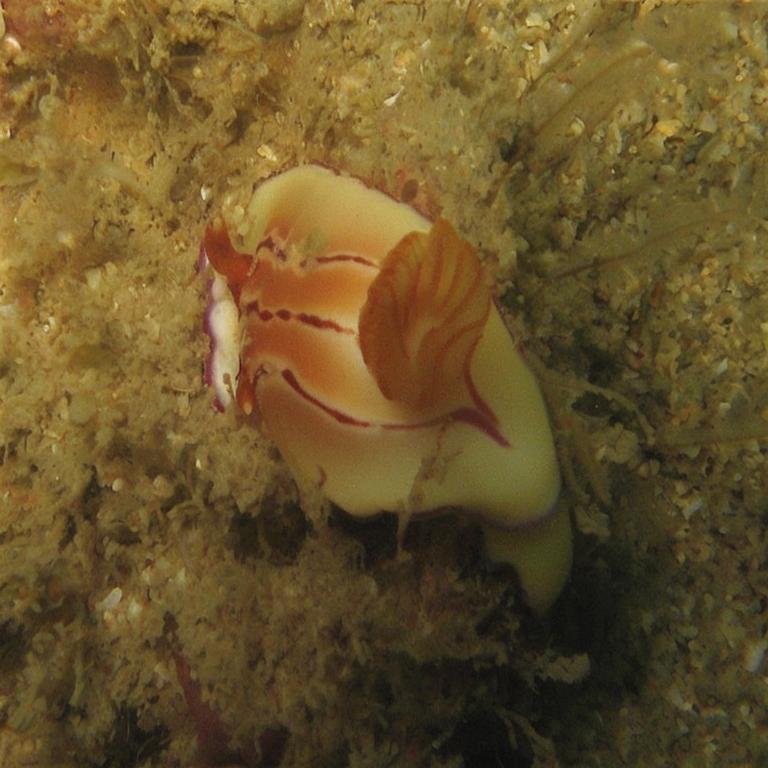}
                \linethickness{0.5pt} 
                \put(0, 0){\color{green}\framebox(100,100){}}
            \end{overpic}%
            \hspace{0.010em}
            \begin{overpic}[width=0.0765\textwidth]{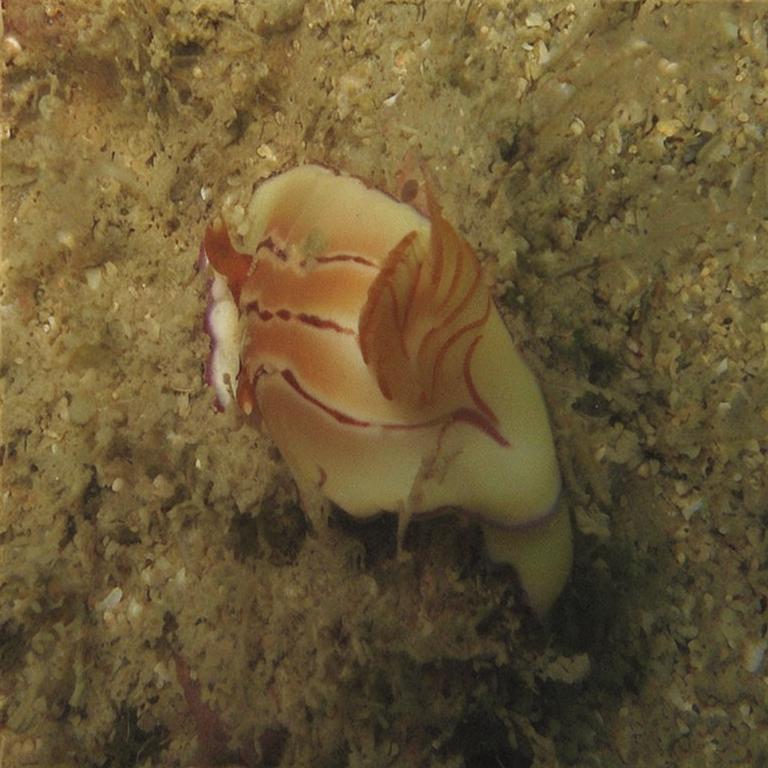}
                \linethickness{0.5pt} 
                \put(0, 0){\color{green}\framebox(100,100){}}
            \end{overpic}%
            \hspace{0.010em}
            \begin{overpic}[width=0.0765\textwidth]{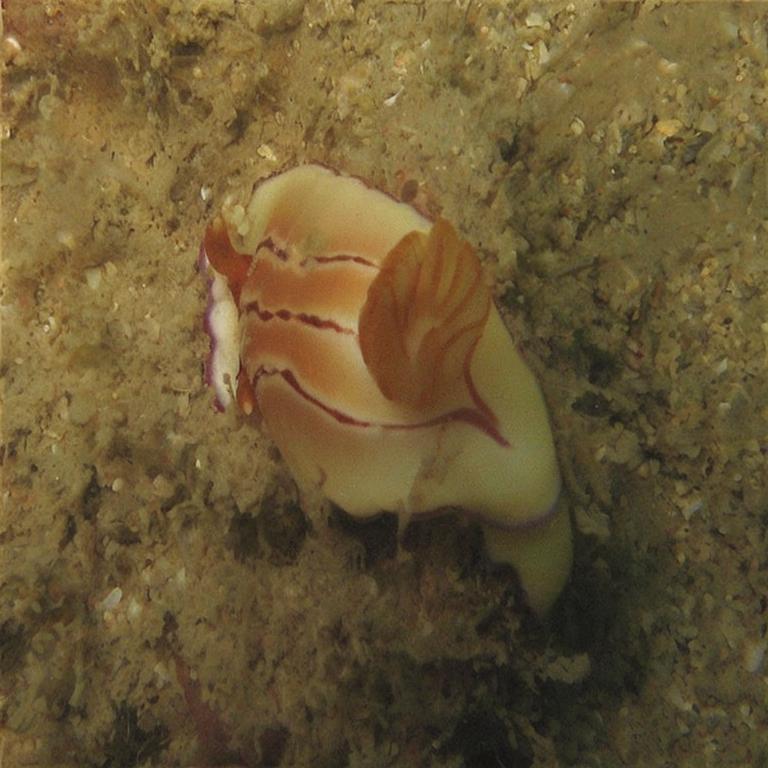}
                \linethickness{0.5pt} 
                \put(0, 0){\color{green}\framebox(100,100){}}
            \end{overpic}%
            \hspace{0.010em}
            \begin{overpic}[width=0.0765\textwidth]{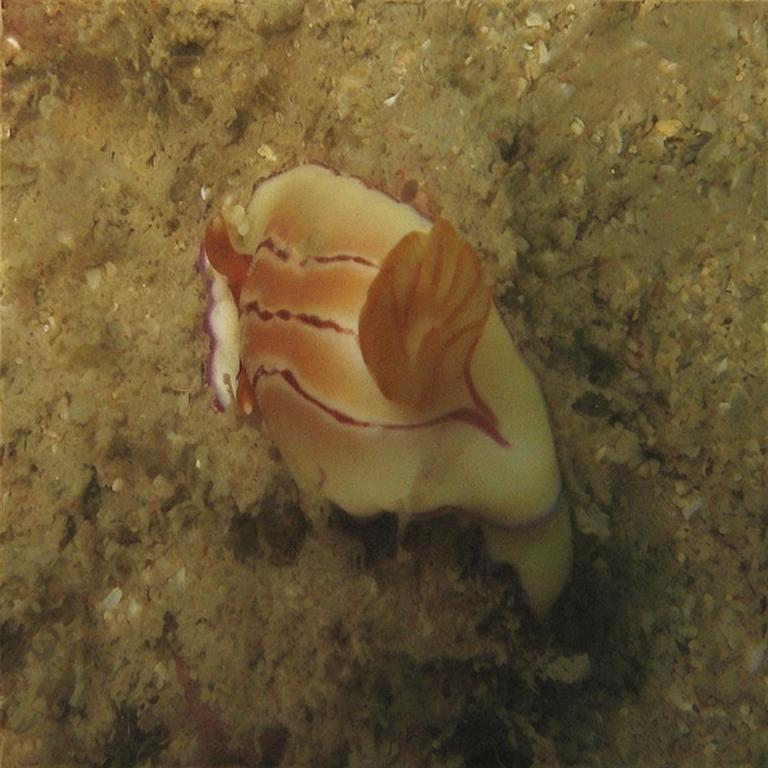}
                \linethickness{0.5pt} 
                \put(0, 0){\color{green}\framebox(100,100){}}
            \end{overpic}%
            \hspace{0.010em}
            \begin{overpic}[width=0.0765\textwidth]{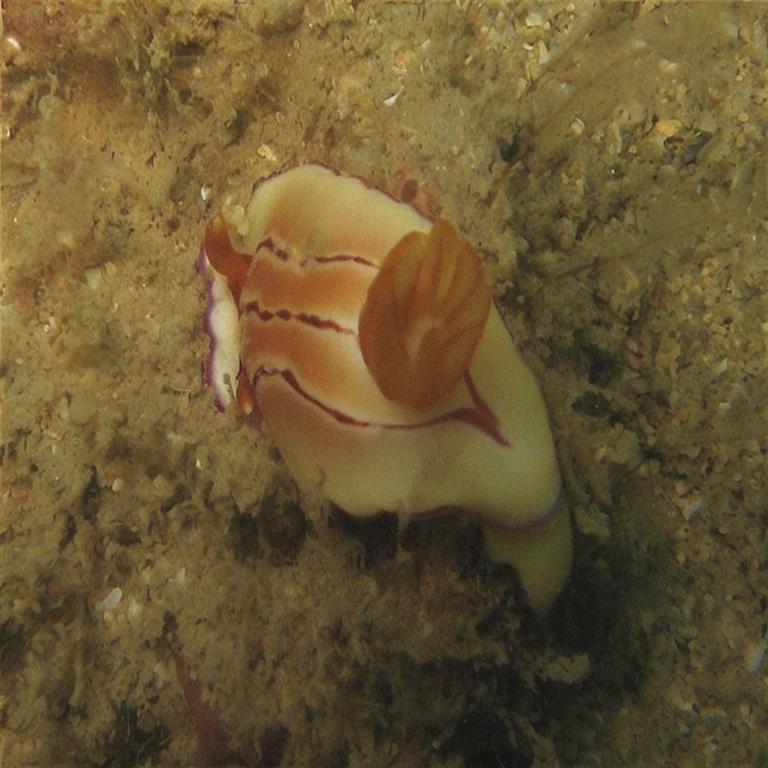}
                \linethickness{0.5pt} 
                \put(0, 0){\color{green}\framebox(100,100){}}
            \end{overpic}
        \end{minipage}
        \begin{minipage}[b]{0.989\textwidth}
            \hspace{0.6em} 
            \begin{minipage}[b]{0.0765\textwidth}
                \begin{flushright}
                {\tiny Rec. error:} 
                \end{flushright}
                \vspace{-1.2em}
                \begin{flushright}
                {\tiny p-value} 
                \end{flushright}
                \vspace{0.05em}
            \end{minipage}%
            \hspace{0.05em}
            \begin{minipage}[b]{0.0765\textwidth}
                \begin{center}
                {\tiny $0.340$} 
                \end{center}
                \vspace{-1.2em}
                \begin{center}
                {\tiny $0$} 
                \end{center}
                \vspace{0.05em}
            \end{minipage}%
            \hspace{0.1em}
            \begin{minipage}[b]{0.0765\textwidth}
                \begin{center}
                {\tiny $0.390$} 
                \end{center}
                \vspace{-1.2em}
                \begin{center}
                {\tiny $0$} 
                \end{center}
                \vspace{0.05em}
            \end{minipage}%
            \hspace{0.010em}
            \begin{minipage}[b]{0.0765\textwidth}
                \begin{center}
                {\tiny $0.184$} 
                \end{center}
                \vspace{-1.2em}
                \begin{center}
                {\tiny $1.2e^{-58}$} 
                \end{center}
                \vspace{0.05em}
            \end{minipage}%
            \hspace{0.1em}
            \begin{minipage}[b]{0.0765\textwidth}
                \begin{center}
                {\tiny $0.230$} 
                \end{center}
                \vspace{-1.2em}
                \begin{center}
                {\tiny $2.5e^{-15}$}
                \end{center}
                \vspace{0.05em}
            \end{minipage}%
            \hspace{0.010em}
            \begin{minipage}[b]{0.0765\textwidth}
                \begin{center}
                {\tiny $0.0539$} 
                \end{center}
                \vspace{-1.2em}
                \begin{center}
                {\tiny $2.3e^{-5}$} 
                \end{center}
                \vspace{0.05em}
            \end{minipage}%
            \hspace{0.010em}
            \begin{minipage}[b]{0.0765\textwidth}
                \begin{center}
                {\tiny $0.0295$}
                \end{center}
                \vspace{-1.2em}
                \begin{center}
                {\tiny $7.4e^{-3}$} 
                \end{center}
                \vspace{0.05em}
            \end{minipage}%
            \hspace{0.010em}
            \begin{minipage}[b]{0.0765\textwidth}
                \begin{center}
                {\tiny $0.0225$}
                \end{center}
                \vspace{-1.2em}
                \begin{center}
                {\tiny $1.9e^{-2}$} 
                \end{center}
                \vspace{0.05em}
            \end{minipage}%
            \hspace{0.010em}
            \begin{minipage}[b]{0.0765\textwidth}
                \begin{center}
                {\tiny $0.0185$} 
                \end{center}
                \vspace{-1.2em}
                \begin{center}
                {\tiny $8.4e^{-2}$}
                \end{center}
                \vspace{0.05em}
            \end{minipage}%
            \hspace{0.010em}
            \begin{minipage}[b]{0.0765\textwidth}
                \begin{center}
                {\tiny $0.0181$} 
                \end{center}
                \vspace{-1.2em}
                \begin{center}
                {\tiny $1.2e^{-1}$}
                \end{center}
                \vspace{0.05em}
            \end{minipage}%
            \hspace{0.010em}
            \begin{minipage}[b]{0.0765\textwidth}
                \begin{center}
                {\tiny $0.0199$} 
                \end{center}
                \vspace{-1.2em}
                \begin{center}
                {\tiny $1.4e^{-1}$} 
                \end{center}
                \vspace{0.05em}
            \end{minipage}%
            \hspace{0.010em}
            \begin{minipage}[b]{0.0765\textwidth}
                \begin{center}
                {\tiny $0.0178$} 
                \end{center}
                \vspace{-1.2em}
                \begin{center}
                {\tiny $1.6e^{-1}$}  
                \end{center}
                \vspace{0.05em}
            \end{minipage}
        \end{minipage}
    \end{minipage}
    \begin{minipage}[b]{\textwidth}
        \begin{minipage}[b]{\textwidth}
            \centering
            \vspace{0.1em}
            \includegraphics[width=\textwidth]{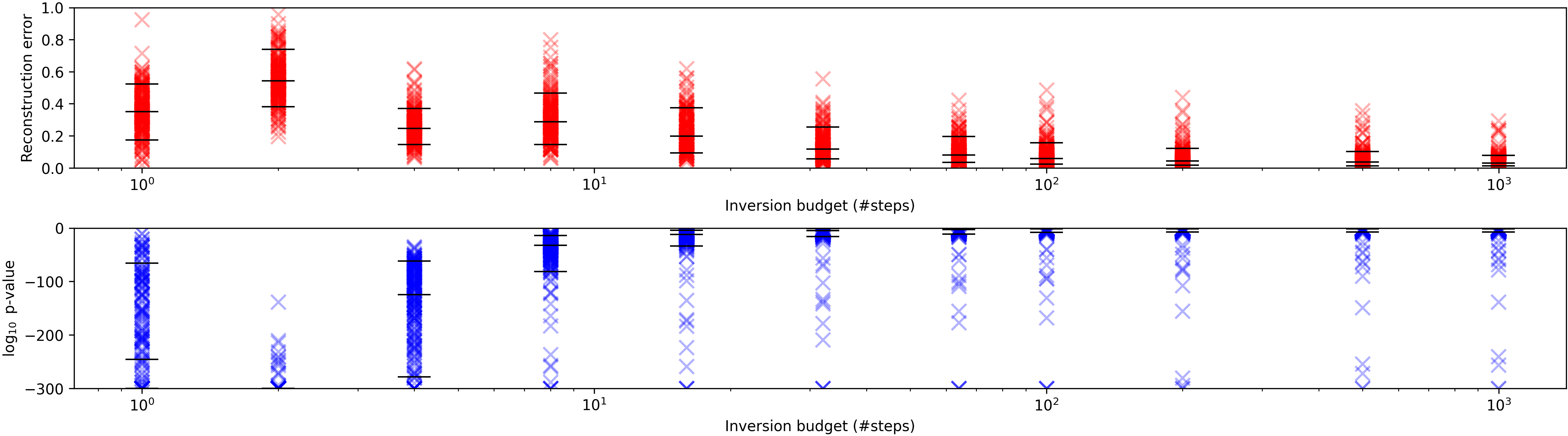}
            \includegraphics[width=\textwidth]{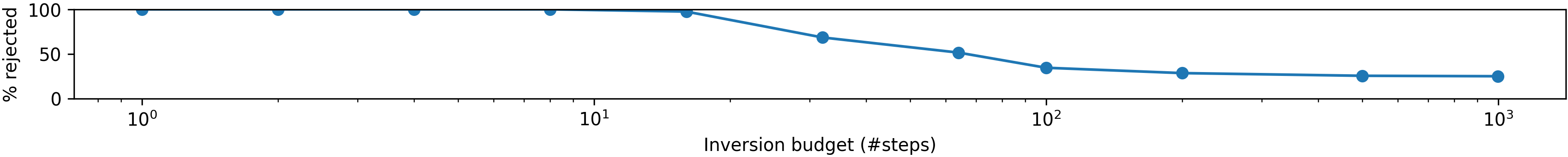}
        \end{minipage}
    \end{minipage}
    \caption{
    \textbf{Normality testing of latent vectors obtained from inversion}. 
    We show the LPIPS~\citep{zhang2018unreasonable} reconstruction errors (second row, in red) of 200 inverted randomly selected images across 50 random classes from ImageNet1k~\citep{deng2009imagenet}, the p-values of their inversions (third row), and rejection rates (bottom row) of the Kolmogorov-Smirnov normality test applied to the corresponding latent obtained from inversion under various step budgets. We use the diffusion model of~\cite{rombach2022high}, always using its maximum number of steps (999) for generation, and denote the 10th, 50th and 90th percentiles with black lines. The first row shows image reconstructions using its inversion at each budget, highlighted in red when the latent was rejected ($p<1e^{-3}$), with the interpretation that the characteristics of the latent were unlikely for a real sample according to the KS test. 
    We note the strong correlation between inversion budgets providing low reconstruction errors and those for which the p-values of the latents are realistic --- taking values likely to occur by chance for real samples. 
    However, as we will see in Figure~\ref{fig:inversion_reconstructions_not_enough}, there are still many latents with low reconstruction error yet extremely low p-value, and this often severely affects the quality of its interpolants. 
    }
\label{fig:inversion_reconstructions}
\end{figure}

In general, neural networks have many degrees of freedom, and so it is difficult to determine and list the input characteristics that a specific model relies on. Therefore, rather than trying to determine all necessary statistics for each  model, we propose instead to rely on standard methods for distribution testing, a classic area of statistics ~\citep{razali2011power,an1933sulla,shapiro1965analysis} --- here exploring the (null) hypothesis that a latent vector $\bm{x}_{\ast} \in \mathbb{R}^D$ is drawn from $\mathcal{N}(\bm{\mu}, \bm{\Sigma})$. 
Popular normality tests consider broad statistics associated with normality. For example, the Kolmogorov–Smirnov~\citep{an1933sulla} test considers the distribution of the largest absolute difference between the empirical and the theoretical cumulative density function across all values. 

\subsection{The sample characteristics of seed latents affect their interpolations}

\begin{figure}[ht]
    \begin{minipage}[b]{0.6\textwidth}
    \centering
    {\tiny Seeds}
    \vspace{0.1em}
    \hrule
    \end{minipage}
    \hspace{1.5em}
    \begin{minipage}[b]{0.35\textwidth}
    \centering
    {\tiny Interpolations}
    \vspace{0.1em}
    \hrule
    \end{minipage}

    \begin{tikzpicture}
    \node at (0, 0) {\includegraphics[width=0.3\textwidth]{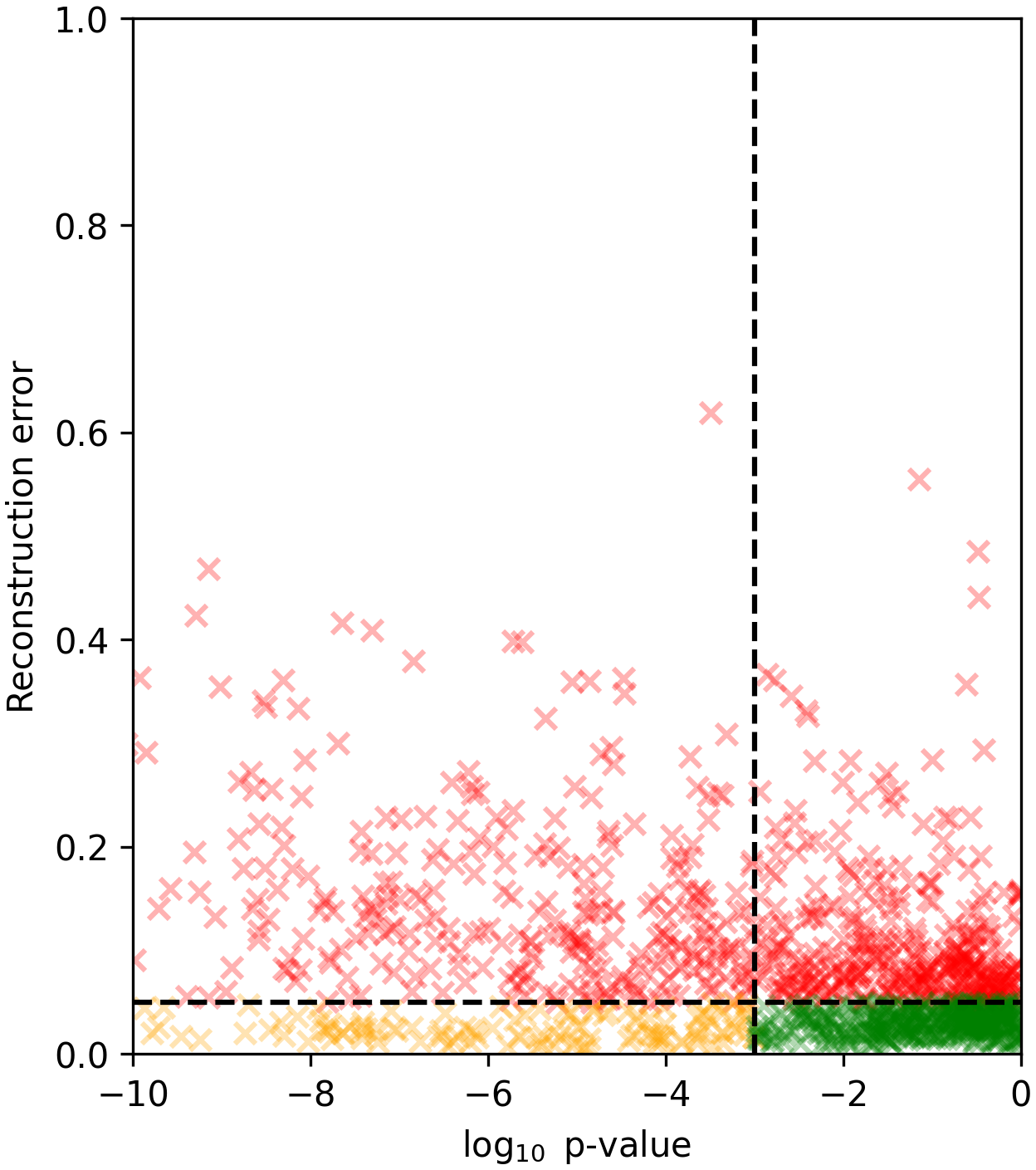}};
    \node at (-4.05, 0) {\includegraphics[width=0.3\textwidth]{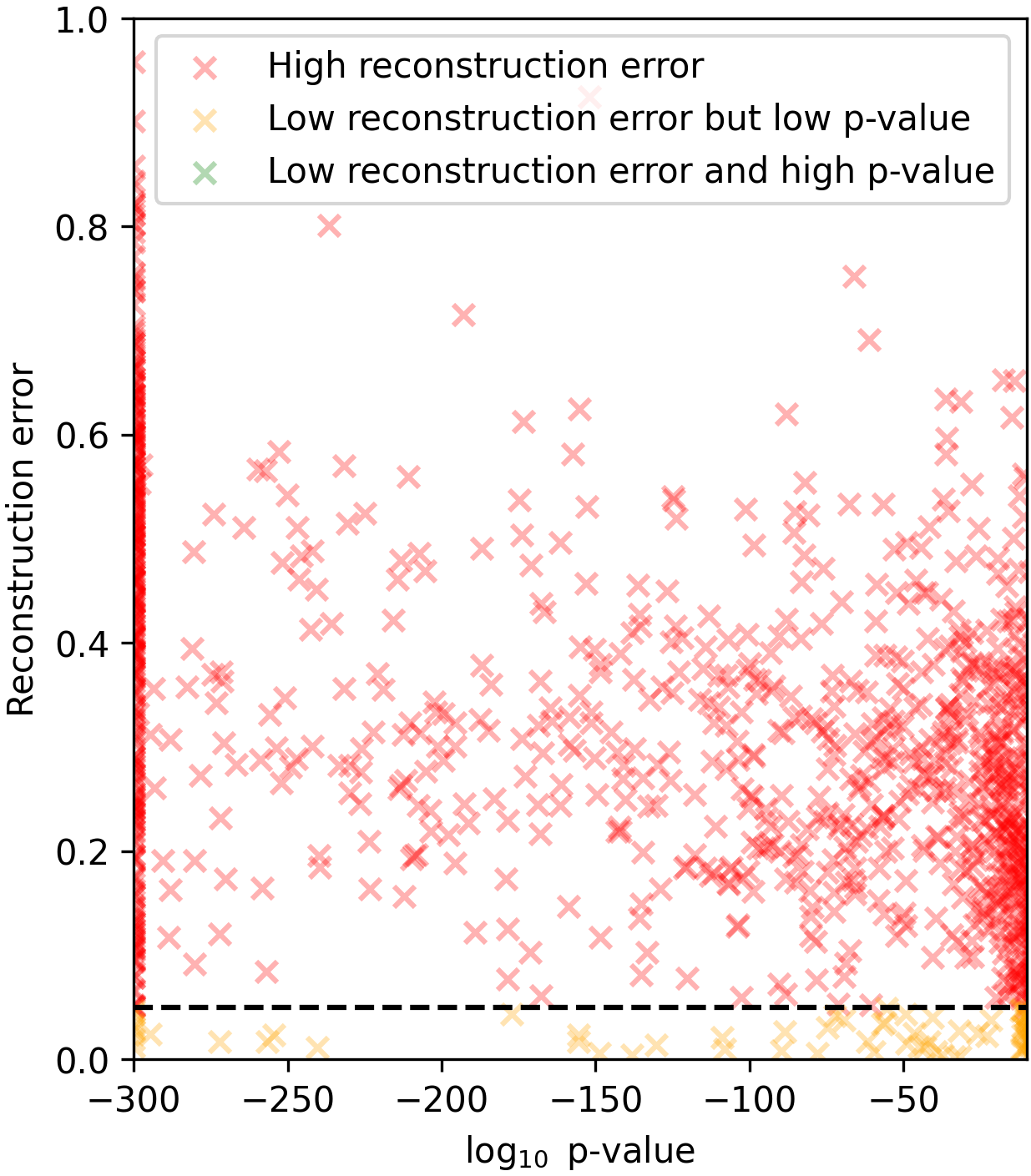}};
    \end{tikzpicture}    
    \raisebox{0.25cm}{
\includegraphics[width=0.38\textwidth]{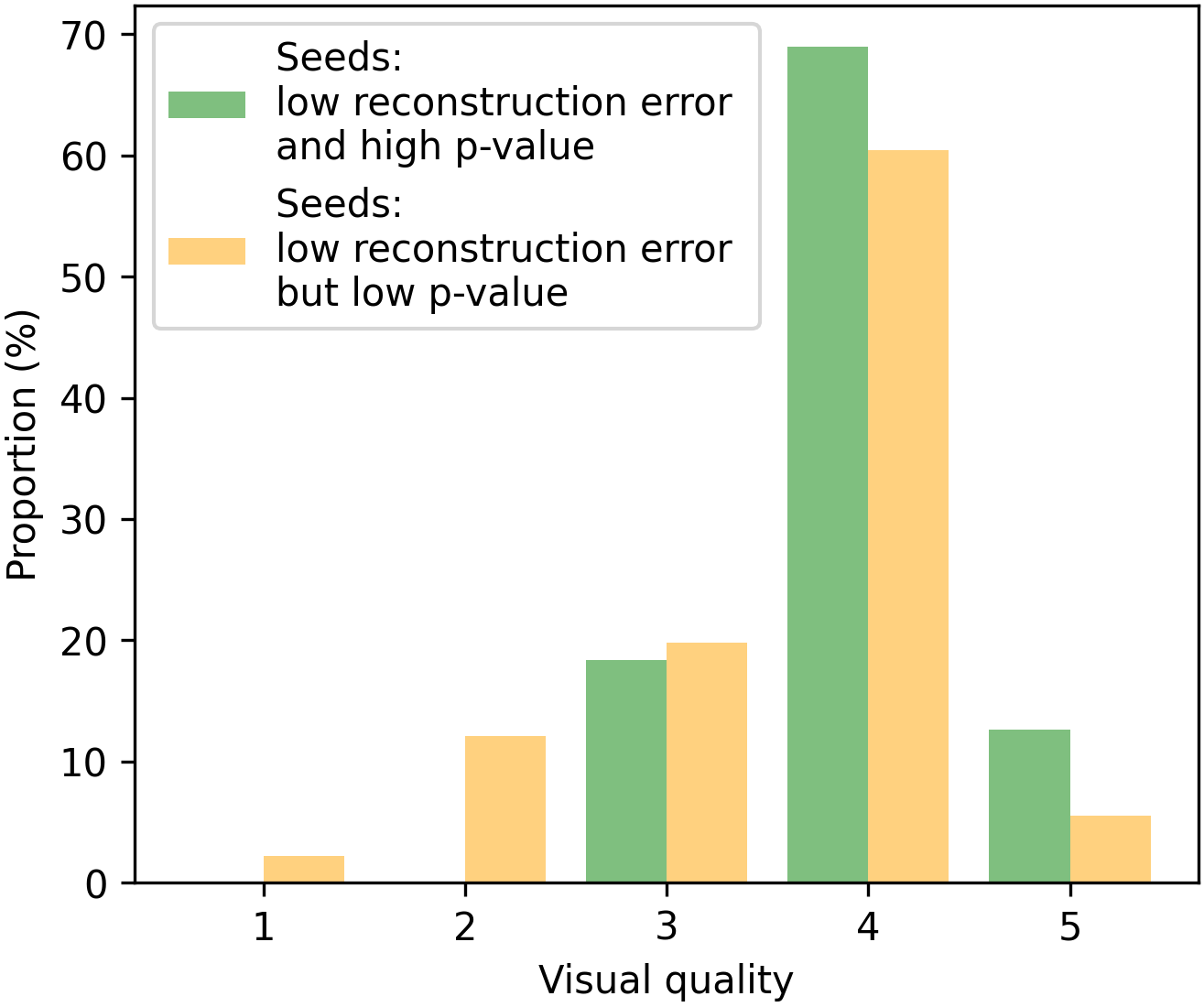}
    }
    \begin{minipage}[b]{0.18\textwidth}
    \centering
    {\tiny Visual quality $\approx$ 1}
    \vspace{0.1em}
    \hrule
    \end{minipage}
    \begin{minipage}[b]{0.195\textwidth}
    \centering
    {\tiny Visual quality $\approx$ 2}
    \vspace{0.1em}
    \hrule
    \end{minipage}
    \begin{minipage}[b]{0.195\textwidth}
    \centering
    {\tiny Visual quality $\approx$ 3}
    \vspace{0.1em}
    \hrule
    \end{minipage}
    \begin{minipage}[b]{0.195\textwidth}
    \centering
    {\tiny Visual quality $\approx$ 4}
    \vspace{0.1em}
    \hrule
    \end{minipage}
    \begin{minipage}[b]{0.195\textwidth}
    \centering
    {\tiny Visual quality $\approx$ 5}
    \vspace{0.1em}
    \hrule
    \end{minipage}
    \includegraphics[width=0.061\textwidth]{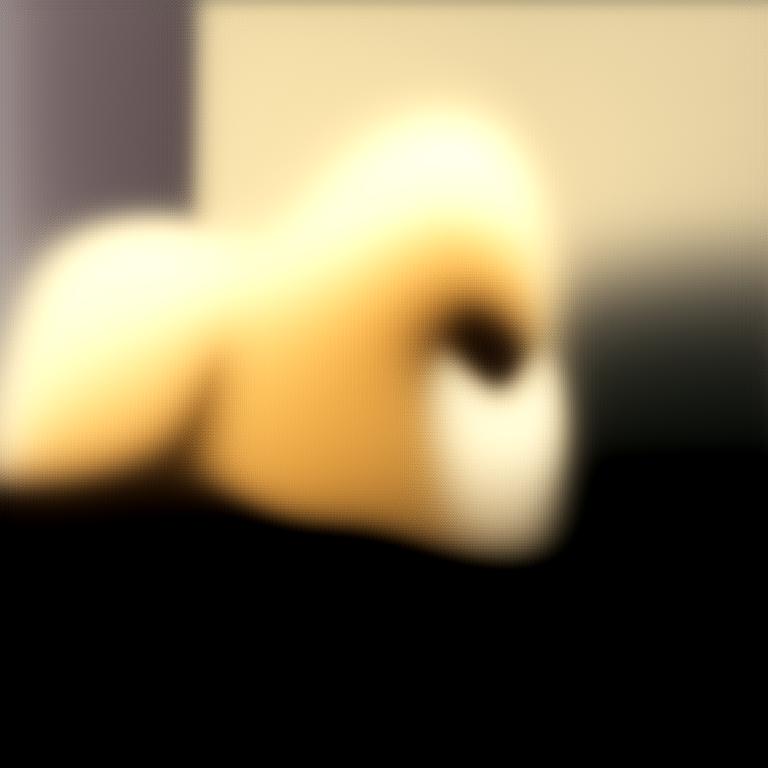}
    \includegraphics[width=0.061\textwidth]{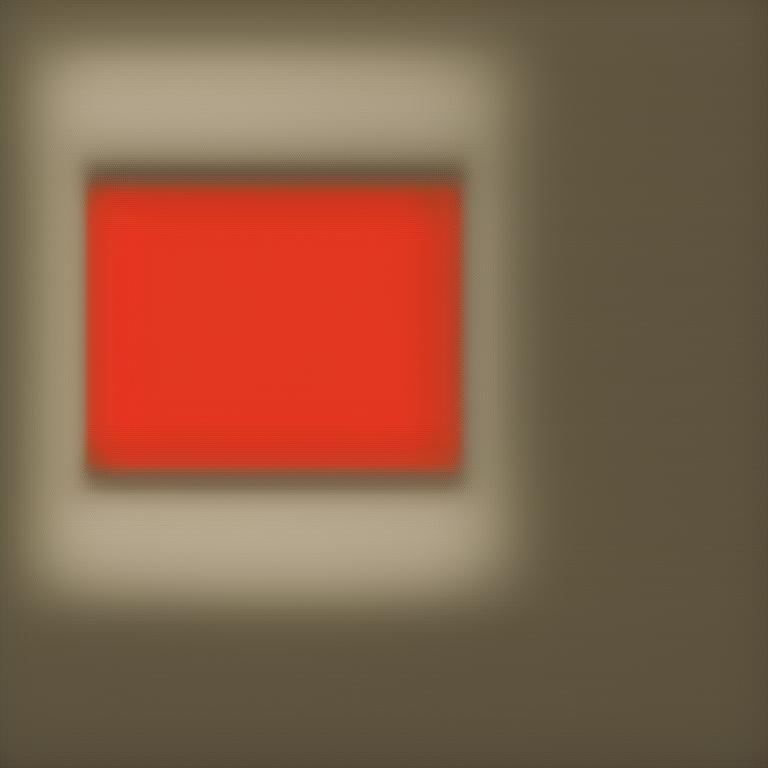}
    \includegraphics[width=0.061\textwidth]{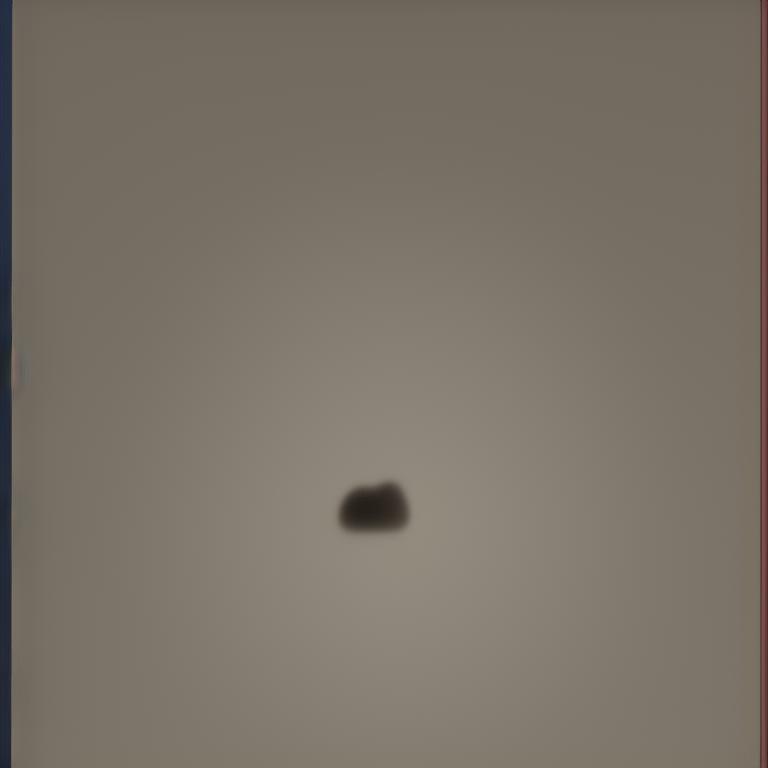}
    \includegraphics[width=0.061\textwidth]{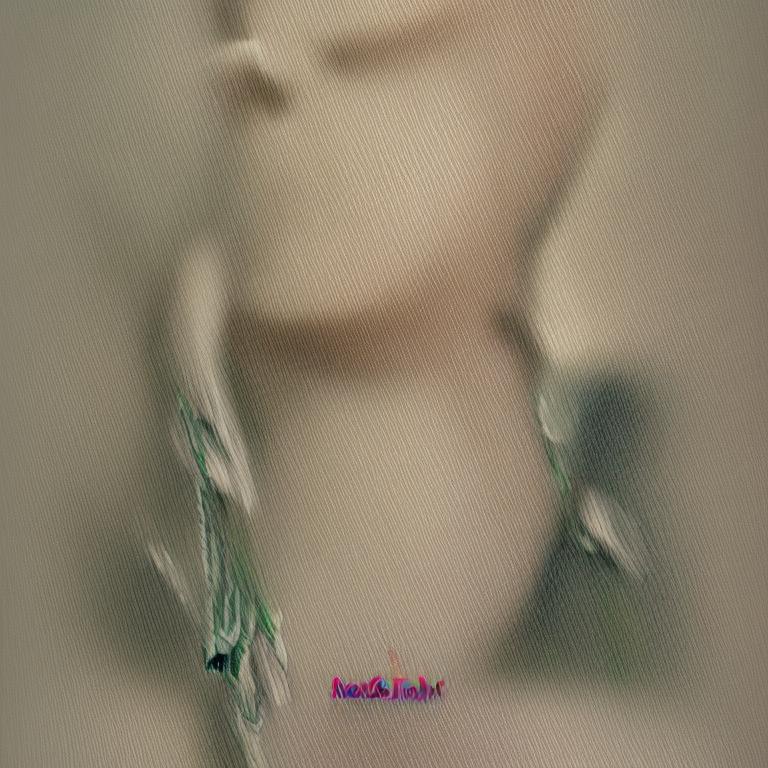}
    \includegraphics[width=0.061\textwidth]{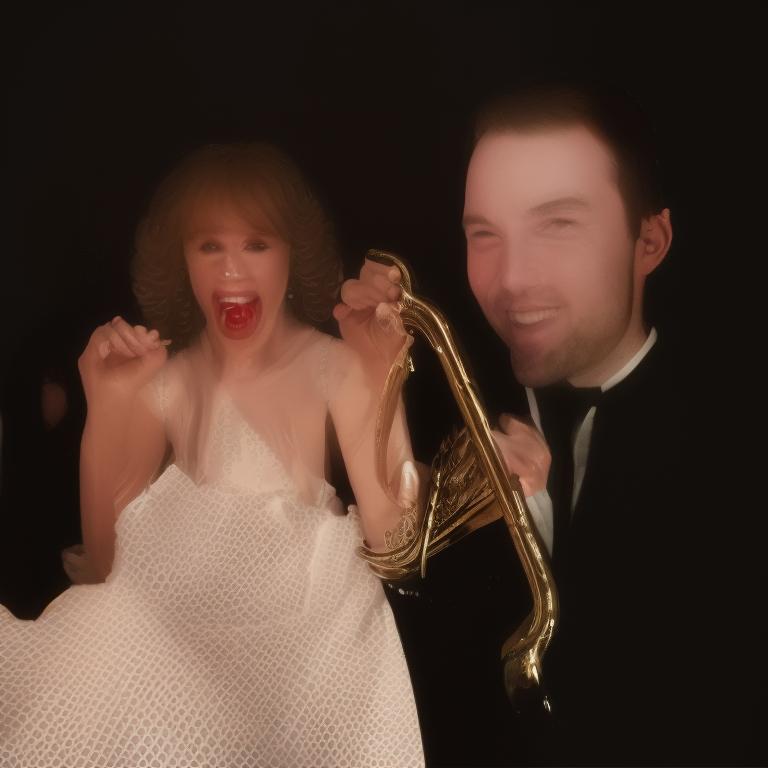}
    \includegraphics[width=0.061\textwidth]{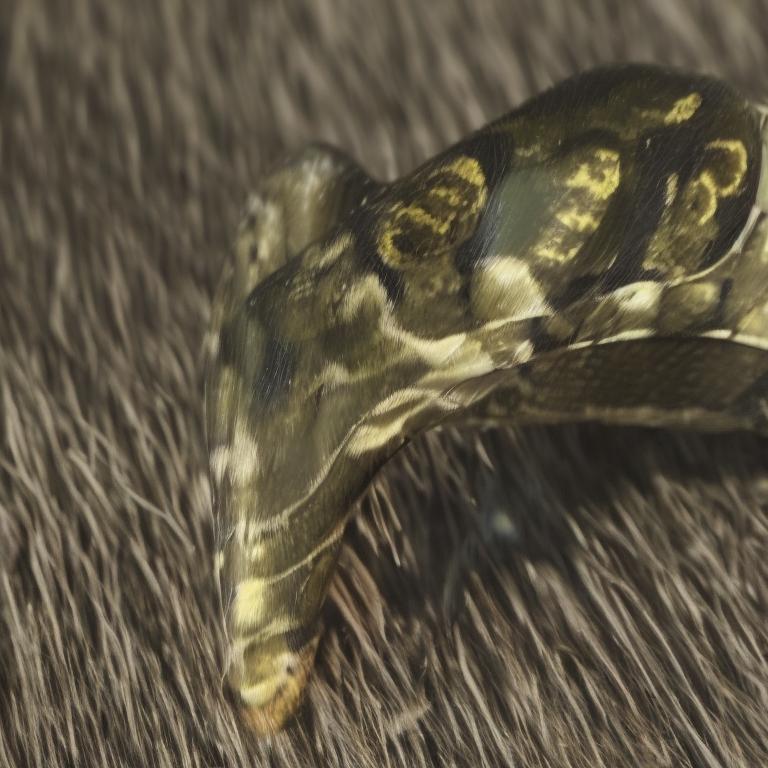}
    \includegraphics[width=0.061\textwidth]{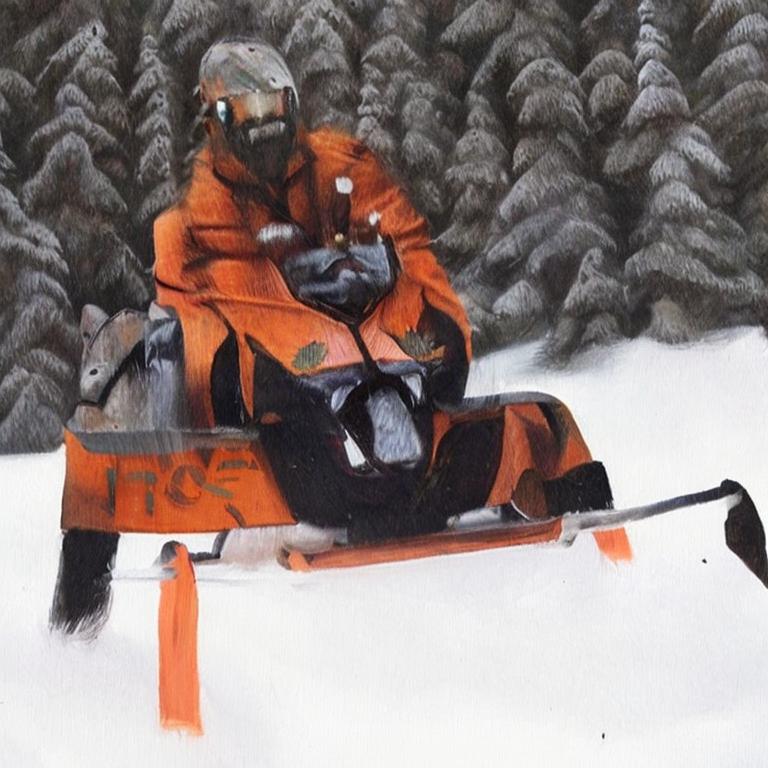}
    \includegraphics[width=0.061\textwidth]{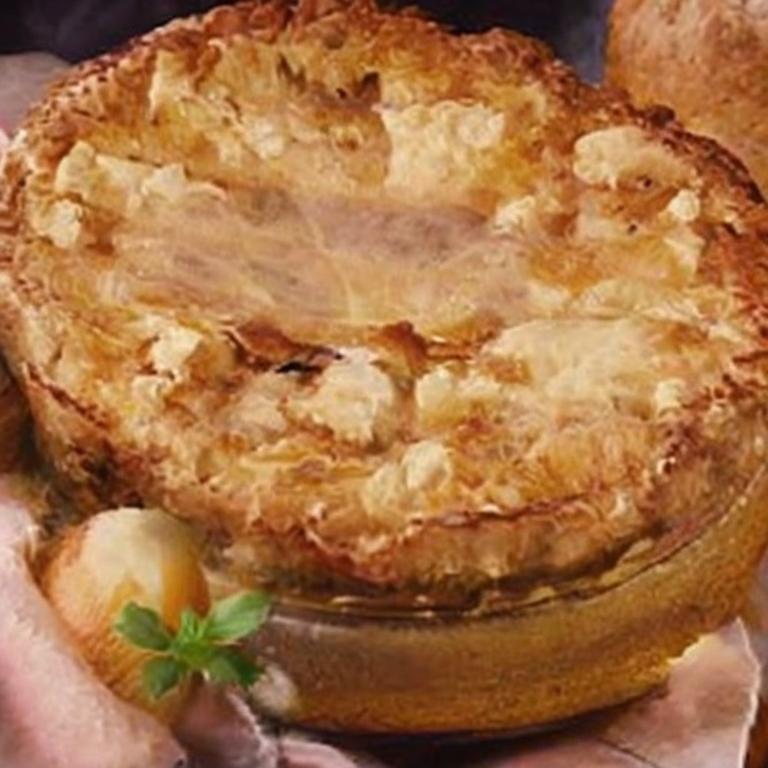}
    \includegraphics[width=0.061\textwidth]{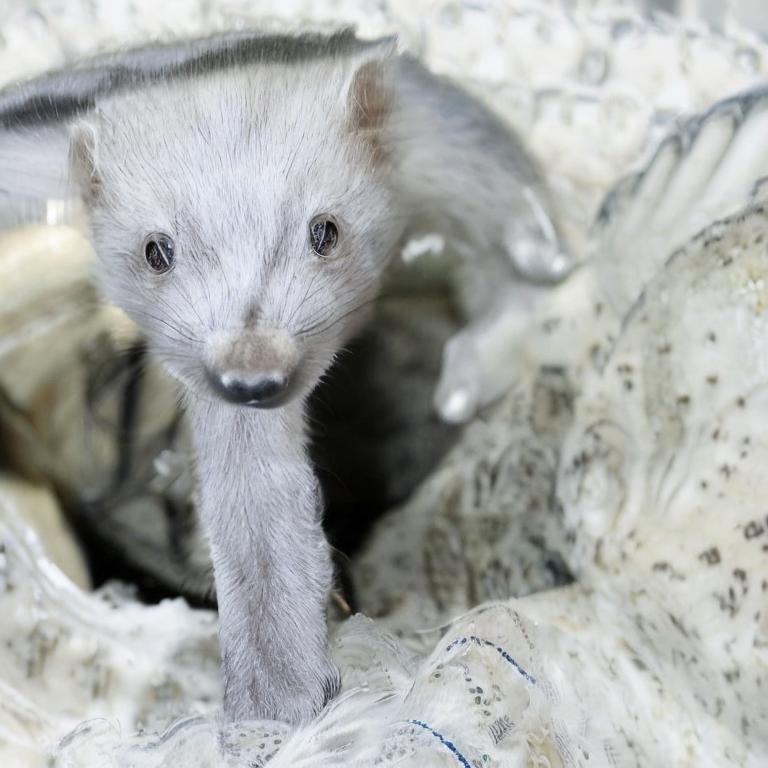}
    \includegraphics[width=0.061\textwidth]{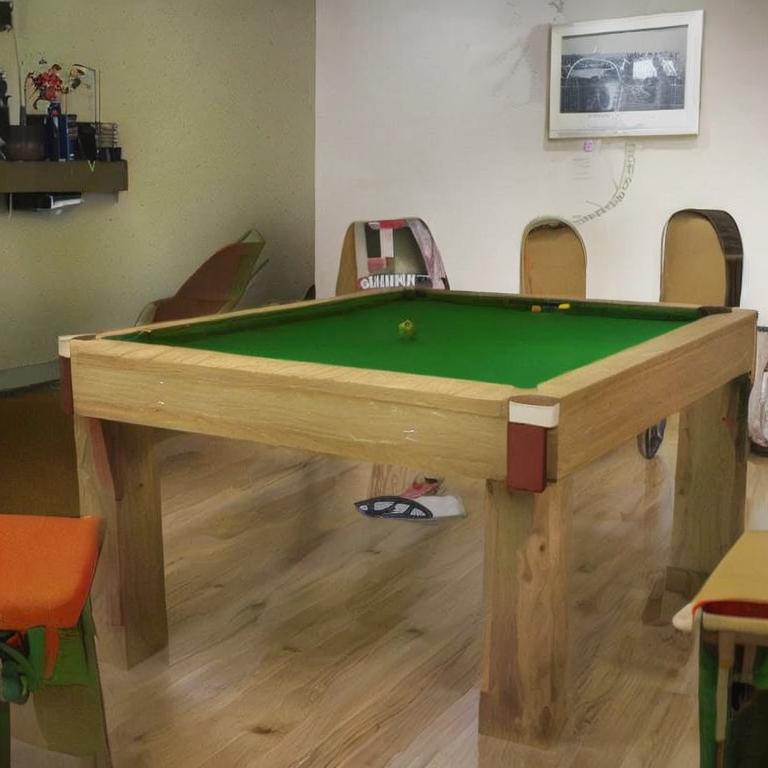}
    \includegraphics[width=0.061\textwidth]{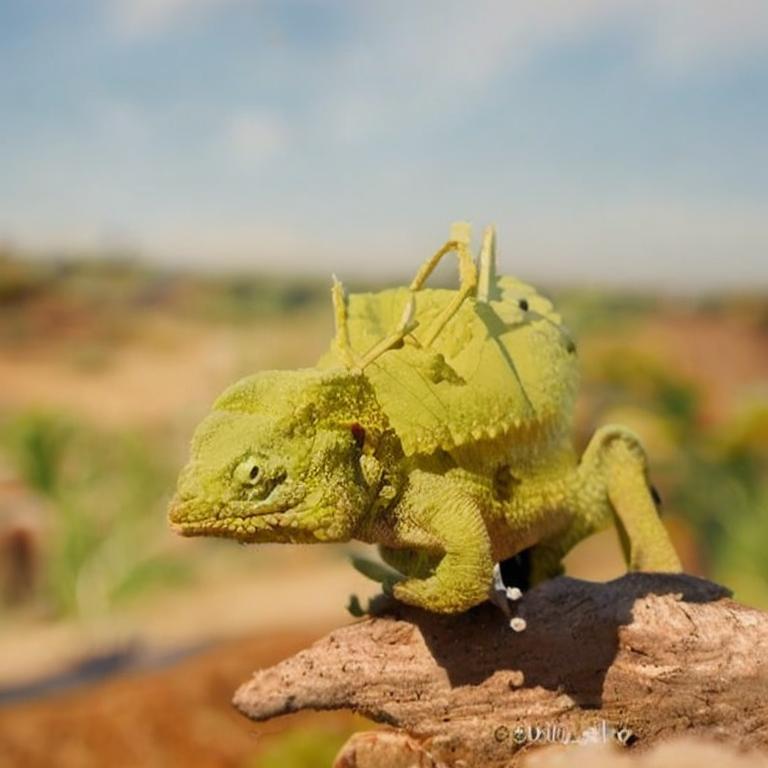}
    \includegraphics[width=0.061\textwidth]{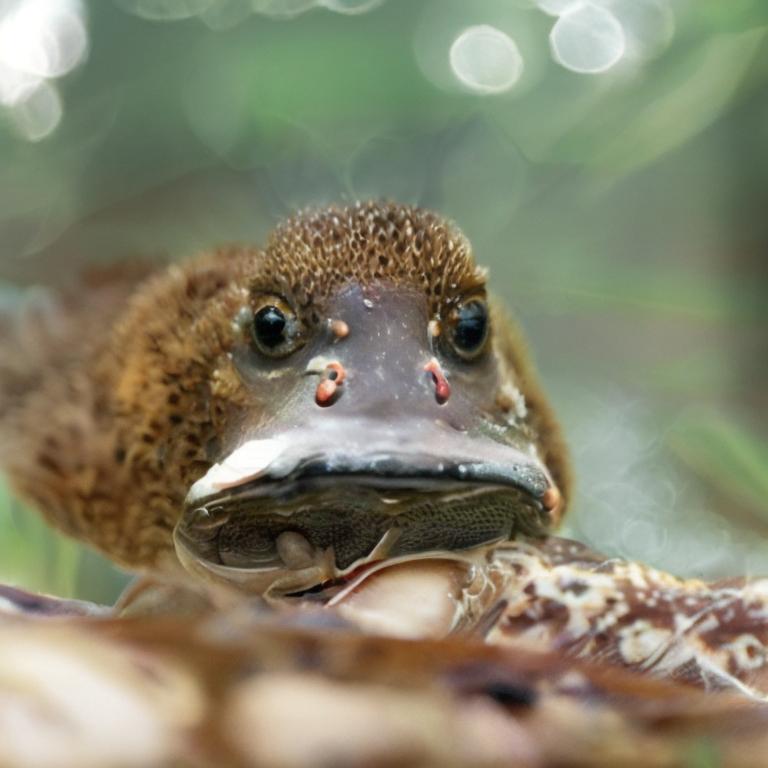}
    \includegraphics[width=0.061\textwidth]{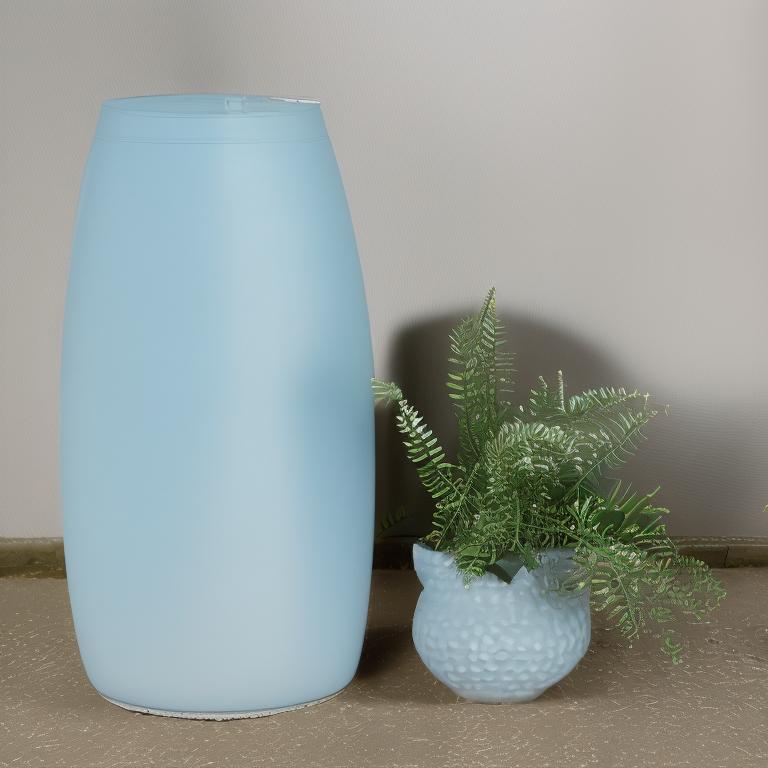}
    \includegraphics[width=0.061\textwidth]{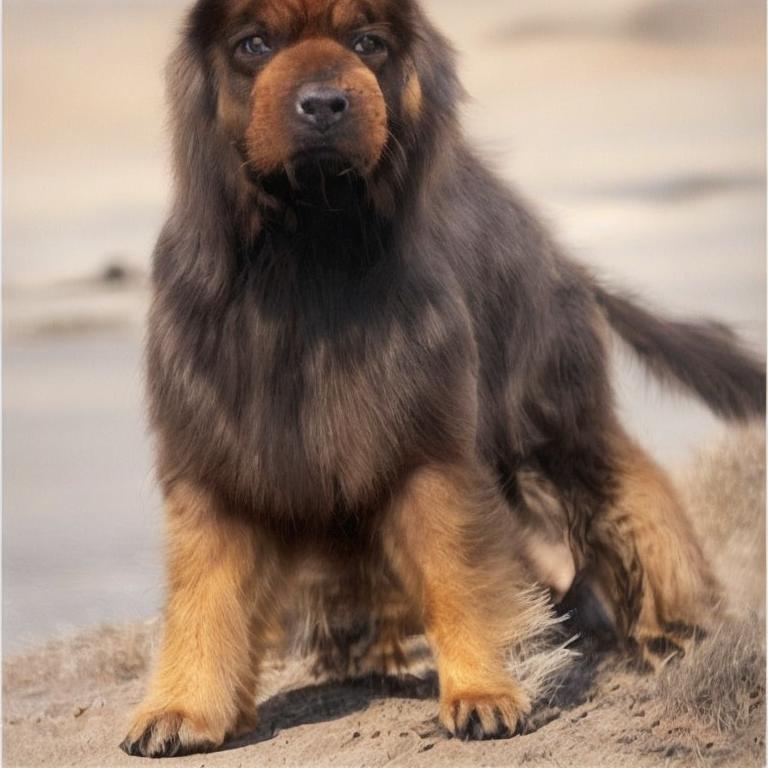}
    \includegraphics[width=0.061\textwidth]{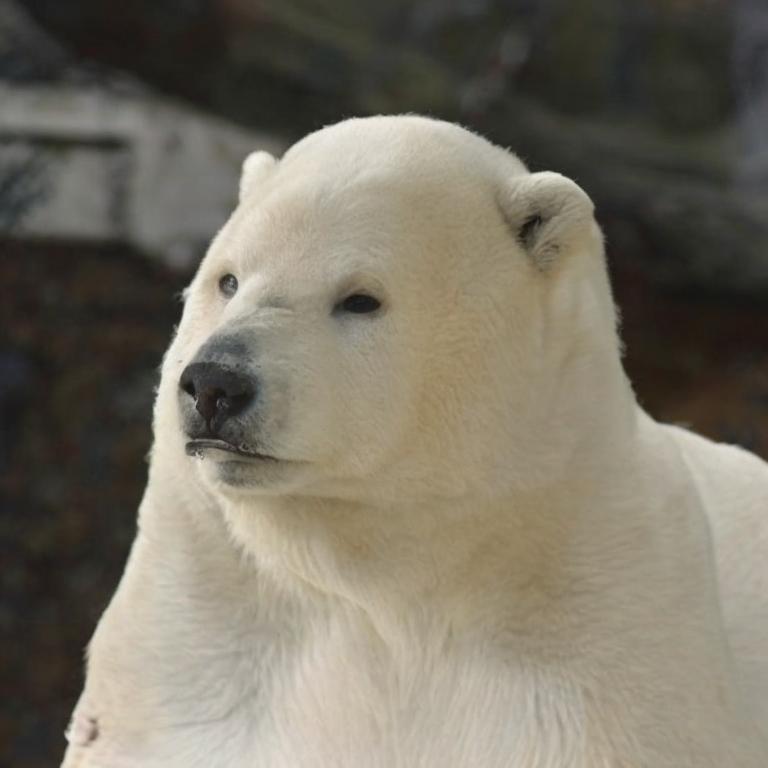}
    \caption{
    \textbf{Lack of normality is linked to failure of interpolants}  The left and middle panels shows LPIPS~\citep{zhang2018unreasonable} reconstruction error (after 999 generation steps) and the Kolmogorov-Smirnov p-value for all inversions presented in Figure~\ref{fig:inversion_reconstructions}, split into two plots due to the vast dynamic range of p-values. Although latents with high (realistic) p-values tend to have low reconstruction errors, there are many latents with low reconstruction errors that also have low p-values. The right panel shows Q-Align visual quality scores~\citep{wu2023q} for spherical interpolants between pairs of inversions selected from matching ImageNet1k~\citep{deng2009imagenet} image classes, demonstrating that choosing seed latents with both low reconstruction error ($<0.05$) \textbf{and} high p-values ($>1e^{-3}$) allows us to avoid low-quality interpolants that would arise when choosing seeds by reconstruction error alone.
    For reference, we include examples of interpolants at each visual quality level.  
    }
\label{fig:inversion_reconstructions_not_enough}
\end{figure}

An ability to identify latents that lack the necessary characteristics for good quality generation is critical when manipulating latents that are not acquired by explicitly sampling from the latent distribution. Indeed, as we show below, using such deficient latents as seeds for interpolation can severely affect the quality of interpolants, for example when interpolating between inverted natural images (the most common way to obtain a seed latent, see Section~\ref{sec:background}). 
As inversion is subject to a finite computation budget (i.e. finite inversion steps), numerical imprecision, and because the particular data instance may not be drawn from the distribution of the model's training data, the resulting latent vector $\bm{x}_{\ast}$ may not correspond to a sample from the latent distribution. We will now show that distribution testing is helpful for identifying such deficient latents, providing an opportunity to apply corrections or review the examples or inversion setup, before performing interpolation.

Figure~\ref{fig:inversion_reconstructions} demonstrates that inversions having realistic p-values are strongly correlated with them reproducing their original image with good quality. 
The p-value indicates the probability of observing the latent vector by chance \emph{given} that it is drawn from the latent distribution $\mathcal{N}(\bm{\mu}, \bm{\Sigma})$. If this value is extremely low that is a strong indicator that the latent lacks characteristics expected of samples from $\mathcal{N}(\bm{\mu}, \bm{\Sigma})$. In the figure we see that at low inversion budgets (with high reconstruction errors) most p-values are $1e^{-50}$ or lower, and then reach values you expect to see by chance for real samples (around $1e^{-3}$) 
at budgets where the reconstruction errors tend to be small. However, we will show that the p-value provides additional information about the quality of the latents than  provided by reconstruction error alone.

We now examine how the p-values of two inversions relate to our ability to interpolate them with good quality. 
Figure~\ref{fig:inversion_reconstructions_not_enough} shows that although latents with realistic p-values tend to have low reconstruction errors, there are many latents with low reconstruction errors that have low p-values. 
Therefore, just because the process of going from object to latent and back reproduces the object, this does not necessarily mean that the resulting latent is well characterised as a sample from $\mathcal{N}(\bm{\mu}, \bm{\Sigma})$. The lack of such characteristics can be inherited when manipulating the latent; in Appendix~\ref{fig:pvalues_inherited} we show that interpolants typically inherit low p-values from their seed latents. Figure~\ref{fig:inversion_reconstructions_not_enough} also demonstrates that if we beyond requiring inversions to reconstruct their original objects also require their p-values to be realistic --- at levels expected of real samples --- we are able to avoid many low quality interpolants downstream, by not using the rejected latents as seeds. In other words, the normality test helps us discover when the latents acquired from inversion lack characteristics of real samples, as this lack prevents them from being reliably used as seeds. Distribution testing the latents can be helpful for debugging the inversion setup --- which typically involves solving an ODE --- and assessing the settings used. 
In Appendix~\ref{sec:normality_test_comparison} we investigate the effectiveness of a range of other classic normality tests
and tests based on the likelihood $\mathcal{N}(\bm{x}_{\ast}; \bm{\mu},\bm{\Sigma})$ as well as the likelihood of the norm statistic. We assess these methods across a suite of test cases including vectors with unlikely characteristics, and for which we know, through failed generation, violate the assumptions of the generative model. 

\section{Linear combinations of latents}
\label{sec:transformed_gaussian_variables}

Now, equipped with the knowledge that matching broad characteristics of a sample from the latent distribution is critical, we propose a simple scheme for forming linear combinations of seed latents --- which we now will assume \emph{are} sampled from the latent distribution --- to maintain their sample characteristics. 
We will focus on Gaussian latents in this section, and present an extension to \textbf{general latent distributions} in Appendix~\ref{sec:approx_general_latents}. In this section we change the notation slightly, where a seed latent $\bm{x}_k \in \mathbb{R}^D$ rather than being a realisation of a random variable --- a vector of known values --- it is a random variable following the latent distribution (here, $\mathcal{N}(\bm{\mu}, \bm{\Sigma}))$.
We assume access to $K$ such seed latent variables $\{ \bm{x}_k \}_{k=1}^K$ and attempt to form new variables following the same distribution.

Let $\bm{y}$ be a linear combination of the $K$ Gaussian latent variables $\bm{x}_k \sim \mathcal{N}(\bm{\mu}, \bm{\Sigma})$
\begin{align}
    \bm{y} := \sum_{k=1}^K w_k \bm{x}_k = \bm{w}^T\bm{X},
\end{align}
where $w_k \in \mathbb{R}$, 
$\bm{w} = [w_1, w_2, \dots, w_K]$ and $\bm{X} = [\bm{x}_1, \bm{x}_2, \dots, \bm{x}_K]$. Then we have that $\bm{y}$ is also a Gaussian random variable, with mean and covariance
\begin{align}
    \label{eq:alpha_beta}
    \bm{y} \sim \mathcal{N}(\alpha \bm{\mu}, \beta \bm{\Sigma}) &&
    \alpha = \sum_{k=1}^K w_k \quad &&
    \quad \beta = \sum_{k=1}^K w_k^2.
\end{align}

In other words, $\bm{y}$ is only distributed as $\mathcal{N}(\bm{\mu}, \bm{\Sigma})$ in the specific case where (a) $\alpha\bm{\mu} = \bm{\mu}$ and (b) $\beta\bm{\Sigma} = \bm{\Sigma}$ --- an observation which we now use to explain the empirically observed behaviour of existing interpolation methods. Firstly, for linear interpolation, where $\bm{w} = [v,  1 - v], v \in [0, 1]$, (b) holds only for the endpoints $v = \{ 0, 1 \}$, 
and so leads to implausible generations for interpolants (as we demonstrate empirically in Figure~\ref{fig:landscape_interpolation}). In contrast, in the popular case of high-dimensional unit Gaussian latent vectors, 
spherical interpolants have $\beta\approx 1,~\forall v \in [0, 1]$, as proven in Appendix~\ref{sec:spherical_interpolation_works},
and (a) is met as $\alpha\bm{0} = \bm{0}$, which is consistent with plausible interpolations (see Figure~\ref{fig:landscape_interpolation}).

In this work, we instead propose transforming linear combinations such that $\alpha = \beta = 1$, for any $\bm{w} \in \mathbb{R}^K$, thus \textbf{exactly meeting the criteria} for \textbf{any} linear combination and \textbf{any} choice of $\bm{\mu}$ and $\bm{\Sigma}$.
We define a transformed random variable $\bm{z}$ to use as the latent instead of the linear combination $\bm{y}$
\begin{align}
    \label{eq:z}
    \bm{z} := 
    \mathcal{T}_{\bm{w}, \bm{\mu}}(\bm{y}) =
    \mathcal{T}_{\bm{w}, \bm{\mu}}(\sum_{k=1}^K w_k \bm{x}_k),
\end{align}
where $\mathcal{T}_{\bm{w}, \bm{\mu}}(\bm{y}) := (1 - \frac{\alpha}{\sqrt{\beta}})\bm{\mu} + \frac{\bm{y}}{\sqrt{\beta}}$,
for which it holds that $\bm{z} \sim \mathcal{N}(\bm{\mu}, \bm{\Sigma})$ given latent variables $\bm{x}_k \sim \mathcal{N}(\bm{\mu}, \bm{\Sigma})$. 
Here, $\mathcal{T}_{\bm{w}, \bm{\mu}}: \mathbb{R}^D \rightarrow \mathbb{R}^D$ is the map that transforms samples from the distribution of a linear combination of $\mathcal{N}(\bm{\mu}, \bm{\Sigma})$-variables with weights $\bm{w}$ into $\mathcal{N}(\bm{\mu}, \bm{\Sigma})$. 
In Appendix~\ref{sec:COG} we show that this transport map is Monge optimal, and extend it to general distributions in Appendix~\ref{sec:approx_general_latents}.
The weights $\bm{w}$, which via $\alpha$ and $\beta$ together with the set of $K$ seed latents specify the transformed linear combination $\bm{z}$, depend on the operation, represented as particular linear combinations. Below are a few examples of popular operations (linear combinations) to form $\bm{y}$; these are used as above to, for the corresponding weights $\bm{w}$, obtain $\bm{z}$ following the distribution expected by the generative model.

\begin{itemize}
    \item \textbf{Interpolation}:  $\bm{y} = \bm{w}^T\bm{X}$, where $\bm{X} = [\bm{x}_1, \bm{x}_2]$, $\bm{w} = [w_1, 1 - w_1]$ and $w_1 \in [0, 1]$.
    \item \textbf{Centroid Determination}: $\bm{y} = \bm{w}^T\bm{X}$, where $\bm{X} = [\bm{x}_1, \dots, \bm{x}_K], \bm{w} = [\frac{1}{K}]^K$.
    \item \textbf{Subspaces}: Suppose we wish to build a navigable subspace spanned by linear combinations of $K$ latent variables. By performing the QR decomposition of
    $\bm{X} := [\bm{x}_1, \bm{x}_2, \dots, \bm{x}_K] \in \mathbb{R}^{D \times K}$ to produce a semi-orthonormal matrix $\bm{U}\in \mathbb{R}^{D \times K}$ (as the Q-matrix), we can then define a subspace projection of any new $\bm{x}$ into the desired subspace via $s(\bm{x}) := \bm{U}\bm{U}^T\bm{x} = \bm{U}\bm{h} \in \mathbb{R}^D$. The weights $\bm{w}$ for a given point in the subspace $s(\bm{x})$ are given by $\bm{w} = \bm{X}_{\dagger}s(\bm{x}) = \bm{X}_{\dagger}\bm{U}\bm{h} \in \mathbb{R}^K$
    where $\bm{X}_{\dagger}$ is the Moore–Penrose inverse of $\bm{X}$. 
    See the derivation of the weights and proof in Appendix~\ref{sec:subspace_proj}. One can directly pick coordinates $\bm{h}\in \mathbb{R}^K$, compute the subspace projection $\bm{y} = \bm{U}\bm{h}$, and then subsequently use the transport map (defined by Equation~\ref{eq:z}) to the latent space to obtain $\bm{z}$. In Figure~\ref{fig:car_grid} we use grids in $\mathbb{R}^K$ to set $\bm{h}$, used as above together with a basis (defined by $\bm{U}$) from a set of latents. 
\end{itemize}

\section{Experiments}
\label{sec:experiments}

We now assess our proposed transformation scheme LOL experimentally. 
To verify that LOL matches or exceeds current methods for Gaussian latents for currently available operations --- interpolation and centroid determination --- we perform qualitative and quantitative comparisons to their respective baselines. 
We then demonstrate new capabilities with several examples of low-dimensional subspaces on popular diffusion models and a popular flow matching model. 
\vspace{-0.5em}

\subsection{Interpolation and centroid determination}
For the application of interpolation, we compare our proposed LOL to linear interpolation (LERP), spherical linear interpolation (SLERP), and Norm-Aware Optimization (NAO)~\cite{samuel2024norm}.

\begin{minipage}{0.42\textwidth}
In contrast to the all the other considered interpolation methods (including LOL) which only involve closed-form expressions, NAO requires numerical optimisation. 
We closely follow the evaluation protocol in~\cite{samuel2024norm}, basing the experiments on Stable Diffusion (SD) 2.1~\citep{rombach2022high} and inversions of random images from 50 random classes from ImageNet1k~\citep{deng2009imagenet}, and assess visual quality and preservation of semantics using Fréchet Inception Distance (FID)~\citep{heusel2017gans} and class prediction accuracy, respectively. For the interpolation we (randomly without replacement) pair the 50 images per class into 25 pairs, 
forming 1250 image pairs in total.
\end{minipage}
\hfill
\begin{minipage}{0.55\textwidth}
\centering
\begin{tabularx}{\textwidth}{|X|>{\centering\arraybackslash}p{1.325cm}|>{\centering\arraybackslash}p{0.9cm}|>{\centering\arraybackslash}p{0.75cm}|}
\hline
\multicolumn{4}{|c|}{{\small Interpolation}} \\
\hline
{\small\textbf{Method} } & {\small \textbf{Accuracy}} & {\small\textbf{FID} $\downarrow$} & {\small\textbf{Time}} \\
\hline
{\small LERP} & \num{3.92}\% & \num{198.8888} & $6e^{-3}$s \\
\hline
{\small SLERP} & \num{64.56}\% & \num{42.5827} & $9e^{-3}$s \\
\hline
{\small NAO} & \num{62.13333333333333}\% & \num{46.0230} & $30$s \\
\hline
{\small LOL (ours)} & \num{67.38666666666666}\% & \num{38.8714} & $6e^{-3}$s \\
\hline
\multicolumn{4}{|c|}{{\small Centroid determination}} \\
\hline
{\small\textbf{Method} } & {\small \textbf{Accuracy}} & {\small\textbf{FID} $\downarrow$} & {\small\textbf{Time}} \\
\hline
{\small Euclidean} & \num{0.2857142857142857}\% & \num{310.41534237375475} & $4e^{-4}$s \\
\hline
{\small Standardised Euclidean} & \num{44.571428571428573}\% & \num{88.8399648303947} & $1e^{-3}$s \\
\hline
{\small Mode norm Euclidean} & \num{44.571428571428573}\% & \num{88.37776087449538} & $1e^{-3}$s \\
\hline
{\small NAO} & \num{44.000}\% & \num{92.96727831041613} & $90$s \\
\hline
{\small LOL (ours)} & \num{46.285714285714286}\% & \num{87.71343603361333} & $6e^{-4}$s \\
\hline
\end{tabularx}
\captionof{table}{Quantitative comparisons of baselines.}
\label{tab:quant_table}
\end{minipage}

For the centroid determination we compare to NAO, the Euclidean centroid $\bar{\bm{x}} = \frac{1}{K}\sum_{k=1}^K \bm{x}_k$ and two transformations thereof; ``standardised Euclidean'', where $\bar{\bm{x}}$ is subsequently standardised to have mean zero and unit variance (as SD 2.1 assumes), and ``mode norm Euclidean'', where $\bar{\bm{x}}$ is rescaled for its norm to equal the maximum likelihood norm $\sqrt D$. For each class, we form 10 3-groups, 10 5-groups, 4 10-groups and 1 25-group, sampled without replacement per group, for a total of 1250 centroids per method. For more details on the experiment setup and settings, see Appendix~\ref{sec:setup_details}.

In Table~\ref{tab:quant_table} we show that our method outperforms or maintains the performance of the baselines in terms of FID distance and accuracy, as calculated using a pre-trained classifier following the evaluation methodology of~\cite{samuel2024norm}. For an illustration of centroids and interpolations, see Figure~\ref{fig:car_wheel_centroids} and Figure~\ref{fig:dog_interpolation}, respectively. The evaluation time of an interpolation path and centroid, shown with one digit of precision, illustrate that the closed-form expressions are significantly faster than NAO. Surprisingly, NAO did not perform as well as spherical interpolation and several other baselines, despite using their implementation, which was outperforming these methods in~\cite{samuel2024norm}. 
We note one discrepancy is that we report FID distances using  (2048) high-level features, while in their work they are using (64) low-level features, which in~\cite{Seitzer2020FID} is recommended against as it does not necessarily correlate with visual quality. In the appendix we include FID distances using all settings of features. We note that, in our setup, the baselines perform substantially better than reported in~\cite{samuel2024norm} --- including NAO in terms of FID distances using the 64 low-level feature setting (1.30 in our setup vs 6.78 in their setup), and class accuracy during interpolation (62\% vs 52\%). See Section~\ref{sec:additional_quant_results} in the appendix for more details and ablations.

\begin{figure}[ht]
    \centering
    \begin{minipage}[b]{0.058\textwidth}
        \centering
        {\tiny $\bm{x}_1$}
        \includegraphics[width=\textwidth]{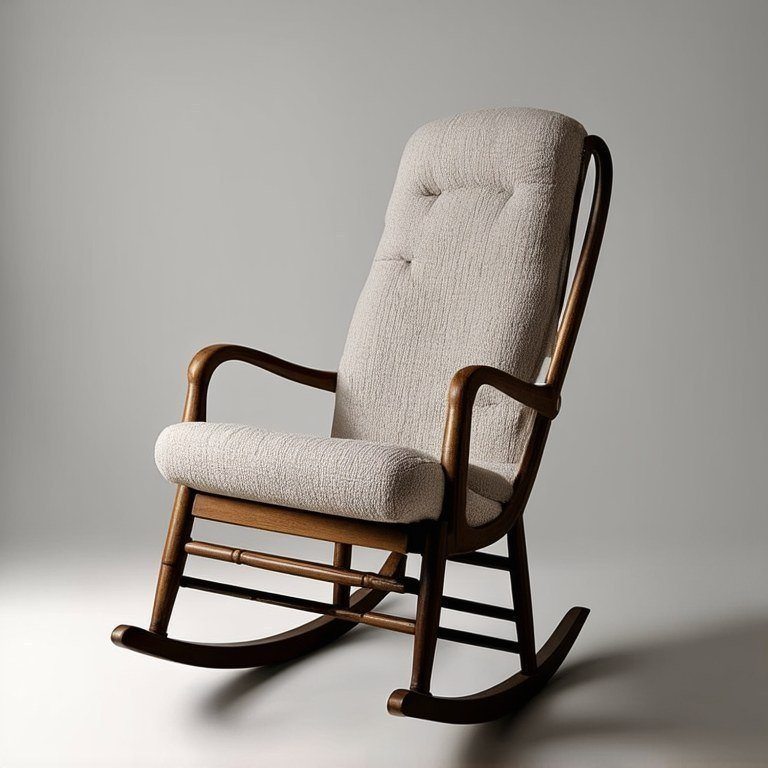}
        {\tiny $\bm{x}_2$}
        \includegraphics[width=\textwidth]{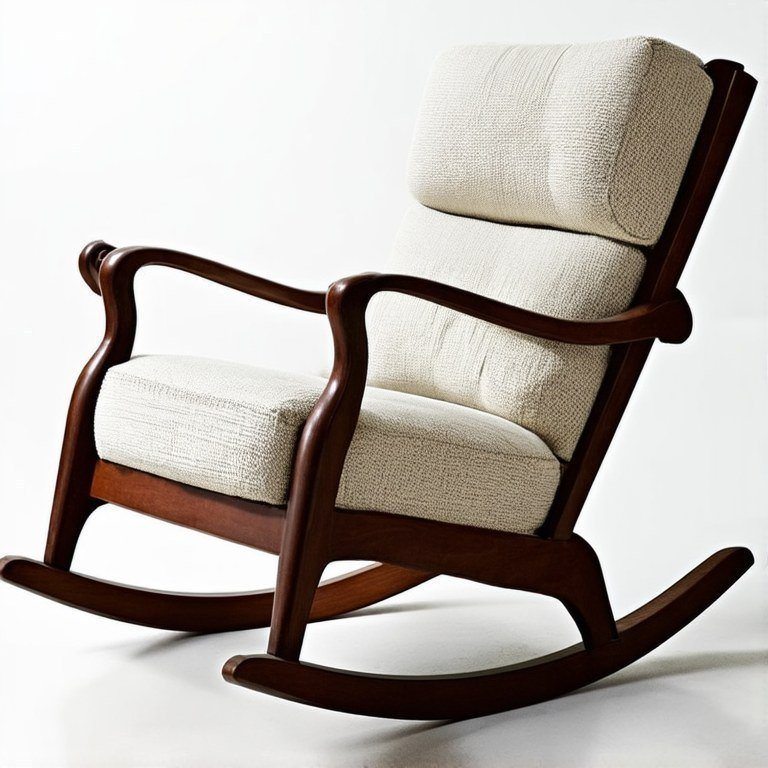}
        {\tiny $\bm{x}_3$}
        \includegraphics[width=\textwidth]{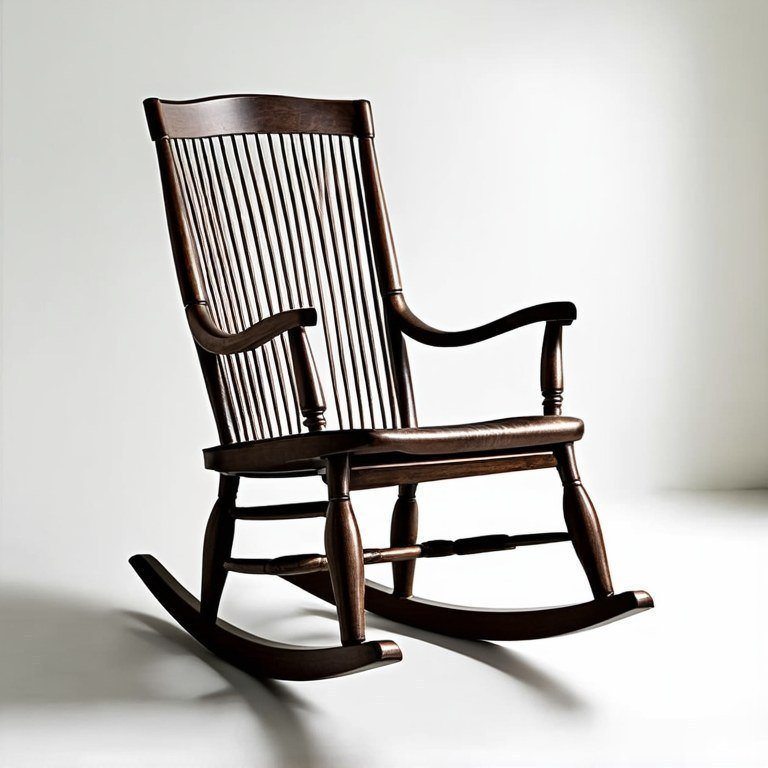}
        {\tiny $\bm{x}_4$}
        \includegraphics[width=\textwidth]{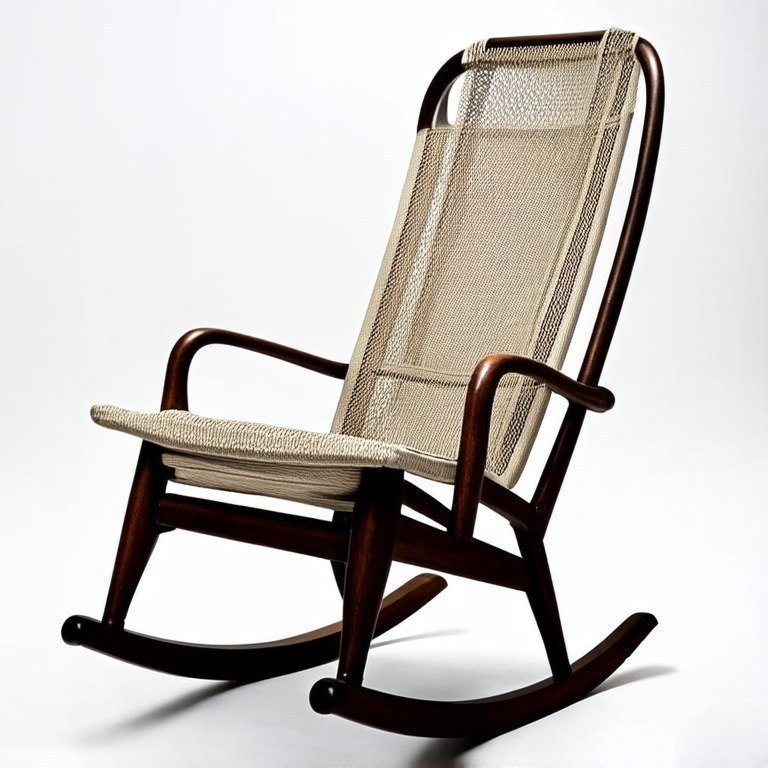}
        {\tiny $\bm{x}_5$}
        \includegraphics[width=\textwidth]{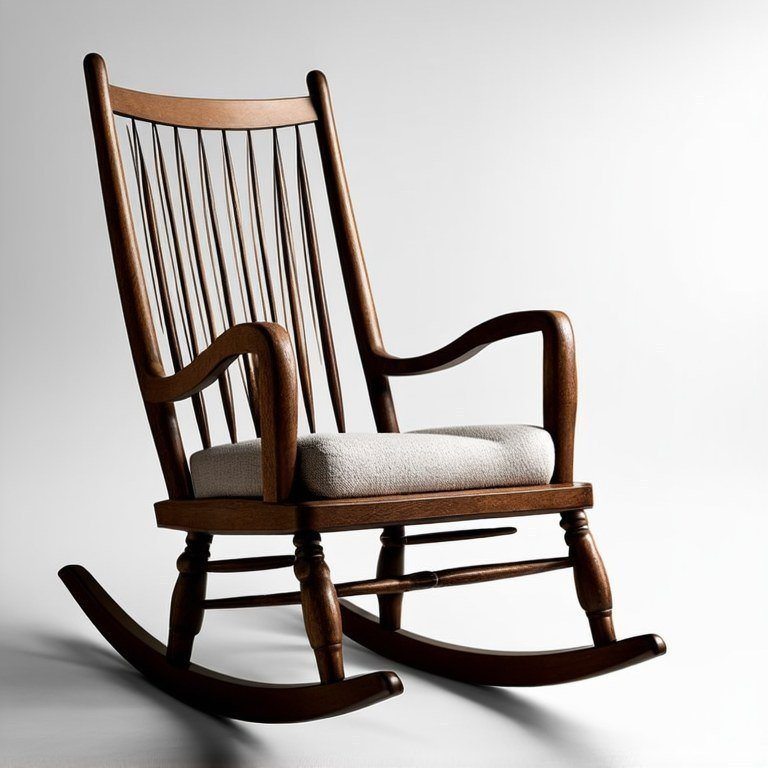}
    \end{minipage}
    \hspace{0.001\textwidth} 
    \begin{minipage}[b]{0.44\textwidth}
        \includegraphics[width=\textwidth]{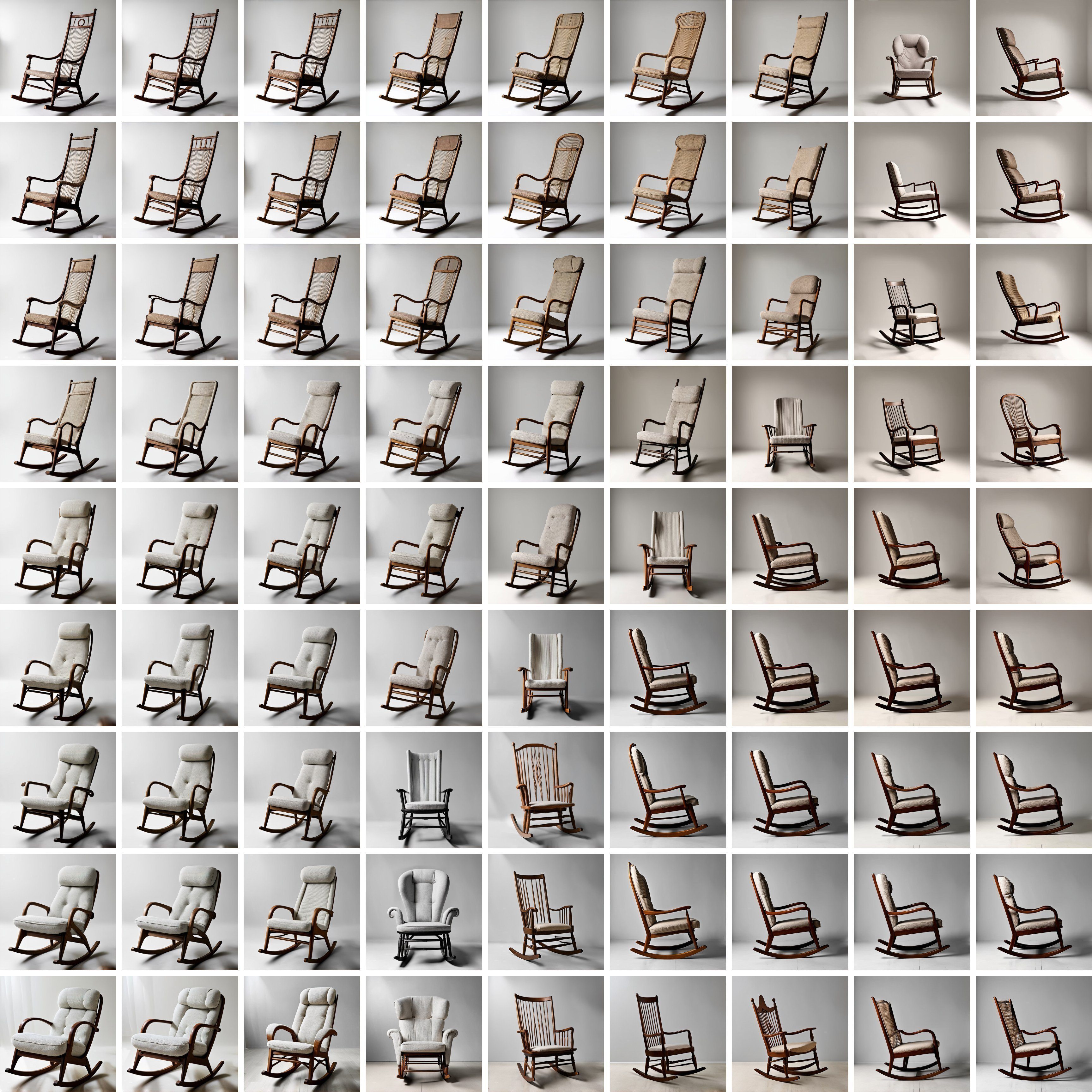}
    \end{minipage}
    \hspace{0.001\textwidth} 
    \begin{minipage}[b]{0.44\textwidth}
        \includegraphics[width=\textwidth]{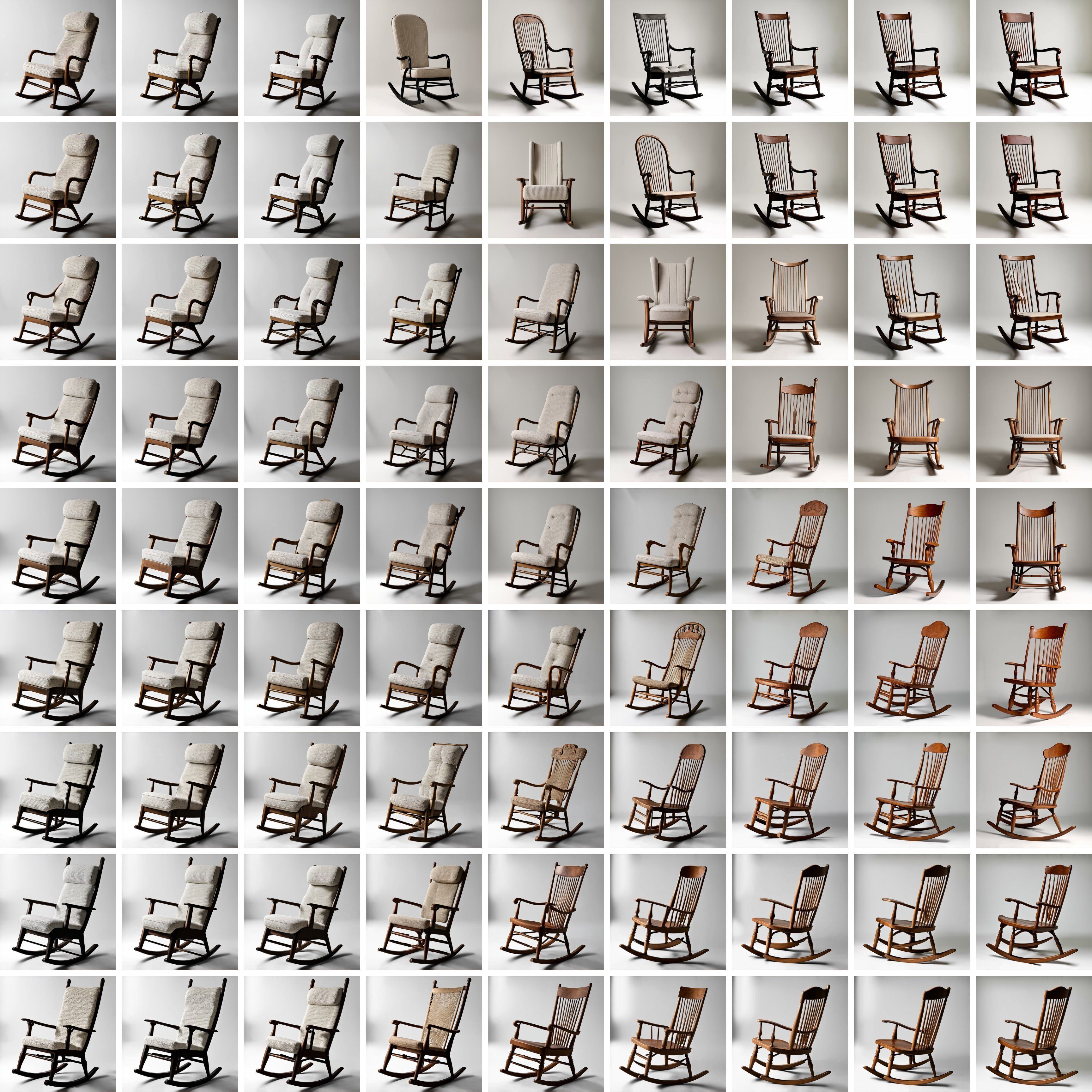}
    \end{minipage}
    \caption{
    \textbf{Low-dimensional subspaces}. 
    The latents $\bm{x}_1, \dots, \bm{x}_5$ (corresponding to images) are converted into basis vectors and used to define a 5-dimensional subspace. The grids show generations from the flow matching model Stable Diffusion 3~\citep{esser2024scaling} over uniform grid points in the subspace coordinate system, where the left and right grids are for the dimensions $\{ 1, 2 \}$ and $\{3, 4\}$, respectively, centered around the coordinate for $\bm{x}_1$. Each coordinate in the subspace correspond to a linear combination of the basis vectors, which through LOL all yield high-quality generations.
    }
    \label{fig:rocking_chair}
\end{figure}

\subsection{Low-dimensional subspaces}

\begin{figure}[ht]
    \centering
    \begin{minipage}[b]{0.0435\textwidth}
        \centering
        {\tiny $\bm{x}_1$}
        \includegraphics[width=\textwidth]{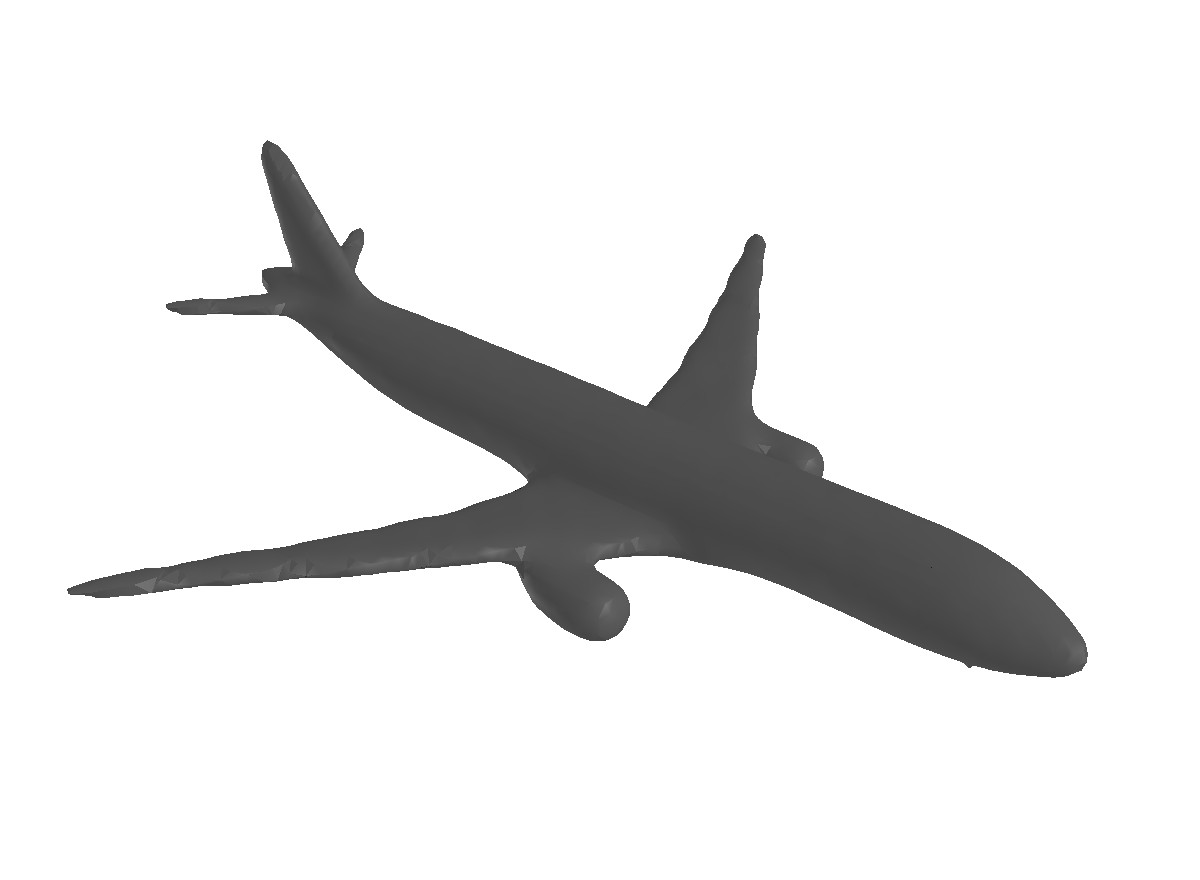}
        {\tiny $\bm{x}_2$}
        \includegraphics[width=\textwidth]{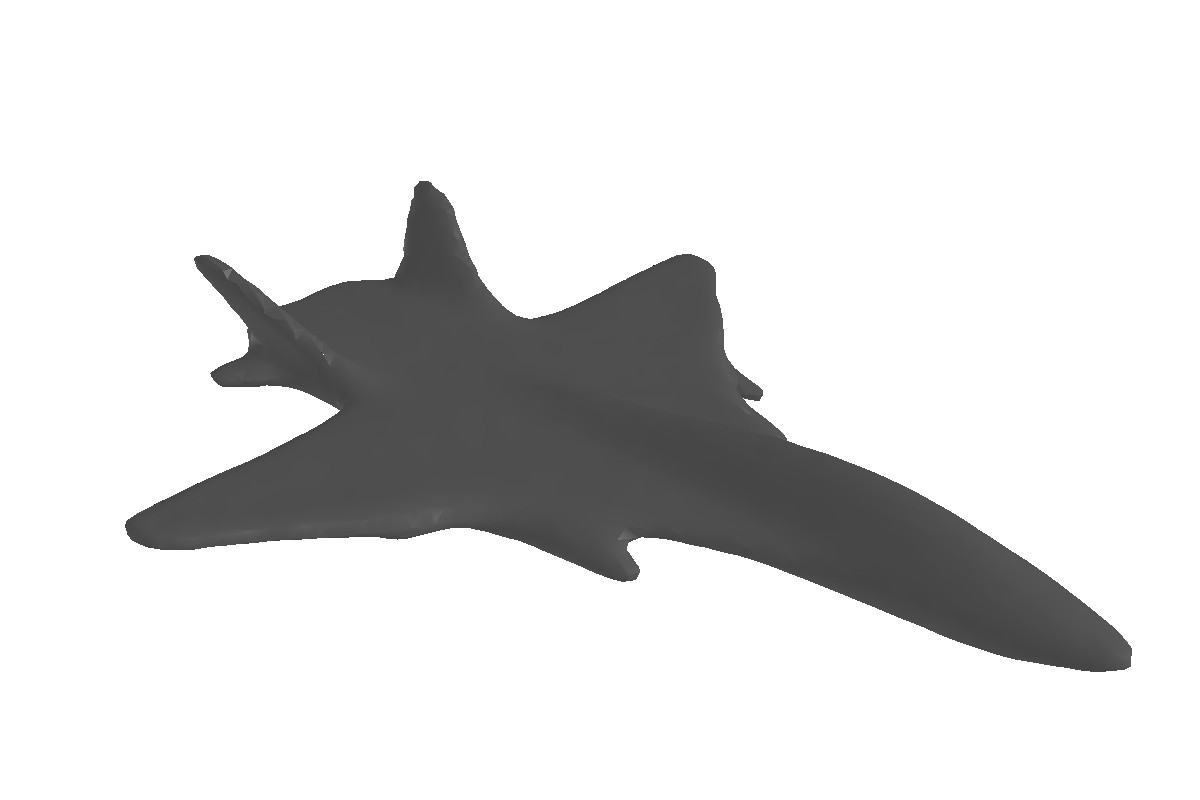}
        {\tiny $\bm{x}_3$}
        \vspace{4.50em}
        \includegraphics[width=\textwidth]{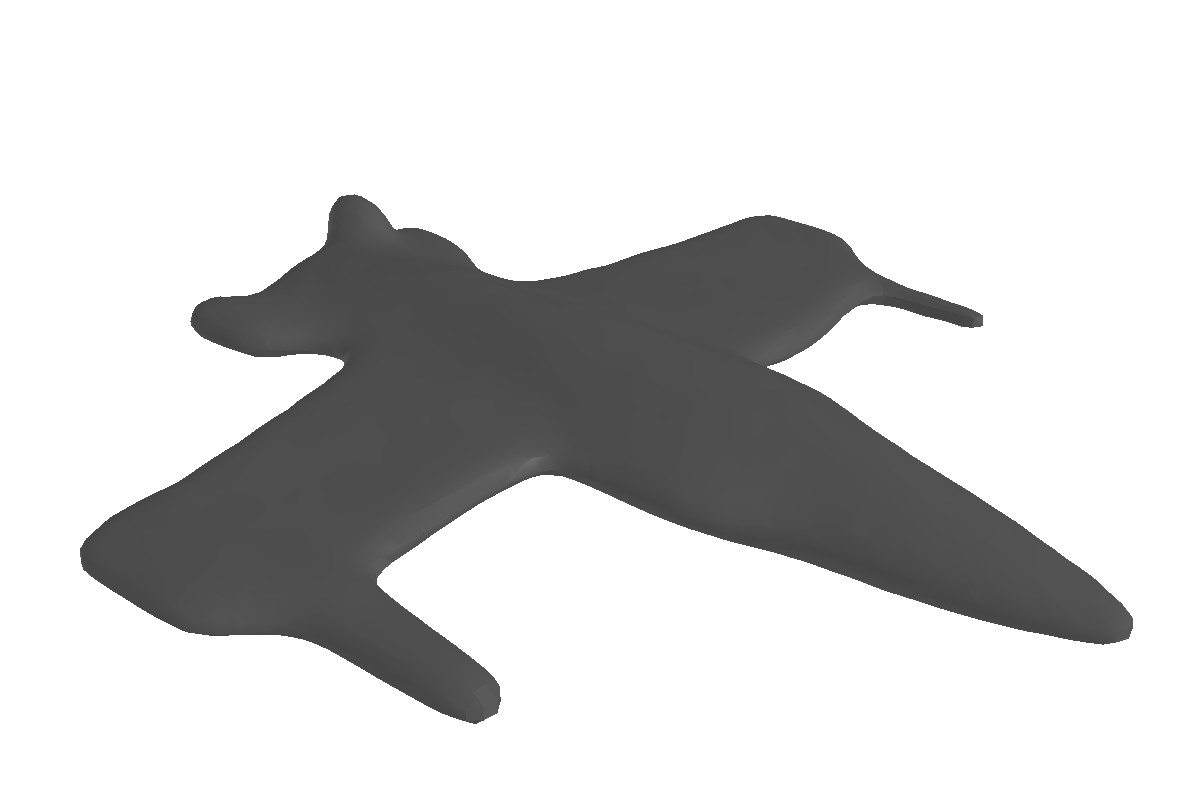}
    \end{minipage}
    \hspace{0.001\textwidth} 
    \begin{minipage}[b]{0.41\textwidth}
        \includegraphics[width=\textwidth]{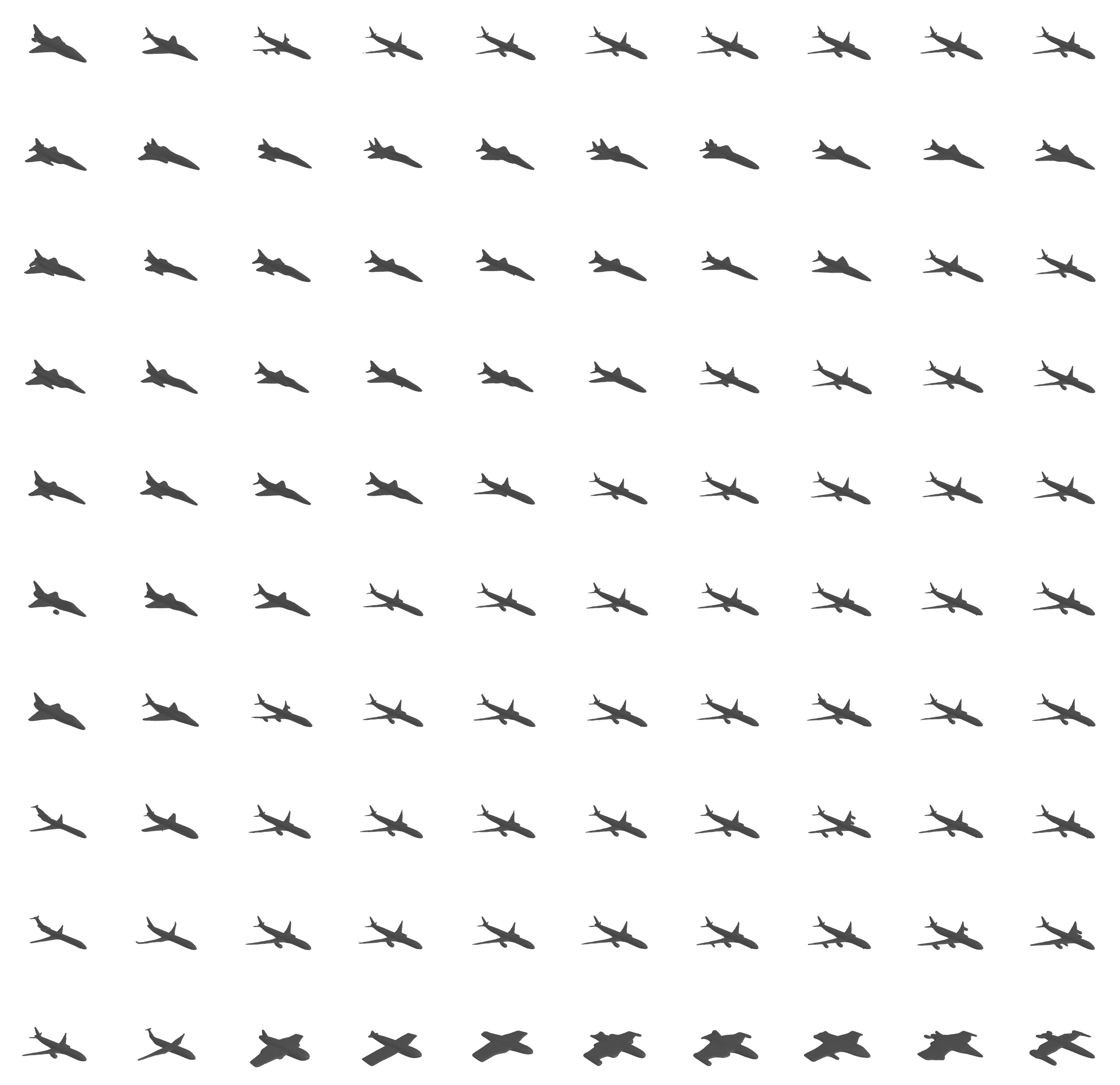}
    \end{minipage}
    \hspace{0.025\textwidth} 
    \begin{minipage}[b]{0.450\textwidth}
        \raisebox{0.525em}{\includegraphics[width=\textwidth]{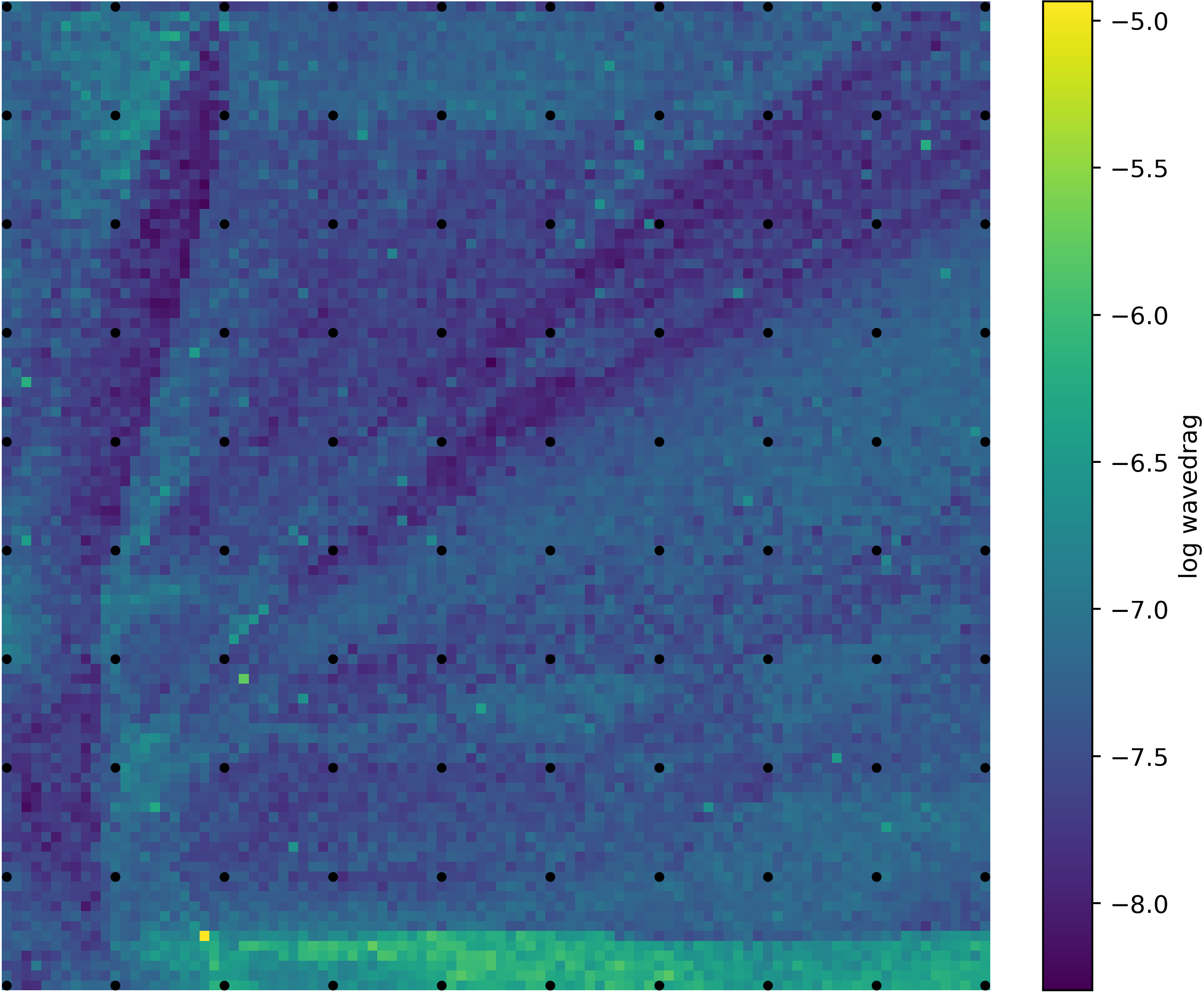}}
    \end{minipage}
    \caption{
    \textbf{Model-agnostic subspace definitions}. 
    The latents $\bm{x}_1, \bm{x}_2$ and $\bm{x}_3$ --- which via the generative model correspond to 3D designs --- are converted into basis vectors and used to define a three-dimensional subspace. The left plot shows the generated designs corresponding to a ten-by-ten grid in a two-dimensional slice of the subspace, shown over a region. 
    The right plot shows the wavedrag, as evaluated by simulation in OpenVSP~\citep{mcdonald2022open} for the designs over a 100-by-100 grid in the same region, with the respective design coordinates (from the left plot) shown as black dots. We trained the SLIDE~\citep{lyu2023controllable} diffusion model on ShapeNet~\citep{chang2015shapenet}. LOL allows subspaces to be defined without any model-specific treatment.
    }
    \label{fig:shapenet_subspace}
\end{figure}

In Figure~\ref{fig:car_grid} and Figure~\ref{fig:rocking_chair} we illustrate slices of a 5-dimensional subspace of the latent space of a flow matching model, indexing high-dimensional images of sports cars and rocking chairs, respectively. The subspaces here are defined using five images (one per desired dimension), formed using our LOL method described in Section~\ref{sec:transformed_gaussian_variables} to transform the linear combinations of the corresponding projections. 
Our approach is dimensionality and model-agnostic, which we illustrate in Figure~\ref{fig:shapenet_subspace}, where the same procedure is used on a completely different model without adaptations --- where we define a three-dimensional subspace based on three designs in a point cloud diffusion model with mesh reconstruction. 
We evaluate designs in this subspace using a simple computational fluid dynamics simulation to illustrate that our approach allows objective functions to be defined over the space, in turn allowing off-the-shelf optimisation methods to be applied to search for designs. 
In Figure~\ref{fig:baseline_car_grid} in the appendix we show corresponding grids to the sports cars in Figure~\ref{fig:car_grid} without the proposed LOL transformation, which either leads to implausible images or the same image for each coordinate, depending on the model. 
In the appendix (Section~\ref{sec:additional_qual}) we also include more slices and examples; including subspaces using the diffusion model Stable Diffusion 2.1~\citep{rombach2022high}.

\section{Related work}
\label{sec:related_work}

\textbf{Generative models with non-Gaussian priors}. In~\cite{fadel2021principled} and ~\cite{davidson2018hyperspherical}, in the context of normalizing flows and VAEs, respectively, Dirichlet and von Mises-Fisher prior distributions are explored with the goal of improving interpolation performance. However, these methods require training the model with the new latent distribution, which is impractical and untested for the large pretrained models we consider.

\textbf{Conditional Diffusion}.
An additional way to control the generation process of a diffusion model is to guide the generative process with additional information, such as text and labels, to produce outputs that meet specific requirements, such as where samples are guided towards a desired class~\citep{dhariwal2021diffusion,ho2022classifier}, or to align outputs with text descriptions~\citep{radford2021learning}.
Conditioning is complementary to latent space manipulation. For example, when making Figure~\ref{fig:car_grid} we used conditioning (a prompt)
to constrain the generation to sports cars, however, the variation of images fulfilling this constraint is encoded in the latent space. 

\textbf{Low-dimensional representations}.
We have shown that LOL can provide expressive low-dimensional representations of latent spaces of generative models. To the best of our knowledge, the most closely related line of work was initiated in~\cite{kwon2022diffusion}, where it was shown that semantic edit directions can recovered from activations of a UNet~\citep{ronneberger2015u} denoiser network during generation. Using a pretrained CLIP (Contrastive Language-Image Pre-Training)~\citep{radford2021learning} model to define directions within the inner-most feature map of the UNet architecture, named h-space, they show that some semantic edits, such as adding glasses to an image of a person, correspond to linear edits in the h-space. 
More recently,~\cite{haas2024discovering} demonstrate that h-space edit directions can also be found through Principle Component Analysis,
and~\cite{park2023understanding} propose a pullback metric that transfers edits in the h-space into the original latent space. However, while these three approaches demonstrate impressive editing capabilities on images, they all require a modification to the generative process and are limited to diffusion models built with UNet architectures. In our work low-dimensional representations are instead formed in closed-form as linear subspaces based on a (free) choice of latent basis vectors. These basis vectors could be obtained through various methodologies, including via the pull-back metric in~\cite{park2023understanding}, which may be interesting to explore in future work.

\section{Conclusion}

In this paper we propose LOL, a general and simple scheme to define linear combinations of latents that maintain a prespecified latent distribution and demonstrate its effectiveness for interpolation and defining subspaces within the latent spaces of generative models. Of course, these linear combinations only follow the desired distribution if the original (seed) latents do. Therefore, we propose the adoption of distribution tests to assess the validity of latents obtained from e.g. inversions. 
Still, the existing methods for distribution testing may not align perfectly with a given network's expectations --- e.g. many tests may be stricter than necessary for the network --- and it may be an interesting direction of future work to develop tailored methods, along with more extensive comparisons of existing tests in the context of generative models.


\subsubsection*{Acknowledgments}
We thank Samuel Willis and Daattavya Aggarwal (University of Cambridge) for their valuable contributions. Samuel contributed important ideas related to computational fluid dynamics simulation and the SLIDE model, and constructed the setup and plots used in Figure~\ref{fig:shapenet_subspace}. Daattavya assisted in verifying proofs and offered comments that greatly improved the manuscript.


\bibliography{main}
\bibliographystyle{iclr2025_conference}

\appendix

\section{Optimal transport map for general latent linear combinations with independent elements}
\label{sec:approx_general_latents}

In Section~\ref{sec:transformed_gaussian_variables} we introduced LOL for Gaussian latent variables, the most commonly used latent distribution in modern generative models.  
We now extend LOL to general latent distributions with independent real-valued elements, provided the cumulative distribution function (CDF) and its inverse is known for each respective element.

Let \( p(\bm{x}) \) be a distribution with independent components across elements, such that  
$p(\bm{x}) = \prod_{d = 1}^D p_d(x^{(d)}) $, where $ p_d $ is the distribution of element $ x^{(d)} \in \mathbb{R} $ with CDF $ \Phi_d $ and inverse CDF $ \Phi_d^{-1} $.  
Given linear combination weights \( \{ w_k \}_{k=1}^K \), \( w_k \in \mathbb{R} \), and seeds \( \{ \bm{x}_k\}_{k=1}^K \), \( \bm{x}_k \sim p \), we obtain \( \bm{z} \sim p \) via the transport map $\mathcal{T}_{\{w\}} := \{ \mathcal{T}_{\{w\}}^{(d)} \}_{d=1}^D$ defined per latent element $z^{(d)}$ as:
\begin{align}
    z^{(d)} := \mathcal{T}_{\{w\}}^{(d)}(\{x_k^{(d)}\}) = \Phi^{-1}_d(\Phi_{\epsilon}(\epsilon^{(d)})) &&
    \epsilon^{(d)} = \mathcal{T}_{\bm{w}, 0} \left( \scriptstyle\sum_{k=1}^K w_k \epsilon^{(d)}_k \right) &&
    \epsilon_k^{(d)} = \Phi_{\epsilon}^{-1}(\Phi_d(x^{(d)}_k)),
\end{align}
where $\Phi_{\epsilon}$ is the CDF of the standard Gaussian, and $\mathcal{T}_{\bm{w}, 0}: \mathbb{R} \rightarrow \mathbb{R}$ is the transport map for standard Gaussians as defined in Equation~\ref{eq:z}.
We will now show that the proposed map leads to the target distribution $p$ and then show that this map is Monge optimal, leveraging results from Section~\ref{sec:COG}.

\begin{lemma}
Let \( \bm{z} \) be a transformed variable defined through the transport map \( \mathcal{T}_{\{w\}} \) applied to a linear combination of independent latent variables \( \bm{x}_k \sim p(\bm{x}) \), such that
\[
z^{(d)} = \mathcal{T}_{\{w\}}^{(d)}(\{x_k^{(d)}\}) = \Phi^{-1}_d(\Phi_{\epsilon}(\mathcal{T}_{\bm{w}, 0} ( \sum_{k=1}^K w_k \epsilon_k^{(d)} )))
\]
where
\[
\epsilon_k^{(d)} = \Phi_{\epsilon}^{-1}(\Phi_d(x^{(d)}_k)).
\]
Then, \( \bm{z} \sim p(\bm{x}) \), meaning that the transformed variable follows the target latent distribution.
\end{lemma}

\begin{proof}
We prove that \( \bm{z} \sim p(\bm{x}) \) by verifying that each element \( z^{(d)} \) follows the corresponding marginal distribution \( p_d \).

Since \( x_k^{(d)} \sim p_d \), its cumulative distribution function (CDF) is given by \( \Phi_d(x_k^{(d)}) \). Defining the standard normal equivalent \( \epsilon_k^{(d)} = \Phi_{\epsilon}^{-1}(\Phi_d(x_k^{(d)})) \), we know that \( \epsilon_k^{(d)} \sim \mathcal{N}(0,1) \) because the transformation via \( \Phi_{\epsilon}^{-1} \) preserves uniformity.

Now, consider the weighted sum:
\[
\epsilon^{(d)} = \sum_{k=1}^K w_k \epsilon_k^{(d)}.
\]
Since the \( \epsilon_k^{(d)} \) are independent standard normal variables, the resulting sum follows:
\[
\epsilon^{(d)} \sim \mathcal{N}(\alpha \cdot 0, \beta \cdot 1) = \mathcal{N}(0, \beta),
\]
where \( \alpha = \sum_{k=1}^K w_k \) and \( \beta = \sum_{k=1}^K w_k^2 \).

Applying the optimal Gaussian transport map \( \mathcal{T}_{\bm{w}, 0} \) (as proven in Section~\ref{sec:COG}) to \( \epsilon^{(d)} \), we obtain:
\[
\mathcal{T}_{\bm{w}, 0}(\epsilon^{(d)}) = \frac{\epsilon^{(d)}}{\sqrt{\beta}}.
\]
Since \( \epsilon^{(d)} \sim \mathcal{N}(0, \beta) \), it follows that:
\[
\frac{\epsilon^{(d)}}{\sqrt{\beta}} \sim \mathcal{N}(0,1).
\]
Thus, applying the standard normal CDF, we get:
\[
\Phi_{\epsilon}(\mathcal{T}_{\bm{w}, 0}(\epsilon^{(d)})) \sim U(0,1),
\]
which is a uniform distribution.

Finally, applying the inverse CDF \( \Phi_d^{-1} \) of the target distribution \( p_d \) results in:
\[
z^{(d)} = \Phi_d^{-1}(\Phi_{\epsilon}(\mathcal{T}_{\bm{w}, 0}(\epsilon^{(d)}))) \sim p_d.
\]
Since this holds for each \( d \), we conclude that \( \bm{z} \sim p(\bm{x}) \), proving that the transport map correctly transforms the linear combination into the desired latent distribution.
\end{proof}

\begin{lemma}
The transport map \( \mathcal{T}_{\{w\}} \) defined element-wise as
\[
z^{(d)} = \mathcal{T}_{\{w\}}^{(d)}(\{x_k^{(d)}\}) = \Phi^{-1}_d(\Phi_{\epsilon}(\mathcal{T}_{\bm{w}, 0} ( \sum_{k=1}^K w_k \epsilon_k^{(d)} )))
\]
where
\[
\epsilon_k^{(d)} = \Phi_{\epsilon}^{-1}(\Phi_d(x^{(d)}_k)),
\]
is the Monge optimal transport map, minimizing the quadratic Wasserstein distance \( W_2 \) between the source distribution of the weighted sum of independent elements and the target latent distribution \( p \).
\end{lemma}

\begin{proof}
We want to show that \( \mathcal{T}_{\{w\}} \) minimizes the quadratic Wasserstein distance \( W_2 \) between the source distribution (the weighted sum of independent seeds) and the target distribution.

\paragraph{Step 1: General Theorem for One-Dimensional Optimal Transport}
For two probability measures \( P \) and \( Q \) on \( \mathbb{R} \) with cumulative distribution functions (CDFs) \( F_P \) and \( F_Q \), respectively, the optimal transport map in the \( W_2 \) sense (which minimizes the expected squared Euclidean distance) is given by the monotone increasing function:

\[
\mathcal{T}^* = F_Q^{-1} \circ F_P.
\]

This result follows from classical optimal transport theory for univariate distributions (see Remark 2.30 in \cite{peyre2020computationaloptimaltransport}).

\paragraph{Step 2: Applying to Our Case}
For each element \( d \), the source distribution is given by the weighted sum:

\[
\epsilon^{(d)} = \sum_{k=1}^K w_k \epsilon_k^{(d)}.
\]

Each \( \epsilon_k^{(d)} \) is obtained by transforming the original variable \( x_k^{(d)} \) using:

\[
\epsilon_k^{(d)} = \Phi_{\epsilon}^{-1}(\Phi_d(x_k^{(d)})).
\]

Since the \( \epsilon_k^{(d)} \) are independent standard normal variables, their weighted sum follows:

\[
\epsilon^{(d)} \sim \mathcal{N}(0, \beta), \quad \text{where} \quad \beta = \sum_{k=1}^K w_k^2.
\]

To map this distribution to the standard normal \( \mathcal{N}(0,1) \), the optimal transport map for Gaussians (derived in Section~\ref{sec:COG}) is:

\[
\mathcal{T}_{\bm{w}, 0}(\epsilon^{(d)}) = \frac{\epsilon^{(d)}}{\sqrt{\beta}}.
\]

Applying the standard normal CDF \( \Phi_{\epsilon} \) to this transformation ensures that:

\[
\Phi_{\epsilon}(\mathcal{T}_{\bm{w}, 0}(\epsilon^{(d)})) \sim U(0,1).
\]

By the general optimal transport theorem stated in Step 1, the optimal transport map to the target distribution \( p_d \) is then:

\[
z^{(d)} = \Phi_d^{-1}(\Phi_{\epsilon}(\mathcal{T}_{\bm{w}, 0}(\epsilon^{(d)}))).
\]

Since each element \( d \) is mapped independently using the element-wise Monge optimal transport map, the full transformation \( \mathcal{T}_{\{w\}} \) is optimal in the Monge sense, minimizing the quadratic Wasserstein distance \( W_2 \).

Thus, \( \mathcal{T}_{\{w\}} \) is the Monge optimal transport map from the distribution of the weighted sum of independent elements to the target latent distribution \( p \), completing the proof.
\end{proof}

\section{Optimal transport map for Gaussian latent linear combinations}
\label{sec:COG}

In Section~\ref{sec:transformed_gaussian_variables} we introduced LOL, which we use to transform random variables arising from linear combinations --- for example interpolations or subspace projections --- to the latent distribution $\mathcal{N}(\bm{\mu}, \bm{\Sigma})$ via a transport map $\mathcal{T}_{\bm{w}, \bm{\mu}}$ defined per set of linear combination weights $\bm{w} = [w_1, \dots, w_K \}, w_k \in \mathbb{R}$. We will now derive the distribution of a (uncorrected) linear combination $\bm{y}$ --- which we show does not match the latent distribution --- followed by the distribution of the transformed variable $\bm{z}$, which we show does. We will then show that the proposed transport map $\mathcal{T}_{\bm{w}, \bm{\mu}}$ is Monge optimal.

\begin{lemma}
Let \(\bm{y}\) be a linear combination of \(K\) i.i.d. random variables \(\bm{x}_k \sim \mathcal{N}(\bm{\mu}, \bm{\Sigma})\), where \(\bm{y}\) is defined as
\[
\bm{y} := \sum_{k=1}^K w_k \bm{x}_k = \bm{w}^T\bm{X},
\]
with \(w_k \in \mathbb{R}\), \(\bm{y} \in \mathbb{R}^D\), \(\bm{x}_k \in \mathbb{R}^D\), \(\bm{w} = [w_1, w_2, \dots, w_K]\), $K \in \mathbb{N}_{>0}$ and \(\bm{X} = [\bm{x}_1, \bm{x}_2, \dots, \bm{x}_K]\). Then \(\bm{y}\) is a Gaussian random variable with the distribution
\[
\bm{y} \sim \mathcal{N}(\alpha \bm{\mu}, \beta \bm{\Sigma}),
\]
where
\[
\alpha = \sum_{k=1}^K w_k \quad \text{and} \quad \beta = \sum_{k=1}^K w_k^2.
\]
\end{lemma}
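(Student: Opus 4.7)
The plan is to proceed in three stages: first establish that $\bm{y}$ is Gaussian, then compute its mean, and finally compute its covariance using the i.i.d.\ assumption. Nothing here is deep; the whole point of the lemma is to pin down the scalars $\alpha$ and $\beta$ cleanly so that the transformation in Equation~\ref{eq:z} has well-defined targets.

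First I would argue Gaussianity via characteristic functions, which handles arbitrary $\bm{\Sigma}$ (possibly singular) uniformly. Writing $\phi_{\bm{x}^{(k)}}(\bm{t}) = \exp\!\bigl(i\bm{t}^{T}\bm{\mu} - \tfrac{1}{2}\bm{t}^{T}\bm{\Sigma}\bm{t}\bigr)$ and using independence,
\[
\phi_{\bm{y}}(\bm{t}) = \mathbb{E}\bigl[e^{i\bm{t}^{T}\sum_k w_k \bm{x}^{(k)}}\bigr] = \prod_{k=1}^{K}\phi_{\bm{x}^{(k)}}(w_k\bm{t}) = \exp\!\Bigl(i\bm{t}^{T}\bigl(\textstyle\sum_k w_k\bigr)\bm{\mu} - \tfrac{1}{2}\bm{t}^{T}\bigl(\textstyle\sum_k w_k^{2}\bigr)\bm{\Sigma}\bm{t}\Bigr),
\]
which is exactly the characteristic function of $\mathcal{N}(\alpha\bm{\mu}, \beta\bm{\Sigma})$ with $\alpha$ and $\beta$ as claimed. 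By the uniqueness of characteristic functions, this determines the distribution of $\bm{y}$ and delivers all three conclusions (Gaussianity, mean, covariance) in one step.

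If a more elementary route is preferred, I would instead split the argument: use linearity of expectation to get $\mathbb{E}[\bm{y}] = \sum_k w_k\bm{\mu} = \alpha\bm{\mu}$, and then expand $\mathrm{Cov}(\bm{y}) = \mathbb{E}\bigl[(\bm{y}-\alpha\bm{\mu})(\bm{y}-\alpha\bm{\mu})^{T}\bigr] = \sum_{k,\ell} w_k w_\ell\,\mathbb{E}\bigl[(\bm{x}^{(k)}-\bm{\mu})(\bm{x}^{(\ell)}-\bm{\mu})^{T}\bigr]$, where the independence of $\bm{x}^{(k)}$ and $\bm{x}^{(\ell)}$ for $k\neq\ell$ kills all cross terms, leaving $\sum_k w_k^{2}\bm{\Sigma} = \beta\bm{\Sigma}$. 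Gaussianity would then be invoked as a standard fact about linear combinations of independent Gaussian vectors.

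I do not anticipate any genuine obstacle: the only place to be careful is to keep track that the cross-covariance terms vanish because of \emph{independence} (not merely because the $\bm{x}^{(k)}$ share a distribution), and to note that Gaussianity of the sum requires \emph{joint} Gaussianity of the summands, which is supplied here by independence. The characteristic-function route is the cleanest because it folds both of those observations into a single computation.
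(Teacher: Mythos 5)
Your proposal is correct, and your primary route is genuinely different from the paper's. The paper's proof is exactly your ``more elementary'' fallback: it computes $\mathbb{E}[\bm{y}] = \alpha\bm{\mu}$ by linearity and $\mathrm{Cov}(\bm{y}) = \beta\bm{\Sigma}$ by noting that independence kills the cross-covariance terms, and then simply asserts that $\bm{y}$ is Gaussian with those parameters. Your characteristic-function argument buys something the paper leaves implicit: matching the first two moments does not by itself establish Gaussianity of the sum, whereas $\phi_{\bm{y}}(\bm{t}) = \prod_k \phi_{\bm{x}^{(k)}}(w_k\bm{t})$ together with uniqueness of characteristic functions delivers the distributional claim, the mean, and the covariance in one computation, and does so uniformly even when $\bm{\Sigma}$ is singular. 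Your closing remarks --- that the cross terms vanish because of independence rather than identical distribution, and that Gaussianity of the sum needs joint Gaussianity of the summands --- are precisely the two points the paper's proof glosses over, so your version is the tighter of the two.
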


\begin{proof}
Given \(\bm{x}_k \sim \mathcal{N}(\bm{\mu}, \bm{\Sigma})\), we know that each \(\bm{x}_k\) has mean \(\bm{\mu}\) and covariance matrix \(\bm{\Sigma}\).

Define \(\bm{y} = \bm{w}^T \bm{X} = \sum_{k=1}^K w_k \bm{x}_k\).

First, we calculate the mean of \(\bm{y}\):

\[
\mathbb{E}[\bm{y}] = \mathbb{E}\left[\sum_{k=1}^K w_k \bm{x}_k\right] = \sum_{k=1}^K w_k \mathbb{E}[\bm{x}_k] = \sum_{k=1}^K w_k \bm{\mu} = \left(\sum_{k=1}^K w_k\right) \bm{\mu} = \alpha \bm{\mu}.
\]

Next, we calculate the covariance of \(\bm{y}\):

\[
\text{Cov}(\bm{y}) = \text{Cov}\left(\sum_{k=1}^K w_k \bm{x}_k\right) = \sum_{k=1}^K w_k^2 \text{Cov}(\bm{x}_k),
\]
since the \(\bm{x}_k\) are i.i.d. and thus \(\text{Cov}(\bm{x}^{(i)}, \bm{x}^{(j)}) = 0\) for \(i \ne j\).

Given that \(\bm{x}_k \sim \mathcal{N}(\bm{\mu}, \bm{\Sigma})\), we have \(\text{Cov}(\bm{x}_k) = \bm{\Sigma}\). Therefore,

\[
\text{Cov}(\bm{y}) = \sum_{k=1}^K w_k^2 \bm{\Sigma} = \left(\sum_{k=1}^K w_k^2\right) \bm{\Sigma} = \beta \bm{\Sigma}.
\]

Hence, \(\bm{y} \sim \mathcal{N}(\alpha \bm{\mu}, \beta \bm{\Sigma})\) with \(\alpha = \sum_{k=1}^K w_k\) and \(\beta = \sum_{k=1}^K w_k^2\).

\end{proof}

\begin{lemma}
Let \(\bm{z}\) be defined as
\begin{align*}
\bm{z} = \mathcal{T}_{\bm{w}, \bm{\mu}}(\bm{y}) &&
\mathcal{T}_{\bm{w}, \bm{\mu}}(\bm{y}) = (1 - \frac{\alpha}{\sqrt{\beta}})\bm{\mu} + \frac{\bm{y}}{\sqrt{\beta}} &&
\alpha = \sum_{k=1}^K w_k &&
\beta = \sum_{k=1}^K w_k^2
\end{align*}
with \(\bm{y} \sim \mathcal{N}(\alpha \bm{\mu}, \beta \bm{\Sigma})\). Then \(\bm{z} \sim \mathcal{N}(\bm{\mu}, \bm{\Sigma})\).
\end{lemma}

\begin{proof}
Given \(\bm{y} \sim \mathcal{N}(\alpha \bm{\mu}, \beta \bm{\Sigma})\), we need to show that \(\bm{z} \sim \mathcal{N}(\bm{\mu}, \bm{\Sigma})\).

First, we calculate the mean of \(\bm{z}\):

\[
\mathbb{E}[\bm{z}] = \mathbb{E}\left[\bm{a} + \bm{B} \bm{y}\right] = \bm{a} + \bm{B} 
\mathbb{E}[\bm{y}] = \left(1 - \frac{\alpha}{\sqrt{\beta}}\right)\bm{\mu} + \frac{1}{\sqrt{\beta}} \alpha \bm{\mu} = \bm{\mu},
\]
where 
\[
\bm{a} = \left(1 - \frac{\alpha}{\sqrt{\beta}}\right)\bm{\mu} \quad \text{and} \quad \bm{B} = \frac{1}{\sqrt{\beta}}\bm{I}.
\]

Next, we calculate the covariance of \(\bm{z}\):

\[
\text{Cov}(\bm{z}) = \text{Cov}\left(\bm{a} + \bm{B} \bm{y}\right) = \bm{B} \text{Cov}(\bm{y}) \bm{B}^T = \frac{1}{\sqrt{\beta}} \beta \bm{\Sigma} \frac{1}{\sqrt{\beta}} = \bm{\Sigma}.
\]

Since \(\bm{z}\) has mean \(\bm{\mu}\) and covariance \(\bm{\Sigma}\), we conclude that

\[
\bm{z} \sim \mathcal{N}(\bm{\mu}, \bm{\Sigma}).
\]
\end{proof}

\begin{lemma}
The transport map \(\mathcal{T}_{\bm{w}, \bm{\mu}}\) defined by
\[
\mathcal{T}_{\bm{w}, \bm{\mu}}(\bm{y}) = (1 - \frac{\alpha}{\sqrt{\beta}})\bm{\mu} + \frac{\bm{y}}{\sqrt{\beta}}
\]
is the Monge optimal transport map, minimizing the quadratic Wasserstein distance \(W_2\) between \(\mathcal{N}(\alpha \bm{\mu}, \beta \bm{\Sigma})\) and \(\mathcal{N}(\bm{\mu}, \bm{\Sigma})\).
\end{lemma}

\begin{proof}
In optimal transport theory, the Monge problem seeks a transport map \(\mathcal{T}: \mathbb{R}^D \to \mathbb{R}^D\) that pushes forward the source distribution \(\mathcal{N}(\bm{\mu}_y, \bm{\Sigma}_y)\) to the target distribution \(\mathcal{N}(\bm{\mu}_z, \bm{\Sigma}_z)\) while minimizing the expected squared Euclidean distance:

\[
W_2^2(\mathcal{N}(\bm{\mu}_y, \bm{\Sigma}_y), \mathcal{N}(\bm{\mu}_z, \bm{\Sigma}_z)) = \inf_{\mathcal{T} \in \mathcal{M}} \mathbb{E}[\|\mathcal{T}(\bm{y}) - \bm{y}\|^2].
\]

The optimal transport map for Gaussian distributions is given by the affine transformation:

\[
\mathcal{T}(\bm{y}) = \bm{\mu}_z + \bm{A}(\bm{y} - \bm{\mu}_y),
\]

where \(\bm{A}\) is defined as

\[
\bm{A} := \bm{\Sigma}_y^{-1/2}
    (
    \bm{\Sigma}_y^{1/2}
    \bm{\Sigma}_z
    \bm{\Sigma}_y^{1/2}
    )^{1/2}
    \bm{\Sigma}_y^{-1/2}.
\]
See Remark 2.31 in \cite{peyre2020computationaloptimaltransport}.

Setting the parameters

\begin{align}
    \bm{\mu}_y &= \alpha\bm{\mu}, & \bm{\Sigma}_y &= \beta\bm{\Sigma}, \\
    \bm{\mu}_z &= \bm{\mu}, & \bm{\Sigma}_z &= \bm{\Sigma},
\end{align}

we compute \(\bm{A}\):

\begin{align}
    \bm{A} 
    = 
    (\beta\bm{\Sigma})^{-1/2}
    (
    (\beta\bm{\Sigma})^{1/2}
    \bm{\Sigma}
    (\beta\bm{\Sigma})^{1/2}
    )^{1/2}
    (\beta\bm{\Sigma})^{-1/2} 
    = 
    \frac{1}{\sqrt{\beta}}
    \bm{I}.
\end{align}

Thus, the optimal transport map simplifies to

\begin{align}
    \bm{z} := \mathcal{T}(\bm{y}) 
    = 
    \bm{\mu} + \frac{1}{\sqrt{\beta}}(\bm{y} - \alpha\bm{\mu})
    = 
    (1 - \frac{\alpha}{\sqrt{\beta}})\bm{\mu} + 
    \frac{\bm{y}}{\sqrt{\beta}}.
\end{align}

Since this matches exactly with \(\mathcal{T}_{\bm{w}, \bm{\mu}}\), we conclude that \(\mathcal{T}_{\bm{w}, \bm{\mu}}\) is the Monge optimal transport map minimising the quadratic Wasserstein distance.

\end{proof}

\section{Spherical interpolants of high-dimensional unit Gaussian random variables are approximately unit Gaussian}
\label{sec:spherical_interpolation_works}

Spherical linear interpolation (SLERP)~\citep{shoemake1985animating}  is defined as
\begin{align}
    \bm{y} = w_1\bm{x}_1 + w_2\bm{x}_2 
\end{align}
where 
\begin{align}
    w_i := \frac{\sin v_i \theta}{\sin \theta},
\end{align}
$v_i \in [0, 1]$ and $v_2 = 1 - v_1$ and $\cos \theta = \frac{\langle\bm{x}_1, \bm{x}_2\rangle}{||\bm{x}_1|| ||\bm{x}_2||}$, 
where $\cos \theta$ is typically referred to as the \emph{cosine similarity} of $\bm{x}_1$ and $\bm{x}_2$.

As such, using Equation~\ref{eq:alpha_beta}, we obtain 
\begin{align}
    \alpha = \sum_{k=1}^K w_k = \frac{\sin v_1 \theta}{\sin \theta} + \frac{\sin v_2 \theta}{\sin \theta} \quad \text{and} \quad \beta = \frac{\sin^2 v_1 \theta}{\sin^2 \theta} + \frac{\sin^2 v_2 \theta}{\sin^2 \theta}
\end{align}
As discussed in Section~\ref{sec:transformed_gaussian_variables}, for a linear combination $\bm{y}$ to be a random variable following distribution $\mathcal{N}(\bm{\mu}, \bm{\Sigma})$, given that $\bm{x}_1$ and $\bm{x}_2$ do, it must be true that $\alpha\bm{\mu} = \bm{\mu}$ and $\beta\bm{\Sigma} = \bm{\Sigma}$.

A common case is using unit Gaussian latents (as in e.g. the models~\cite{esser2024scaling} and \cite{rombach2022high} used in this paper), i.e. where $\bm{\mu} = \bm{0}, \bm{\Sigma} = \bm{I}$. In this case it trivially follows that $\alpha\bm{\mu} = \bm{\mu}$ since $\alpha\bm{0} = \bm{0}$. We will now show that $\beta \approx 1$ in this special (i.e. unit Gaussian) case.  

\begin{lemma}
Let 
\[
\beta = \frac{\sin^2 v \theta}{\sin^2 \theta} + \frac{\sin^2 (1 - v) \theta}{\sin^2 \theta},
\]
where \(\cos \theta = \frac{\langle \bm{x}_1, \bm{x}_2 \rangle}{\|\bm{x}_1\| \|\bm{x}_2\|}\) and \(\bm{x}_1, \bm{x}_2 \sim \mathcal{N}(\bm{0}, \bm{I})\). Then \(\beta \approx 1\) for large \(D\), $\forall v \in [0, 1]$.
\end{lemma}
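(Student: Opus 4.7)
The plan is to exploit the concentration of measure for high-dimensional Gaussians: two independent standard Gaussian vectors in $\mathbb{R}^D$ are nearly orthogonal with high probability, so $\theta \to \pi/2$, and at $\theta = \pi/2$ the expression for $\beta$ collapses by the Pythagorean identity.

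First I would analyse the behaviour of $\cos\theta$. Since $\bm{x}_1, \bm{x}_2 \sim \mathcal{N}(\bm{0}, \bm{I})$ independently, the inner product $\langle \bm{x}_1, \bm{x}_2\rangle$ has mean $0$ and variance $D$, while each $\|\bm{x}_i\|^2$ concentrates tightly around $D$ (standard $\chi^2_D$ concentration gives $\|\bm{x}_i\|^2/D = 1 + O_P(1/\sqrt{D})$). Combining these via Slutsky, $\cos\theta = \langle \bm{x}_1,\bm{x}_2\rangle / (\|\bm{x}_1\|\|\bm{x}_2\|) = O_P(1/\sqrt{D})$, and in particular $\theta = \pi/2 + O_P(1/\sqrt{D})$ as $D \to \infty$.

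Next I would evaluate the limit at $\theta = \pi/2$. Writing $\beta(\theta,v) = \sin^2(v\theta)/\sin^2\theta + \sin^2((1-v)\theta)/\sin^2\theta$, at $\theta = \pi/2$ the denominator is $1$ and the numerator becomes
\begin{align}
\sin^2\!\left(\tfrac{v\pi}{2}\right) + \sin^2\!\left(\tfrac{(1-v)\pi}{2}\right) = \sin^2\!\left(\tfrac{v\pi}{2}\right) + \cos^2\!\left(\tfrac{v\pi}{2}\right) = 1,
\end{align}
using $\sin(\pi/2 - v\pi/2) = \cos(v\pi/2)$. Hence $\beta(\pi/2, v) = 1$ identically in $v \in [0,1]$.

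Finally I would transfer this to the random $\theta$. The map $\theta \mapsto \beta(\theta, v)$ is smooth in a neighbourhood of $\pi/2$ (the denominator $\sin^2\theta$ is bounded away from $0$ there), and its partial derivative in $\theta$ is bounded uniformly over $v \in [0,1]$ on any such neighbourhood. A first-order Taylor expansion therefore yields $|\beta(\theta,v) - 1| \le C|\theta - \pi/2|$ for $\theta$ near $\pi/2$, uniformly in $v$. Combined with the concentration estimate $\theta - \pi/2 = O_P(1/\sqrt{D})$, this gives $\beta = 1 + O_P(1/\sqrt{D})$, uniformly in $v \in [0,1]$, which is the approximation claimed.

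The main obstacle is modest: the only subtlety is ensuring that the Taylor bound (and the lower bound on $\sin^2\theta$) holds uniformly in $v$, so that the approximation does not degrade at the endpoints of $[0,1]$ or in the rare event that $\theta$ deviates appreciably from $\pi/2$; both issues are handled by restricting attention to a high-probability event $\{|\theta - \pi/2| \le \varepsilon_D\}$ with $\varepsilon_D = \omega(1/\sqrt{D})$.
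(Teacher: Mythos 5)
Your proposal is correct and follows essentially the same route as the paper: concentration of $\chi^2_D$ norms and of the inner product gives $\cos\theta = O_P(1/\sqrt{D})$, hence $\theta \approx \pi/2$, and then $\beta$ collapses to $1$. The one place you genuinely diverge is the final algebraic step, and yours is the sounder version. The paper invokes the ``identity'' $\sin^2(a) + \sin^2(b) = 1 - \cos^2(a-b)$, which is not a valid identity (the correct one is $\sin^2 a + \sin^2 b = 1 - \cos(a+b)\cos(a-b)$; with $a = v\theta$, $b=(1-v)\theta$ this gives $\beta \approx 1 - \cos\theta\,\cos((2v-1)\theta) \approx 1$ since $\cos\theta \approx 0$), and its subsequent claim that ``the argument of $\cos$ remains small, leading to $\cos(\cdot)\approx 0$'' fails outright at $v=1/2$, where the argument is $0$. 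Your evaluation at $\theta = \pi/2$ via $\sin(\pi/2 - v\pi/2) = \cos(v\pi/2)$, followed by a uniform-in-$v$ Taylor/Lipschitz transfer on the high-probability event $|\theta - \pi/2| \le \varepsilon_D$, avoids this entirely and additionally makes the uniformity over $v\in[0,1]$ explicit, which the paper leaves implicit.
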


\begin{proof}
Since \(\bm{x}_1, \bm{x}_2 \sim \mathcal{N}(\bm{0}, \bm{I})\), each component \(x_{1j}\) and \(x_{2j}\) for \(j = 1, \ldots, D\) are independent standard normal random variables. The inner product \(\langle \bm{x}_1, \bm{x}_2 \rangle\) is given by:
\[
\langle \bm{x}_1, \bm{x}_2 \rangle = \sum_{j=1}^D x_{1j} x_{2j}.
\]
The product \(x_{1j} x_{2j}\) follows a distribution known as the standard normal product distribution. For large \(D\), the sum of these products is approximately normal due to the Central Limit Theorem (CLT), with:
\[
\langle \bm{x}_1, \bm{x}_2 \rangle \sim \mathcal{N}(0, D).
\]

Next, consider the norms \(\|\bm{x}_1\|\) and \(\|\bm{x}_2\|\). Each \(\|\bm{x}_i\|^2 = \sum_{j=1}^D x_{ij}^2\) is a chi-squared random variable with \(D\) degrees of freedom. For large \(D\), by the central limit theorem, \(\|\bm{x}_i\|^2 \sim \mathcal{N}(D, 2D)\), and therefore \(\|\bm{x}_i\|\) is approximately \(\sqrt{D}\).

Thus, for large \(D\),
\[
\cos(\theta) = \frac{\langle \bm{x}_1, \bm{x}_2 \rangle}{\|\bm{x}_1\| \|\bm{x}_2\|} \approx \frac{\mathcal{N}(0, D)}{\sqrt{D} \cdot \sqrt{D}} = \frac{\mathcal{N}(0, D)}{D} = \mathcal{N}\left(0, \frac{1}{D}\right).
\]

Thus, \(\theta \approx \pi/2\), which implies \(\sin(\theta) \approx 1\). Therefore:
\[
\beta = \frac{\sin^2(v \theta)}{\sin^2 \theta} + \frac{\sin^2((1 - v) \theta)}{\sin^2 \theta} \approx \sin^2(v \theta) + \sin^2((1 - v) \theta).
\]

Using the identity \(\sin^2(a) + \sin^2(b) = 1 - \cos^2(a - b)\),
\[
\beta \approx 1 - \cos^2(v \theta - (1 - v) \theta) = 1 - \cos^2(\theta)
\]

Given \(v \in [0, 1]\) and \(\theta \approx \pi/2\) for large \(D\), the argument of \(\cos\) remains small, leading to \(\cos(\cdot) \approx 0\). Hence,
\[
\beta \approx 1.
\]

Therefore, for large \(D\), it follows that \(\beta \approx 1\).

\end{proof}

\paragraph{Empirical verification} 
To confirm the effectiveness of the approximations used above for values of $D$ which are typical for popular generative models that use unit Gaussian latents, 
we estimate the confidence interval of $\beta$ using 10k samples of $\bm{x}_1$ and $\bm{x}_2$, respectively, where $\bm{x}_i \sim \mathcal{N}(\bm{0}, \bm{I})$. For Stable Diffusion 3~\citep{esser2024scaling}, a flow matching model with $D = 147456$, the estimated $99\%$ confidence interval of $\beta$ is $[0.9934, 1.0067]$ for $v=0.5$ where the error is largest. For Stable Diffusion 2.1~\citep{rombach2022high}, a diffusion model with $D = 36864$, the corresponding confidence interval of $\beta$ is $[0.9868, 1.014]$. 

\section{Linear combination weights of subspace projections}
\label{sec:subspace_proj}

In Section~\ref{sec:transformed_gaussian_variables} we introduced LOL, which we use to transform latent variables arising from linear combinations such that they maintain the  the latent distribution. This transformation depends on the weights $\bm{w}$ which specify the linear combination. We will now derive the linear combination weights for subspace projections.  

\begin{lemma}
Let \(\bm{U} \in \mathbb{R}^{D \times K}\) be a semi-orthonormal matrix. For a given point \(\bm{x} \in \mathbb{R}^D\), the subspace projection is \(s(\bm{x}) = \bm{U}\bm{U}^T\bm{x}\). The weights \(\bm{w} \in \mathbb{R}^K\) such that \(s(\bm{x})\) is a linear combination of \(\bm{x}_1, \bm{x}_2, \dots, \bm{x}_K\) (columns of \(\bm{A} = [\bm{x}_1, \bm{x}_2, \dots, \bm{x}_K] \in \mathbb{R}^{D \times K}\)) can be expressed as \(\bm{w} = \bm{A}_{\dagger}s(\bm{x})\), where \(\bm{A}_{\dagger}\) is the Moore-Penrose inverse of \(\bm{A}\).
\end{lemma}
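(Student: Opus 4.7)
The plan is to verify directly that $\bm{w} = \bm{A}_{\dagger}s(\bm{x})$ satisfies the defining relation $\bm{A}\bm{w} = s(\bm{x})$, which is exactly the statement that $s(\bm{x})$ is the linear combination of the columns of $\bm{A}$ with weights $\bm{w}$. The route has essentially three short steps, and the heart of the argument is a containment-of-subspaces observation followed by a standard pseudoinverse identity.

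First I would establish that $\operatorname{colspan}(\bm{U}) = \operatorname{colspan}(\bm{A})$. Since $\bm{U}$ is obtained as the $Q$-factor of the (thin) QR decomposition of $\bm{A}$, we have $\bm{A} = \bm{U}\bm{R}$ with $\bm{R} \in \mathbb{R}^{K\times K}$ upper triangular. Assuming the $\bm{x}_i$ are linearly independent (the standard setting in which the QR step is meaningful and the constructed subspace is genuinely $K$-dimensional), $\bm{R}$ is invertible, so $\bm{U} = \bm{A}\bm{R}^{-1}$ and the two column spans coincide.

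Next, I would observe that $s(\bm{x}) = \bm{U}(\bm{U}^T\bm{x})$ is by construction a linear combination of the columns of $\bm{U}$, hence belongs to $\operatorname{colspan}(\bm{U}) = \operatorname{colspan}(\bm{A})$. Then I would invoke the standard identity for the Moore--Penrose pseudoinverse: $\bm{A}\bm{A}_{\dagger}$ is the orthogonal projector onto $\operatorname{colspan}(\bm{A})$, so $\bm{A}\bm{A}_{\dagger}\bm{v} = \bm{v}$ for every $\bm{v} \in \operatorname{colspan}(\bm{A})$. Applying this with $\bm{v} = s(\bm{x})$ gives
\begin{equation*}
\bm{A}\bigl(\bm{A}_{\dagger}\,s(\bm{x})\bigr) \;=\; s(\bm{x}),
\end{equation*}
so $\bm{w} := \bm{A}_{\dagger}s(\bm{x})$ does reconstruct $s(\bm{x})$ as $\sum_{k=1}^K w_k \bm{x}_k$. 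Under the full column rank assumption, $\bm{A}_{\dagger} = (\bm{A}^T\bm{A})^{-1}\bm{A}^T$, so these weights are moreover unique.

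The only possible obstacle is really just a matter of assumptions rather than technique: if $\bm{A}$ is rank deficient then many $\bm{w}$ solve $\bm{A}\bm{w} = s(\bm{x})$, and $\bm{A}_{\dagger}s(\bm{x})$ is merely the minimum-norm solution among them. In the application at hand this is harmless, but it is worth flagging explicitly so the statement is unambiguous. Everything else reduces to the QR identity $\bm{A} = \bm{U}\bm{R}$ and the pseudoinverse projection property, both of which are standard and require no further computation.
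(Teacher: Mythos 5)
Your proposal is correct and follows essentially the same route as the paper's proof: show that $s(\bm{x})$ lies in the column span of $\bm{A}$, then use the pseudoinverse property that $\bm{A}\bm{A}_{\dagger}$ acts as the identity on that span, and finally invoke full column rank for uniqueness. The one point where you are more careful than the paper is in actually justifying the membership $s(\bm{x})\in\operatorname{colspan}(\bm{A})$ via the QR factorization $\bm{A}=\bm{U}\bm{R}$ with $\bm{R}$ invertible, a step the paper simply asserts; your explicit flag about the rank-deficient case (where $\bm{A}_{\dagger}s(\bm{x})$ is only the minimum-norm solution) is also a worthwhile clarification.
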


\begin{proof}
The subspace projection \(s(\bm{x})\) of \(\bm{x} \in \mathbb{R}^D\) is defined as:
\[
s(\bm{x}) = \bm{U}\bm{U}^T\bm{x}.
\]

We aim to express \(s(\bm{x})\) as a linear combination of the columns of \(\bm{A} = [\bm{x}_1, \bm{x}_2, \dots, \bm{x}_K] \in \mathbb{R}^{D \times K}\). That is, we seek \(\bm{w} \in \mathbb{R}^K\) such that:
\[
s(\bm{x}) = \bm{A}\bm{w}.
\]

By definition, the Moore-Penrose inverse \(\bm{A}_{\dagger}\) of \(\bm{A}\) satisfies the following properties:
\begin{enumerate}
    \item \(\bm{A}\bm{A}_{\dagger}\bm{A} = \bm{A}\)
    \item \(\bm{A}_{\dagger}\bm{A}\bm{A}_{\dagger} = \bm{A}_{\dagger}\)
    \item \((\bm{A}\bm{A}_{\dagger})^T = \bm{A}\bm{A}_{\dagger}\)
    \item \((\bm{A}_{\dagger}\bm{A})^T = \bm{A}_{\dagger}\bm{A}\)
\end{enumerate}

Since \(s(\bm{x})\) is in the subspace spanned by the columns of \(\bm{A}\), there exists a \(\bm{w}\) such that:
\[
s(\bm{x}) = \bm{A}\bm{w}.
\]

Consider a \(\bm{w}' \in \mathbb{R}^K\) constructed using the Moore-Penrose inverse \(\bm{A}_{\dagger}\):
\[
\bm{w}' = \bm{A}_{\dagger} s(\bm{x}).
\]

We now verify that this \(\bm{w}'\) satisfies the required equation. Substituting back
\[
\bm{A}\bm{w}' = \bm{A}(\bm{A}_{\dagger} s(\bm{x}))
\]
and using the property of the Moore-Penrose inverse \(\bm{A}\bm{A}_{\dagger}\bm{A} = \bm{A}\), we get:
\[
\bm{A}\bm{A}_{\dagger} s(\bm{x}) = s(\bm{x}).
\]

Thus:
\[
\bm{A}\bm{w}' = s(\bm{x}),
\]
which shows that \(\bm{w}' = \bm{A}_{\dagger} s(\bm{x})\) is indeed the correct expression for the weights.

From uniqueness of $\bm{w}$ for a given set of columns $\bm{x}_1, \bm{x}_2, \dots, \bm{x}_K$ (see Lemma~\ref{lemma:uniqueness}), we have proven that the weights \(\bm{w}\) for a given point in the subspace \(s(\bm{x})\) are given by:
\[
\bm{w} = \bm{A}_{\dagger} s(\bm{x}).
\]

\end{proof}

\begin{lemma}
\label{lemma:uniqueness}
The weights \(\bm{w} \in \mathbb{R}^K\) such that \(s(\bm{x})\) is a linear combination of \(\bm{x}_1, \bm{x}_2, \dots, \bm{x}_K\) (columns of \(\bm{A} = [\bm{x}_1, \bm{x}_2, \dots, \bm{x}_K] \in \mathbb{R}^{D \times K}\)) are unique.
\end{lemma}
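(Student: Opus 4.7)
The plan is to reduce uniqueness of $\bm{w}$ to full column rank of $\bm{A}$, and then to argue that full column rank holds in the setting of this paper. Concretely, I would first suppose that there exist two weight vectors $\bm{w}_1, \bm{w}_2 \in \mathbb{R}^K$ with
\begin{align*}
\bm{A}\bm{w}_1 \;=\; s(\bm{x}) \;=\; \bm{A}\bm{w}_2,
\end{align*}
and subtract to obtain $\bm{A}(\bm{w}_1 - \bm{w}_2) = \bm{0}$. If the columns of $\bm{A}$ are linearly independent, the only vector in the kernel of the map $\bm{u} \mapsto \bm{A}\bm{u}$ is $\bm{0}$, so $\bm{w}_1 = \bm{w}_2$ and uniqueness follows immediately.

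The substantive step is therefore to justify that $\bm{A} = [\bm{x}^{(1)}, \ldots, \bm{x}^{(K)}]$ has full column rank in the context of this paper. Here the $\bm{x}^{(k)}$ are i.i.d.\ samples from a nondegenerate Gaussian $\mathcal{N}(\bm{\mu}, \bm{\Sigma})$ on $\mathbb{R}^D$ with $K \leq D$, which is the regime of interest (subspace dimension no greater than ambient latent dimension). I would argue inductively on $k$: conditional on $\bm{x}^{(1)}, \ldots, \bm{x}^{(k-1)}$, the span of those vectors is a proper linear subspace of $\mathbb{R}^D$ of dimension at most $k-1 < D$, hence it has Lebesgue measure zero; since $\bm{x}^{(k)} \sim \mathcal{N}(\bm{\mu}, \bm{\Sigma})$ is absolutely continuous with respect to Lebesgue measure (because $\bm{\Sigma}$ is positive definite), $\bm{x}^{(k)}$ lies outside that span almost surely. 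Taking the union bound over $k=1,\ldots,K$ shows that $\bm{x}^{(1)}, \ldots, \bm{x}^{(K)}$ are linearly independent almost surely, so $\bm{A}$ has full column rank with probability one.

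The hard part, in a conceptual sense, is that the lemma as stated does not spell out the distributional hypothesis on the columns of $\bm{A}$; strictly speaking uniqueness fails for arbitrary $\bm{A}$ whose columns are linearly dependent, in which case $\bm{A}^{\dagger} s(\bm{x})$ merely yields the minimum-norm solution rather than the unique one. I would therefore open the proof by stating the almost-sure linear independence as a mild implicit assumption inherited from the i.i.d.\ Gaussian sampling set up in Section~\ref{sec:transformed_gaussian_variables}, and then carry out the two short steps above. No delicate calculation is required; the only care needed is in the measure-theoretic argument for almost-sure linear independence, which is standard.
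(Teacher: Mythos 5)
Your proof is correct and follows the same core argument as the paper: assume $\bm{A}\bm{w}_1 = s(\bm{x}) = \bm{A}\bm{w}_2$, subtract to get $\bm{A}(\bm{w}_1-\bm{w}_2)=\bm{0}$, and conclude from the triviality of the null space of $\bm{A}$. The only difference is at the full-column-rank step: the paper simply asserts it as an assumption (``since $\bm{A}$ is used to represent a linear combination for $s(\bm{x})$''), whereas you justify it by the standard measure-theoretic argument that $K\le D$ i.i.d.\ draws from a nondegenerate Gaussian are linearly independent almost surely; your version is the more complete one, and your caveat that uniqueness genuinely fails for rank-deficient $\bm{A}$ (where $\bm{A}_{\dagger}s(\bm{x})$ returns only the minimum-norm solution) is accurate.
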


\begin{proof}
Suppose there exist two different weight vectors \(\bm{w}_1\) and \(\bm{w}_2\) such that both satisfy the equation:
\[
s(\bm{x}) = \bm{A}\bm{w}_1 = \bm{A}\bm{w}_2.
\]

Then, subtracting these two equations gives:
\[
\bm{A}\bm{w}_1 - \bm{A}\bm{w}_2 = \bm{0}.
\]

This simplifies to:
\[
\bm{A}(\bm{w}_1 - \bm{w}_2) = \bm{0}.
\]

Let \(\bm{v} = \bm{w}_1 - \bm{w}_2\). Then:
\[
\bm{A}\bm{v} = \bm{0}.
\]

Since \(\bm{v} \in \mathbb{R}^K\), this equation implies that \(\bm{v}\) lies in the null space of \(\bm{A}\). However, the assumption that \(\bm{A}\) has full column rank (since \(\bm{A}\) is used to represent a linear combination for \(s(\bm{x})\)) implies that \(\bm{A}\) has no non-zero vector in its null space, i.e., \(\bm{A}\bm{v} = \bm{0}\) only when \(\bm{v} = \bm{0}\).

Therefore:
\[
\bm{v} = \bm{0} \implies \bm{w}_1 = \bm{w}_2.
\]

This shows that the weights \(\bm{w}\) are unique, and there cannot be two distinct sets of weights \(\bm{w}_1\) and \(\bm{w}_2\) that satisfy the equation \(s(\bm{x}) = \bm{A}\bm{w}\).

Hence, we conclude that the weights \(\bm{w}\) such that \(s(\bm{x})\) is a linear combination of \(\bm{x}_1, \bm{x}_2, \dots, \bm{x}_K\) are unique.
\end{proof}

\section{Normality test comparison}
\label{sec:normality_test_comparison}

We will now investigate the effectiveness of a range of classic normality tests
and tests based on the likelihood $\mathcal{N}(\bm{x}_{\ast}; \bm{\mu},\bm{\Sigma})$ as well as the likelihood of the norm statistic. We assess these methods across a suite of test cases including for vectors with unlikely characteristics and which we know, through failed generation, violate the assumptions of the model. In order to improve statistical power and detect even small deviations from normality of a single instance $\bm{x}_{\ast}  \in \mathbb{R}^D$, we transform the single high-dimensional latent into a large collection of ($D$) i.i.d. unit Gaussian samples $\{ \epsilon_i \}_{i=1}^D$, and test the hypothesis $\epsilon_1, \dots, \epsilon_d \sim \mathcal{N}(0, 1)$ where $\bm{\epsilon} = \bm{\Sigma}^{-1}(\bm{x} - \bm{\mu}), \bm{\epsilon} = [\epsilon_1, \dots, \epsilon_d]$. Note that rejecting the hypothesis $\epsilon_1, \dots, \epsilon_d \sim \mathcal{N}(0, 1)$ corresponds to rejecting the hypothesis that $\bm{x}_{\ast} \sim \mathcal{N}(\bm{\mu}, \bm{\Sigma})$. 

We compare popular alternatives for normality testing that assume a known mean and variance and that are applicable to very large sample sizes, including Kolmogorov-Smirnov~\citep{an1933sulla}, Shapiro-Wilk~\citep{shapiro1965analysis}, Chi-square~\citep{pearson1900x}, and Cramér–von Mises~\citep{cramer1928composition}. We assess these methods on a suite of test cases of vectors which are extremely unlikely under the model's latent distribution and which we know through failed generation violates the characteristic assumptions of the diffusion model, as well as assess the "positive case" where the sample \emph{does} follow the latent distribution, $\bm{x}_{\ast} \sim \mathcal{N}(\bm{\mu}, \bm{\Sigma})$. We report the $[0.1\%, 99.9\%]$-confidence interval of the p-value produced by each respective method on $1e^4$ random samples in the stochastic test cases. We also include the likelihood $\mathcal{N}(\bm{x}_{\ast}; \bm{\mu}, \bm{\Sigma})$ and the likelihood of the norm statistic. 

The results are reported in Table~\ref{table:test_cases}. We find that Kolmogorov-Smirnov and Cramér–von Mises both succeed in reporting a lower $99.9$th percentile p-value for each tested failure case than the $0.1$th percentile assigned to real samples. Kolmogorov-Smirnov did so with the largest gap, reporting extremely low p-values for all failure cases while reporting calibrated p-values for the positive case. Shapiro-Wilk, which in the literature often is regarded as one of the most powerful tests~\citep{razali2011power}, did not work well in our context in contrast to the other normality testing methods, as it produced high p-values also for several of the failure cases. An explanation may be that in our context we have sample sizes of tens of thousands (same as the dimensionality of the latent $D$), while comparisons in the literature typically focuses on smaller sample sizes, such as up to hundreds or a few thousand~\citep{razali2011power}. The Chi-square test is a test for discrete distributions. However, it is commonly used on binned data~\citep{larsen2005introduction} to approximate a continuous distribution as discrete. We do this, using histograms of 30 bins. This test successfully produced very low p-values for each failure case, but also did so for many valid samples, yielding a $0.1\%$ of 0. The likelihood $\mathcal{N}(\bm{x}_{\ast}; \bm{\mu}, \bm{\Sigma})$ assigns high likelihoods to vectors $\bm{x}_{\ast}$ near the mode irrespective of its characteristics. We note that it fails to distinguish to failure cases from real samples, and in some cases assigns much higher likelihood to the failure cases than real data. The insufficiency of the likelihood to describe the feasibility of samples is a phenomenon present in cases of high-dimensionality and low joint dependence between these dimensions; this is due to the quickly shrinking concentration of points near the centre (or close to $\bm{\mu}$, where the likelihood is highest) as the dimensionality is increased~\citep{talagrand1995concentration,ash2012information, nalisnick2019detecting}. The norm statistic we note successfully assigns low likelihoods to some failure cases, where the norm is atypical, but fails to do so in most of the tested cases. 

In summary, we find that Kolmogorov-Smirnov and Cramér–von Mises both correctly rejects the failure cases (by assigning low p-values) while reporting calibrated p-values for real samples, with \textbf{Kolmogorov-Smirnov} doing so with the lowest p-values for all failure cases. 

\begin{table}[ht]
\centering
\resizebox{\textwidth}{!}{%
{\huge 
\begin{tabular}{|l|c|c|c|c|c|c|c|c|c|c|}
\hline
Test case: & $\bm{x}^{(i)} \sim \mathcal{N}(\bm{0}, \bm{I})$ & $\bm{0}$ & $s\bm{1}$ & $n(\bm{1})$ & $n(\text{ramp})$ & $0.1\bm{x}^{(i)}$ & $\bm{x}^{(i)} + 0.1$ & $n(\text{duplicates})$ & $n(\text{5\% rescaled})$ & $n(\text{1\% rescaled})$ \\
\hline
Method/value 
& \raisebox{-0.025\height}{\includegraphics[width=4.0cm]{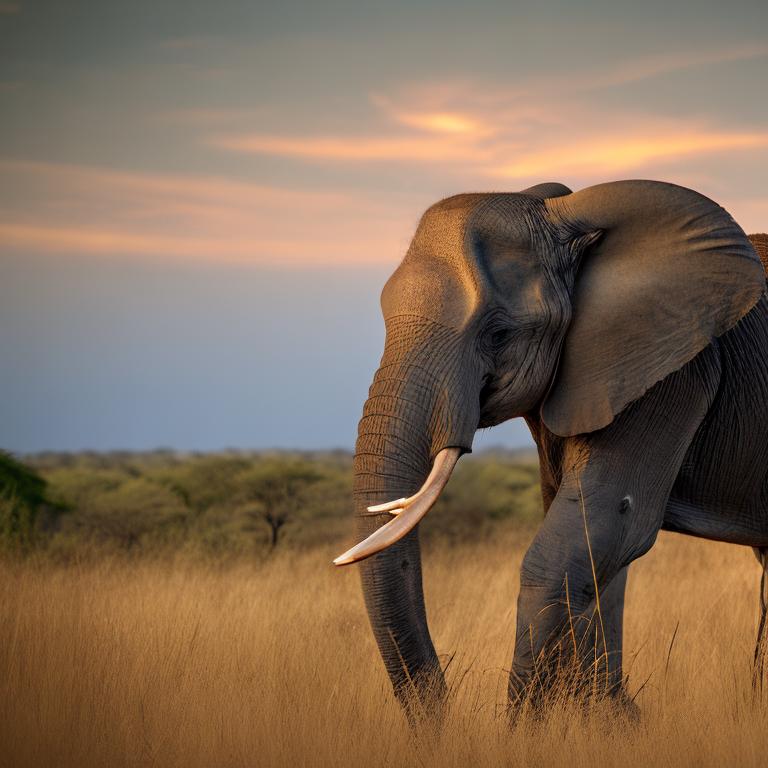}}
& \raisebox{-0.025\height}{\includegraphics[width=4.0cm]{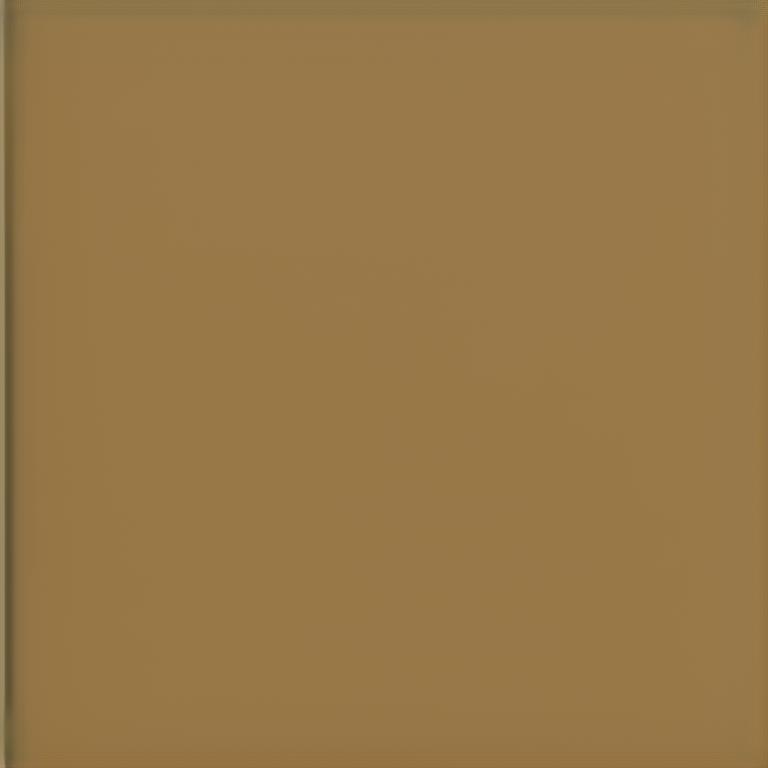}}
& \raisebox{-0.025\height}{\includegraphics[width=4.0cm]{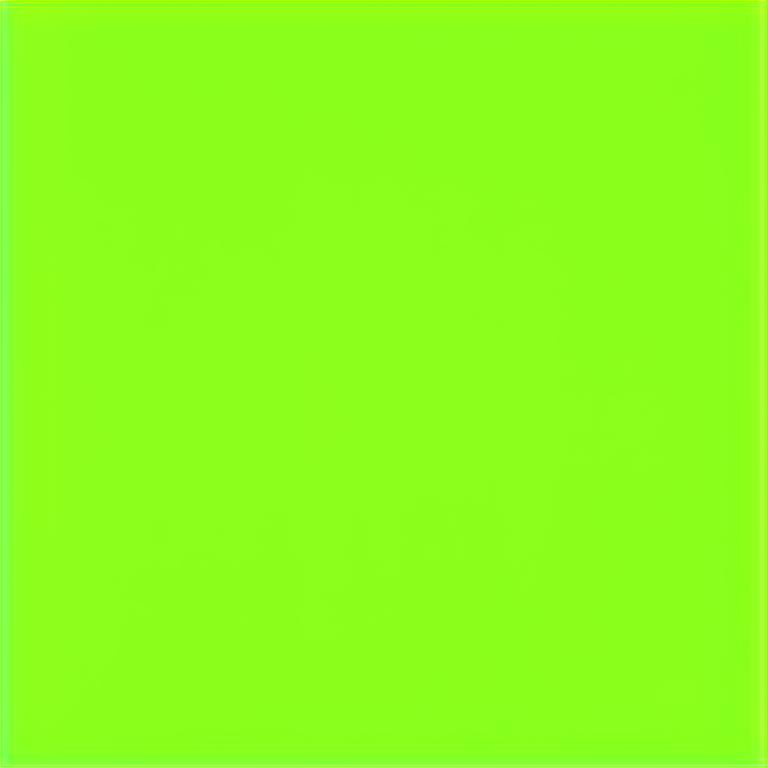} }
& \raisebox{-0.025\height}{\includegraphics[width=4.0cm]{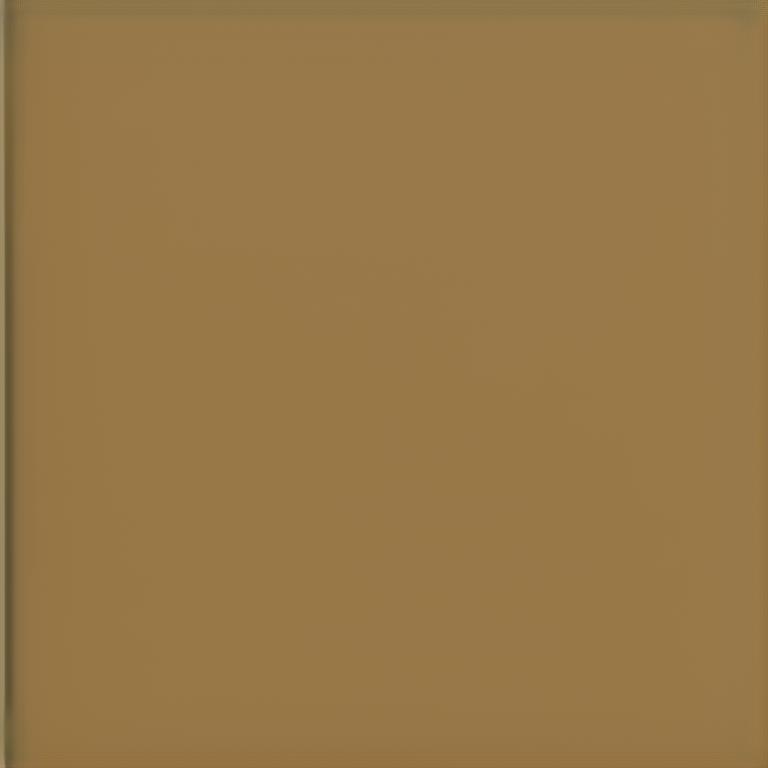}}
& \raisebox{-0.025\height}{\includegraphics[width=4.0cm]{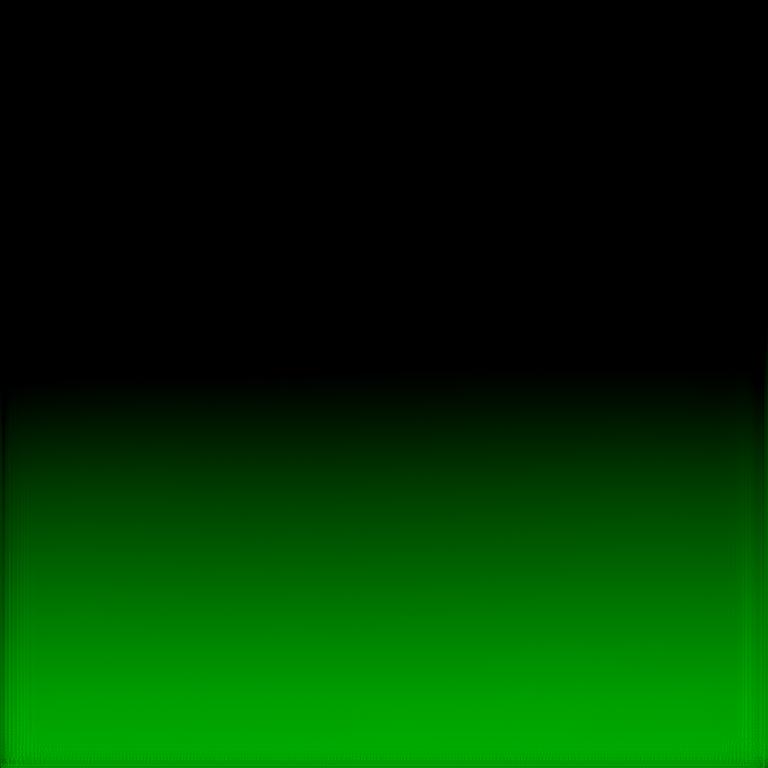}} 
& \raisebox{-0.025\height}{\includegraphics[width=4.0cm]{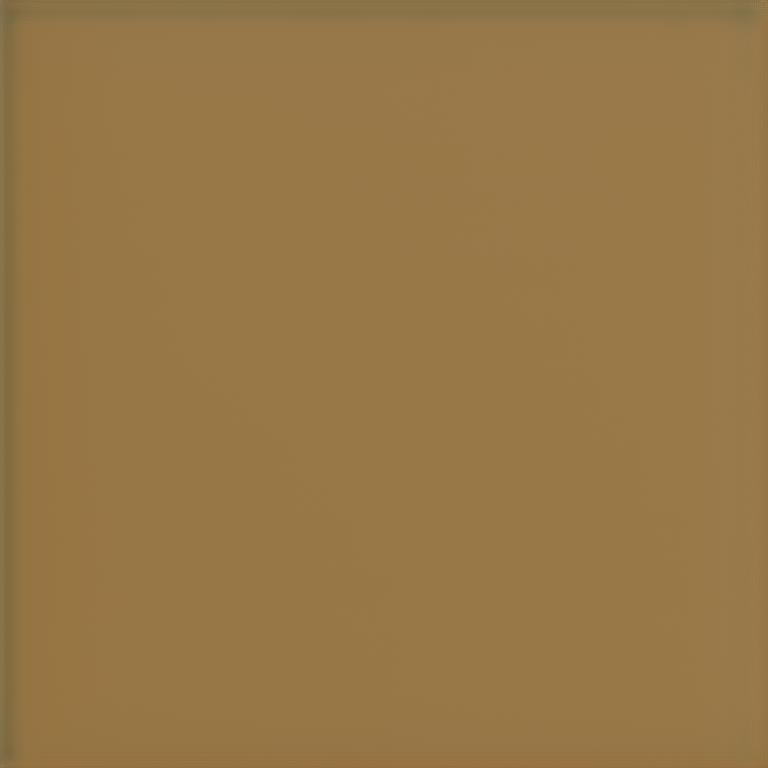}}
& \raisebox{-0.025\height}{\includegraphics[width=4.0cm]{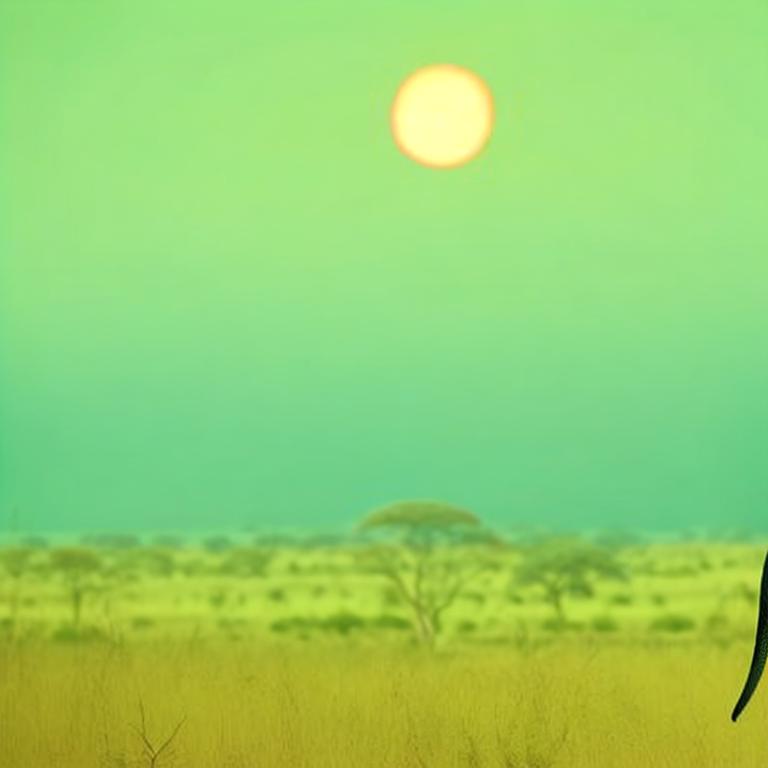}} 
& \raisebox{-0.025\height}{\includegraphics[width=4.0cm]{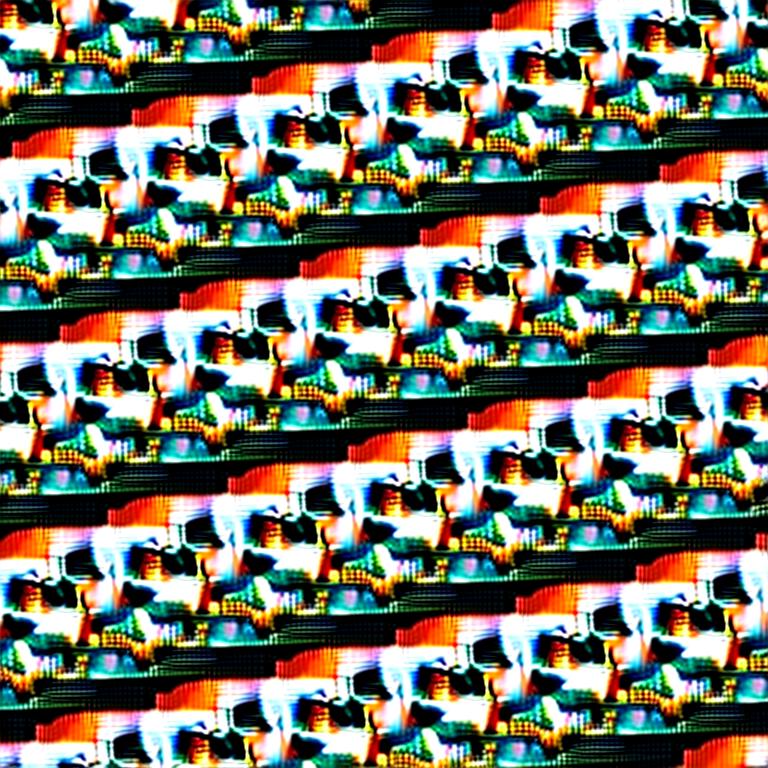}}
& \raisebox{-0.025\height}{\includegraphics[width=4cm]{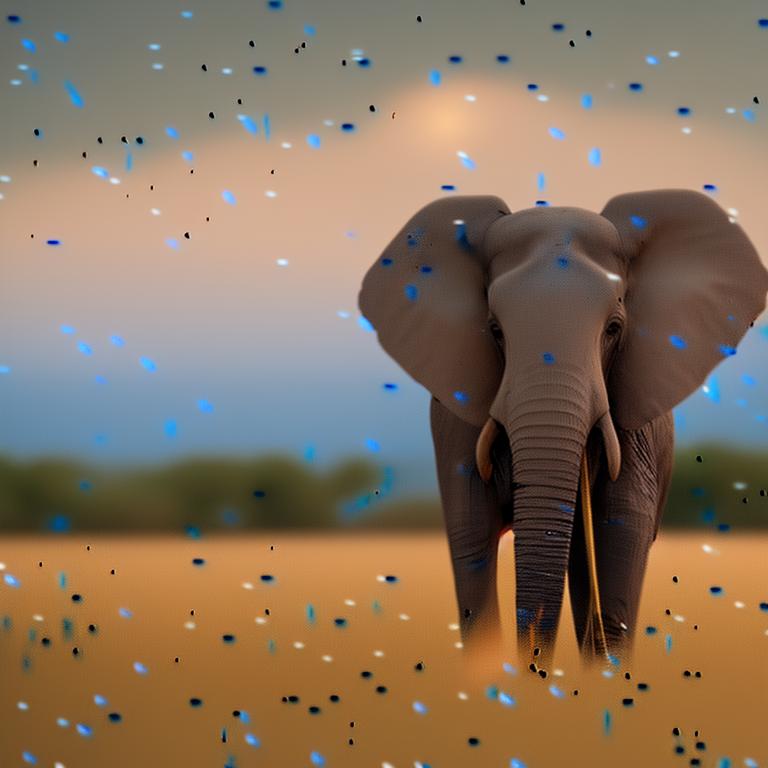}}
& \raisebox{-0.025\height}{\includegraphics[width=4cm]{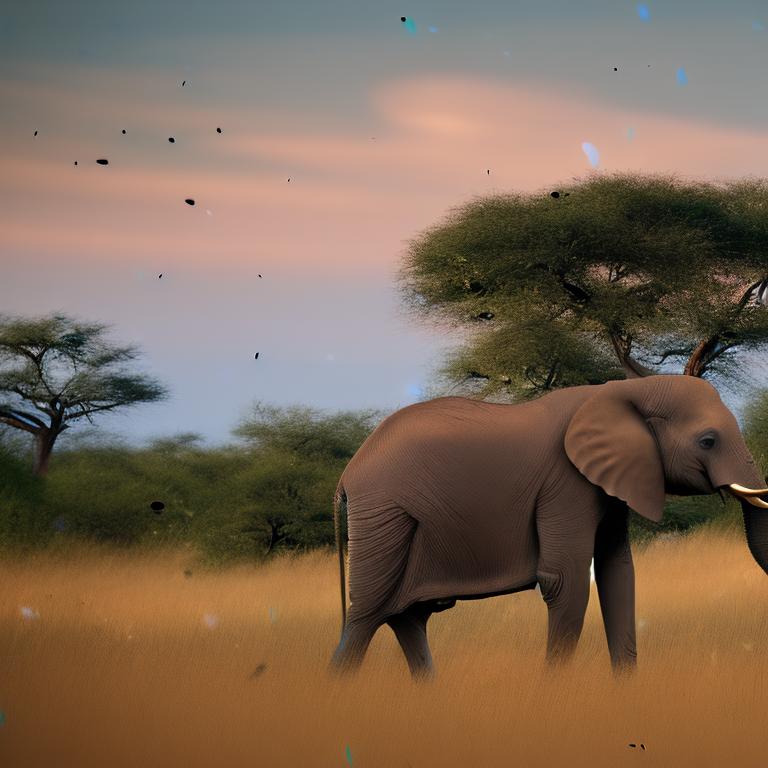}} \\
\hline
$\log \mathcal{N}(\bm{x}^{(i)}; \bm{0}, \bm{I})$ 
& \cellcolor{white} $[-5e^{4}, -5e^{4}]$ 
& \cellcolor{red} $-3e^{4}$
& \cellcolor{red} $-5e^{4}$
& \cellcolor{red} $-3e^{4}$
& \cellcolor{red} $-5e^{4}$ 
& \cellcolor{red} $[-3e^{4}, -3e^{4}]$
& \cellcolor{red} $[-5e^{4}, -5e^{4}]$ 
& \cellcolor{red} $[-5e^{4}, -5e^{4}]$ 
& \cellcolor{red} $[-5e^{4}, -5e^{4}]$ 
& \cellcolor{red} $[-5e^{4}, -5e^{4}]$ \\
\hline
$\log \chi^2(||\bm{x}^{(i)}||^2; D)$ 
& \cellcolor{white} $[-1e^{1},-7]$ 
& \cellcolor{green} $-\infty$ 
& \cellcolor{red} $-7$ 
& \cellcolor{green} $-\infty$ 
& \cellcolor{red} $-7$ 
& \cellcolor{green} $[-7e^{4}, -7e^{4}]$ 
& \cellcolor{red} $[-2e^{1}, -7]$ 
& \cellcolor{red} $[-7, -7]$ 
& \cellcolor{red} $[-7, -7]$ 
& \cellcolor{red} $[-7, -7]$ \\
\hline
\textbf{Shapiro-Wilk} 
& \cellcolor{white} $[2e^{-4}, 1]$ 
& \cellcolor{red} $1$ 
& \cellcolor{red} $1$ 
& \cellcolor{red} $1$ 
& \cellcolor{green} $5e^{-72}$ 
& \cellcolor{red} $[8e^{-3}, 1]$ 
& \cellcolor{red} $[8e^{-3}, 1]$ 
& \cellcolor{green} $[1e^{-50}, 8e^{-19}]$ 
& \cellcolor{green} $[3e^{-105}, 1e^{-99}]$ 
& \cellcolor{green} $[2e^{-85}, 2e^{-69}]$ \\
\hline
\textbf{Chi-square} 
& \cellcolor{red} $[0, 1]$ 
& \cellcolor{green} $0$ 
& \cellcolor{green} $0$ 
& \cellcolor{green} $0$ 
& \cellcolor{green} $0$ 
& \cellcolor{green} $[0, 0]$ 
& \cellcolor{green} $[6e^{-149}, 9e^{-45}]$ 
& \cellcolor{green} $[0, 1e^{-216}]$ 
& \cellcolor{green} $[0, 0]$ 
& \cellcolor{green} $[0, 0]$ \\
\hline
\textbf{Cramer-von-Mises} 
& \cellcolor{white} $[2e^{-4}, 1]$ 
& \cellcolor{green} $5e^{-7}$ 
& \cellcolor{green} $0.0$ 
& \cellcolor{green} $5e^{-7}$ 
& \cellcolor{green} $1e^{-8}$ 
& \cellcolor{green} $[4e^{-7}, 4e^{-7}]$ 
& \cellcolor{green} $[2e^{-9}, 2e^{-8}]$ 
& \cellcolor{green} $[4e^{-12}, 7e^{-4}]$ 
& \cellcolor{green} $[2e^{-8}, 6e^{-8}]$ 
& \cellcolor{green} $[2e^{-11}, 5e^{-9}]$ \\
\hline
\textbf{Kolmogorov-Smirnov} 
& \cellcolor{white} $[5e^{-4}, 1]$ 
& \cellcolor{green} $0$ 
& \cellcolor{green} $0$ 
& \cellcolor{green} $0$ 
& \cellcolor{green} $2e^{-105}$ 
& \cellcolor{green} $[0, 0]$ 
& \cellcolor{green} $[9e^{-77}, 9e^{-37}]$ 
& \cellcolor{green} $[1e^{-129}, 7e^{-10}]$ 
& \cellcolor{green} $[8e^{-252}, 2e^{-177}]$ 
& \cellcolor{green} $[1e^{-31}, 2e^{-12}]$ \\
\hline
\end{tabular}
}}
\begin{minipage}[b]{\textwidth}
    \colorbox{white}{
    \begin{minipage}[b]{0.209\textwidth}
    \centering
    \raisebox{0.5em}{\miniscule } 
    \end{minipage}
    }
    \colorbox{white}{
    \begin{minipage}[b]{0.0615\textwidth}
    \centering
    \raisebox{0.5em}{\miniscule 1} 
    \end{minipage}
    }
    \colorbox{white}{
    \begin{minipage}[b]{0.0695\textwidth}
    \centering
    \raisebox{0.5em}{\miniscule 2} 
    \end{minipage}
    }
    \colorbox{white}{
    \begin{minipage}[b]{0.0625\textwidth}
    \centering
    \raisebox{0.5em}{\miniscule 3} 
    \end{minipage}
    }
    \colorbox{white}{
    \begin{minipage}[b]{0.062\textwidth}
    \centering
    \raisebox{0.5em}{\miniscule 4} 
    \end{minipage}
    }
    \colorbox{white}{
    \begin{minipage}[b]{0.0625\textwidth}
    \centering
    \raisebox{0.5em}{\miniscule 5} 
    \end{minipage}
    }
    \colorbox{white}{
    \begin{minipage}[b]{0.068\textwidth}
    \centering
    \raisebox{0.5em}{\miniscule 6} 
    \end{minipage}
    }
    \colorbox{white}{
    \begin{minipage}[b]{0.068\textwidth}
    \centering
    \raisebox{0.5em}{\miniscule 7} 
    \end{minipage}
    }
    \colorbox{white}{
    \begin{minipage}[b]{0.068\textwidth}
    \centering
    \raisebox{0.5em}{\miniscule 8} 
    \end{minipage}
    }
    \colorbox{white}{
    \begin{minipage}[b]{0.068\textwidth}
    \centering
    \hspace{0.5em}
    \raisebox{0.5em}{\miniscule 9} 
    \end{minipage}
    }
\end{minipage}
\caption{
\textbf{Normality testing of latents}
Reported is the $[0.1\%, 99.9\%]$-confidence interval of the p-value produced by each respective normality testing method on $1e^4$ random samples in the stochastic test cases. We 
also report the log likelihood of the vector $\bm{x}^{(i)}$ under the latent distribution and the likelihood of the norm statistic. The diffusion model in \cite{rombach2022high} is used in this example. Green colour for the respective failure test case indicate the method assigning a lower p-value or likelihood to the failure case to at least 99.9\% of the failure samples than the lowest 0.1\% of real samples, to illustrate that it was successful in distinguishing the failure samples. Chi-square assigns low p-values to many real samples as well, which we indicate in red. 
The images show the generated image from a random failure sample of the test case. 
In (failure) Test 1 in column 1 the latent is $\bm{0}$, the mode of the latent distribution for this model $\mathcal{N}(\bm{0}, \bm{I})$. In Test 2 it is the constant vector with the maximum likelihood norm of the norm sampling distribution. In Test 3 the constant vector is instead normalised to have its first two moments matching the latent's marginal distribution. $n(\cdot)$ indicate that the vector(s) of the test case is normalised to have zero mean and unit variance. In Test 4 a linspace vector $[-1, \dots, 1]^D$ have been similarly normalised. In Test 5 and 6 the normal samples (from the left most column) have been transformed with scaling and a bias, respectively. In Test 7 random selections of 1\% of the dimensions $[1, \cdots, D]$ of the normal samples have been repeated (100 times) and subsequently normalised by $n(\cdot)$. In Test 8 and 9 a proportion of the dimensions of the normal samples have been multiplied by $5$ (with indices selected randomly) and the whole vectors are subsequently normalised by $n(\cdot)$, where the proportion is 5\% and 1\%, respectively. 
}
\label{table:test_cases}
\end{table}

\section{Interpolation and centroid determination setup details}
\label{sec:setup_details}

\paragraph{Baselines}
For the application of interpolation, we compare to linear interpolation (LERP), spherical linear interpolation (SLERP), and Norm-Aware Optimization (NAO)~\citep{samuel2024norm}, a recent approach which considers the norm of the noise vectors. In contrast to the other approaches which only involve closed-form expressions, NAO involves a numerical optimization scheme based on a discretion of a line integral. For the application of centroid determination we compare to NAO, the Euclidean centroid $\bar{\bm{x}} = \frac{1}{K}\sum_{k=1}^K \bm{x}_k$, and two transformations of the Euclidean centroid; "standardised Euclidean", where $\bar{\bm{x}}$ is subsequently standardised to have mean zero and unit variance, and "mode norm Euclidean", where $\bar{\bm{x}}$ is rescaled to have the norm equal to the (square root of the) mode of $\chi^2(D)$, the chi-squared distribution with $D$ degrees of freedom, which is the maximum likelihood norm given that $\bm{x}$ has been generated from a unit Gaussian with $D$ dimensions.

\paragraph{Evaluation sets}
We closely follow the evaluation protocol in~\cite{samuel2024norm}, where we base the experiments on Stable Diffusion 2~\citep{rombach2022high} and inversions of random images from ImageNet1k~\citep{deng2009imagenet}.
We (uniformly) randomly select 50 classes, each from which we randomly 
select 50 unique images, and find their corresponding latents through DDIM inversion~\citep{song2020denoising} using the class name as prompt.
We note that the DDIM inversion can be sensitive to the number of steps. Therefore, we made sure to use a sufficient number of steps for the inversion (we used 400 steps), which we then matched for the generation; see Figure~\ref{fig:inversion_interpolations} for an illustration of the importance of using a sufficient number of steps. We used the guidance scale 1.0 (i.e. no guidance) for the inversion, which was then matched during generation. Note that using no guidance is important both for accurate inversion as well as to not introduce a factor (the dynamics of prompt guidance) which would be specific to the Stable Diffusion 2.1 model.

For the interpolation setup we randomly (without replacement) pair the 50 images per class into 25 pairs, forming 1250 image pairs in total. In between the ends of each respective pair, each method then is to produce three interpolation points (and images). For the NAO method, which needs additional interpolation points to approximate the line integral, we used 11 interpolation points and selected three points from these at uniform (index) distance, similar to in~\cite{samuel2024norm}.

For the centroid determination setup we for each class form 10 3-groups, 10 5-groups, 4 10-groups and 1 25-group, sampled without replacement per group\footnote{But with replacement across groups, i.e. the groups are sampled independently from the same collection of images.}; i.e. each method is to form 25 centroids per class total, for an overall total of 1250 centroids per method. Similarly to the interpolation setup, for NAO we used 11 interpolations points per path, which for their centroid determination method entails $K$ paths per centroid.

\paragraph{Evaluation}
We, as in~\cite{samuel2024norm}, assess the methods quantitatively based on visual quality and preservation of semantics using FID distances and class prediction accuracy, respectively. 

The FID distances are computed using the pytorch-fid library~\citep{Seitzer2020FID}, using all evaluation images produced per method for the interpolation and centroid determination respectively, to maximize the FID distance estimation accuracy. 

For the classification accuracy, we used a pre-trained classifier, 
the MaxViT image classification model~\citep{tu2022maxvit} as in~\cite{samuel2024norm}, which achieves a top-1 of 88.53\% and 98.64\% top-5 accuracy on the test-set of ImageNet. 

See results in Section~\ref{sec:additional_quant_results}.

\section{Additional qualitative results}
\label{sec:additional_qual}

\begin{figure}
    \centering
\includegraphics[width=0.45\linewidth]{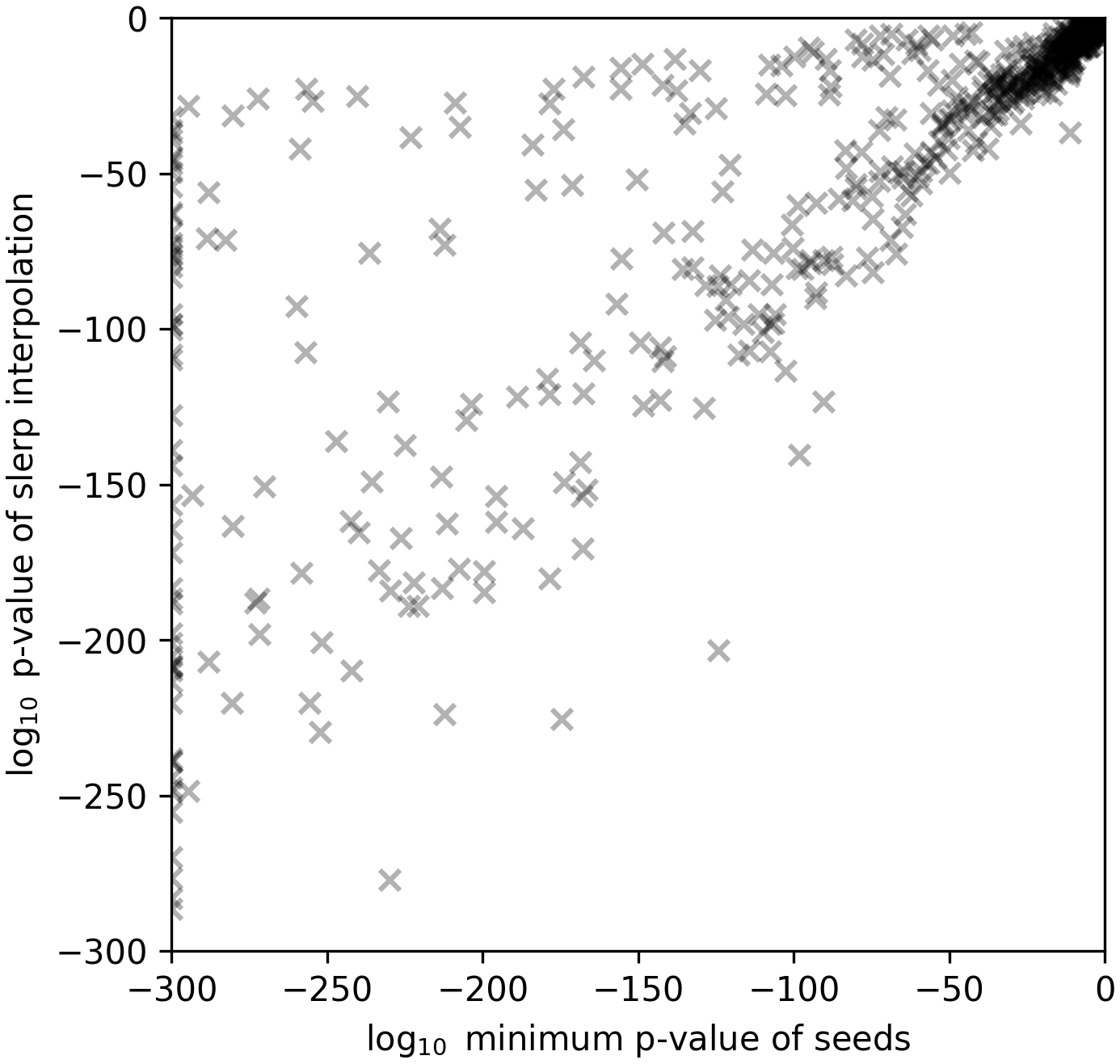}
\vspace{0.5em}
\includegraphics[width=0.44\linewidth]{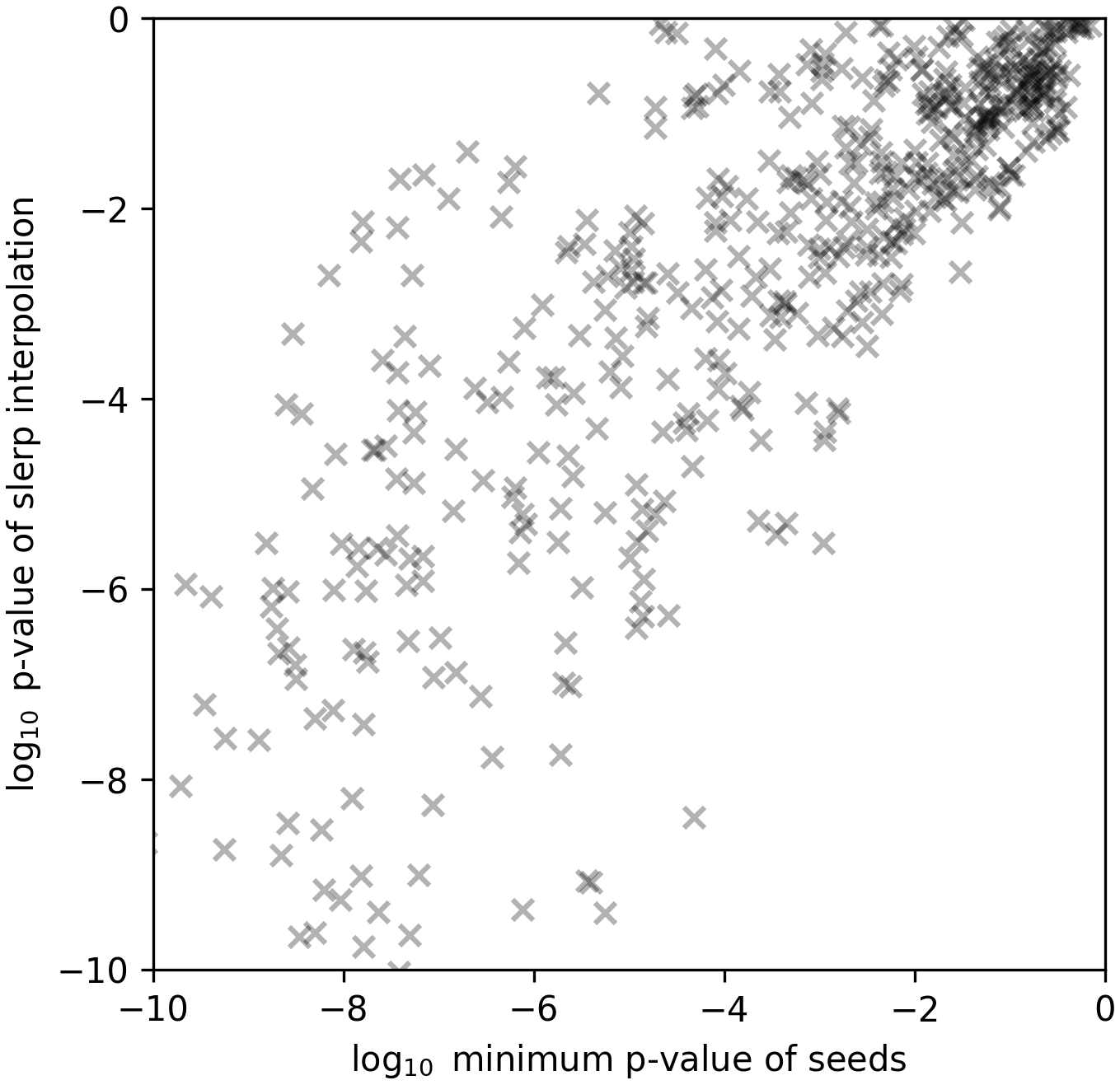}
    \caption{
    \textbf{Low p-values are inherited by interpolants} 
    The x-axis show the lower Kolmogorov-Smirnov p-value \emph{of the two seed latents} for each corresponding spherical interpolation in Figure~\ref{fig:inversion_reconstructions_not_enough}, and the y-axis show the p-value of the \emph{resulting interpolant}. 
    Due to the vast dynamic range of the p-values, the scatter plot is shown in full on the left, and zoomed in on the upper-right quadrant on the right.
    We note that when any of the two seed latents have a small p-value this largely tend to be inherited by the interpolant. The Pearson correlation coefficient is $0.79$ for the indicator of acceptance (defined as $1$ if the p-value is greater than $1e^{-3}$ and $0$ otherwise) and $0.66$ for the $\log_{10}$ p-values. 
    }
    \label{fig:pvalues_inherited}
\end{figure}

See Figure~\ref{fig:baseline_car_grid} for a demonstration of subspaces of latents \emph{without} the LOL transformation introduced in Equation~\ref{eq:z}, using a diffusion model~\citep{rombach2022high} and a flow matching model~\citep{esser2024scaling}, respectively. The setup is identical (with the same original latents) as in Figure~\ref{fig:car_grid_sd2} and Figure~\ref{fig:car_grid}, respectively, except without applying the proposed (LOL) transformation.


See Figure~\ref{fig:car_grid_more_slices} for additional slices of the sports car subspace shown in Figure~\ref{fig:car_grid}.



See Figure~\ref{fig:car_grid_sd2} and Figure~\ref{fig:rocking_chair_sd2} for Stable Diffusion 2.1 (SD2.1) versions of the Stable Diffusion 3 (SD3) examples in Figure~\ref{fig:car_grid} and Figure~\ref{fig:rocking_chair}, with an otherwise identical setup including the prompts. Just like in the SD3 examples LOL defines working subspaces. However, as expected since SD2.1 is an older model than SD3, the visual quality of the generations is better using SD3.

See Figure~\ref{fig:landscape_interpolation} for an interpolation example. 

In the examples above the SD2.1 and SD3 models are provided with a text prompt during the generation. See Figure~\ref{fig:duck_steamboat_interpolation} for an example where the original latents ($\{\bm{x}_i$\}) were obtained using DDIM inversion (from images) without a prompt (and guidance scale 1.0, i.e. no guidance), allowing generation without a prompt. This allows us to also interpolate without conditioning on a prompt. We note that, as expected without a prompt, the intermediates are not necessarily related to the end points (the original images) but still realistic images are obtained as expected (except for using linear interpolation, see discussion in Section~\ref{sec:transformed_gaussian_variables}) and the interpolations yield smooth gradual changes.

\begin{figure}[ht]
    \centering
    \begin{minipage}[b]{0.053\textwidth}
        \centering
        {\tiny $\bm{x}_1$}
        \includegraphics[width=\textwidth]{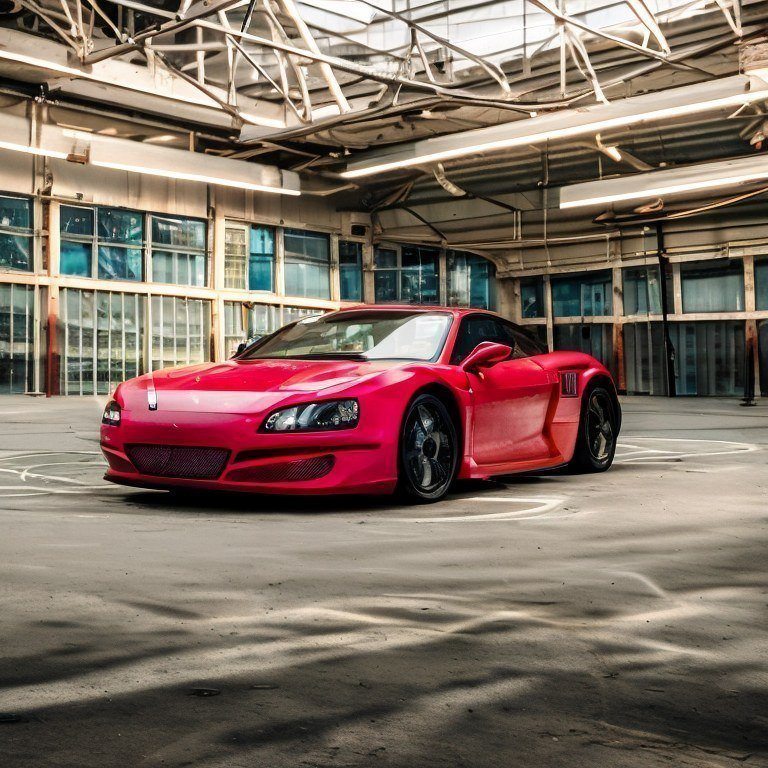}
        {\tiny $\bm{x}_2$}
        \includegraphics[width=\textwidth]{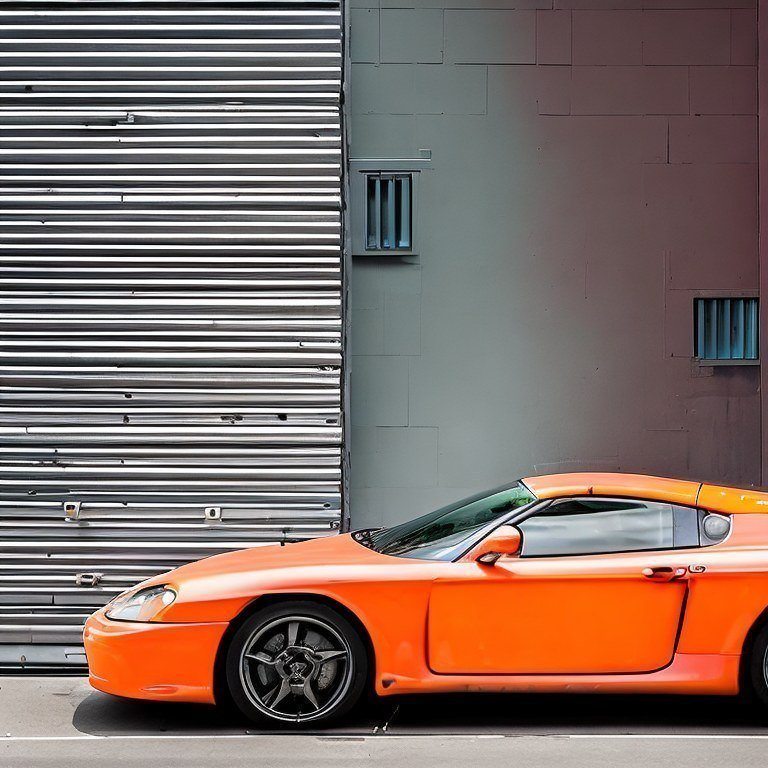}
        {\tiny $\bm{x}_3$}
        \includegraphics[width=\textwidth]{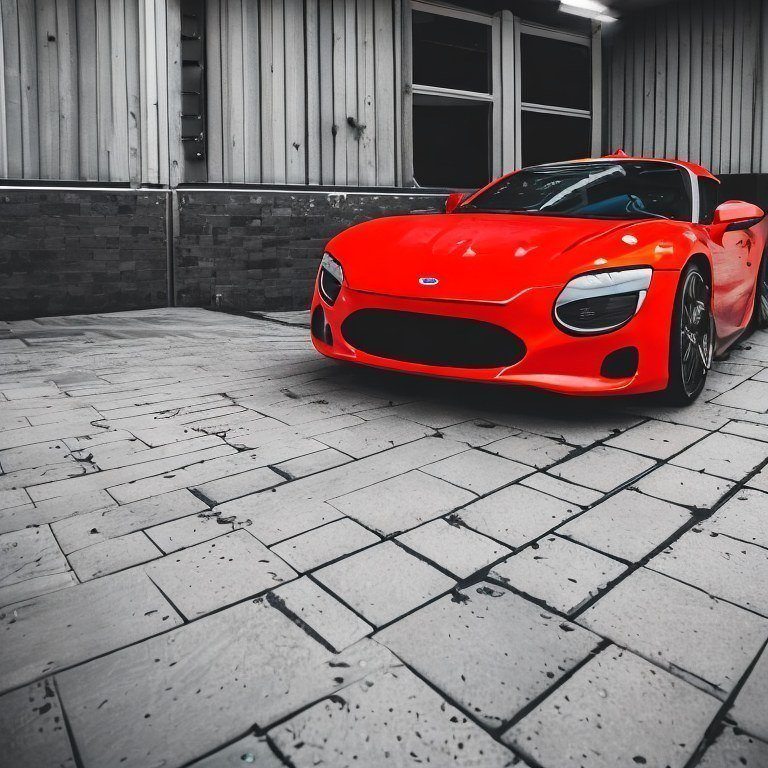}
        {\tiny $\bm{x}_4$}
        \includegraphics[width=\textwidth]{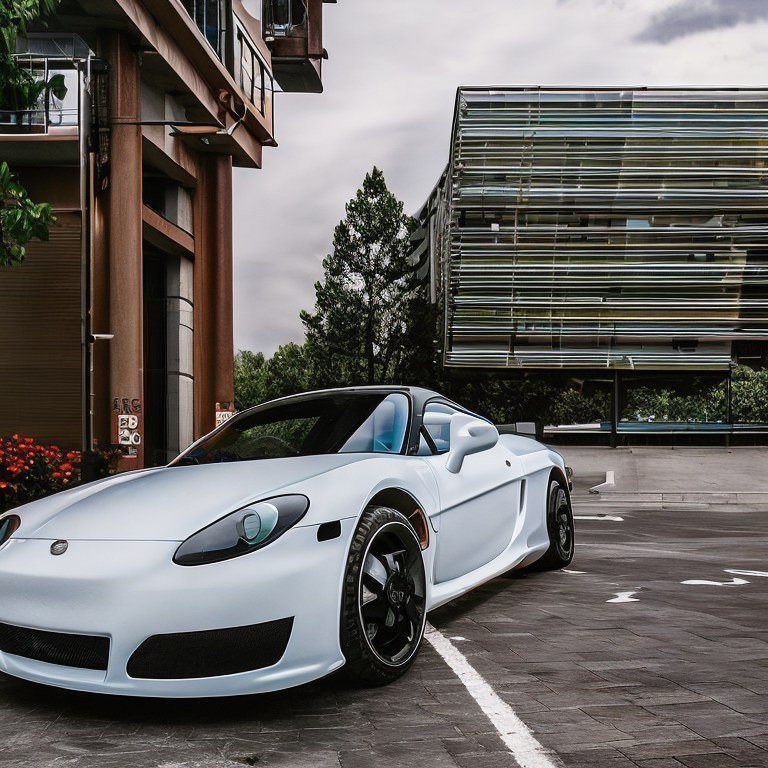}
        {\tiny $\bm{x}_5$}
        \includegraphics[width=\textwidth]{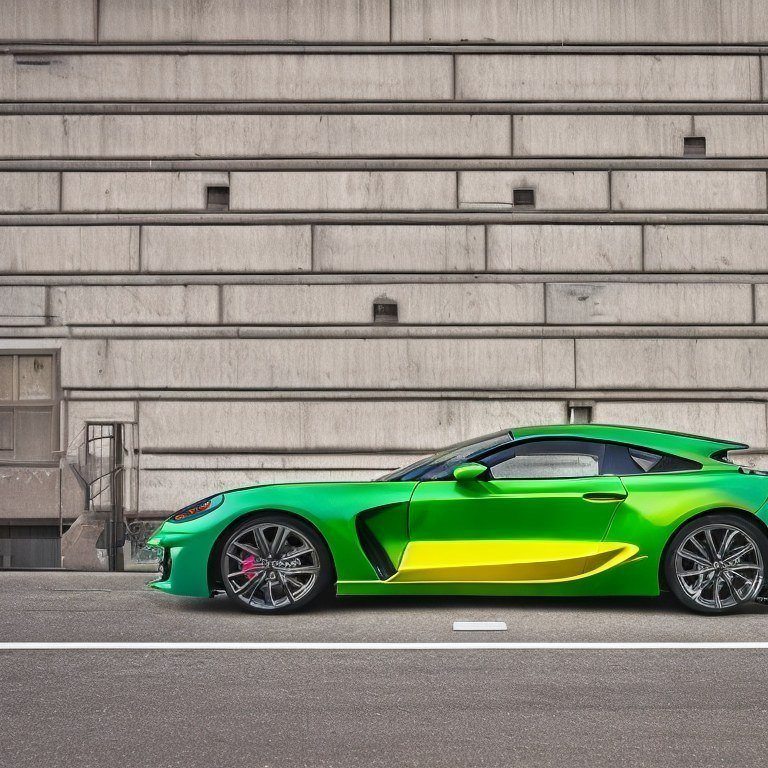}
    \end{minipage}
    \begin{minipage}[b]{0.41\textwidth}
        \includegraphics[width=\textwidth]{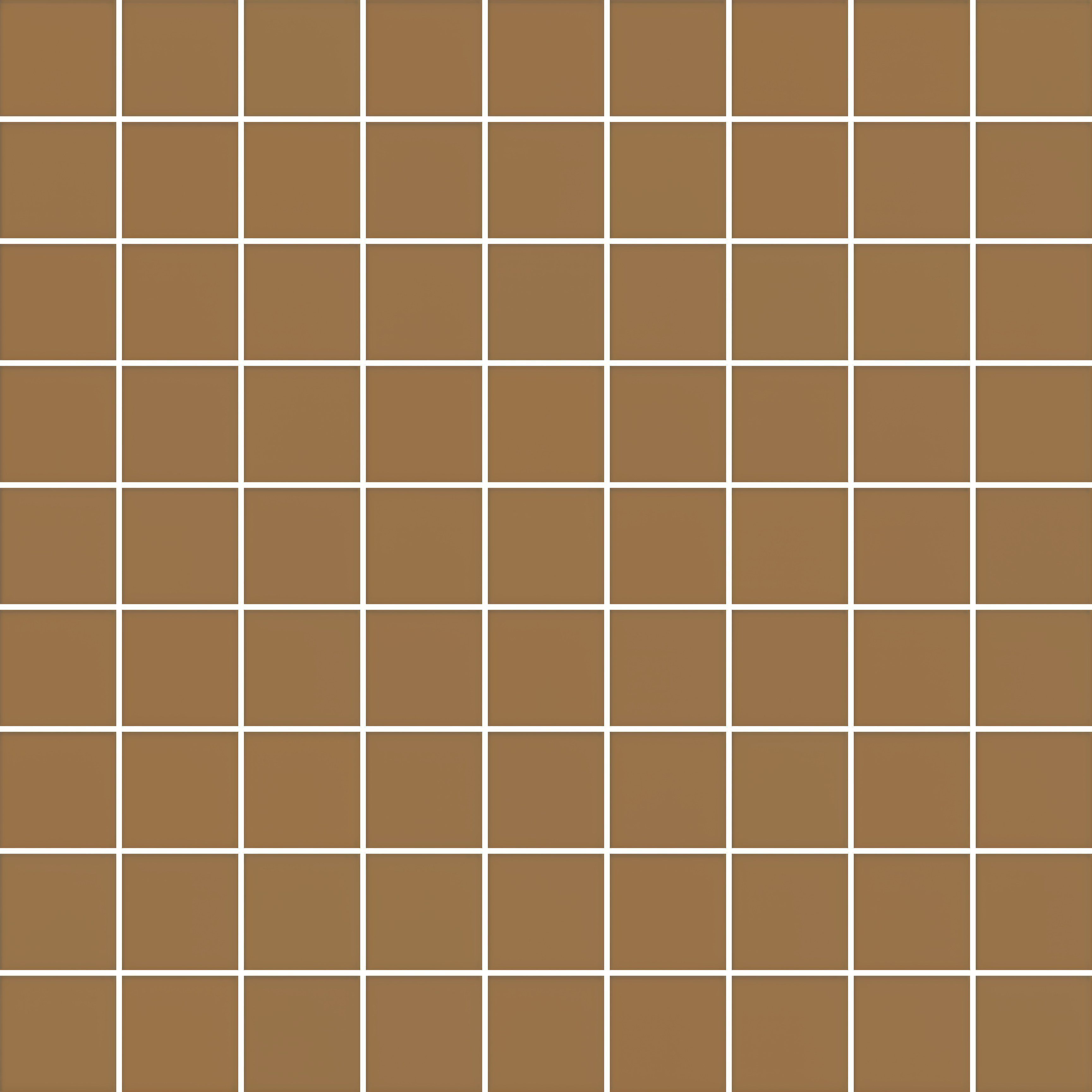}
    \end{minipage}
    \begin{minipage}[b]{0.053\textwidth}
        \centering
        {\tiny $\bm{x}_1$}
        \includegraphics[width=\textwidth]{figures/subspace/sports_car_sd3/latent_to_be_turned_into_basis_0}
        {\tiny $\bm{x}_2$}
        \includegraphics[width=\textwidth]{figures/subspace/sports_car_sd3/latent_to_be_turned_into_basis_1}
        {\tiny $\bm{x}_3$}
        \includegraphics[width=\textwidth]{figures/subspace/sports_car_sd3/latent_to_be_turned_into_basis_2}
        {\tiny $\bm{x}_4$}
        \includegraphics[width=\textwidth]{figures/subspace/sports_car_sd3/latent_to_be_turned_into_basis_3}
        {\tiny $\bm{x}_5$}
        \includegraphics[width=\textwidth]{figures/subspace/sports_car_sd3/latent_to_be_turned_into_basis_4}
    \end{minipage}
    \begin{minipage}[b]{0.41\textwidth}
        \includegraphics[width=\textwidth]{figures/subspace/sports_car_sd3/cartesian_dims__0__1__linear_lowres}
    \end{minipage}
    \caption{
    \textbf{Without LOL transformation}. 
    The setup here is exactly the same as in Figure~\ref{fig:car_grid_sd2} and Figure~\ref{fig:car_grid}, respectively, 
    except without the proposed (LOL) transformation (see Equation~\ref{eq:z}). 
    The prompt used is \textit{"A high-quality photo of a parked, colored sports car taken with a DLSR camera with a 45.7MP sensor. The entire sports car is visible in the centre of the image. The background is simple and uncluttered to keep the focus on the sports car, with natural lighting enhancing its features."}. We note that the diffusion model does not produce images of a car without the transformation, 
    and neither model produce anything else than visually the same image for all coordinates. 
    }
    \label{fig:baseline_car_grid}
\end{figure}

\begin{figure}[ht]
    \centering
    \begin{minipage}[b]{0.058\textwidth}
        \centering
        {\tiny $\bm{x}_1$}
        \includegraphics[width=\textwidth]{figures/subspace/sports_car_sd3/latent_to_be_turned_into_basis_0}
        {\tiny $\bm{x}_2$}
        \includegraphics[width=\textwidth]{figures/subspace/sports_car_sd3/latent_to_be_turned_into_basis_1}
        {\tiny $\bm{x}_3$}
        \includegraphics[width=\textwidth]{figures/subspace/sports_car_sd3/latent_to_be_turned_into_basis_2}
        {\tiny $\bm{x}_4$}
        \includegraphics[width=\textwidth]{figures/subspace/sports_car_sd3/latent_to_be_turned_into_basis_3}
        {\tiny $\bm{x}_5$}
        \includegraphics[width=\textwidth]{figures/subspace/sports_car_sd3/latent_to_be_turned_into_basis_4}
    \end{minipage}
    \hspace{0.001\textwidth} 
    \begin{minipage}[b]{0.44\textwidth}
        \includegraphics[width=\textwidth]{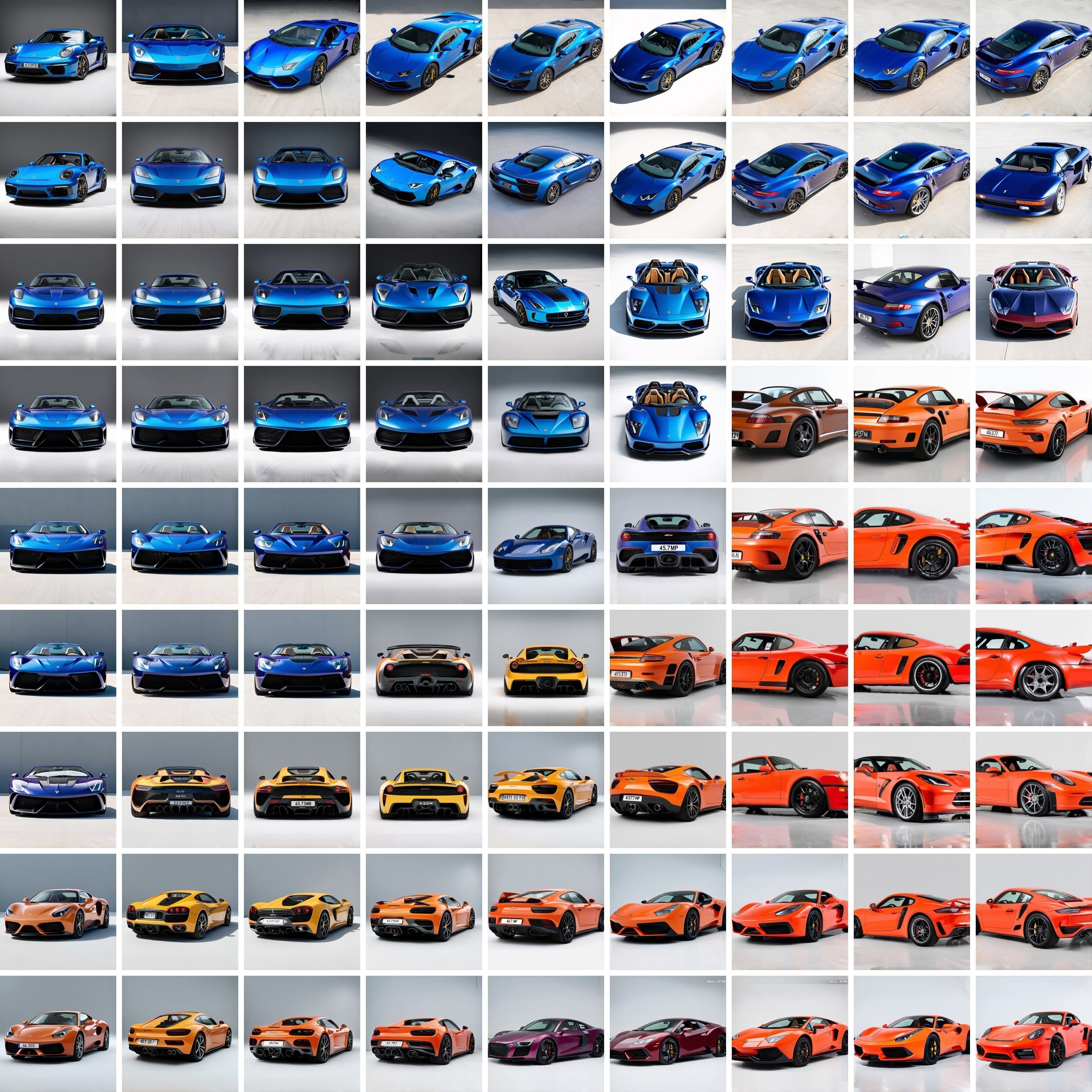}
    \end{minipage}
    \hspace{0.001\textwidth} 
    \begin{minipage}[b]{0.44\textwidth}
        \includegraphics[width=\textwidth]{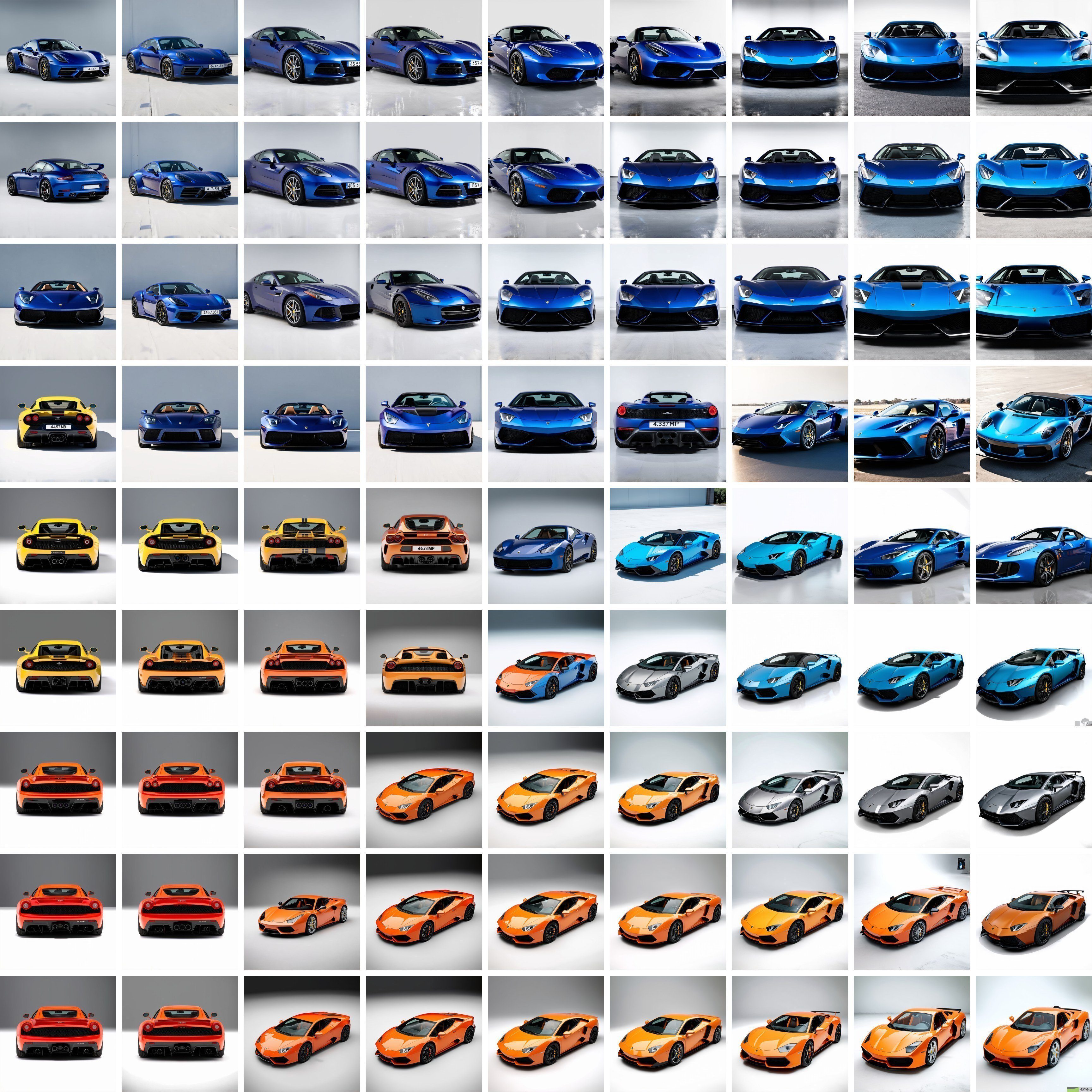}
    \end{minipage}
    \caption{
    \textbf{Additional slices of a sports car subspace}. 
    The latents $\bm{x}_1, \dots, \bm{x}_5$ (corresponding to images) are converted into basis vectors and used to define a 5-dimensional subspace. The grids show generations from uniform grid points in the subspace coordinate system, where the left and right grids are for the dimensions $\{ 1, 2 \}$ and $\{3, 4\}$, respectively, centered around the coordinate for $\bm{x}_1$. Each coordinate in the subspace correspond to a linear combination of the basis vectors. The flow matching model Stable Diffusion 3~\citep{esser2024scaling} is used in this example. See Figure~\ref{fig:car_grid} for another slice of the latent subspace.
    }
    \label{fig:car_grid_more_slices}
\end{figure}

\begin{figure}[ht]
    \centering
    \begin{minipage}[b]{0.08\textwidth}
        \hspace{2em}{\scriptsize $\bm{x}_1$}\hfill
    \end{minipage}
    \begin{minipage}[b]{0.82\textwidth}
        \hrulefill
        \hspace{0.04em}
        {\tiny interpolations}
        \hrulefill
    \end{minipage}
    \begin{minipage}[b]{0.06\textwidth}
        \hfill{\scriptsize $\bm{x}_2$}
        \hspace{0.7em}
    \end{minipage}
    \begin{minipage}[b]{\textwidth}
        \begin{minipage}[b]{0.010\textwidth}
            \raisebox{0.7\height}{\rotatebox{90}{\scriptsize LERP}}
        \end{minipage}
        \hfill
        \begin{minipage}[b]{0.989\textwidth}
            \centering
            \includegraphics[width=0.105\textwidth]{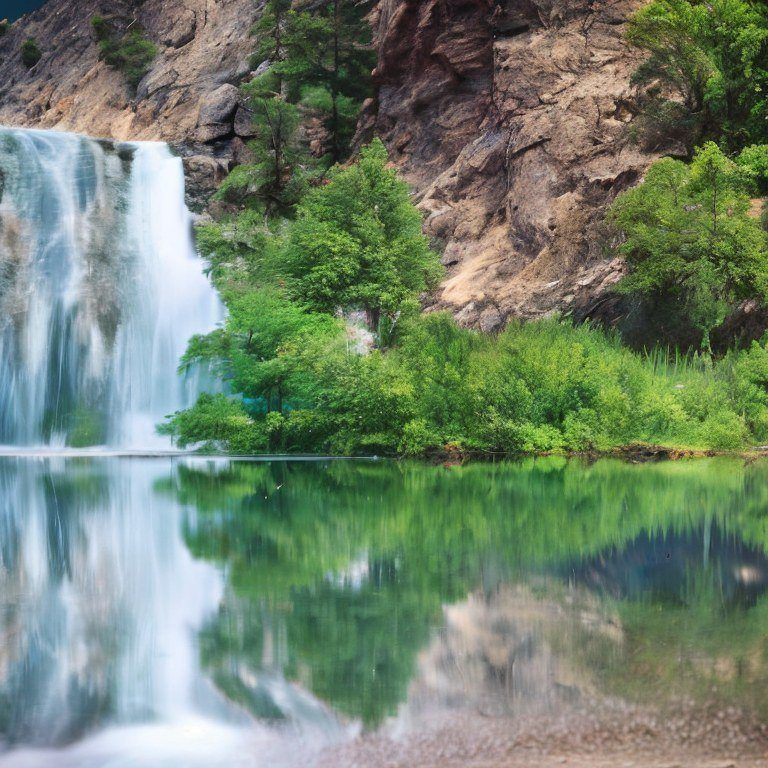}
            \includegraphics[width=0.105\textwidth]{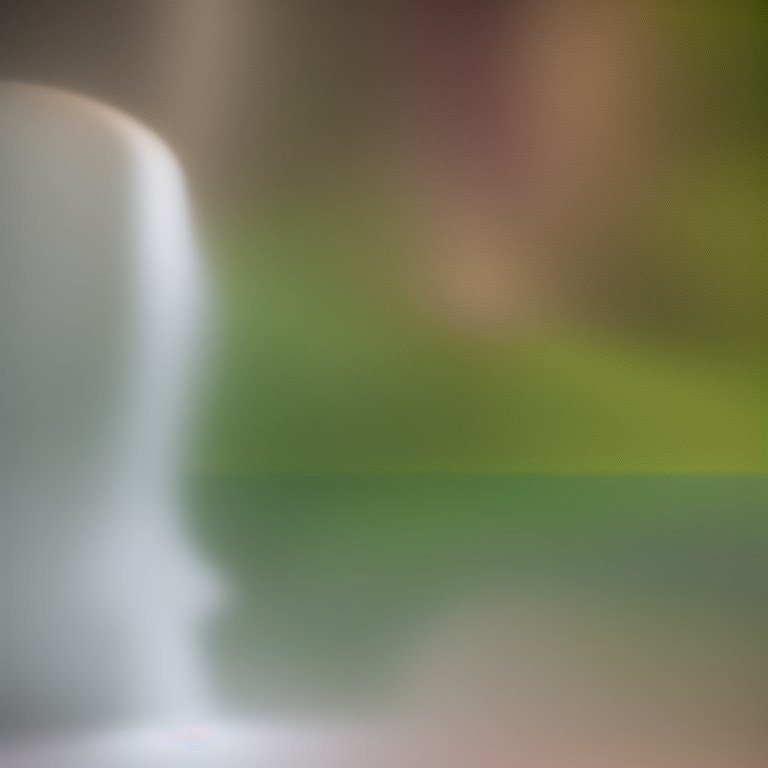}
            \includegraphics[width=0.105\textwidth]{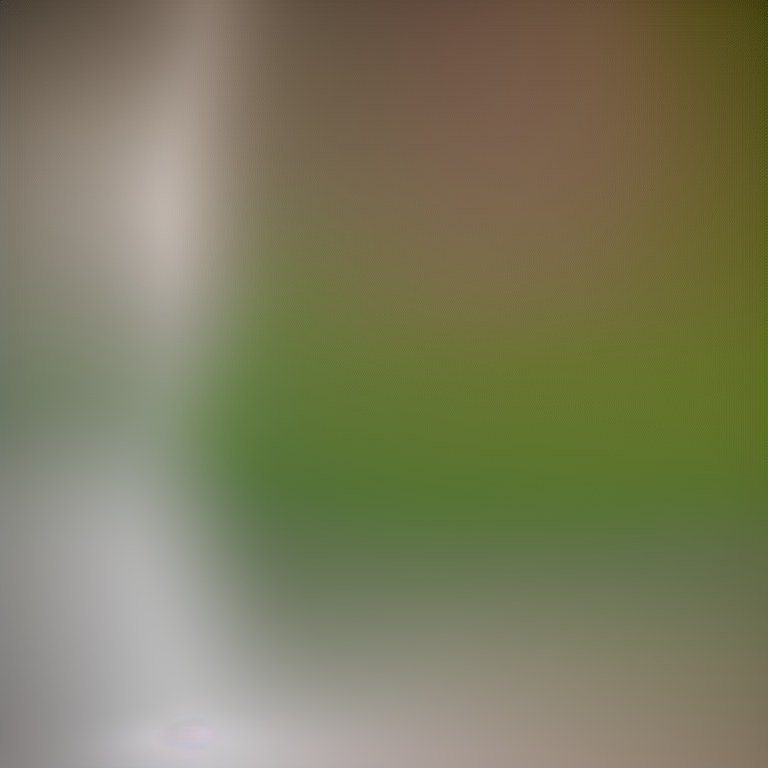}
            \includegraphics[width=0.105\textwidth]{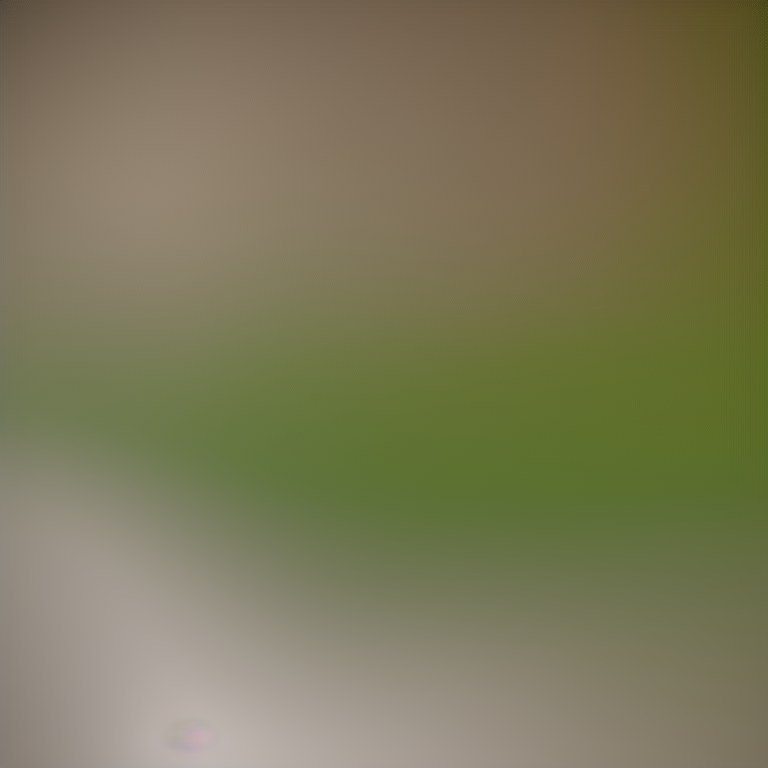}
            \includegraphics[width=0.105\textwidth]{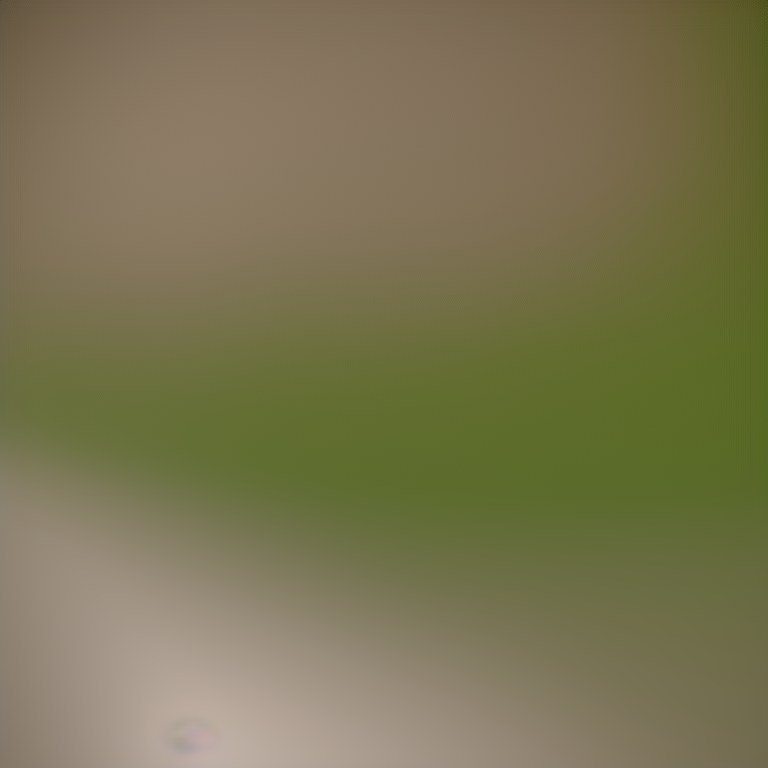}
            \includegraphics[width=0.105\textwidth]{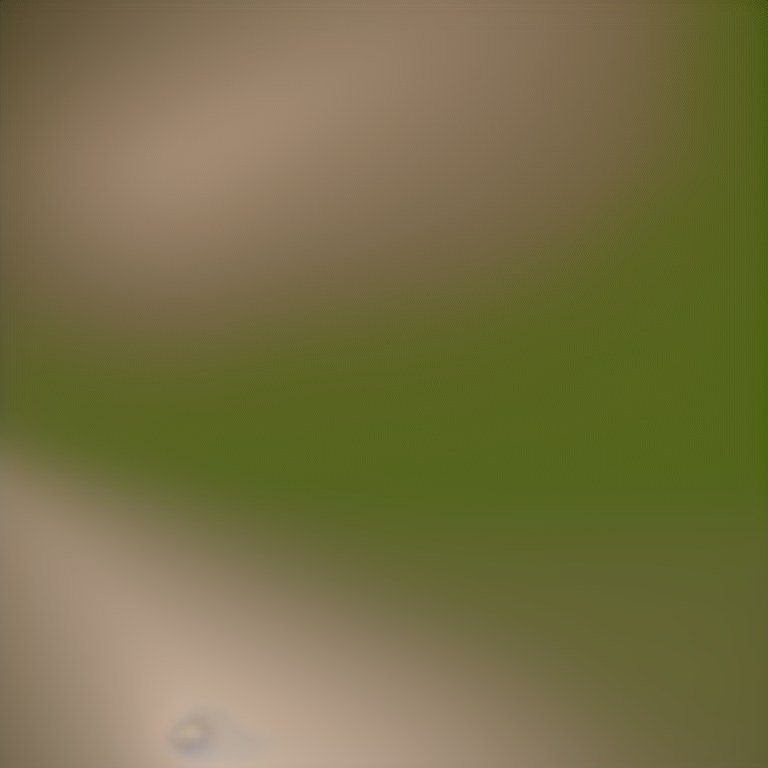}
            \includegraphics[width=0.105\textwidth]{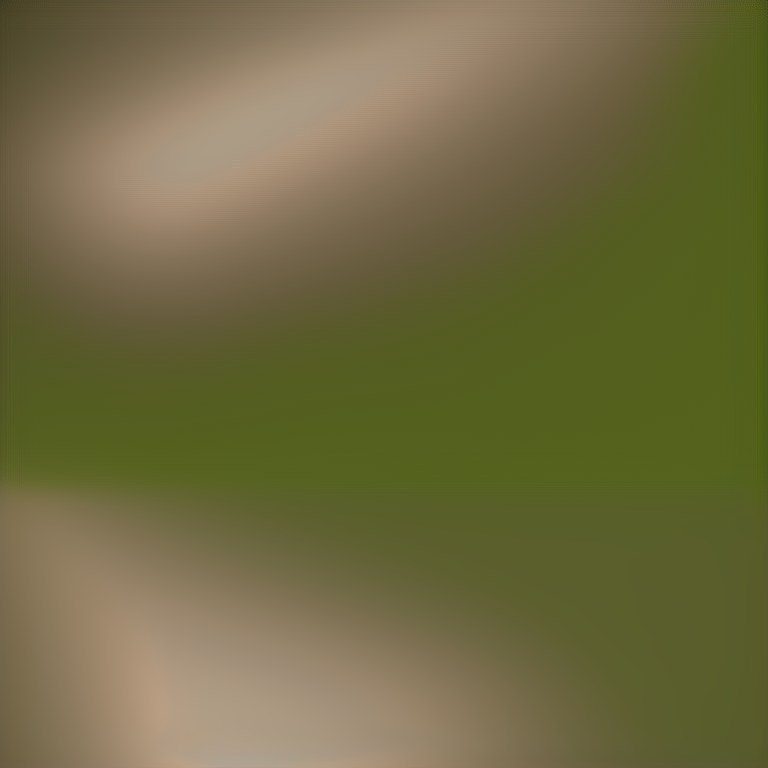}
            \includegraphics[width=0.105\textwidth]{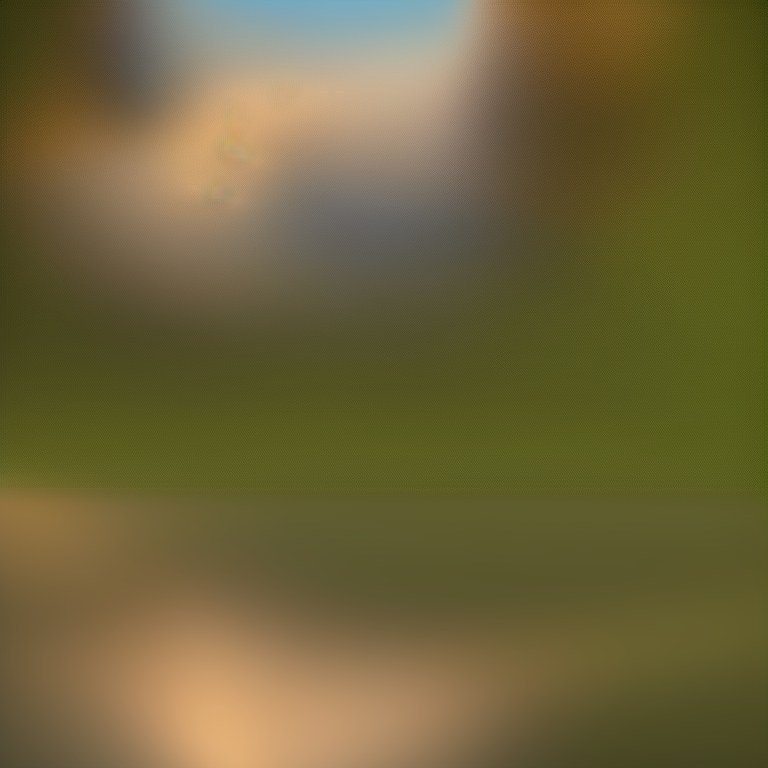}
            \includegraphics[width=0.105\textwidth]{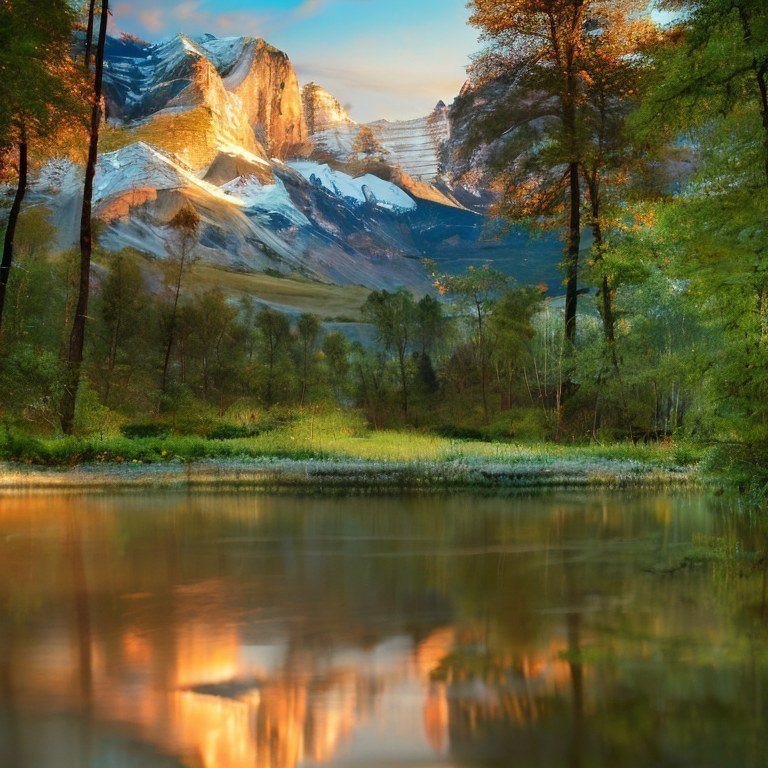}
        \end{minipage}
    \end{minipage}
    \begin{minipage}[b]{\textwidth}
        \begin{minipage}[b]{0.010\textwidth}
            \raisebox{0.5\height}{\rotatebox{90}{\scriptsize SLERP}}
        \end{minipage}
        \hfill
        \begin{minipage}[b]{0.989\textwidth}
            \centering
            \includegraphics[width=0.105\textwidth]{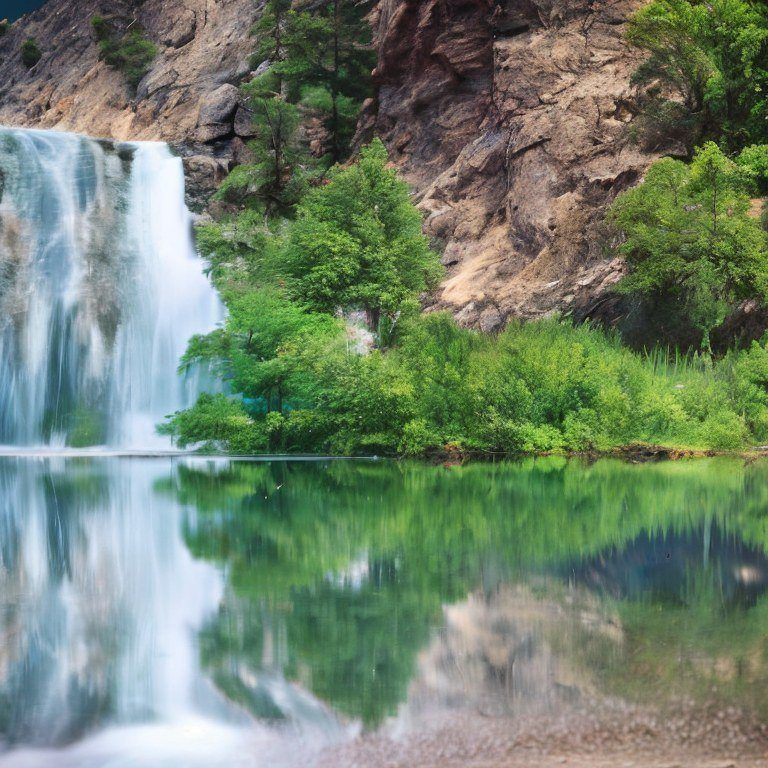}
            \includegraphics[width=0.105\textwidth]{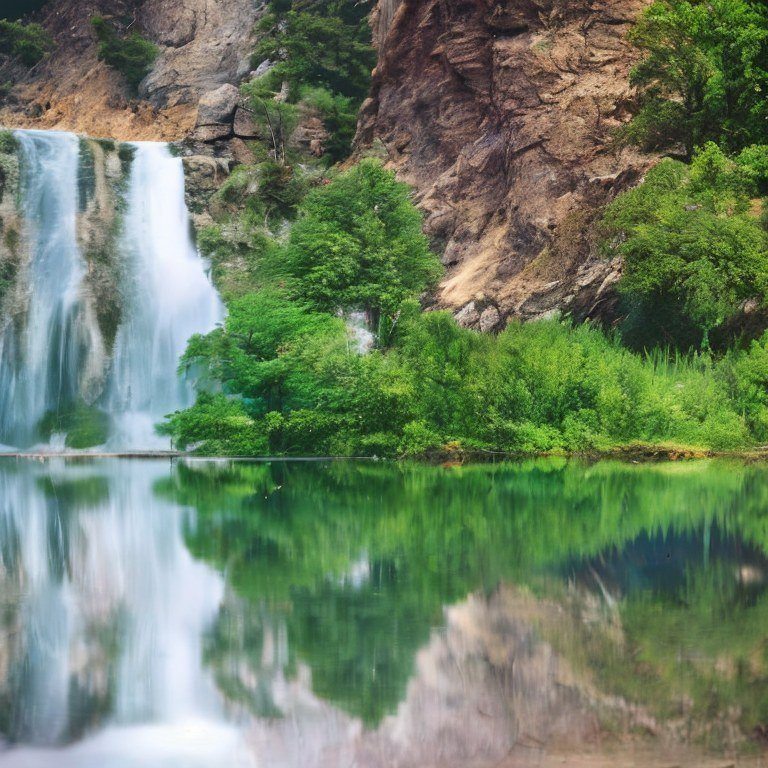}
            \includegraphics[width=0.105\textwidth]{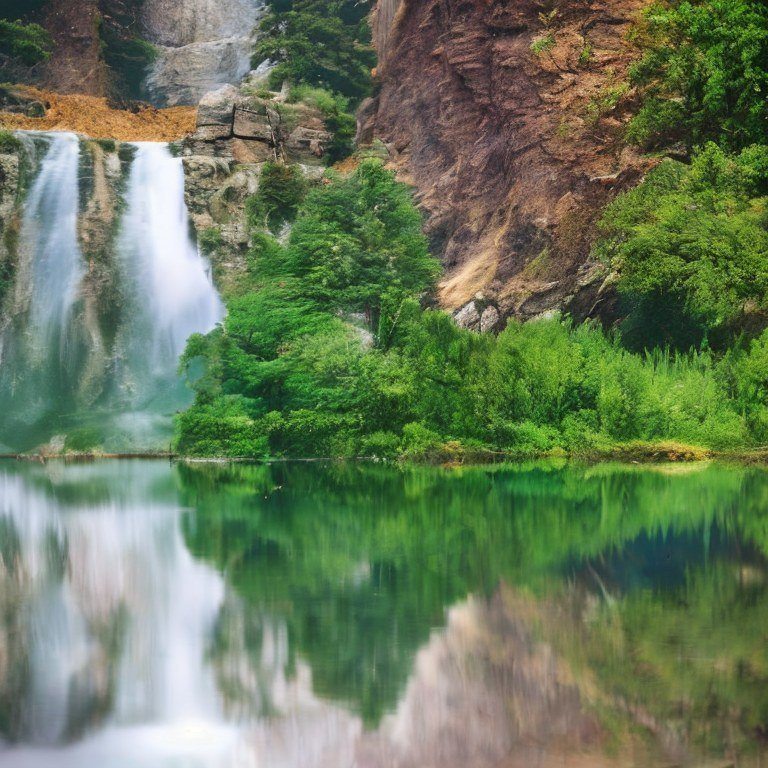}
            \includegraphics[width=0.105\textwidth]{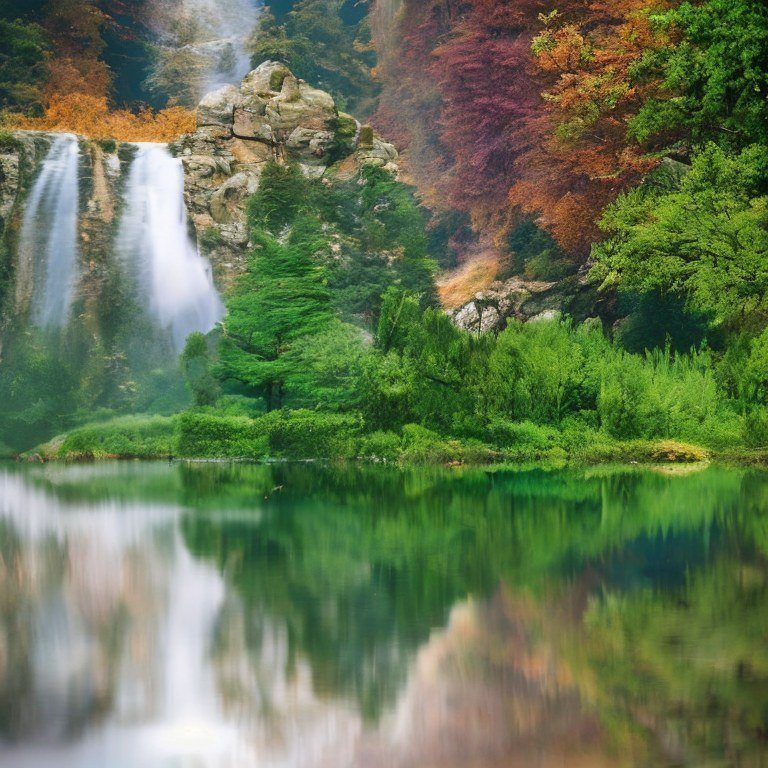}
            \includegraphics[width=0.105\textwidth]{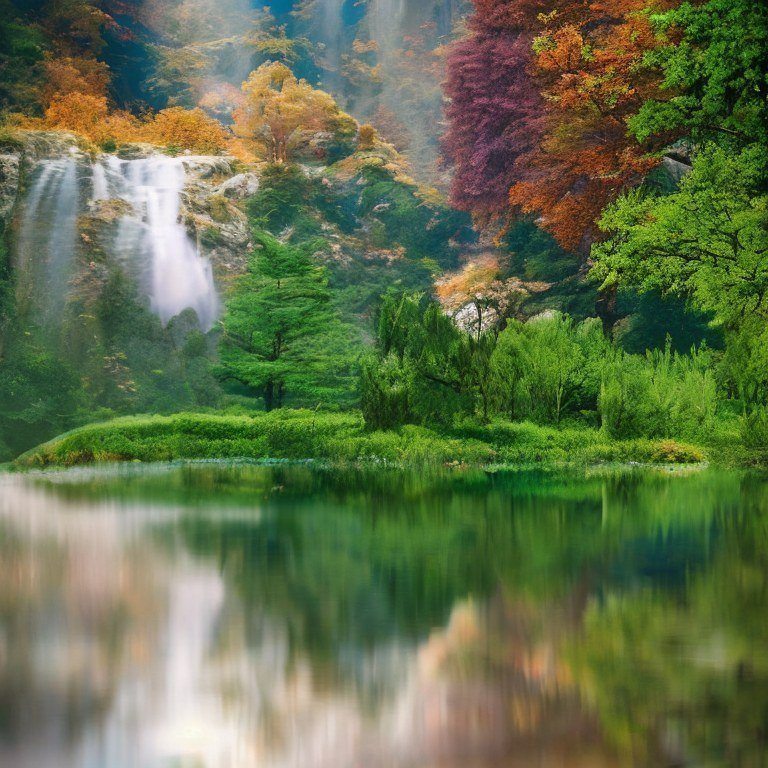}
            \includegraphics[width=0.105\textwidth]{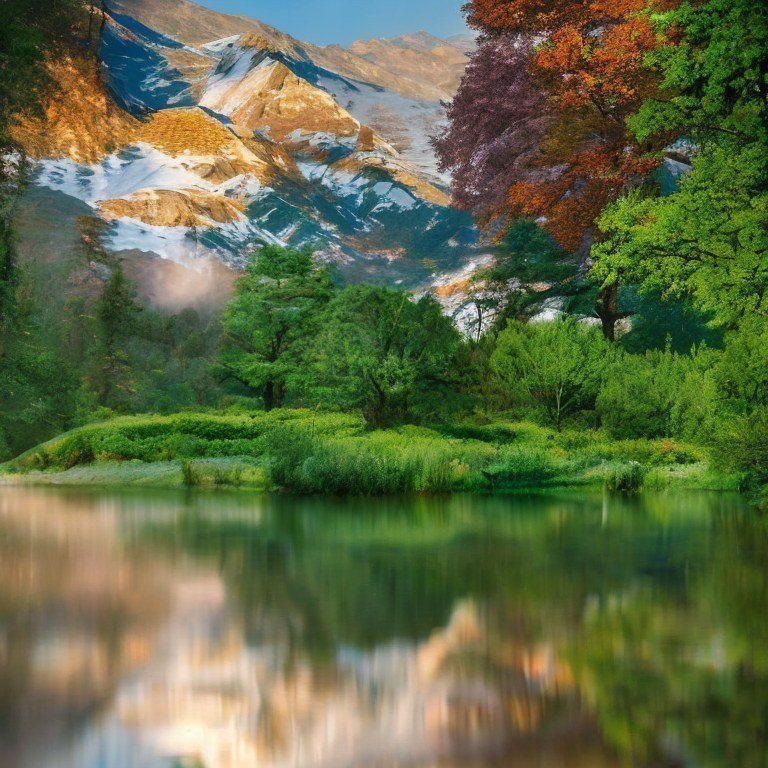}
            \includegraphics[width=0.105\textwidth]{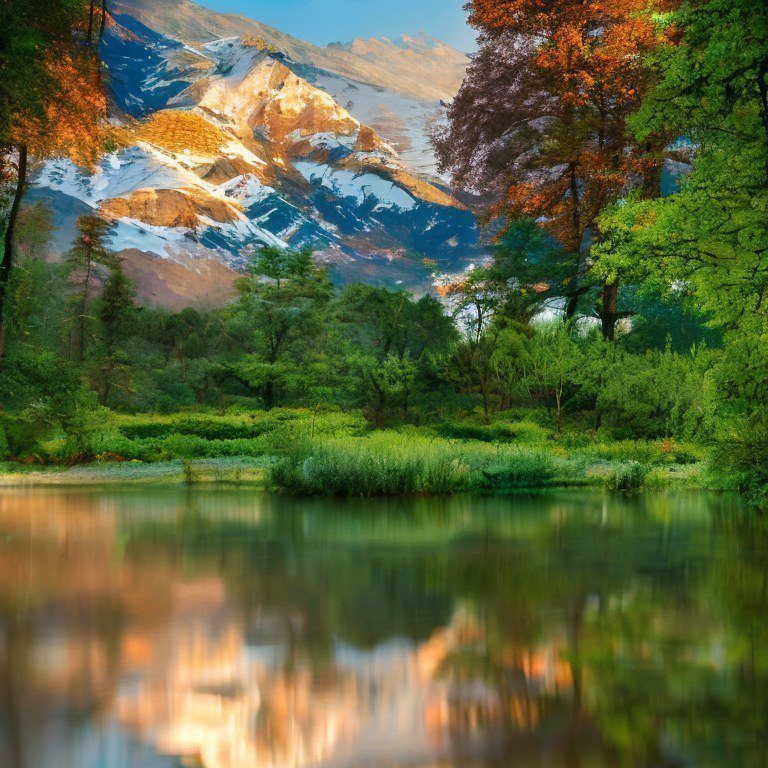}
            \includegraphics[width=0.105\textwidth]{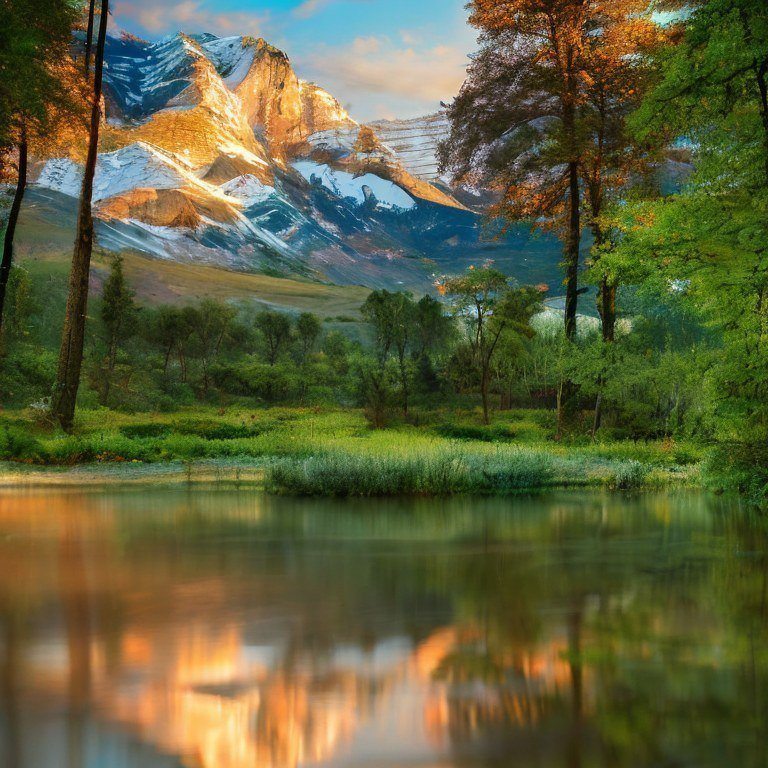}
            \includegraphics[width=0.105\textwidth]{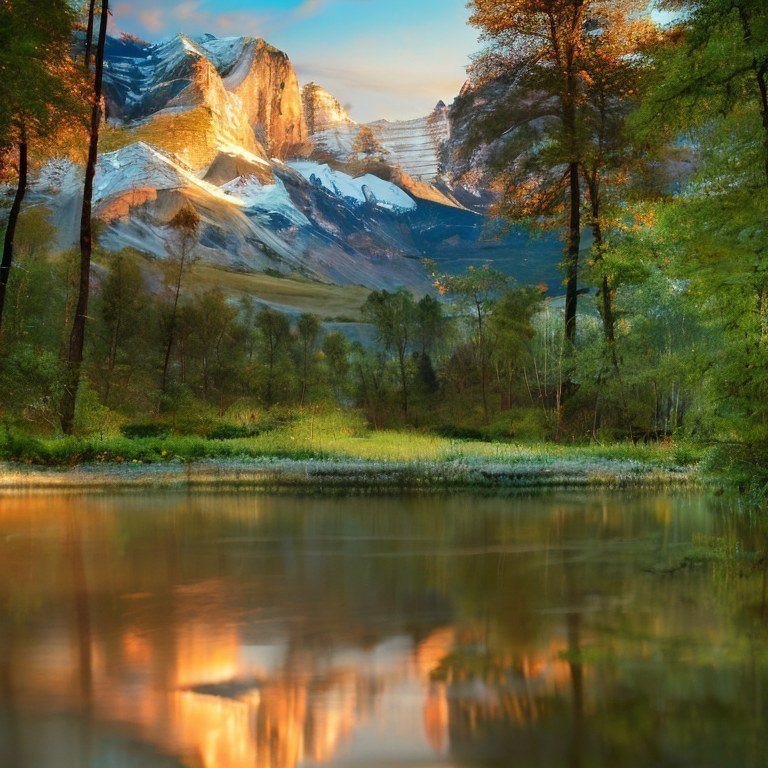}
        \end{minipage}
    \end{minipage}
    \begin{minipage}[b]{\textwidth}
        \begin{minipage}[b]{0.010\textwidth}
            \raisebox{0.9\height}{\rotatebox{90}{\scriptsize NAO}}
        \end{minipage}
        \hfill
        \begin{minipage}[b]{0.989\textwidth}
            \centering
            \includegraphics[width=0.105\textwidth]{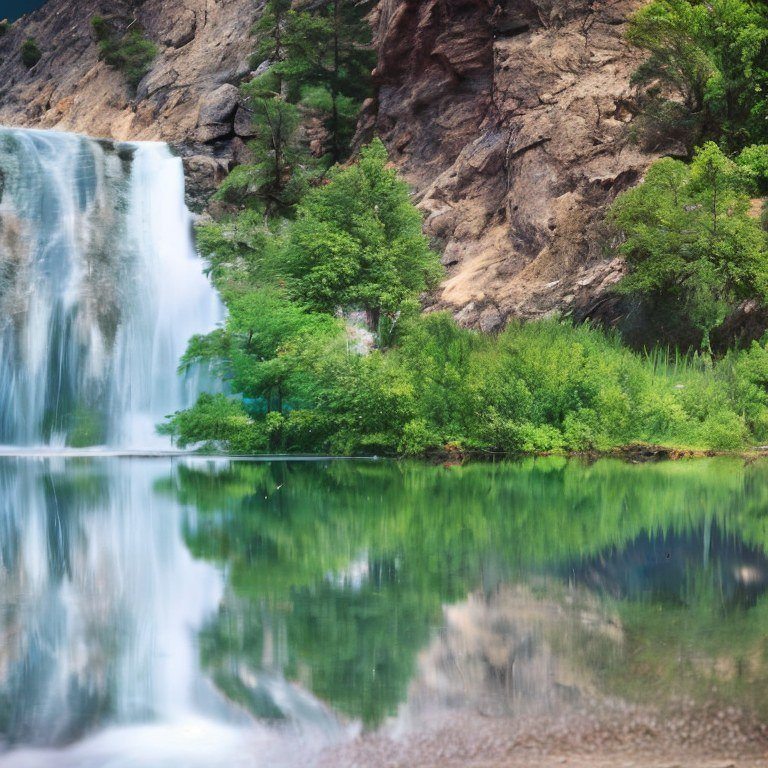}
            \includegraphics[width=0.105\textwidth]{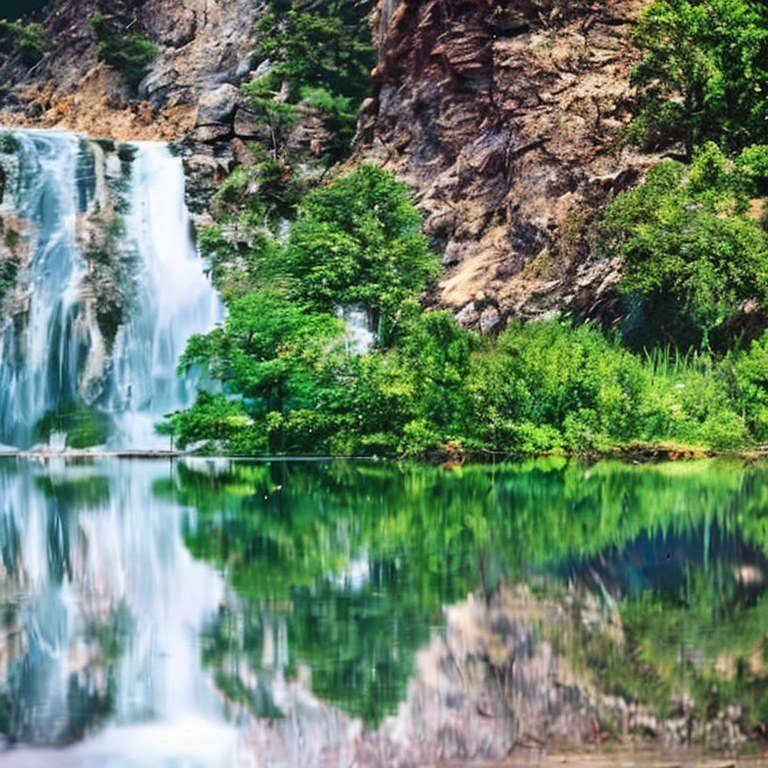}
            \includegraphics[width=0.105\textwidth]{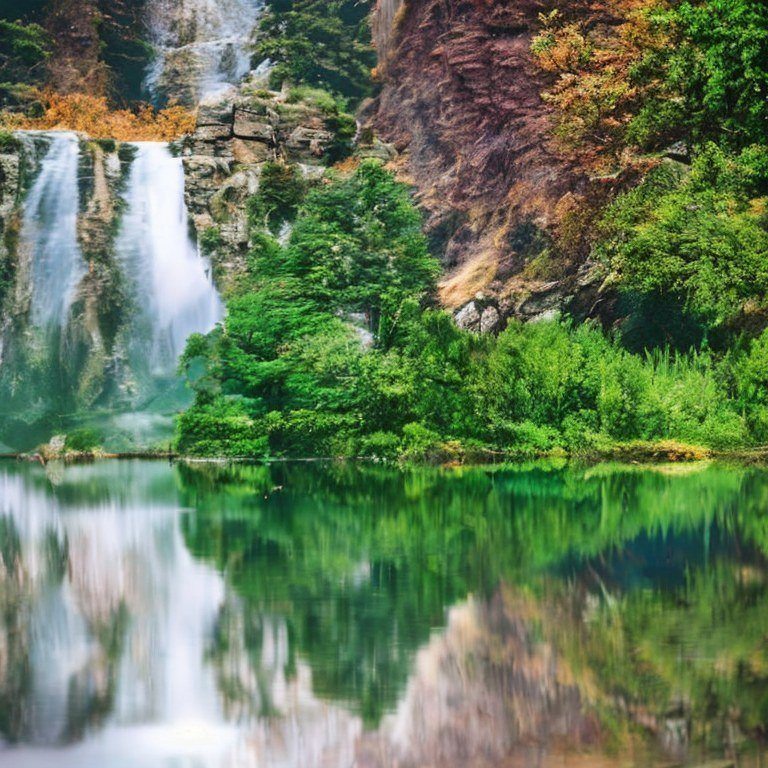}
            \includegraphics[width=0.105\textwidth]{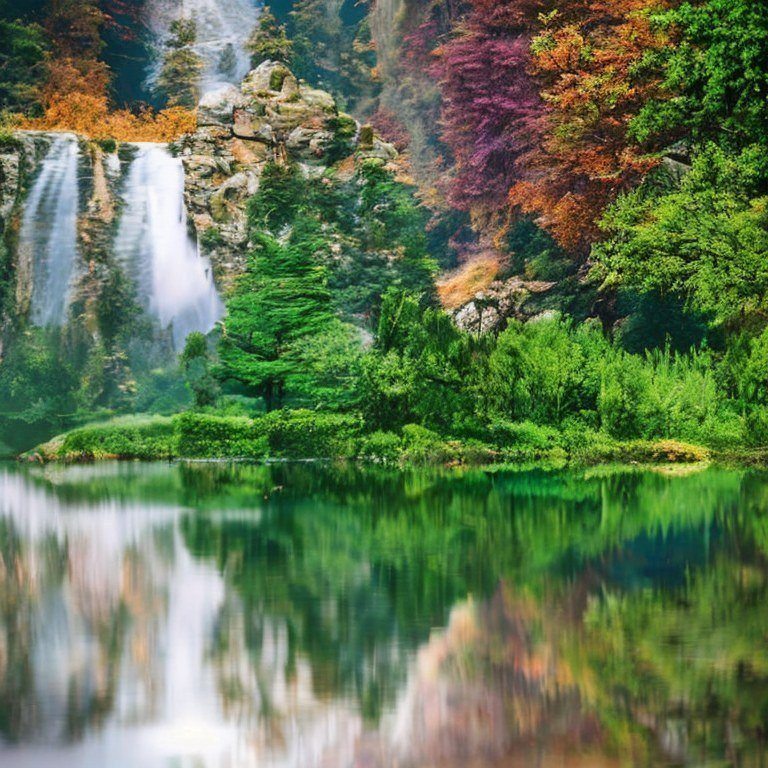}
            \includegraphics[width=0.105\textwidth]{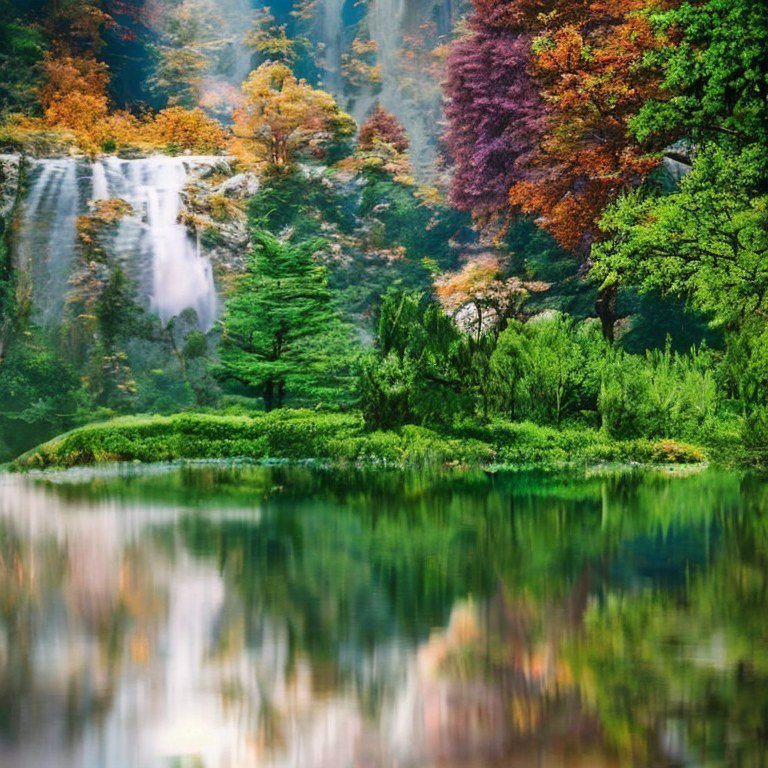}
            \includegraphics[width=0.105\textwidth]{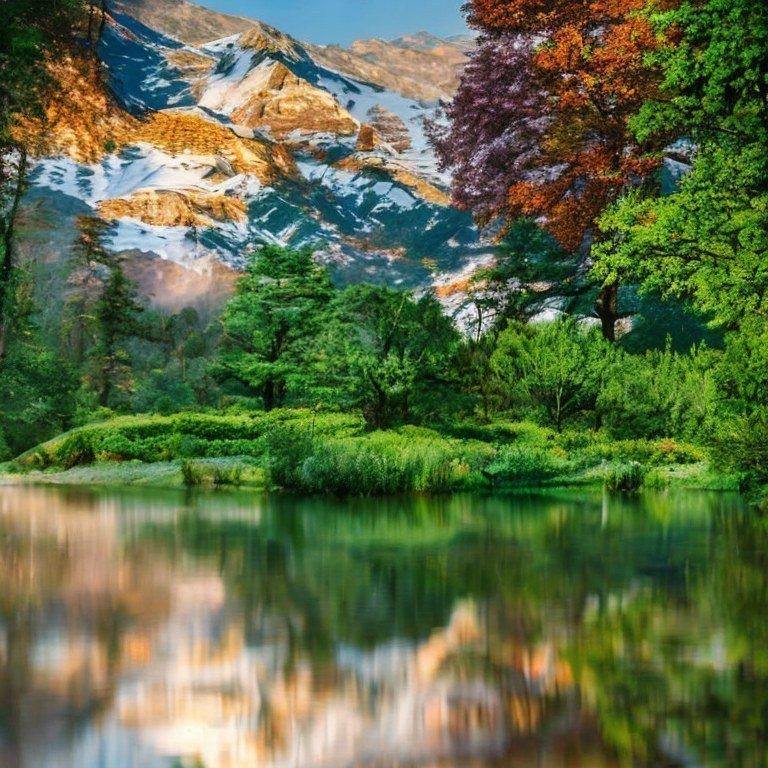}
            \includegraphics[width=0.105\textwidth]{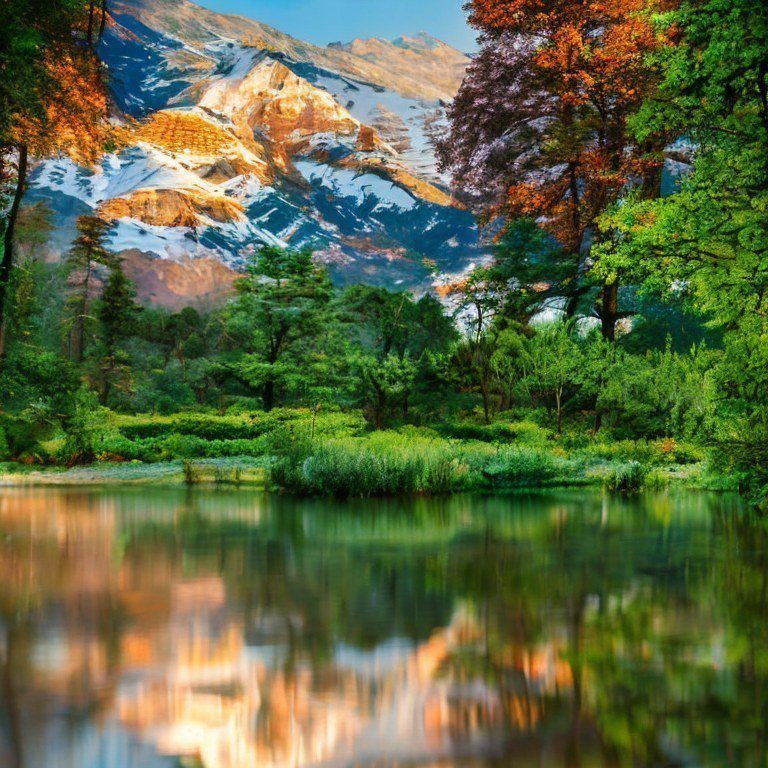}
            \includegraphics[width=0.105\textwidth]{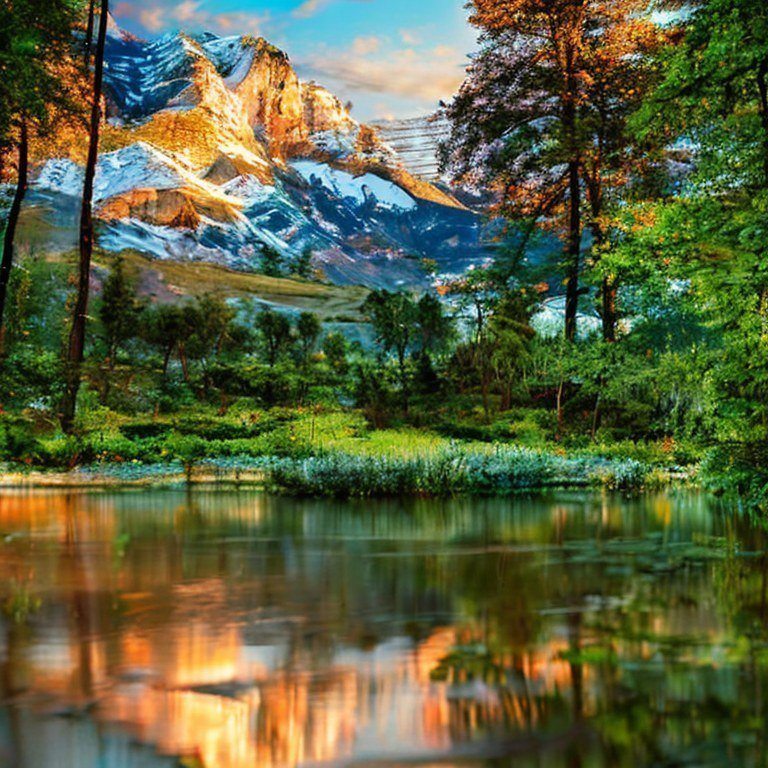}
            \includegraphics[width=0.105\textwidth]{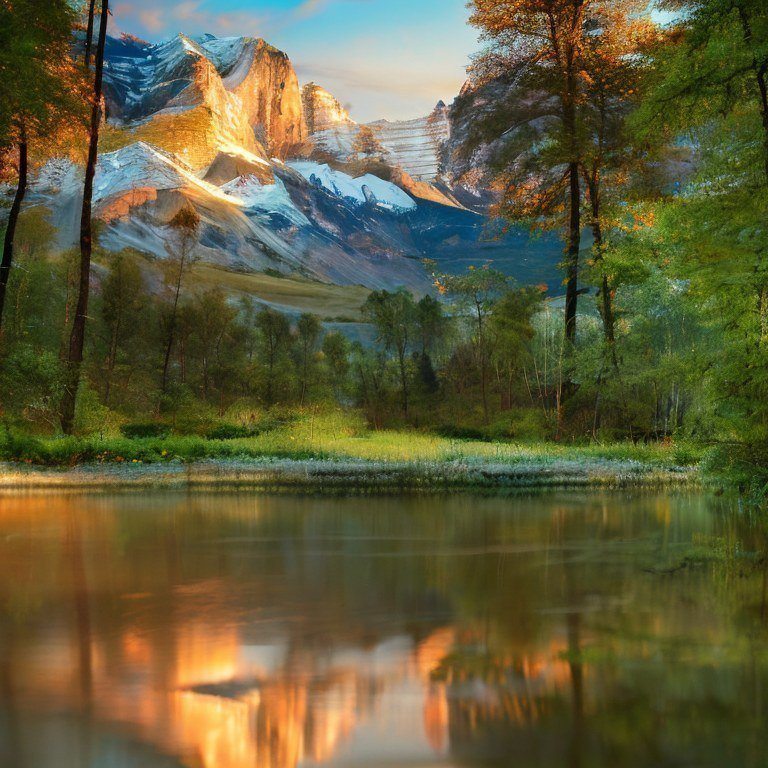}
        \end{minipage}
    \end{minipage}
    \begin{minipage}[b]{\textwidth}
        \begin{minipage}[b]{0.010\textwidth}
            \raisebox{0.9\height}{\rotatebox{90}{\scriptsize LOL}}
        \end{minipage}
        \hfill
        \begin{minipage}[b]{0.989\textwidth}
            \centering
            \includegraphics[width=0.105\textwidth]{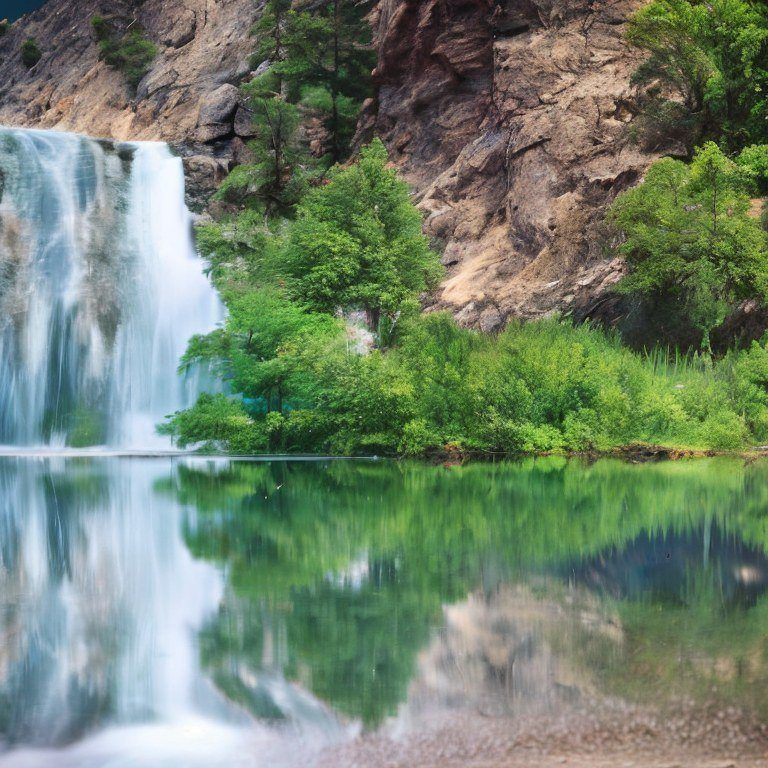}
            \includegraphics[width=0.105\textwidth]{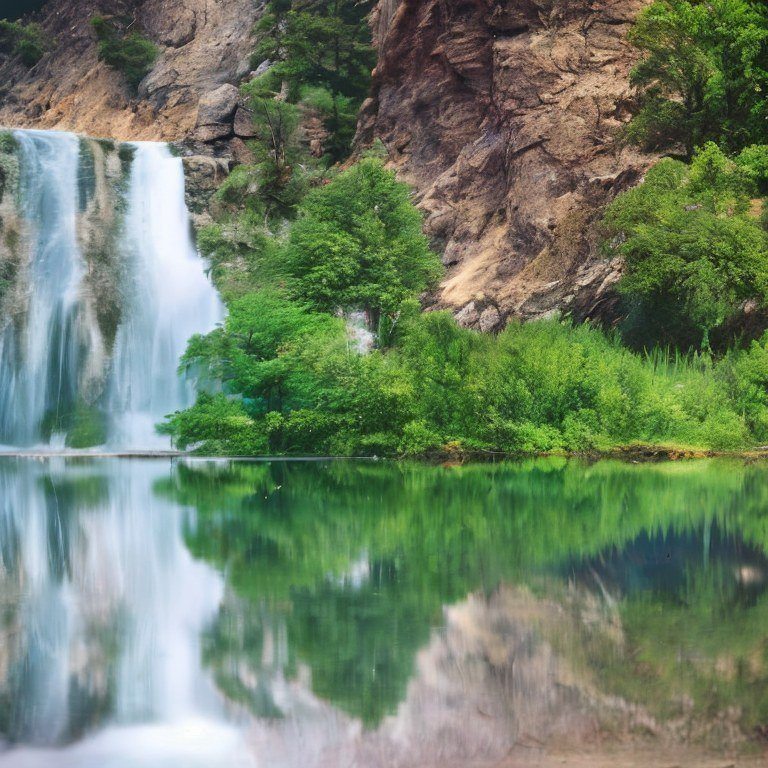}
            \includegraphics[width=0.105\textwidth]{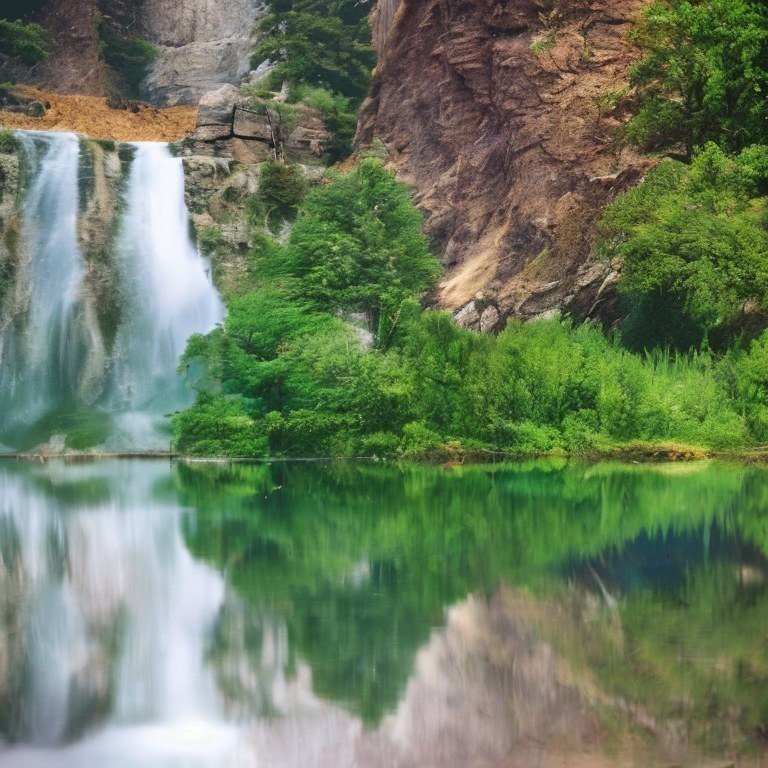}
            \includegraphics[width=0.105\textwidth]{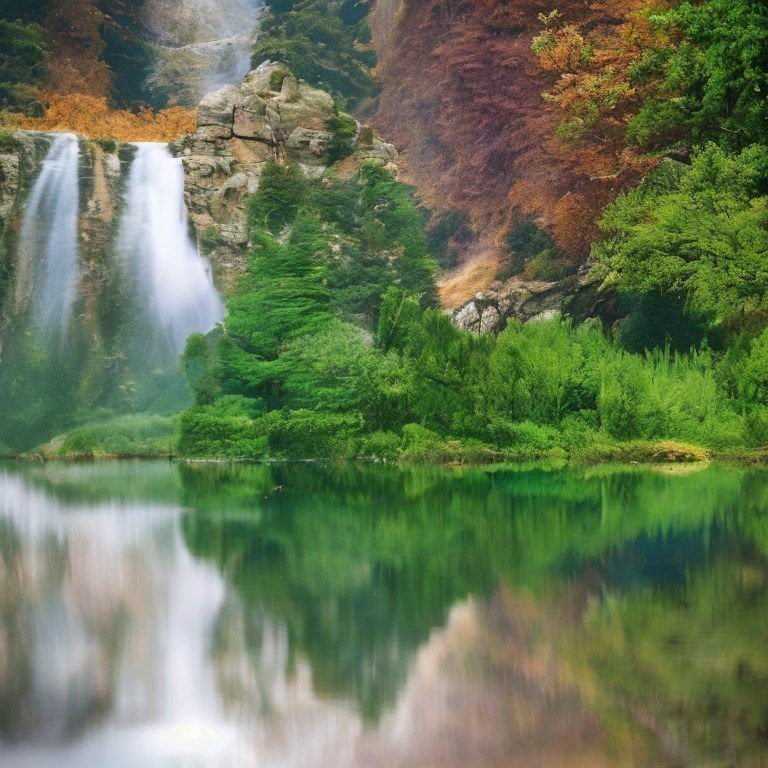}
            \includegraphics[width=0.105\textwidth]{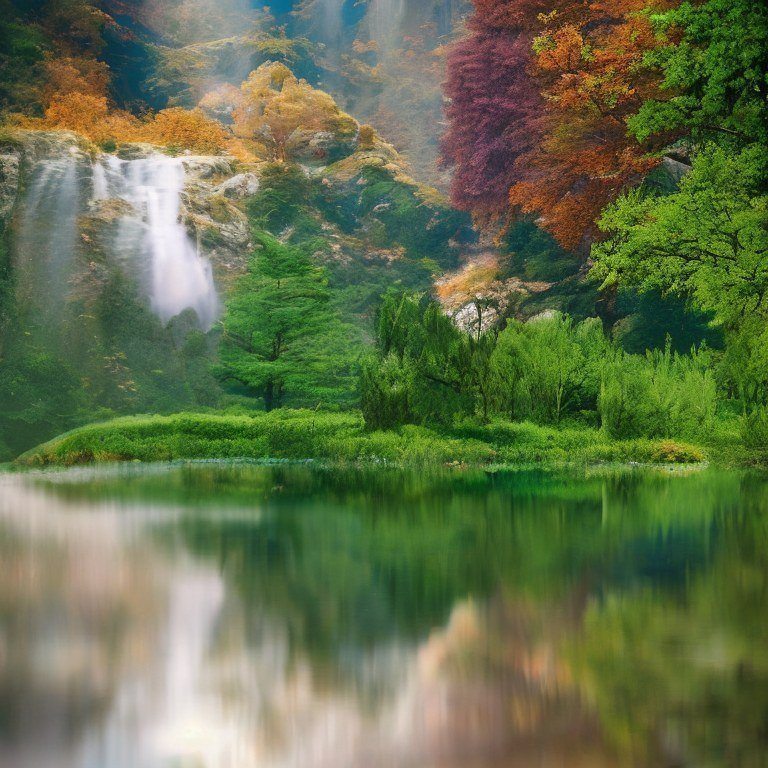}
            \includegraphics[width=0.105\textwidth]{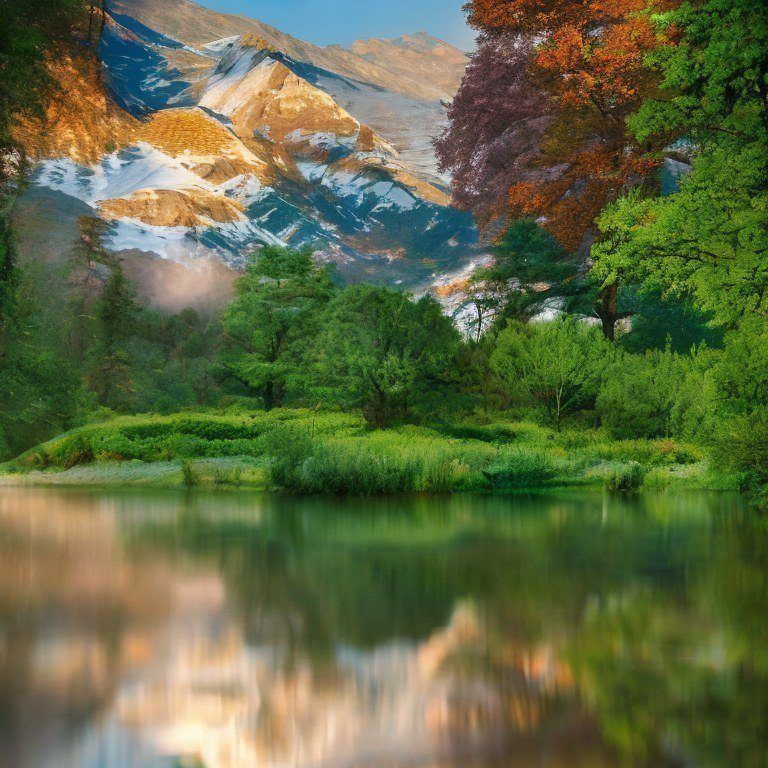}
            \includegraphics[width=0.105\textwidth]{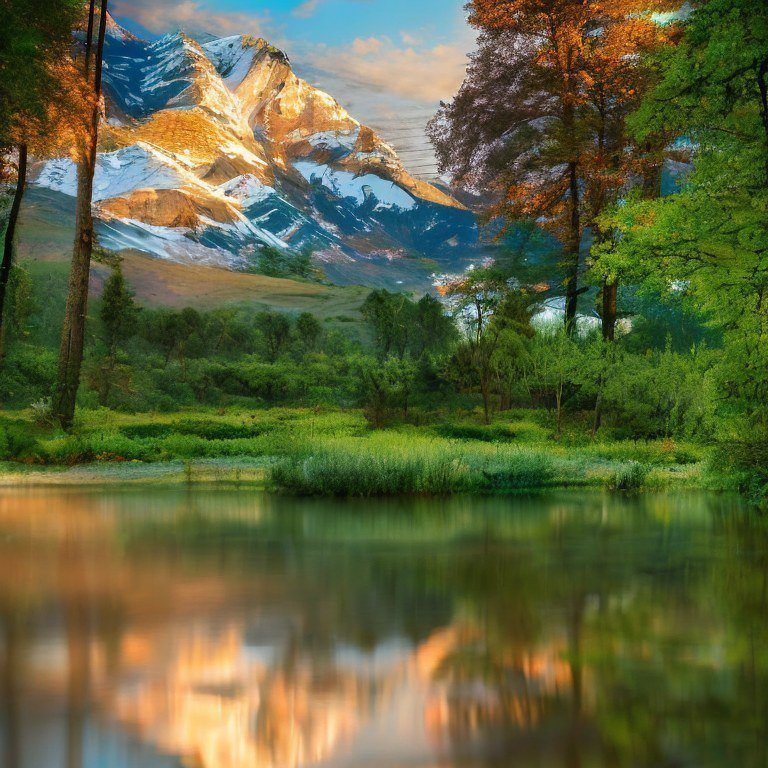}
            \includegraphics[width=0.105\textwidth]{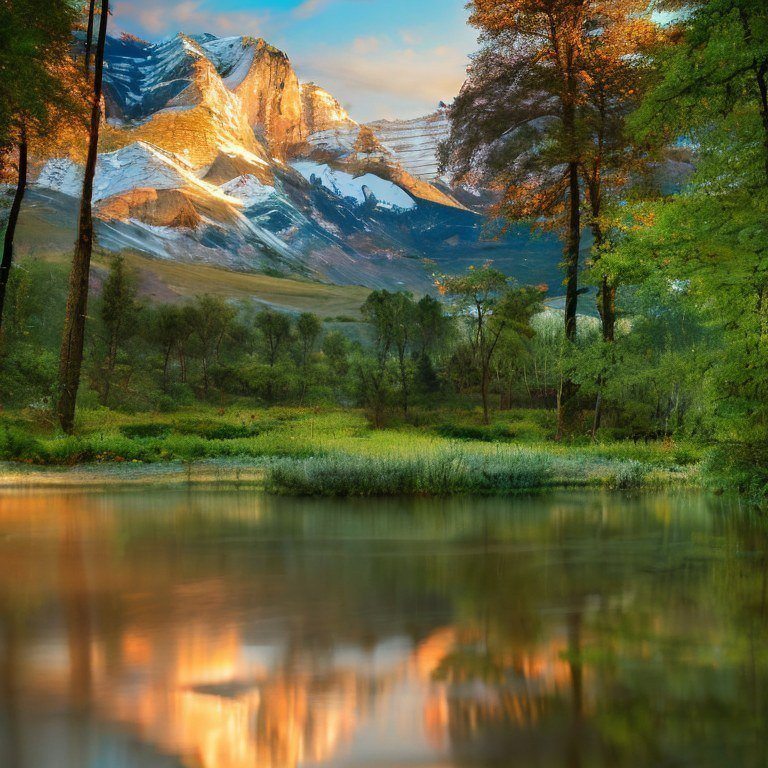}
            \includegraphics[width=0.105\textwidth]{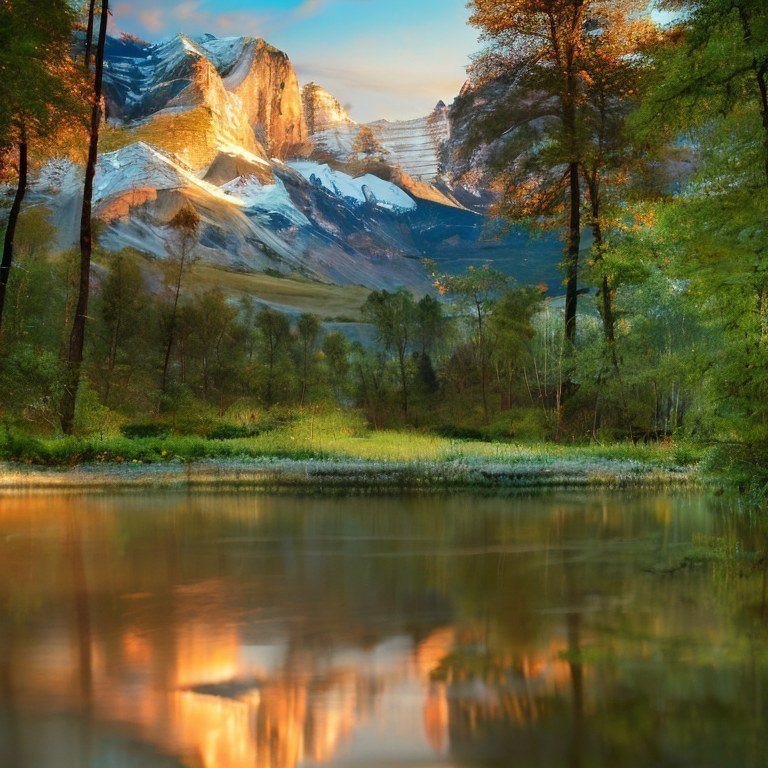}
        \end{minipage}
    \end{minipage}
    \caption{
    \textbf{Interpolation}.
    Shown are generations from equidistant interpolations of latents $\bm{x}_1$ and $\bm{x}_2$ (in $v \in [0, 1]$) using the respective method.
    The diffusion model Stable Diffusion 2.1~\citep{rombach2022high} is used in this example. 
    }
    \label{fig:landscape_interpolation}
\end{figure}

\begin{figure}[ht]
    \centering
    \begin{minipage}[b]{0.058\textwidth}
        \centering
        {\tiny $\bm{x}_1$}
        \includegraphics[width=\textwidth]{figures/subspace/sports_car_sd2/latent_to_be_turned_into_basis_0}
        {\tiny $\bm{x}_2$}
        \includegraphics[width=\textwidth]{figures/subspace/sports_car_sd2/latent_to_be_turned_into_basis_1}
        {\tiny $\bm{x}_3$}
        \includegraphics[width=\textwidth]{figures/subspace/sports_car_sd2/latent_to_be_turned_into_basis_2}
        {\tiny $\bm{x}_4$}
        \includegraphics[width=\textwidth]{figures/subspace/sports_car_sd2/latent_to_be_turned_into_basis_3}
        {\tiny $\bm{x}_5$}
        \includegraphics[width=\textwidth]{figures/subspace/sports_car_sd2/latent_to_be_turned_into_basis_4}
    \end{minipage}
    \hspace{0.001\textwidth} 
    \begin{minipage}[b]{0.44\textwidth}
        \includegraphics[width=\textwidth]{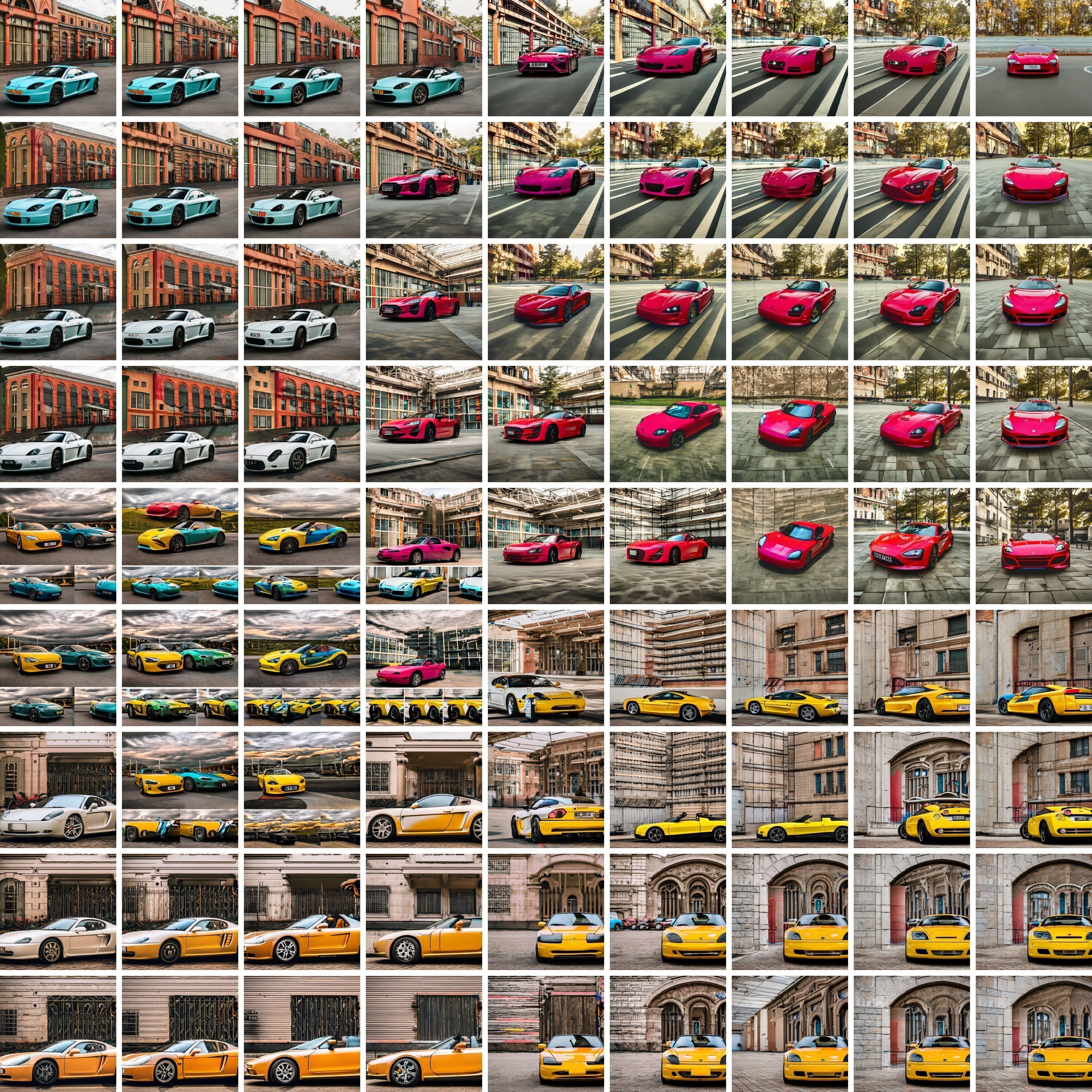}
    \end{minipage}
    \hspace{0.001\textwidth} 
    \begin{minipage}[b]{0.44\textwidth}
        \includegraphics[width=\textwidth]{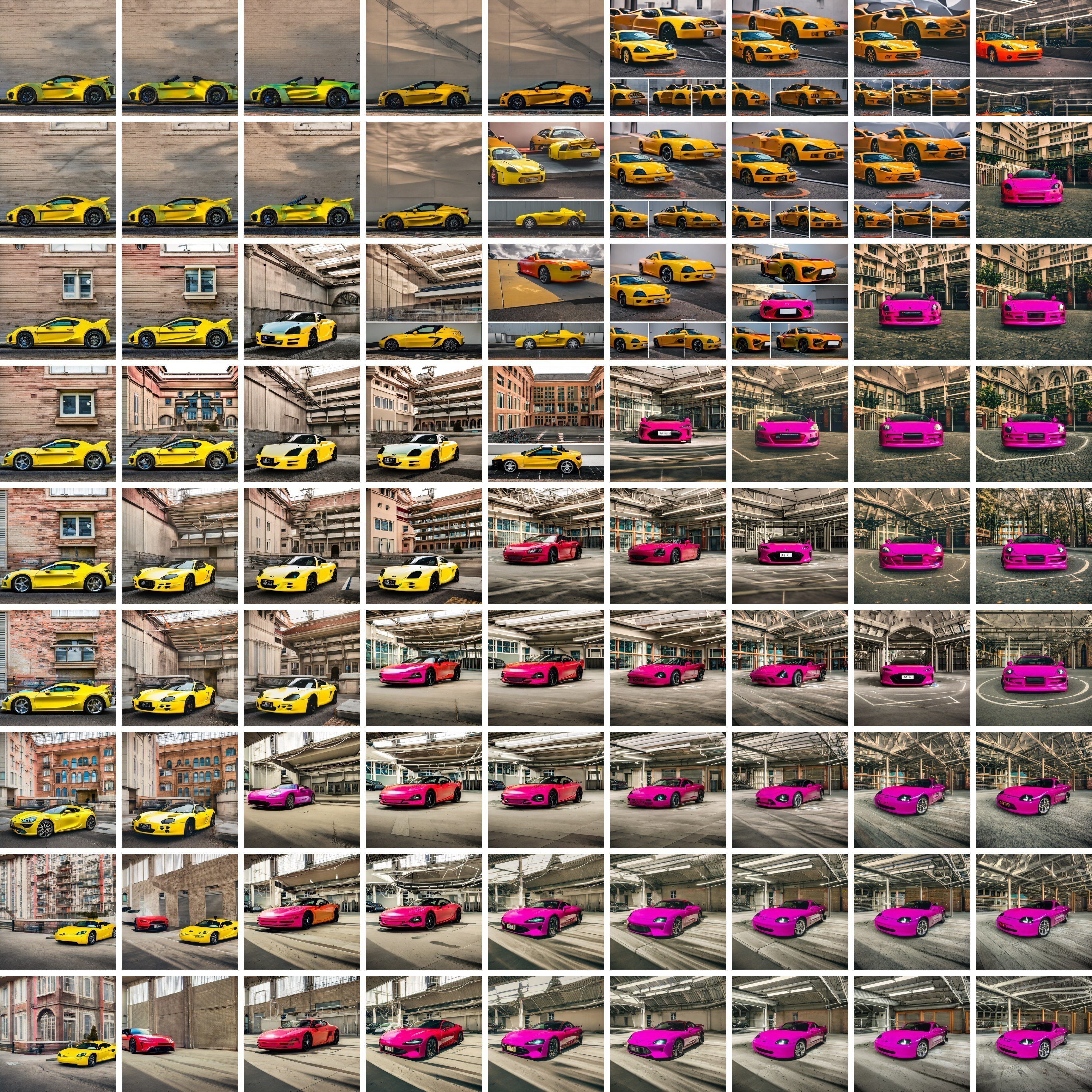}
    \end{minipage}
    \caption{
    \textbf{Low-dimensional subspaces}. 
    The latents $\bm{x}_1, \dots, \bm{x}_5$ (corresponding to images) are converted into basis vectors and used to define a 5-dimensional subspace. The grids show generations from uniform grid points in the subspace coordinate system, where the left and right grids are for the dimensions $\{ 1, 3 \}$ and $\{2, 4\}$, respectively, centered around the coordinate for $\bm{x}_1$. Each coordinate in the subspace correspond to a linear combination of the basis vectors. The diffusion model Stable Diffusion 2.1~\citep{rombach2022high} is used in this example. 
    }
    \label{fig:car_grid_sd2}
\end{figure}

\begin{figure}[ht]
    \centering
    \begin{minipage}[b]{0.058\textwidth}
        \centering
        {\tiny $\bm{x}_1$}
        \includegraphics[width=\textwidth]{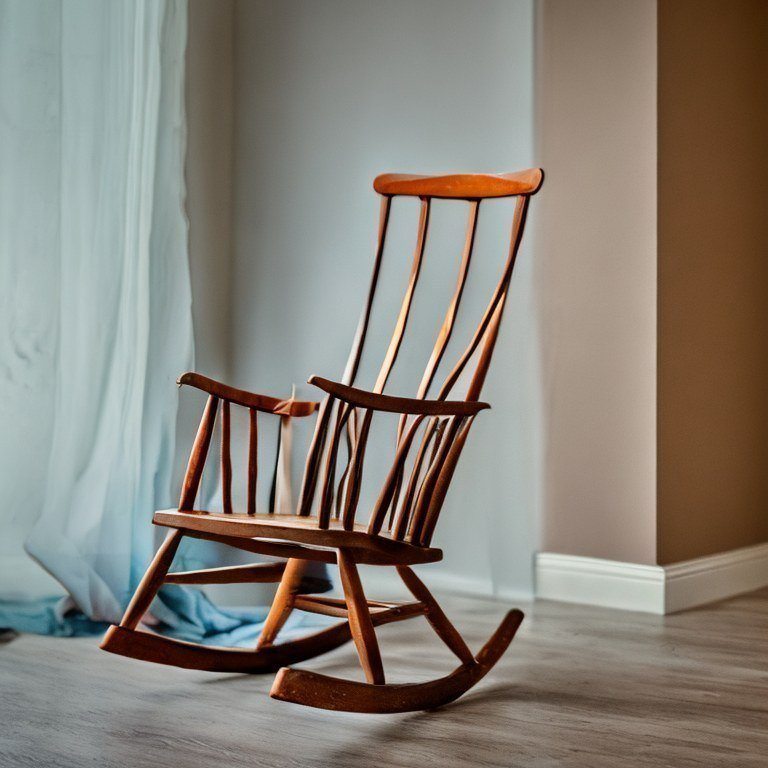}
        {\tiny $\bm{x}_2$}
        \includegraphics[width=\textwidth]{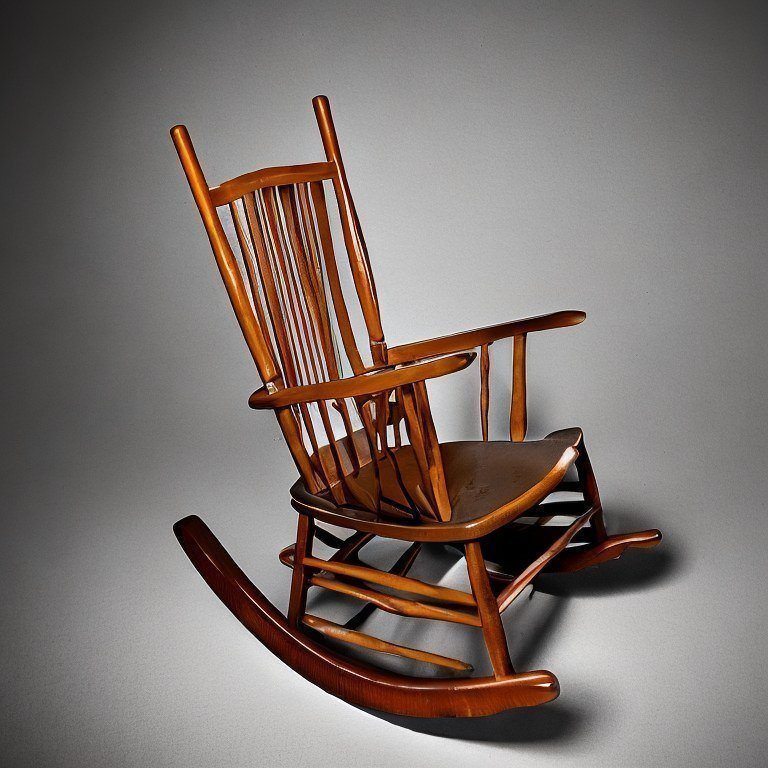}
        {\tiny $\bm{x}_3$}
        \includegraphics[width=\textwidth]{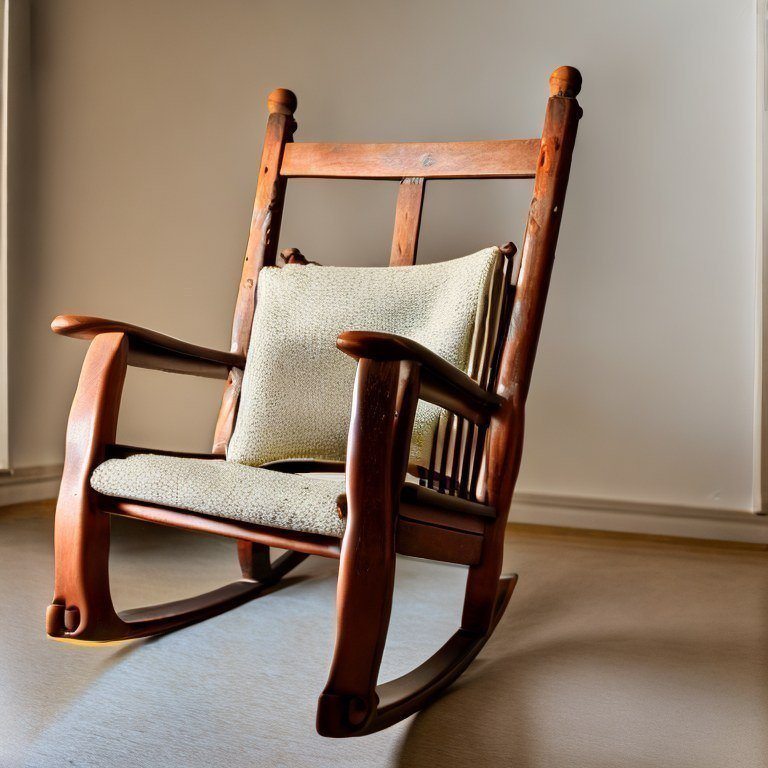}
        {\tiny $\bm{x}_4$}
        \includegraphics[width=\textwidth]{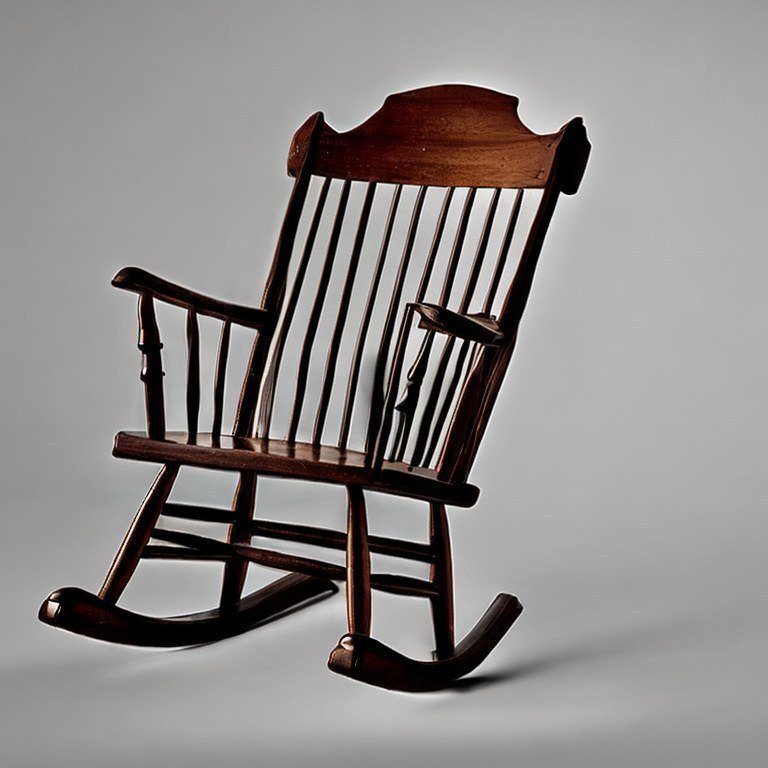}
        {\tiny $\bm{x}_5$}
        \includegraphics[width=\textwidth]{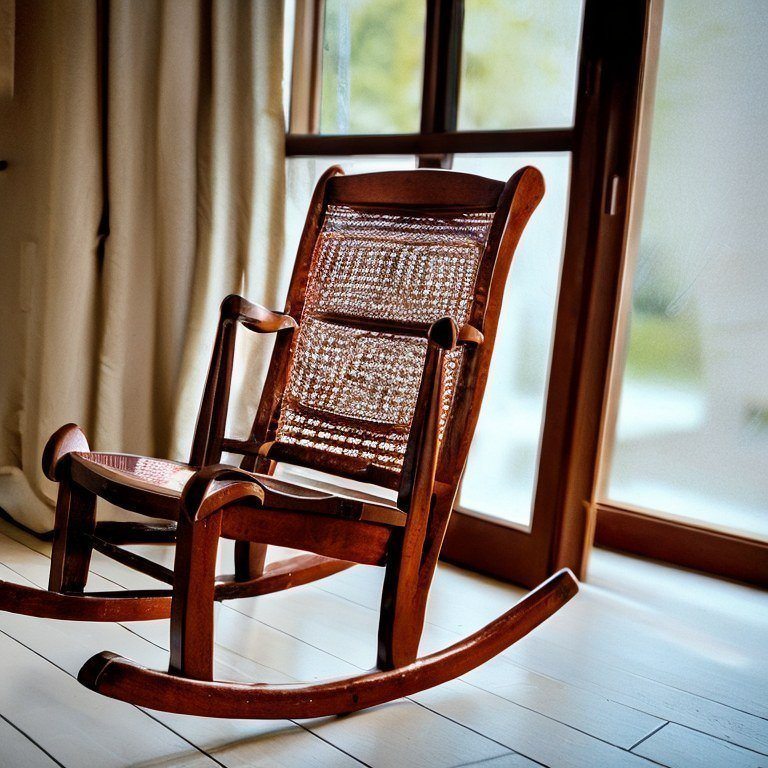}
    \end{minipage}
    \hspace{0.001\textwidth} 
    \begin{minipage}[b]{0.44\textwidth}
        \includegraphics[width=\textwidth]{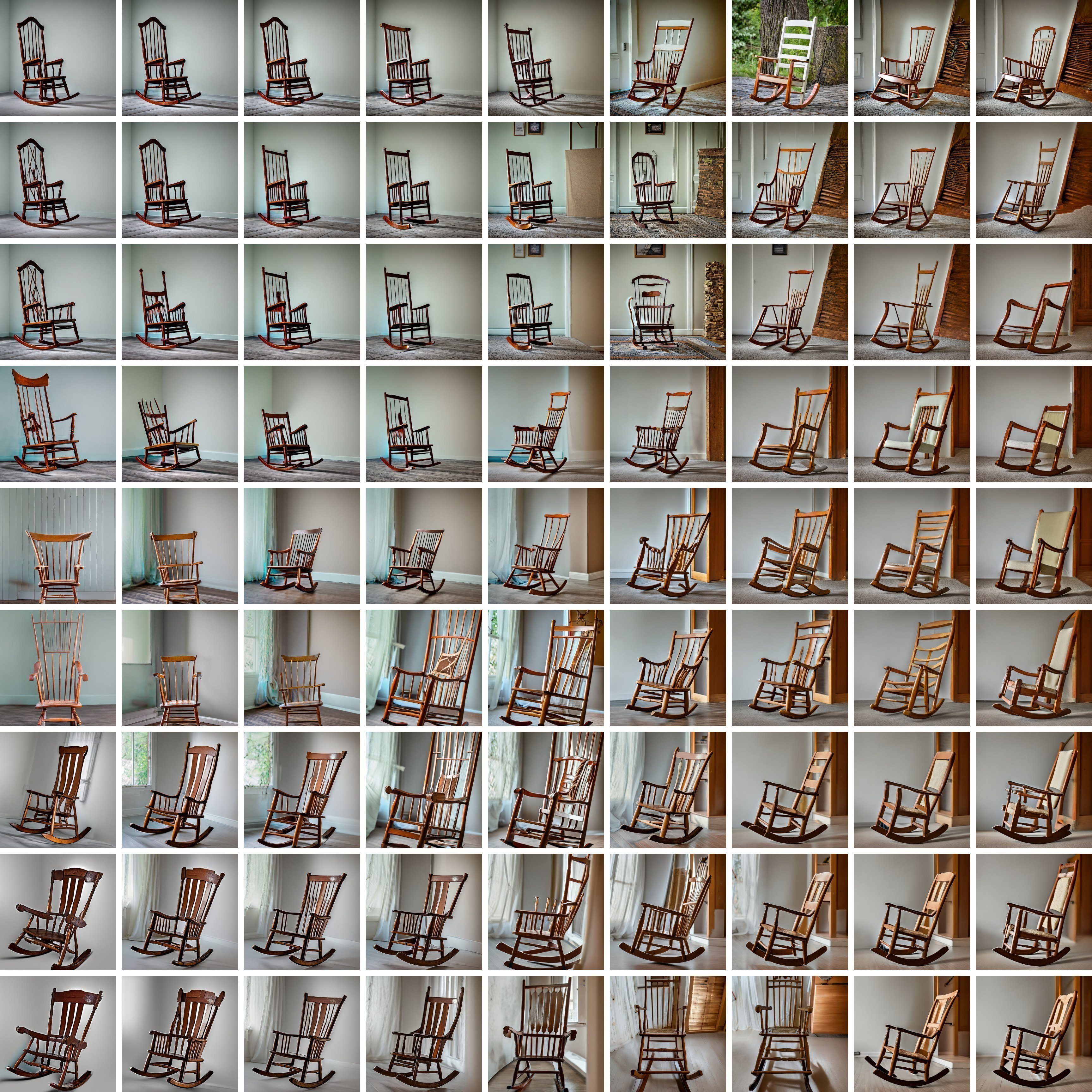}
    \end{minipage}
    \hspace{0.001\textwidth} 
    \begin{minipage}[b]{0.44\textwidth}
        \includegraphics[width=\textwidth]{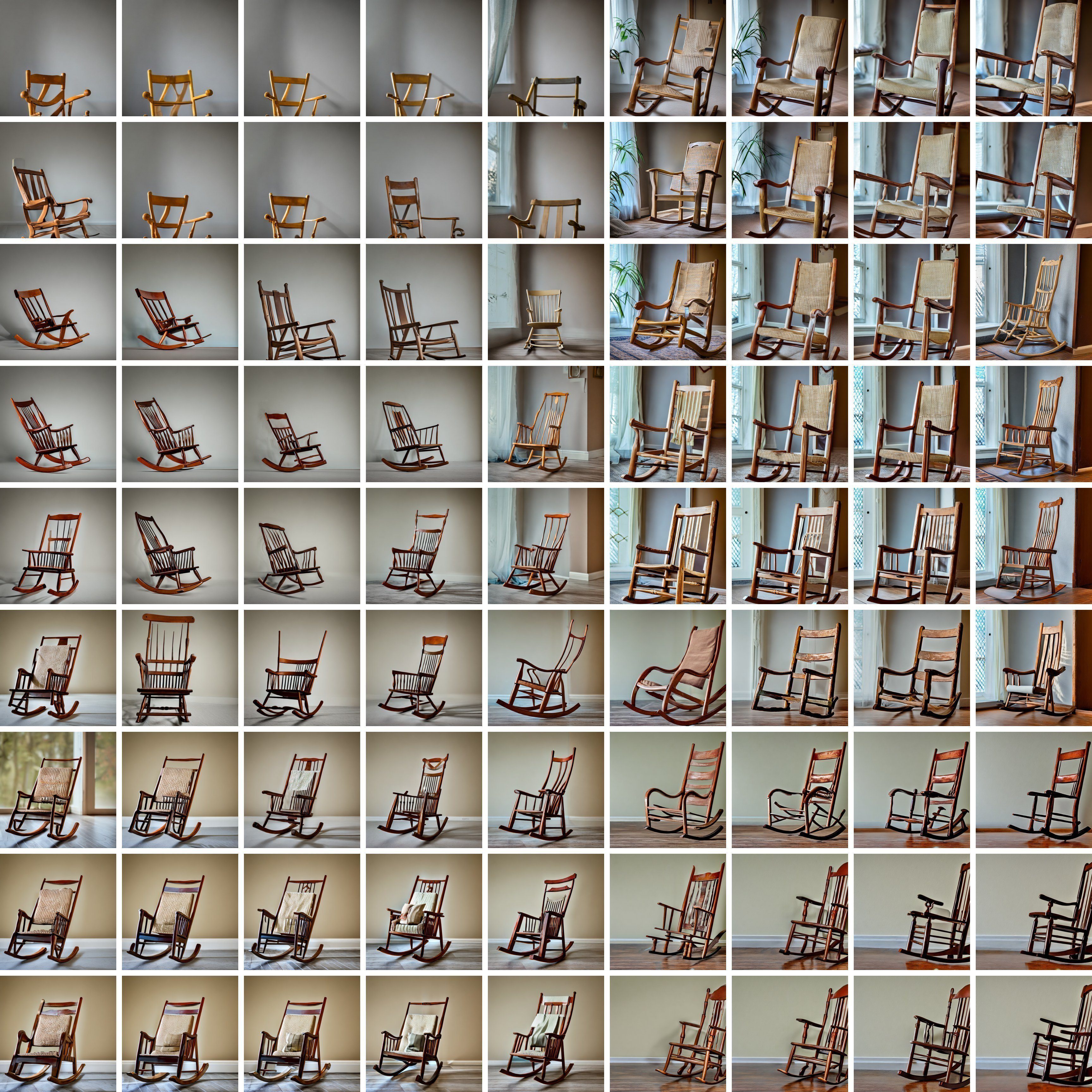}
    \end{minipage}
    \caption{
    \textbf{Low-dimensional subspaces}. 
    The latents $\bm{x}_1, \dots, \bm{x}_5$ (corresponding to images) are converted into basis vectors and used to define a 5-dimensional subspace. The grids show generations from uniform grid points in the subspace coordinate system, where the left and right grids are for the dimensions $\{ 1, 3 \}$ and $\{2, 4\}$, respectively, centered around the coordinate for $\bm{x}_1$. Each coordinate in the subspace correspond to a linear combination of the basis vectors. The diffusion model Stable Diffusion 2.1~\citep{rombach2022high} is used in this example. 
    }
    \label{fig:rocking_chair_sd2}
\end{figure}

\begin{figure}[ht]
    \centering
    \begin{minipage}[b]{0.08\textwidth}
        \hspace{1.2em}{\scriptsize $\bm{x}_1$}\hfill
    \end{minipage}
    \begin{minipage}[b]{0.82\textwidth}
        \hrulefill
        \hspace{0.04em}
        {\tiny interpolations}
        \hrulefill
    \end{minipage}
    \begin{minipage}[b]{0.06\textwidth}
        \hfill{\scriptsize $\bm{x}_2$}
        \hspace{0.1em}
    \end{minipage}
    \begin{minipage}[b]{\textwidth}
        \begin{minipage}[b]{0.010\textwidth}
            \raisebox{0.3\height}{\rotatebox{90}{\scriptsize LERP}}
        \end{minipage}
        \hfill
        \begin{minipage}[b]{0.989\textwidth}
            \centering
            \includegraphics[width=\textwidth]{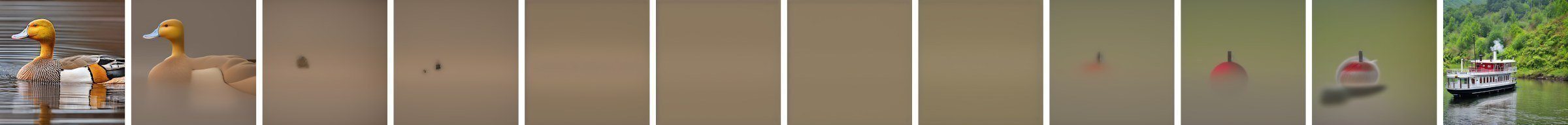}
        \end{minipage}
    \end{minipage}
    \begin{minipage}[b]{\textwidth}
        \begin{minipage}[b]{0.010\textwidth}
            \raisebox{0.2\height}{\rotatebox{90}{\scriptsize SLERP}}
        \end{minipage}
        \hfill
        \begin{minipage}[b]{0.989\textwidth}
            \centering
            \includegraphics[width=\textwidth]{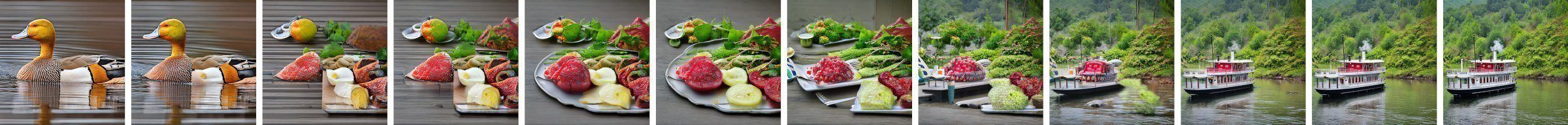}
        \end{minipage}
    \end{minipage}
    \begin{minipage}[b]{\textwidth}
        \begin{minipage}[b]{0.010\textwidth}
            \raisebox{0.5\height}{\rotatebox{90}{\scriptsize NAO}}
        \end{minipage}
        \hfill
        \begin{minipage}[b]{0.989\textwidth}
            \centering
            \includegraphics[width=\textwidth]{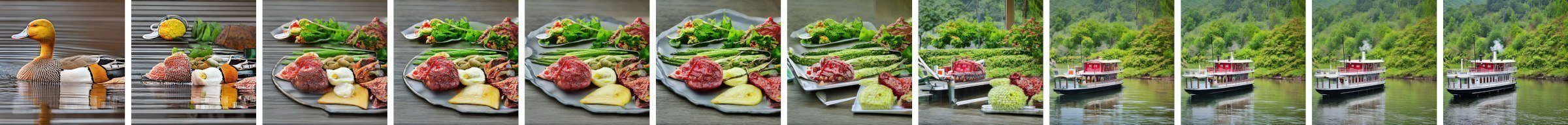}
        \end{minipage}
    \end{minipage}
    \begin{minipage}[b]{\textwidth}
        \begin{minipage}[b]{0.010\textwidth}
            \raisebox{0.5\height}{\rotatebox{90}{\scriptsize LOL}}
        \end{minipage}
        \hfill
        \begin{minipage}[b]{0.989\textwidth}
            \centering
            \includegraphics[width=\textwidth]{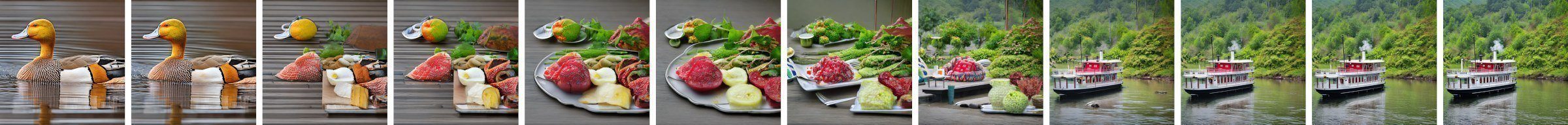}
        \end{minipage}
    \end{minipage}
    \centering
    \includegraphics[width=0.329\textwidth]{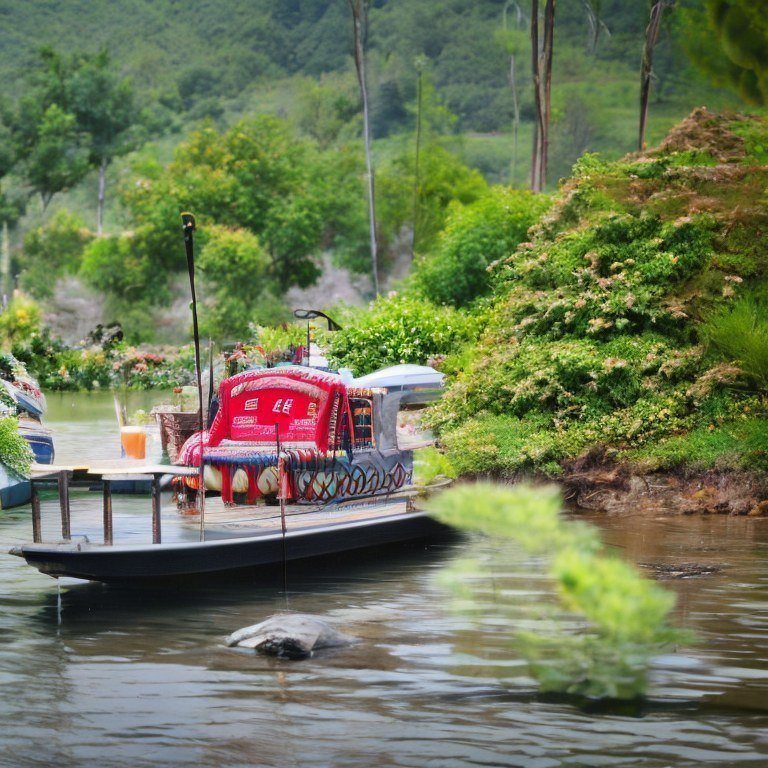}
    \includegraphics[width=0.329\textwidth]{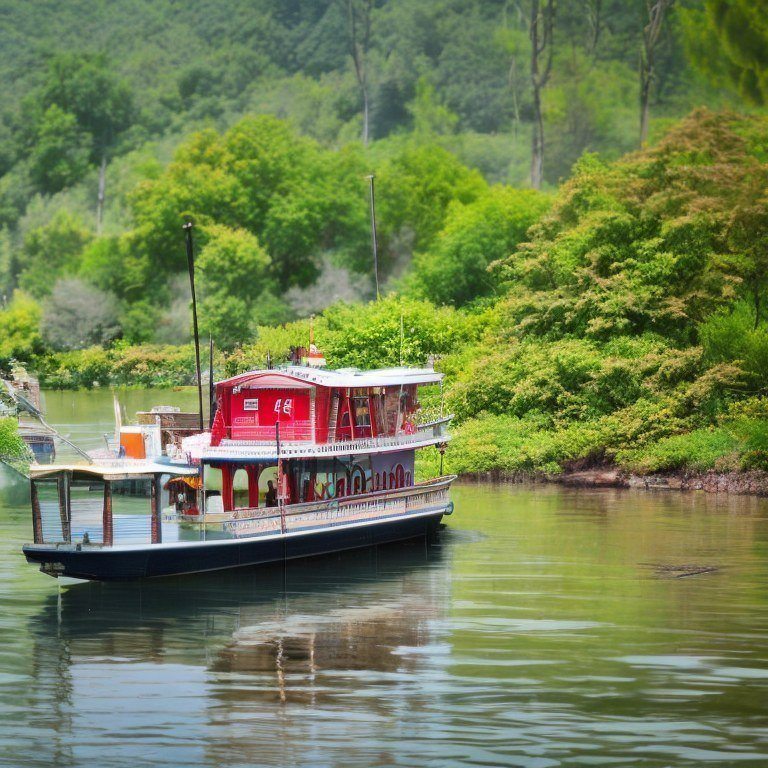}
    \includegraphics[width=0.329\textwidth]{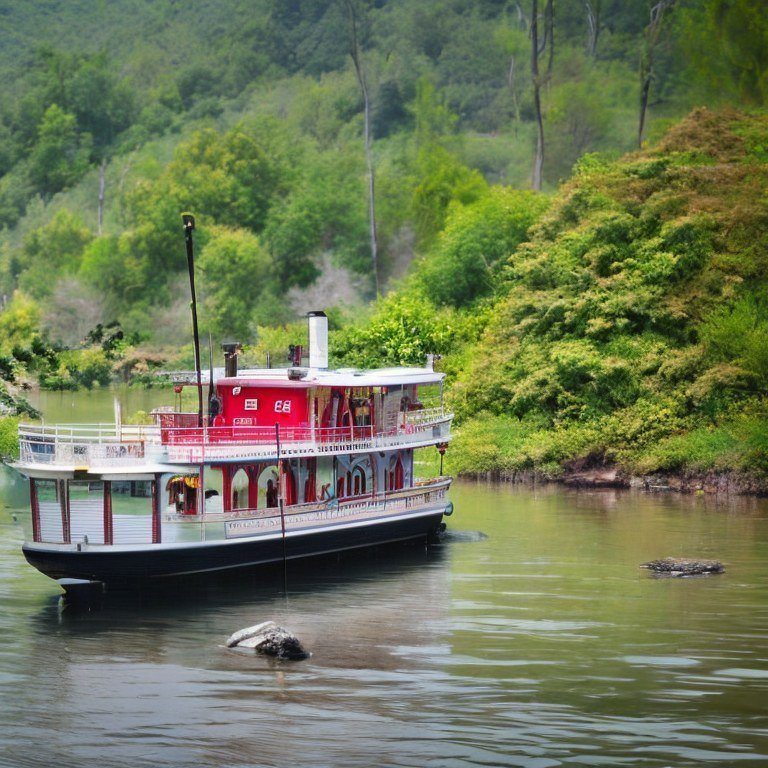}
    \hspace{-0.5em}
    \caption{
    \textbf{Interpolation during unconditional generation}.
    Shown are generations from equidistant interpolations of latents $\bm{x}_1$ and $\bm{x}_2$ using the respective method. 
    The latents $\bm{x}_1$ and $\bm{x}_2$ were obtained from DDIM inversion~\citep{song2020denoising} with an empty prompt, 
    and all generations in this example are then carried out with an empty prompt. 
    The larger images show interpolation index 8 for SLERP, NAO and LOL, respectively.
    The diffusion model Stable Diffusion 2.1~\citep{rombach2022high} is used in this example.
    }
    \label{fig:duck_steamboat_interpolation}
\end{figure}

\section{Additional quantitative results and analysis}
\label{sec:additional_quant_results}

In Table~\ref{tab:quant_table} we show that our method outperforms or maintains the performance of the baselines in terms of FID distances and accuracy, as calculated using a pre-trained classifier following the evaluation methodology of~\cite{samuel2024norm}. For an illustration of centroids and interpolations, see Figure~\ref{fig:car_wheel_centroids} and Figure~\ref{fig:dog_interpolation}, respectively. The evaluation time of an interpolation path and centroid, shown with one digit of precision, illustrate that the closed-form expressions are significantly faster than NAO. Surprisingly, NAO did not perform as well as spherical interpolation and several other baselines, despite using their implementation, which was outperforming these methods in~\cite{samuel2024norm}. 
We note one discrepancy is that we report FID distances using  (2048) high-level features, while in their work they are using (64) low-level features, which in~\cite{Seitzer2020FID} is recommended against as it does not necessarily correlate with visual quality. In Table~\ref{tab:quant_table_full} we report results with all feature settings. We note that, in our setup, the baselines perform substantially better than reported in~\cite{samuel2024norm} --- including NAO in terms of FID distances using the 64 low-level feature setting (1.30 in our setup vs 6.78 in their setup), and class accuracy during interpolation (62\% vs 52\%), which we study below.

The number of DDIM inversion steps used for the ImageNet images is not reported in the NAO setup~\citep{samuel2024norm}, but if we use \emph{1 step} instead of 400 (400 is what we use in all our experiments) the class preserving accuracy of NAO for centroid determination, shown in Table~\ref{tab:quant_single_step_inversion_centroids}, resemble theirs more, yielding 58\%.  
As we illustrate in Figure~\ref{fig:inversion_reconstructions_not_enough} and Figure~\ref{fig:inversion_interpolations}, a sufficient  number of inversion steps is critical to acquire latents which not only reconstructs the original image but which can be successfully interpolated. We hypothesise that a much lower setting of DDIM inversions steps were used in their setup than in ours, which would be consistent with their norm-correcting optimization method having a large effect, making otherwise blurry/superimposed images intelligible for the classification model.

\begin{table}[ht!]
\centering
\begin{tabularx}{\textwidth}{|X|>{\centering\arraybackslash}p{1.3cm}|>{\centering\arraybackslash}p{1.3cm}|>{\centering\arraybackslash}p{1.3cm}|>{\centering\arraybackslash}p{1.3cm}|>{\centering\arraybackslash}p{1.5cm}|>{\centering\arraybackslash}p{0.8cm}|}
\hline
\multicolumn{7}{|c|}{Interpolation} \\
\hline
\textbf{Method} & \textbf{Accuracy} & \textbf{FID 64} & \textbf{FID 192} & \textbf{FID 768} & \textbf{FID 2048} & \textbf{Time} \\
\hline
LERP & \num{3.92}\% & \num{63.4422} & \num{278.0981} & \num{2.1761} & \num{198.8888} & $6e^{-3}$s \\
\hline
SLERP & \num{64.56}\% & \num{2.2798} & \num{4.9546} & \num{0.1727} & \num{42.5827} & $9e^{-3}$s \\
\hline
NAO & \num{62.13333333333333}\% & \num{1.3001} & \num{4.1055} & \num{0.1948} & \num{46.0230} & $30$s \\
\hline
LOL (ours) & \num{67.38666666666666}\% & \num{1.7248} & \num{3.6204} & \num{0.1559} & \num{38.8714} & $6e^{-3}$s \\
\hline
\multicolumn{7}{|c|}{Centroid determination} \\
\hline
\textbf{Method} & \textbf{Accuracy} & \textbf{FID 64} & \textbf{FID 192} & \textbf{FID 768} & \textbf{FID 2048} & \textbf{Time} \\
\hline
Euclidean & \num{0.2857142857142857}\% & \num{67.71885514765916} & \num{317.41867340261814} & \num{3.6822382708236954} & \num{310.41534237375475} & $4e^{-4}$s \\
\hline
Standardized Euclidean & \num{44.571428571428573}\% & \num{5.92094593442966} & \num{21.0269352934325} & \num{0.42251877217719525} & \num{88.8399648303947} & $1e^{-3}$s \\
\hline
Mode~norm Euclidean & \num{44.571428571428573}\% & \num{6.910416861443107} & \num{21.564254709507082} & \num{0.45502707633258144} & \num{88.37776087449538} & $1e^{-3}$s \\
\hline
NAO & \num{44.000}\% & \num{4.160326379731856} & \num{15.551897857700894} & \num{0.46575125366658465} & \num{92.96727831041613} & $90$s \\
\hline
LOL (ours) & \num{46.285714285714286}\% & \num{9.377513999920708} & \num{25.545423775866013} & \num{0.4548301633217271} & \num{87.71343603361333} & $6e^{-4}$s \\
\hline
\end{tabularx}
\caption{
Full table with class accuracy and FID distances using each setting in~\cite{Seitzer2020FID}.
Note that FID 64, 192, and 768 is not recommended by~\cite{Seitzer2020FID}, as it does not necessarily correlate with visual quality. FID 2048 is based on the final features of InceptionV3 (as in~\cite{heusel2017gans}), while FID 64, 192 and 768 are based on earlier layers; FID 64 being the first layer.
}
\label{tab:quant_table_full}
\end{table}

\begin{table}[ht!]
\centering
\begin{tabularx}{\textwidth}{|X|>{\centering\arraybackslash}p{1.3cm}|>{\centering\arraybackslash}p{1.3cm}|>{\centering\arraybackslash}p{1.3cm}|>{\centering\arraybackslash}p{1.3cm}|>{\centering\arraybackslash}p{1.5cm}|>{\centering\arraybackslash}p{0.8cm}|}
\hline
\multicolumn{7}{|c|}{Centroid determination of invalid latents} \\
\hline
\textbf{Method} & \textbf{Accuracy} & \textbf{FID 64} & \textbf{FID 192} & \textbf{FID 768} & \textbf{FID 2048} & \textbf{Time} \\
\hline
Euclidean & \num{42.28571428571429}\% & \num{25.678102466680166} & \num{59.33989216225538} & \num{1.2336420131978283} & \num{170.43834446346102} & $4e^{-4}$s \\
\hline
Standardized Euclidean & \num{46.57142857142857}\% & \num{3.352239999972973} & \num{16.551997754711735} & \num{1.2256910290252598} & \num{175.21268596928883} & $1e^{-3}$s \\
\hline
Mode~norm Euclidean & \num{50.57142857142857}\% & \num{2.0774234353151044} & \num{8.74223387337068} & \num{1.1935576631638813} & \num{172.8294140259843} & $1e^{-3}$s \\
\hline
NAO & \num{57.71428571428572}\% & \num{3.7698265740880714} & \num{13.34891088000768} & \num{1.0536659089038705} & \num{150.45401910904712} & $90$s \\
\hline
LOL (ours) & \num{48.57142857142857}\% & \num{2.9624898945971374} & \num{12.460956090514728} & \num{1.2190600482667586} & \num{171.20102438501607} & $6e^{-4}$s \\
\hline
\end{tabularx}
\caption{
Centroid determination results if we would use a \emph{single step} for both the DDIM inversion and generation (not recommended). 
This results in latents which do \emph{not} follow the correct distribution $\mathcal{N}(\bm{\mu}, \bm{\Sigma})$ and which, 
despite reconstructing the original image exactly if generating also using a single step, cannot be used successfully for interpolation, see Figure~\ref{fig:inversion_reconstructions_not_enough}.
As illustrated in the figure, interpolations of latents obtained through a single step of DDIM inversion merely results in the two images being superimposed. Note that the class accuracy is much higher for all methods on such superimposed images, compared to using DDIM inversion settings which allow for realistic interpolations (see Table~\ref{tab:quant_table_full}). Moreover, the FID distances set to use fewer features are \emph{lower} for these superimposed images, while with recommended setting for FID distances (with 2048 features) they are higher (i.e. worse), as one would expect by visual inspection. 
The NAO method, initialized by the latent yielding the superimposed image, then optimises the latent yielding a norm close to norms of typical latents, which as indicated in the table leads to better "class preservation accuracy". However, the images produced with this setting are clearly unusable as they do not look like real images. 
}
\label{tab:quant_single_step_inversion_centroids}
\end{table}

\begin{figure}[ht]
    \centering
    \begin{minipage}[b]{\textwidth}
        \begin{minipage}[b]{0.010\textwidth}
            \raisebox{0.85\height}{\rotatebox{90}{\scriptsize }}
        \end{minipage}
        \hfill
        \begin{minipage}[b]{0.989\textwidth}
            \begin{minipage}[b]{0.16\textwidth}
                \centering
                {\scriptsize 1 step}
            \end{minipage}
            \begin{minipage}[b]{0.16\textwidth}
                \centering
                {\scriptsize 2 steps}
            \end{minipage}
            \begin{minipage}[b]{0.16\textwidth}
                \centering
                {\scriptsize 16 steps}
            \end{minipage}
            \begin{minipage}[b]{0.16\textwidth}
                \centering
                {\scriptsize 100 steps}
            \end{minipage}
            \begin{minipage}[b]{0.16\textwidth}
                \centering
                {\scriptsize 500 steps}
            \end{minipage}
            \begin{minipage}[b]{0.16\textwidth}
                \centering
                {\scriptsize 999 steps}
            \end{minipage}
        \end{minipage}
    \end{minipage}
    \begin{minipage}[b]{\textwidth}
        \begin{minipage}[b]{0.010\textwidth}
            \raisebox{0.85\height}{\rotatebox{90}{\scriptsize Image 1}}
        \end{minipage}
        \hfill
        \begin{minipage}[b]{0.989\textwidth}
            \centering
            \includegraphics[width=0.16\textwidth]{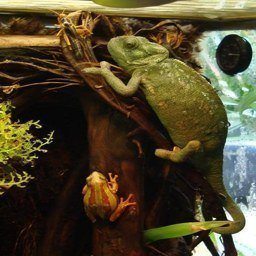}
            \includegraphics[width=0.16\textwidth]{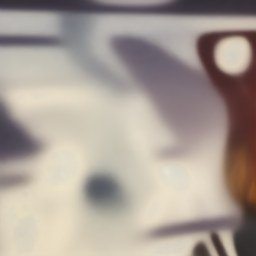}
            \includegraphics[width=0.16\textwidth]{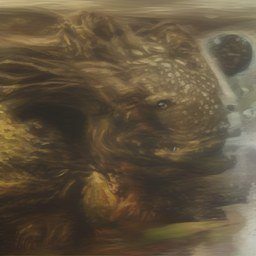}
            \includegraphics[width=0.16\textwidth]{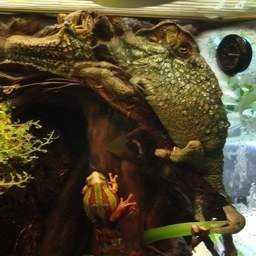}
            \includegraphics[width=0.16\textwidth]{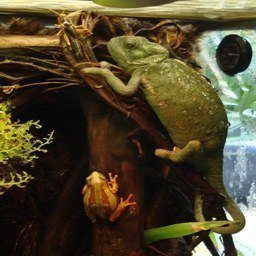}
            \includegraphics[width=0.16\textwidth]{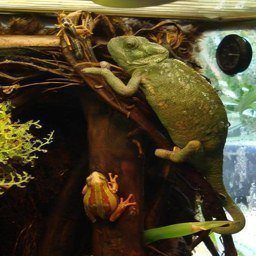}
        \end{minipage}
    \end{minipage}
    \begin{minipage}[b]{\textwidth}
        \begin{minipage}[b]{0.010\textwidth}
            \raisebox{1.3\height}{\rotatebox{90}{\scriptsize LERP}}
        \end{minipage}
        \hfill
        \begin{minipage}[b]{0.989\textwidth}
            \centering
            \includegraphics[width=0.16\textwidth]{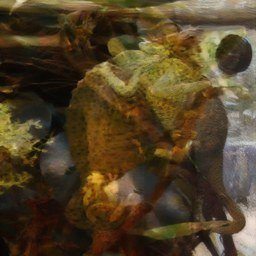}
            \includegraphics[width=0.16\textwidth]{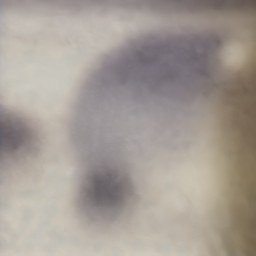}
            \includegraphics[width=0.16\textwidth]{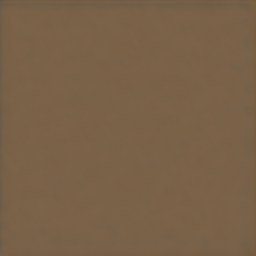}
            \includegraphics[width=0.16\textwidth]{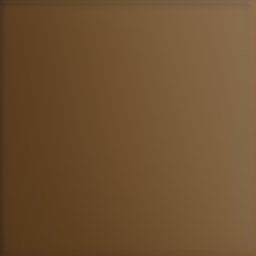}
            \includegraphics[width=0.16\textwidth]{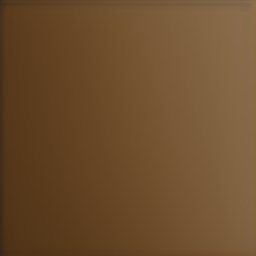}
            \includegraphics[width=0.16\textwidth]{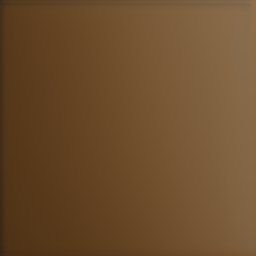}
            \end{minipage}
    \end{minipage}
    \begin{minipage}[b]{\textwidth}
        \begin{minipage}[b]{0.010\textwidth}
            \raisebox{0.95\height}{\rotatebox{90}{\scriptsize SLERP}}
        \end{minipage}
        \hfill
        \begin{minipage}[b]{0.989\textwidth}
            \centering
            \includegraphics[width=0.16\textwidth]{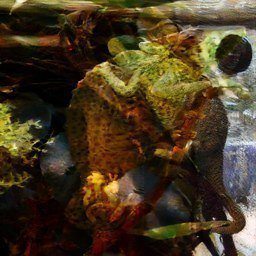}
            \includegraphics[width=0.16\textwidth]{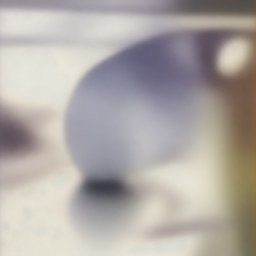}
            \includegraphics[width=0.16\textwidth]{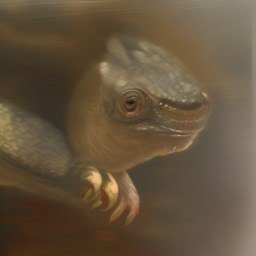}
            \includegraphics[width=0.16\textwidth]{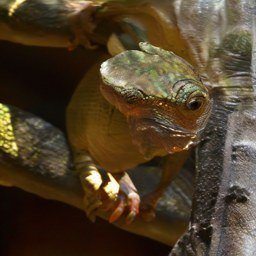}
            \includegraphics[width=0.16\textwidth]{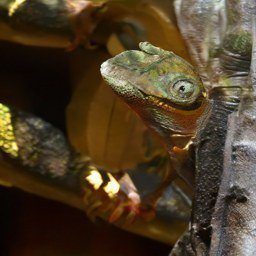}
            \includegraphics[width=0.16\textwidth]{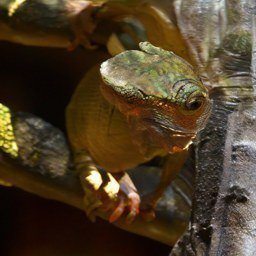}
            \end{minipage}
    \end{minipage}
    \begin{minipage}[b]{\textwidth}
        \begin{minipage}[b]{0.010\textwidth}
            \raisebox{1.7\height}{\rotatebox{90}{\scriptsize NAO}}
        \end{minipage}
        \hfill
        \begin{minipage}[b]{0.989\textwidth}
            \centering
            \includegraphics[width=0.16\textwidth]{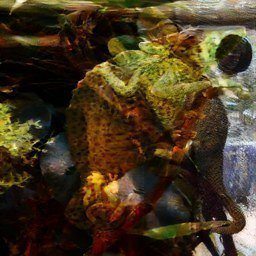}
            \includegraphics[width=0.16\textwidth]{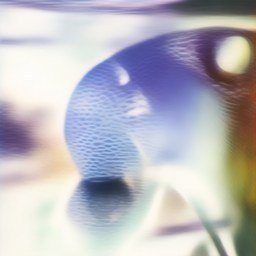}
            \includegraphics[width=0.16\textwidth]{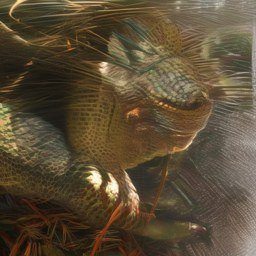}
            \includegraphics[width=0.16\textwidth]{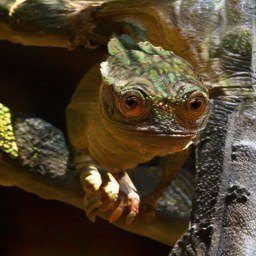}
            \includegraphics[width=0.16\textwidth]{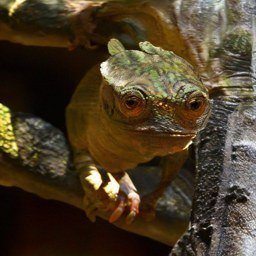}
            \includegraphics[width=0.16\textwidth]{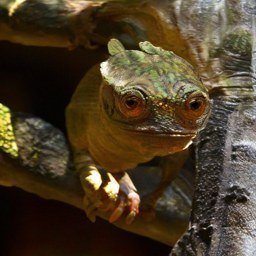}
        \end{minipage}
    \end{minipage}
    \begin{minipage}[b]{\textwidth}
        \begin{minipage}[b]{0.010\textwidth}
            \raisebox{1.7\height}{\rotatebox{90}{\scriptsize LOL}}
        \end{minipage}
        \hfill
        \begin{minipage}[b]{0.989\textwidth}
            \centering
            \includegraphics[width=0.16\textwidth]{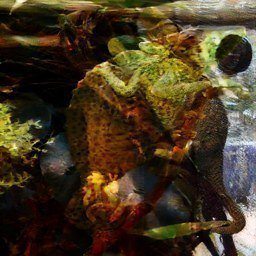}
            \includegraphics[width=0.16\textwidth]{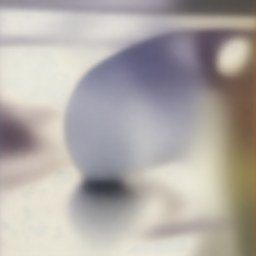}
            \includegraphics[width=0.16\textwidth]{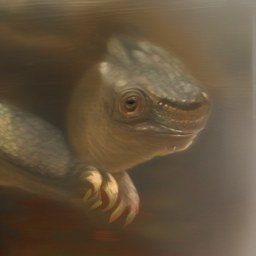}
            \includegraphics[width=0.16\textwidth]{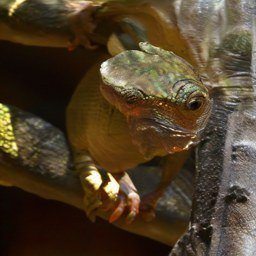}
            \includegraphics[width=0.16\textwidth]{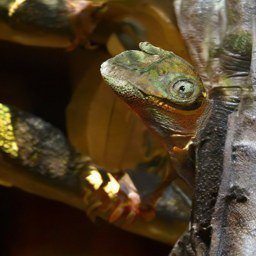}
            \includegraphics[width=0.16\textwidth]{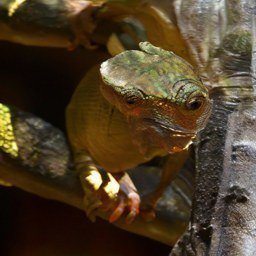}
        \end{minipage}
    \end{minipage}
    \begin{minipage}[b]{\textwidth}
        \begin{minipage}[b]{0.010\textwidth}
            \raisebox{0.85\height}{\rotatebox{90}{\scriptsize Image 2}}
        \end{minipage}
        \hfill
        \begin{minipage}[b]{0.989\textwidth}
            \centering
            \includegraphics[width=0.16\textwidth]{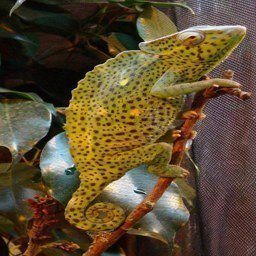}
            \includegraphics[width=0.16\textwidth]{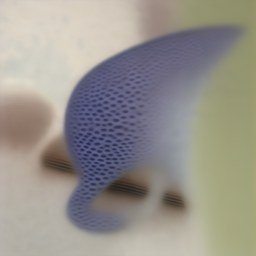}
            \includegraphics[width=0.16\textwidth]{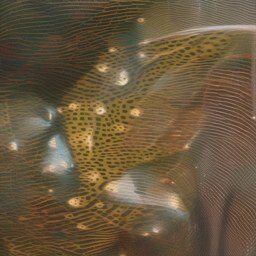}
            \includegraphics[width=0.16\textwidth]{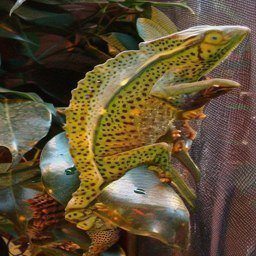}
            \includegraphics[width=0.16\textwidth]{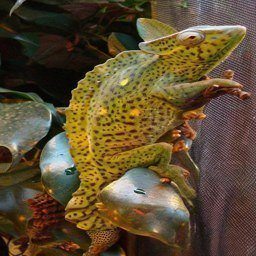}
            \includegraphics[width=0.16\textwidth]{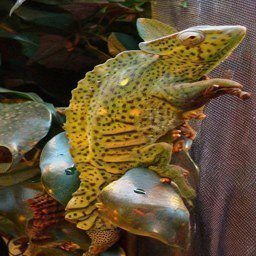}
        \end{minipage}
    \end{minipage}
    \caption{
    \textbf{Accurate DDIM inversions are critical for interpolation}.
    Shown is the interpolation (center) of Image 1 and Image 2 using each respective method, after a varying number of budgets of DDIM inversion steps~\citep{song2020denoising}. For each budget setting, the inversion was run from the beginning.  
    We note that although a single step of DDIM inversion yields latents which perfectly reconstructs the original image such latents do not lead to realistic images, but merely visual "superpositions" of the images being interpolated.  
    }
    \label{fig:inversion_interpolations}
\end{figure}

\section{Additional ImageNet interpolations and centroids}

\begin{figure}[ht]
    \centering
    \begin{minipage}[b]{0.08\textwidth}
        \hspace{1.2em}{\scriptsize $\bm{x}_1$}\hfill
    \end{minipage}
    \begin{minipage}[b]{0.82\textwidth}
        \hrulefill
        \hspace{0.04em}
        {\tiny interpolations}
        \hrulefill
    \end{minipage}
    \begin{minipage}[b]{0.06\textwidth}
        \hfill{\scriptsize $\bm{x}_2$}
        \hspace{0.1em}
    \end{minipage}
    \begin{minipage}[b]{\textwidth}
        \begin{minipage}[b]{0.010\textwidth}
            \raisebox{0.3\height}{\rotatebox{90}{\scriptsize LERP}}
        \end{minipage}
        \hfill
        \begin{minipage}[b]{0.989\textwidth}
            \centering
            \includegraphics[width=\textwidth]{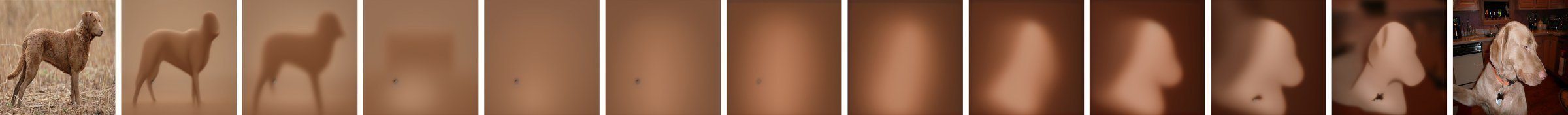}
        \end{minipage}
    \end{minipage}
    \begin{minipage}[b]{\textwidth}
        \begin{minipage}[b]{0.010\textwidth}
            \raisebox{0.2\height}{\rotatebox{90}{\scriptsize SLERP}}
        \end{minipage}
        \hfill
        \begin{minipage}[b]{0.989\textwidth}
            \centering
            \includegraphics[width=\textwidth]{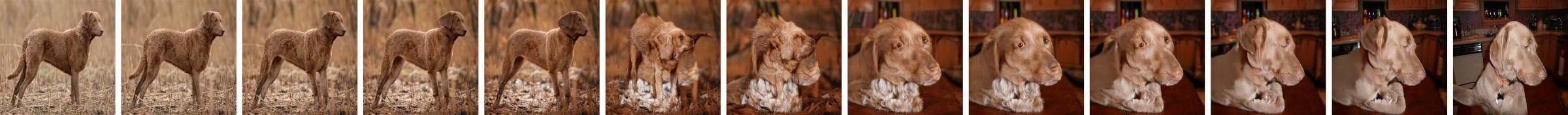}
        \end{minipage}
    \end{minipage}
    \begin{minipage}[b]{\textwidth}
        \begin{minipage}[b]{0.010\textwidth}
            \raisebox{0.5\height}{\rotatebox{90}{\scriptsize NAO}}
        \end{minipage}
        \hfill
        \begin{minipage}[b]{0.989\textwidth}
            \centering
            \includegraphics[width=\textwidth]{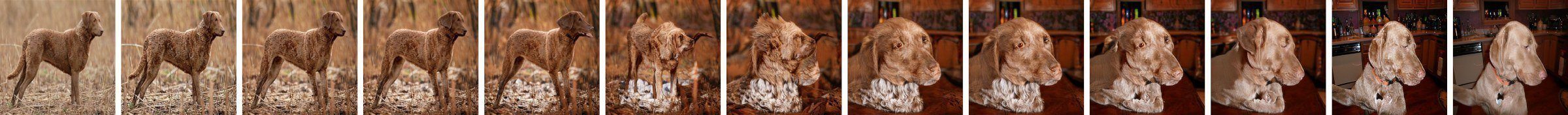}
        \end{minipage}
    \end{minipage}
    \begin{minipage}[b]{\textwidth}
        \begin{minipage}[b]{0.010\textwidth}
            \raisebox{0.5\height}{\rotatebox{90}{\scriptsize LOL}}
        \end{minipage}
        \hfill
        \begin{minipage}[b]{0.989\textwidth}
            \centering
            \includegraphics[width=\textwidth]{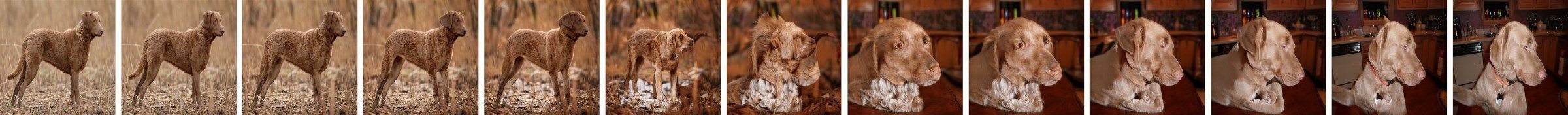}
        \end{minipage}
    \end{minipage}
    \centering
    \includegraphics[width=0.3295\textwidth]{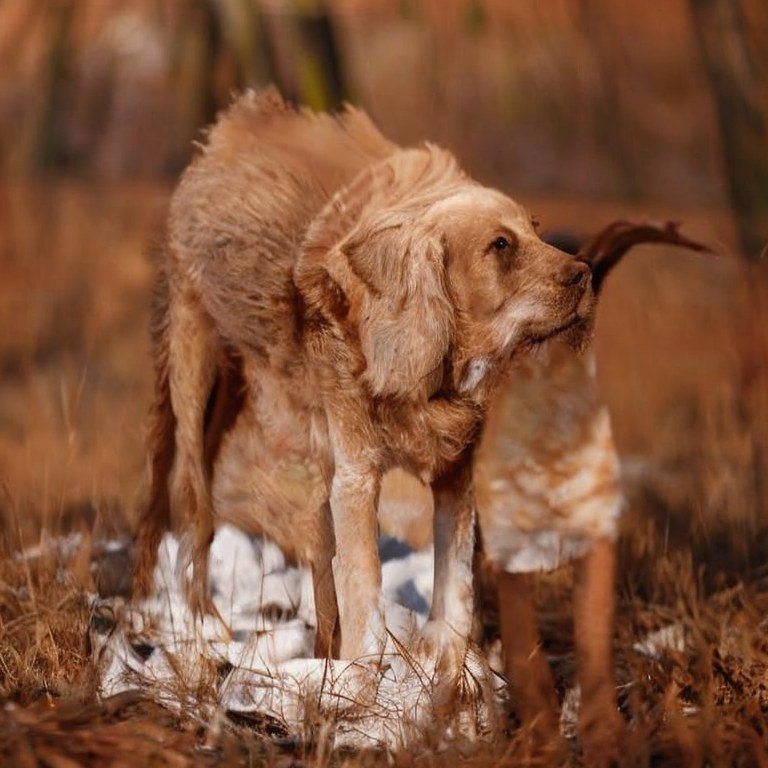}
    \includegraphics[width=0.3295\textwidth]{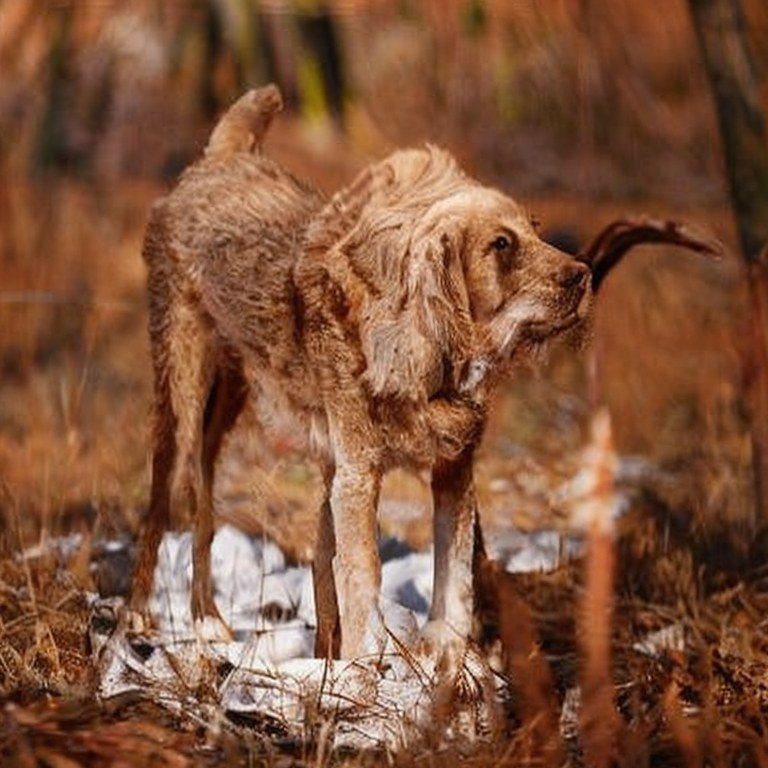}
    \includegraphics[width=0.3295\textwidth]{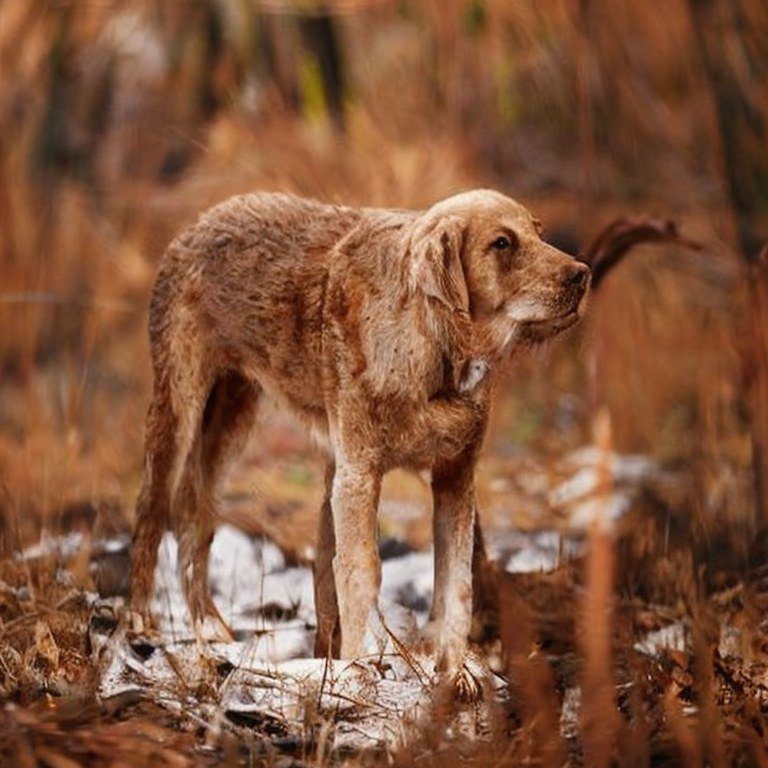}
    \hspace{-0.5em}
    \caption{
    \textbf{Interpolation}.
    Shown are generations from equidistant interpolations of latents $\bm{x}_1$ and $\bm{x}_2$ using the respective method. 
    The latents $\bm{x}_1$ and $\bm{x}_2$ were obtained from DDIM inversion~\citep{song2020denoising} 
    of two random image examples from one of the randomly selected ImageNet classes ("Chesapeake Bay retriever") described in Section~\ref{sec:experiments}. 
    The larger images show interpolation index 6 for SLERP, NAO and LOL, respectively.
    The diffusion model Stable Diffusion 2.1~\citep{rombach2022high} is used in this example.
    }
    \label{fig:dog_interpolation}
\end{figure}

\begin{figure}[ht]
    \centering
    \begin{minipage}[b]{0.09425\textwidth}
    \centering
    {\tiny $\bm{x}_1$}
    \includegraphics[width=\textwidth]{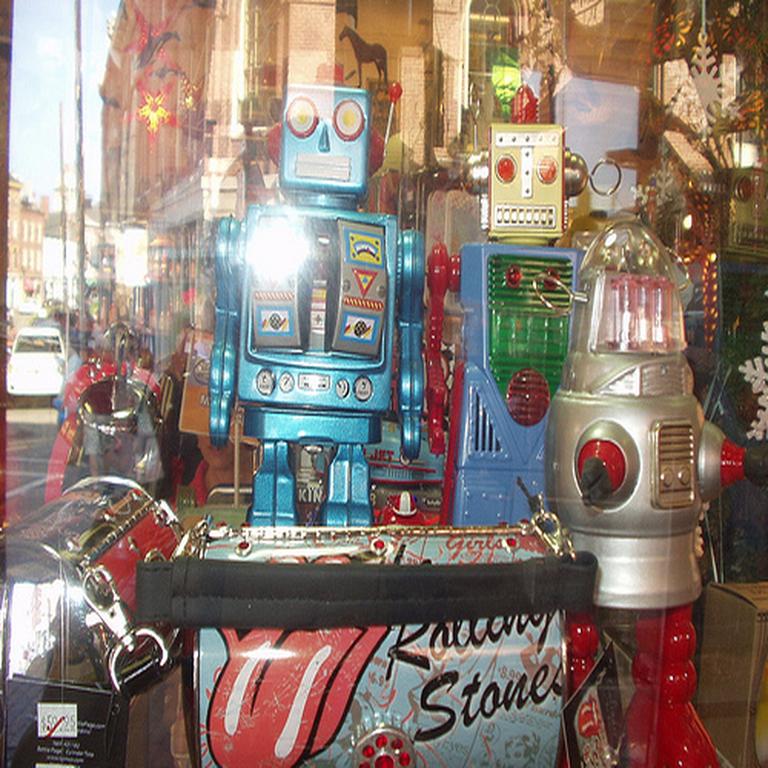}
    \end{minipage}
    \begin{minipage}[b]{0.09425\textwidth}
    \centering
    {\tiny $\bm{x}_2$}
    \includegraphics[width=\textwidth]{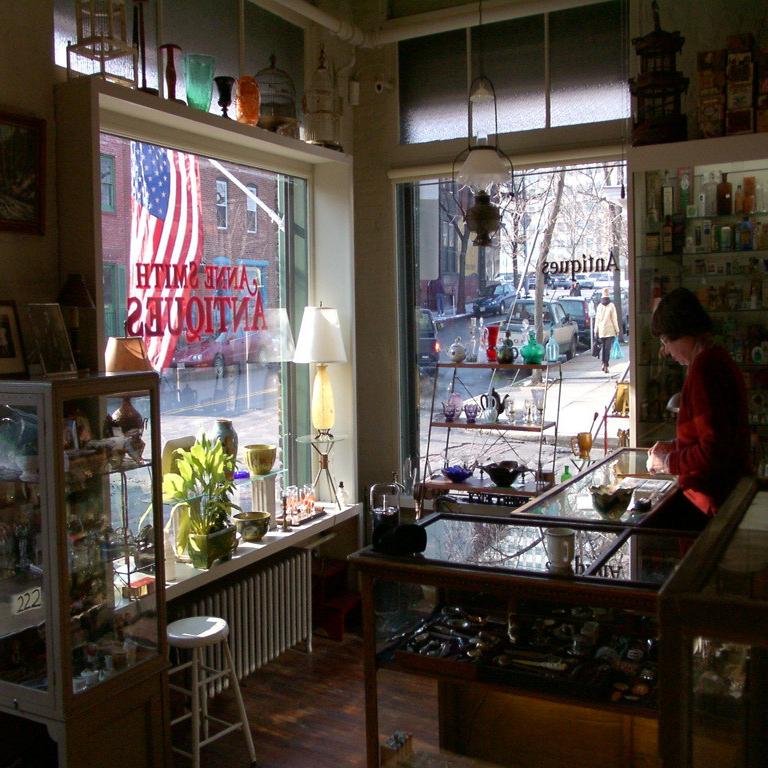}
    \end{minipage}
    \begin{minipage}[b]{0.09425\textwidth}
    \centering
    {\tiny $\bm{x}_3$}
    \includegraphics[width=\textwidth]{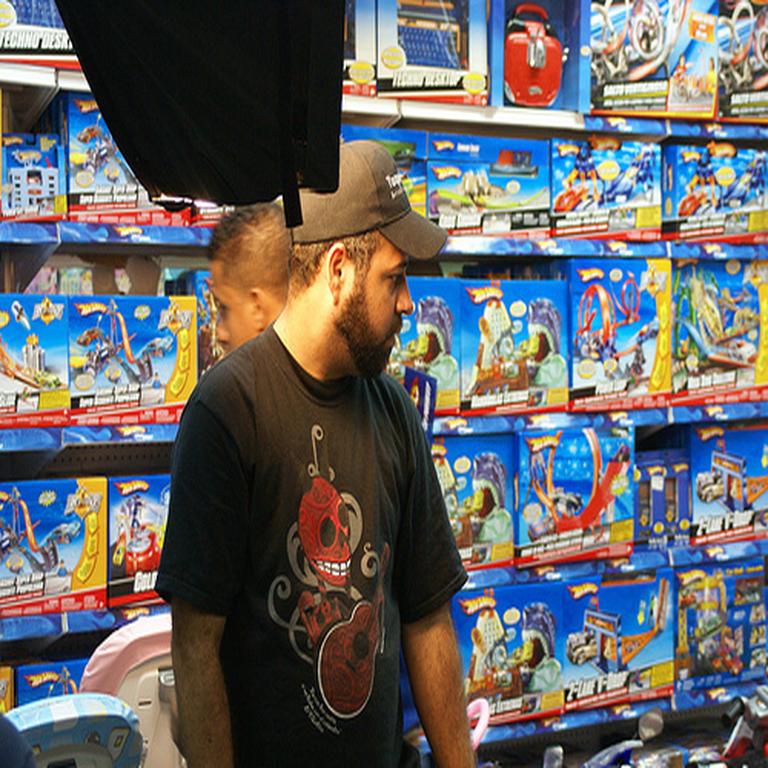}
    \end{minipage}
    \begin{minipage}[b]{0.09425\textwidth}
    \centering
    {\tiny $\bm{x}_4$}
    \includegraphics[width=\textwidth]{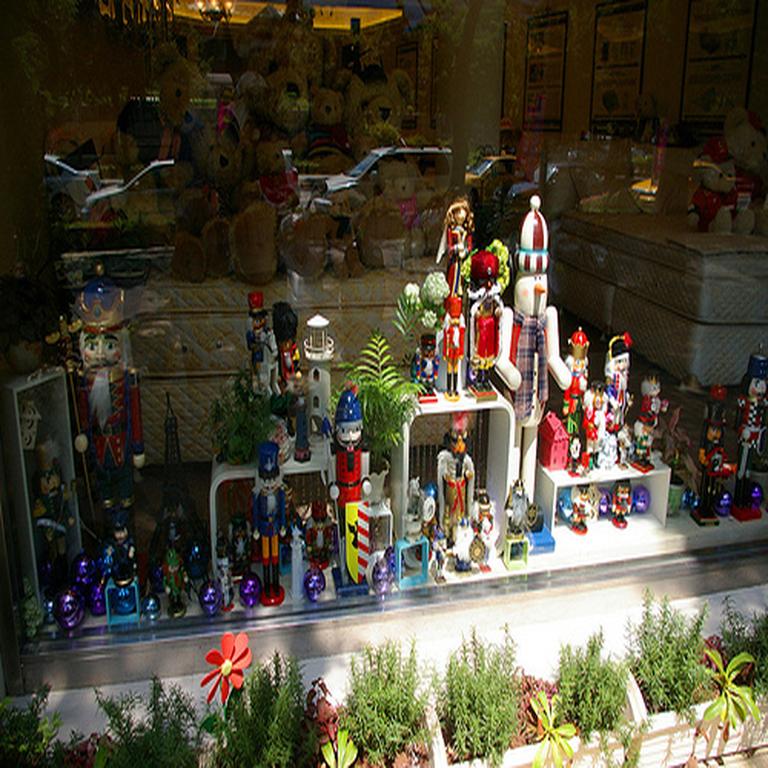}
    \end{minipage}
    \begin{minipage}[b]{0.09425\textwidth}
    \centering
    {\tiny $\bm{x}_5$}
    \includegraphics[width=\textwidth]{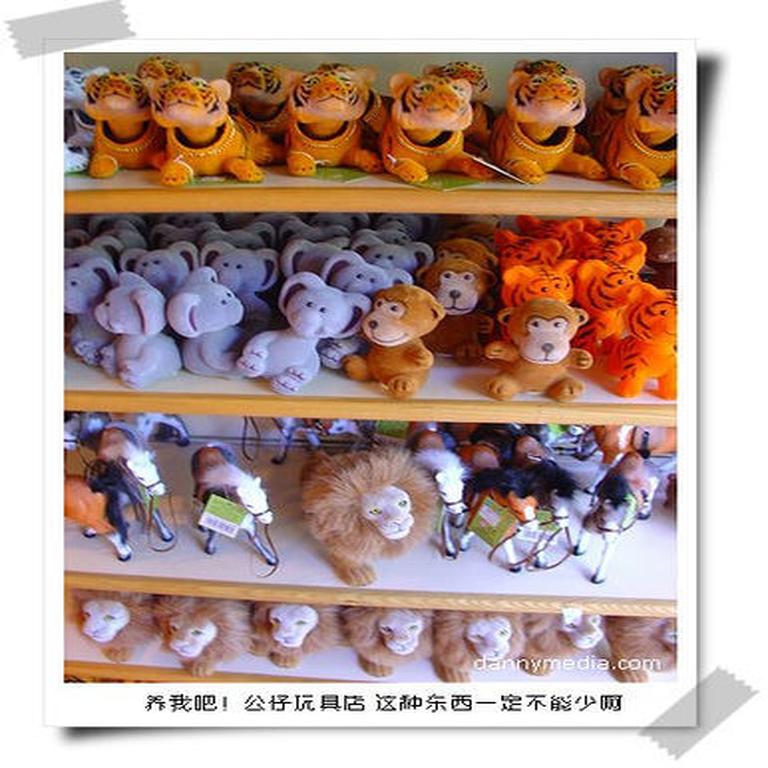}
    \end{minipage}
    \begin{minipage}[b]{0.09425\textwidth}
    \centering
    {\tiny Euclidean}
    \includegraphics[width=\textwidth]{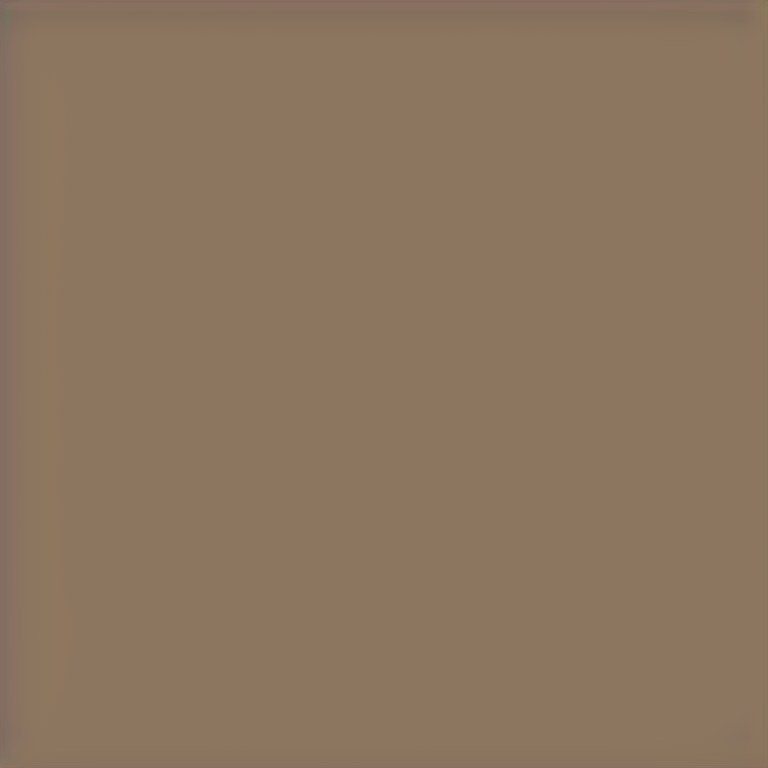}
    \end{minipage}
    \begin{minipage}[b]{0.09425\textwidth}
    \centering
    {\tiny S. Euclidean}
    \includegraphics[width=\textwidth]{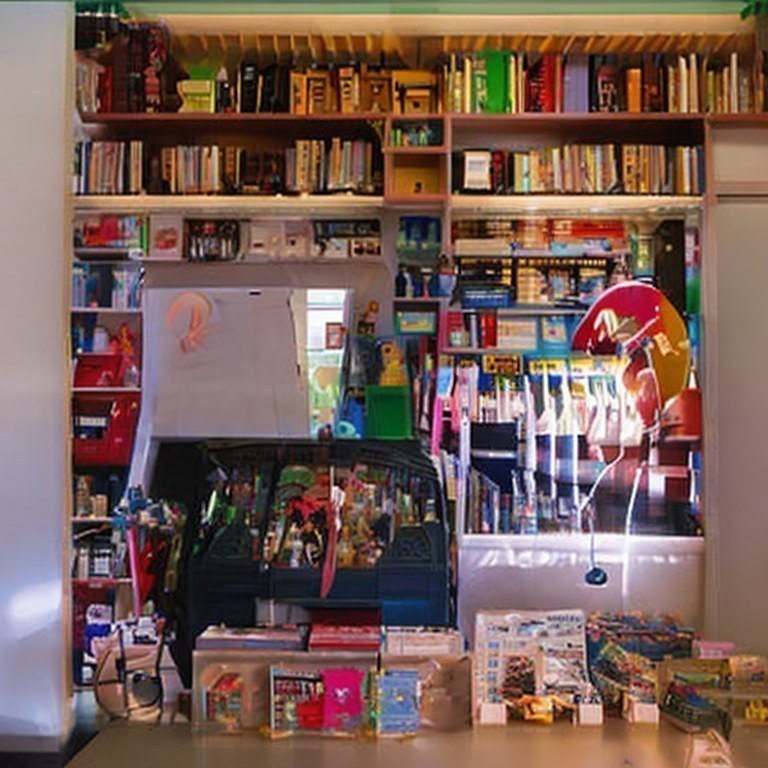}
    \end{minipage}
    \begin{minipage}[b]{0.09425\textwidth}
    \centering
    {\tiny M.n. Euclidean}
    \includegraphics[width=\textwidth]{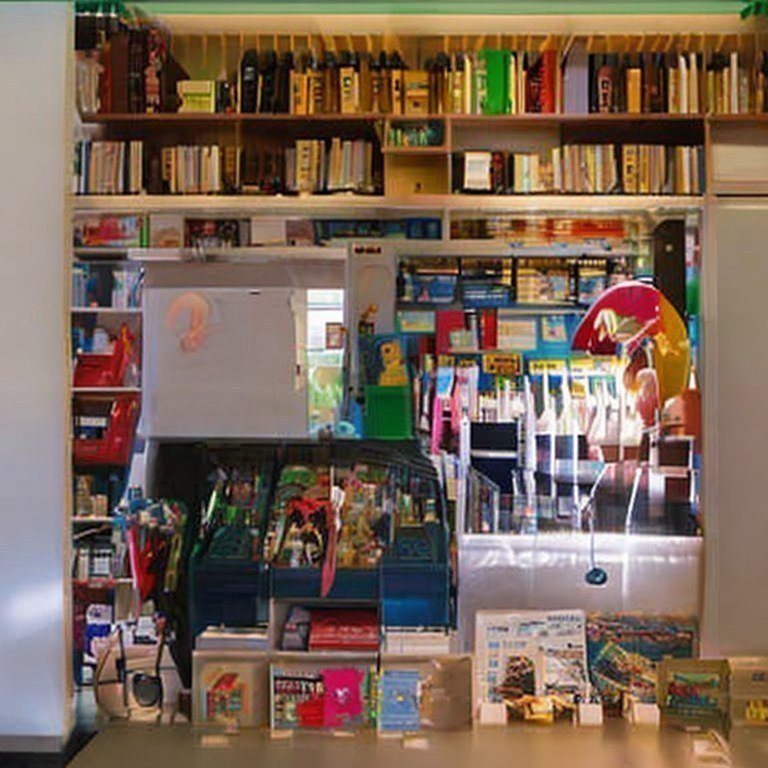}
    \end{minipage}
    \begin{minipage}[b]{0.09425\textwidth}
    \centering
    {\tiny NAO}
    \includegraphics[width=\textwidth]{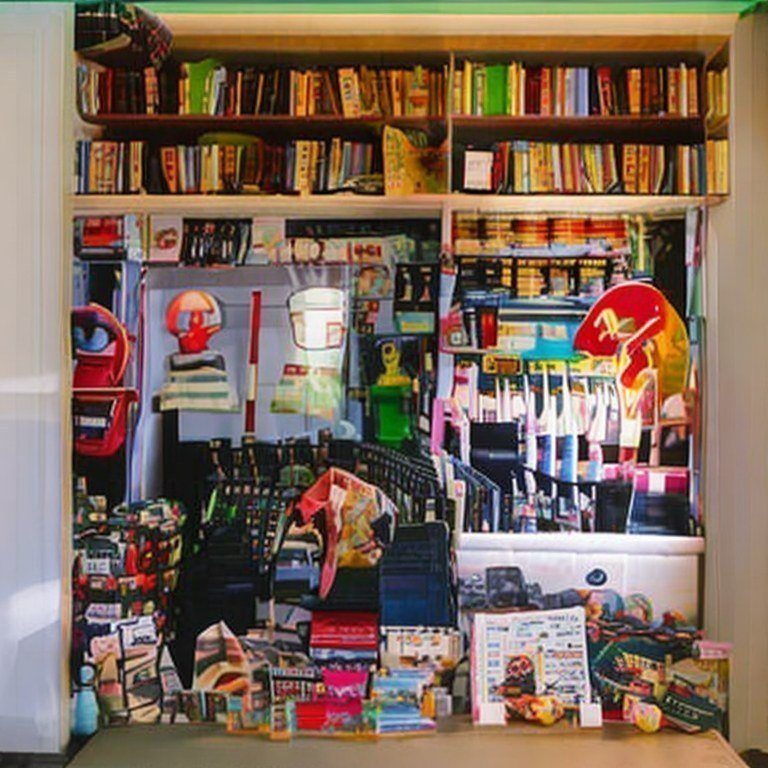}
    \end{minipage}
    \begin{minipage}[b]{0.09425\textwidth}
    \centering
    {\tiny LOL}
    \includegraphics[width=\textwidth]{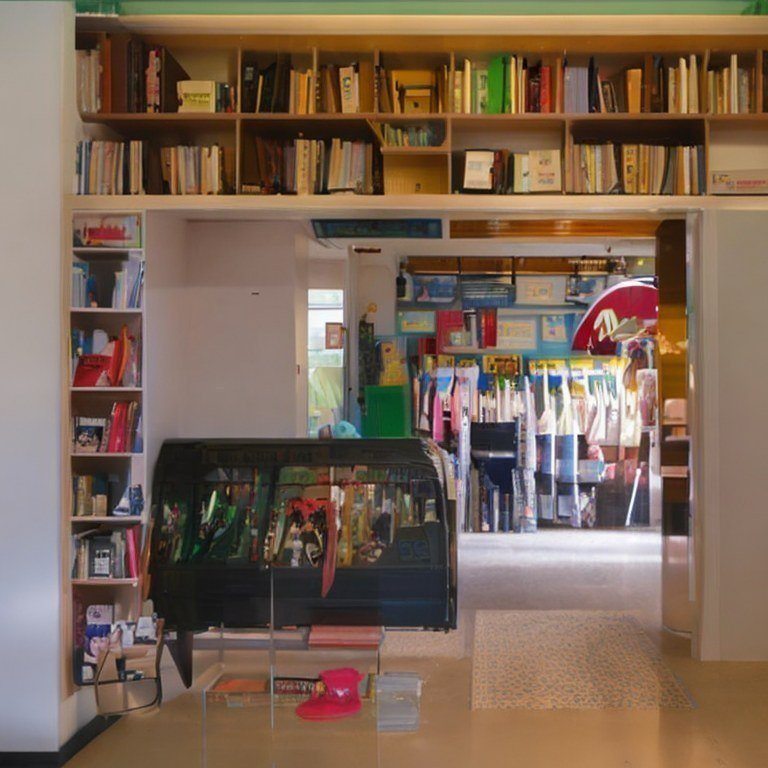}
    \end{minipage}
    \centering
    \begin{minipage}[b]{0.09425\textwidth}
    \centering
    \includegraphics[width=\textwidth]{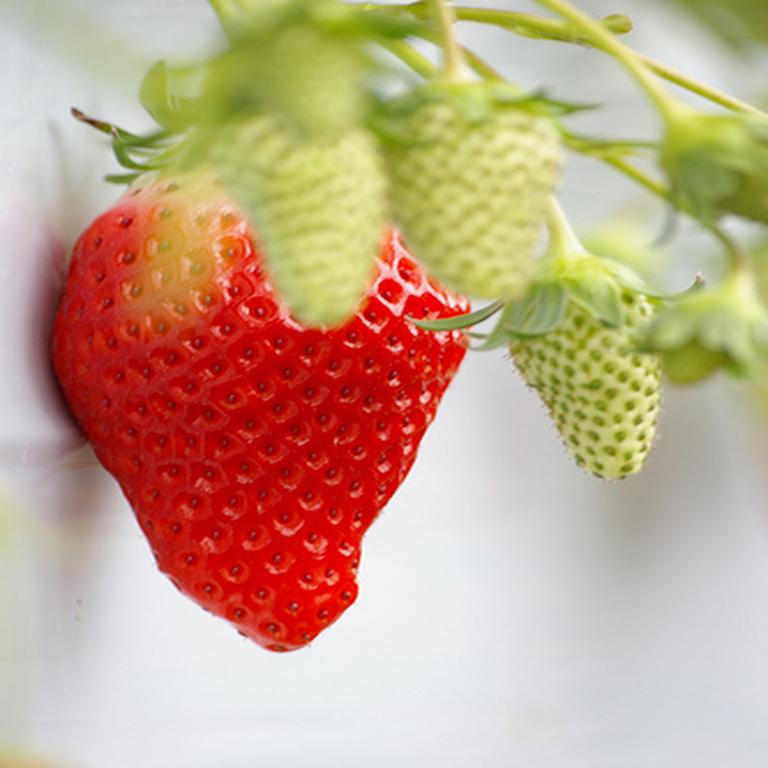}
    \end{minipage}
    \begin{minipage}[b]{0.09425\textwidth}
    \centering
    \includegraphics[width=\textwidth]{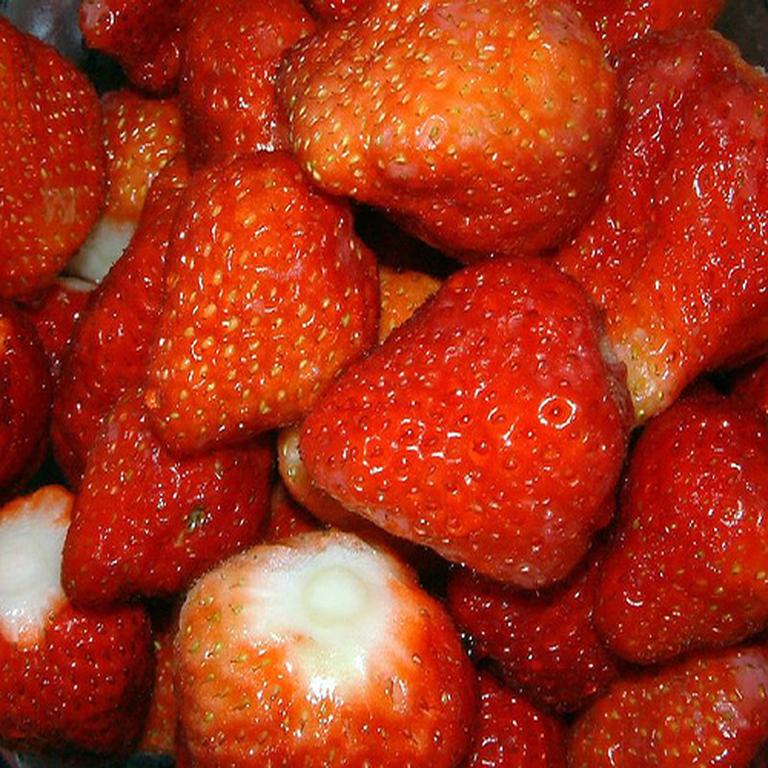}
    \end{minipage}
    \begin{minipage}[b]{0.09425\textwidth}
    \centering
    \includegraphics[width=\textwidth]{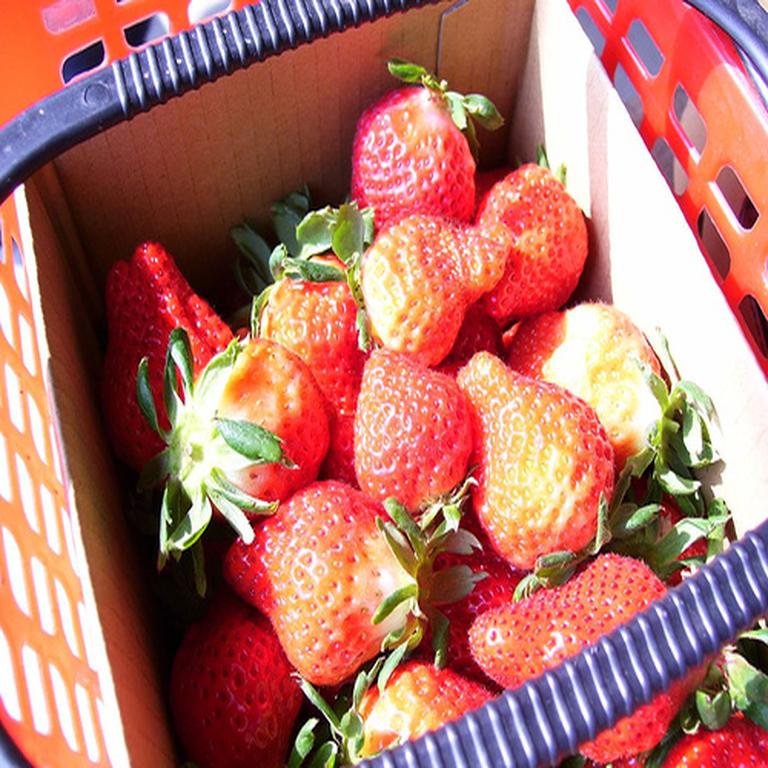}
    \end{minipage}
    \begin{minipage}[b]{0.09425\textwidth}
    \centering
    \includegraphics[width=\textwidth]{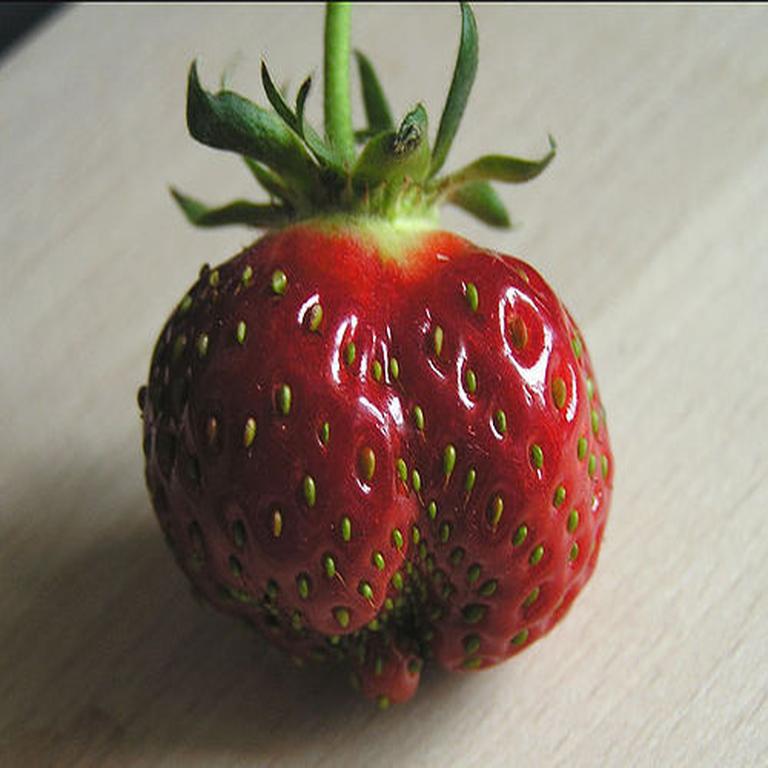}
    \end{minipage}
    \begin{minipage}[b]{0.09425\textwidth}
    \centering
    \includegraphics[width=\textwidth]{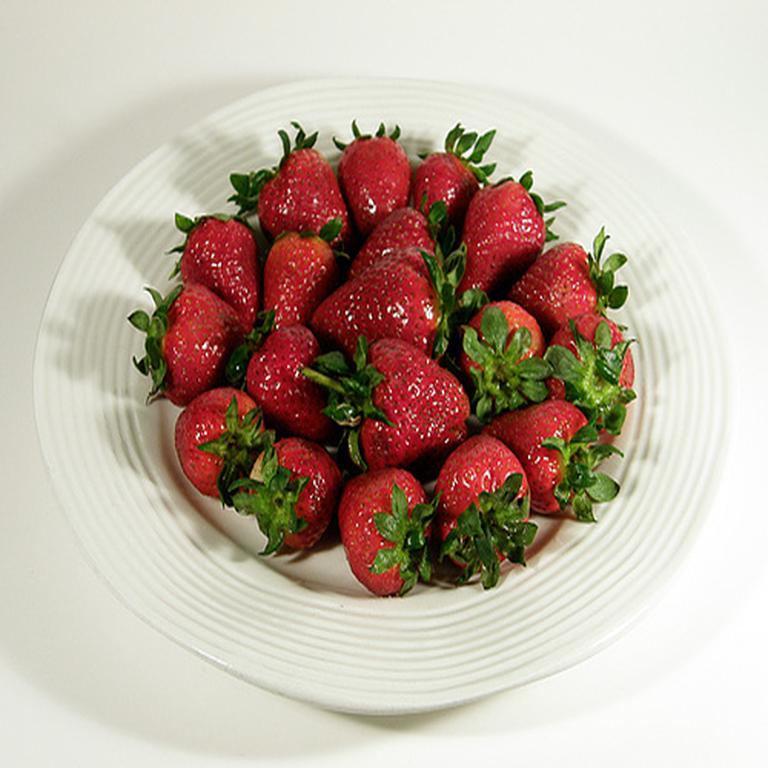}
    \end{minipage}
    \begin{minipage}[b]{0.09425\textwidth}
    \centering
    \includegraphics[width=\textwidth]{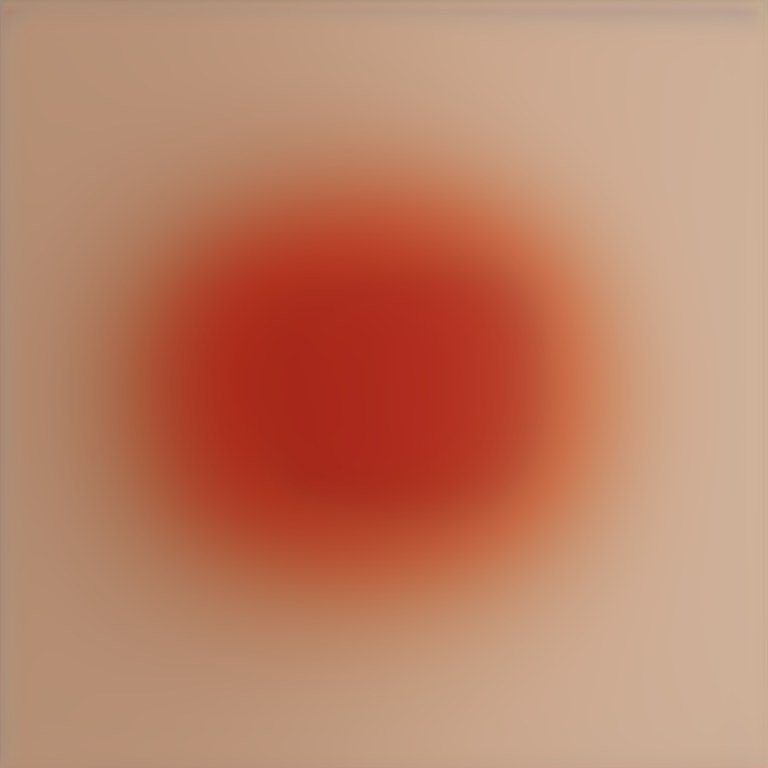}
    \end{minipage}
    \begin{minipage}[b]{0.09425\textwidth}
    \centering
    \includegraphics[width=\textwidth]{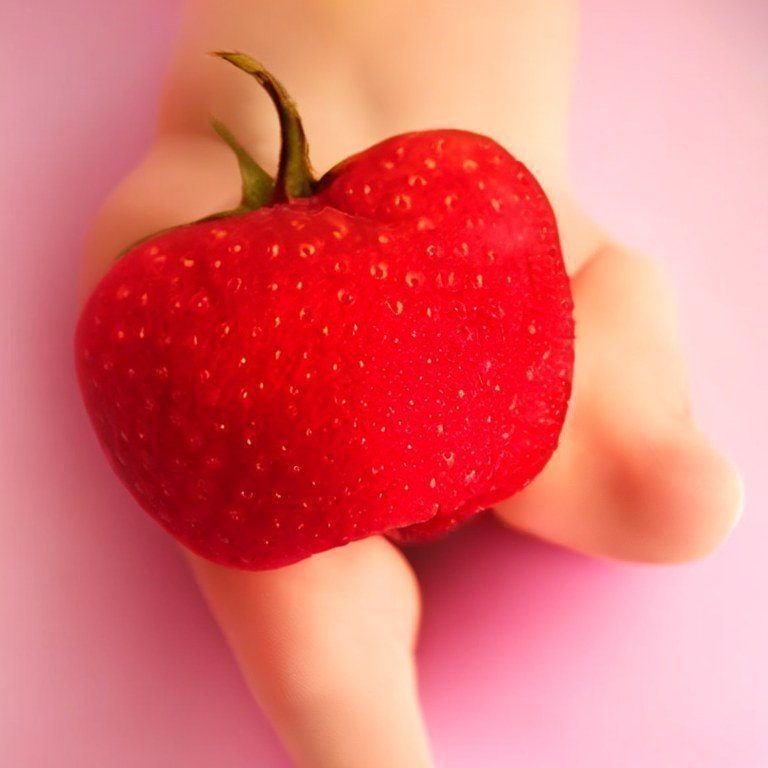}
    \end{minipage}
    \begin{minipage}[b]{0.09425\textwidth}
    \centering
    \includegraphics[width=\textwidth]{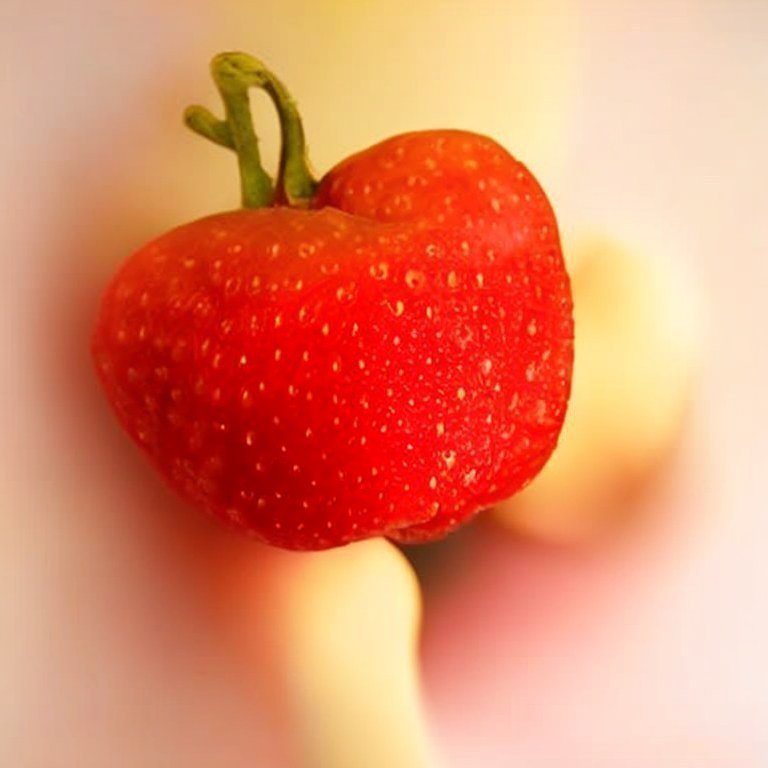}
    \end{minipage}
    \begin{minipage}[b]{0.09425\textwidth}
    \centering
    \includegraphics[width=\textwidth]{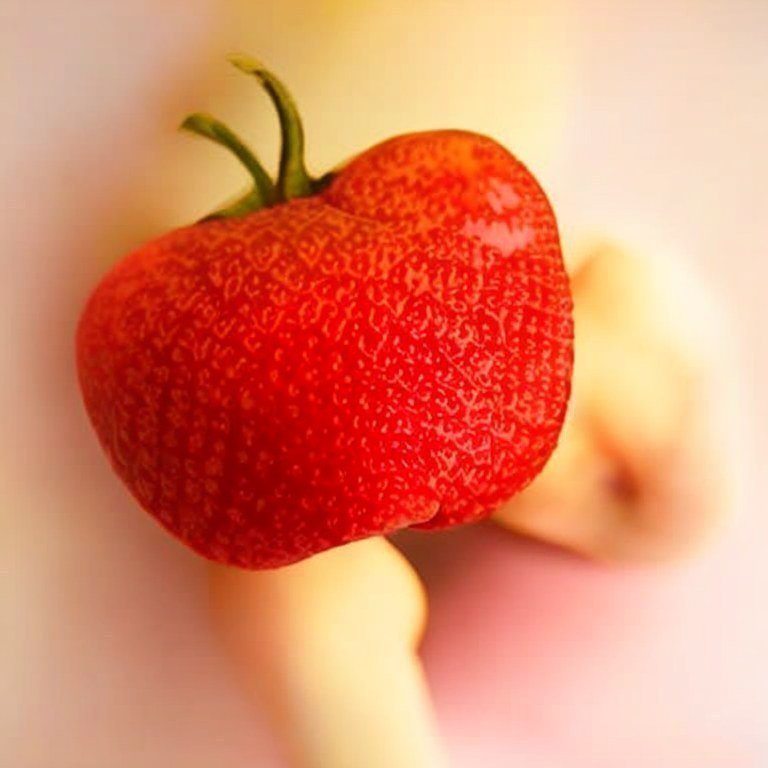}
    \end{minipage}
    \begin{minipage}[b]{0.09425\textwidth}
    \centering
    \includegraphics[width=\textwidth]{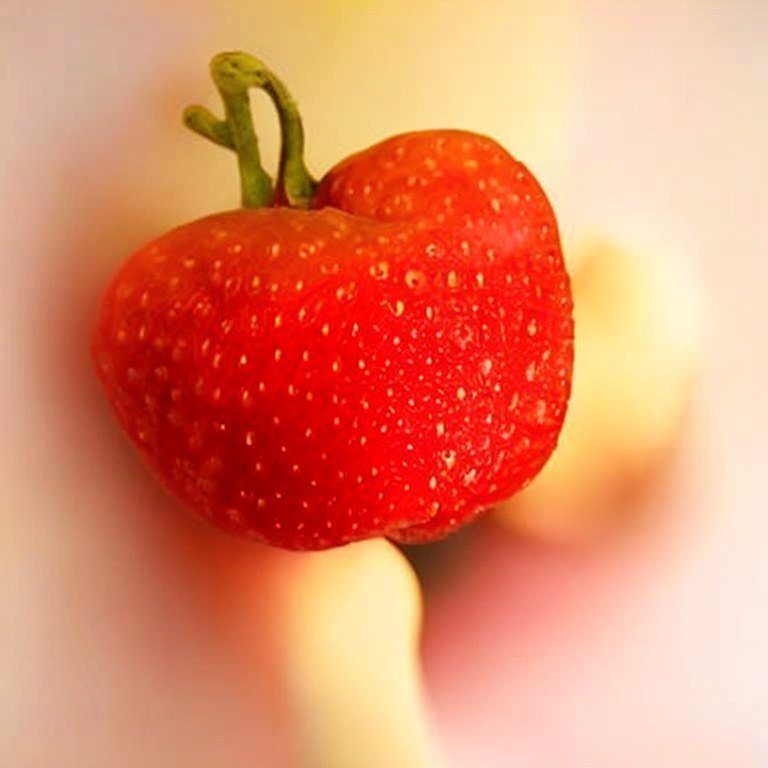}
    \end{minipage}
    \caption{
    \textbf{ImageNet centroid determination}.
    The centroid of the latents $\bm{x}_1$, $\bm{x}_2$, $\bm{x}_3$ as determined using the different methods, with the result shown in the respective (right-most) plot.
    The diffusion model Stable Diffusion 2.1~\citep{rombach2022high} is used in this example. 
    }
    \label{fig:toy_shop}
\end{figure}

\begin{figure}[ht]
    \centering
    \begin{minipage}[b]{0.09425\textwidth}
    \centering
    {\tiny $\bm{x}_1$}
    \includegraphics[width=\textwidth]{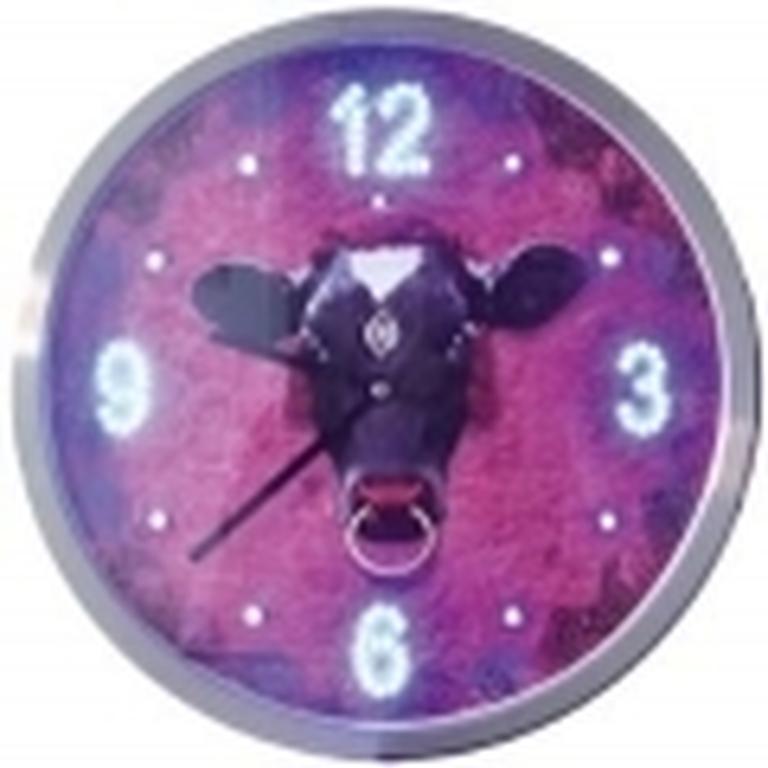}
    \end{minipage}
    \begin{minipage}[b]{0.09425\textwidth}
    \centering
    {\tiny $\bm{x}_3$}
    \includegraphics[width=\textwidth]{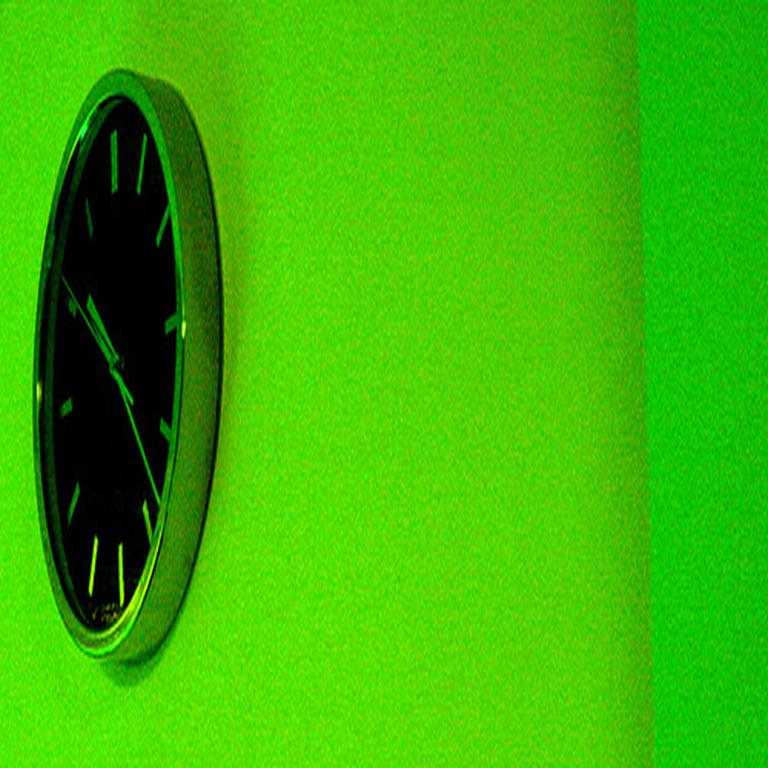}
    \end{minipage}
    \begin{minipage}[b]{0.09425\textwidth}
    \centering
    {\tiny $\bm{x}_5$}
    \includegraphics[width=\textwidth]{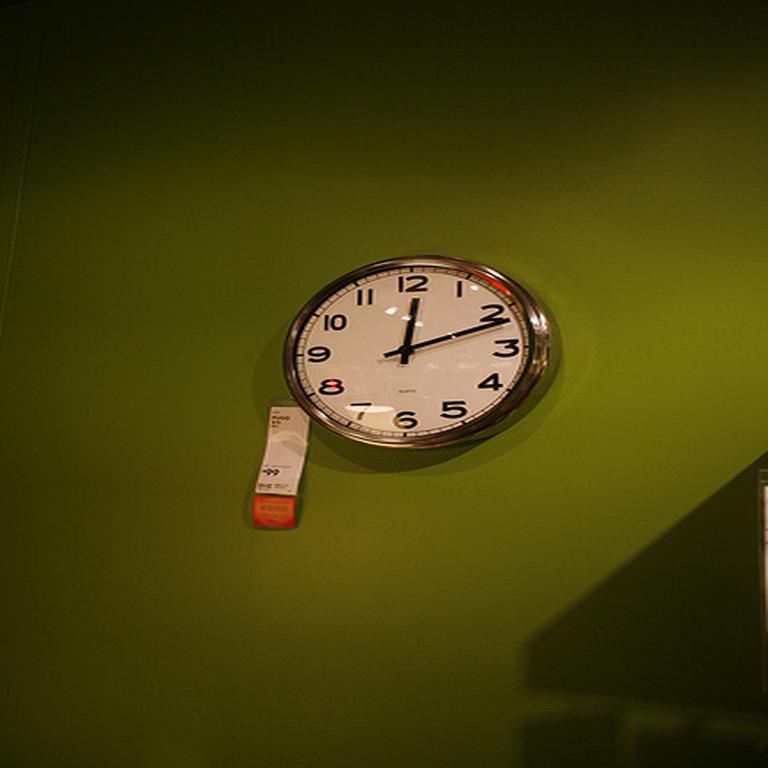}
    \end{minipage}
    \begin{minipage}[b]{0.09425\textwidth}
    \centering
    {\tiny $\bm{x}_7$}
    \includegraphics[width=\textwidth]{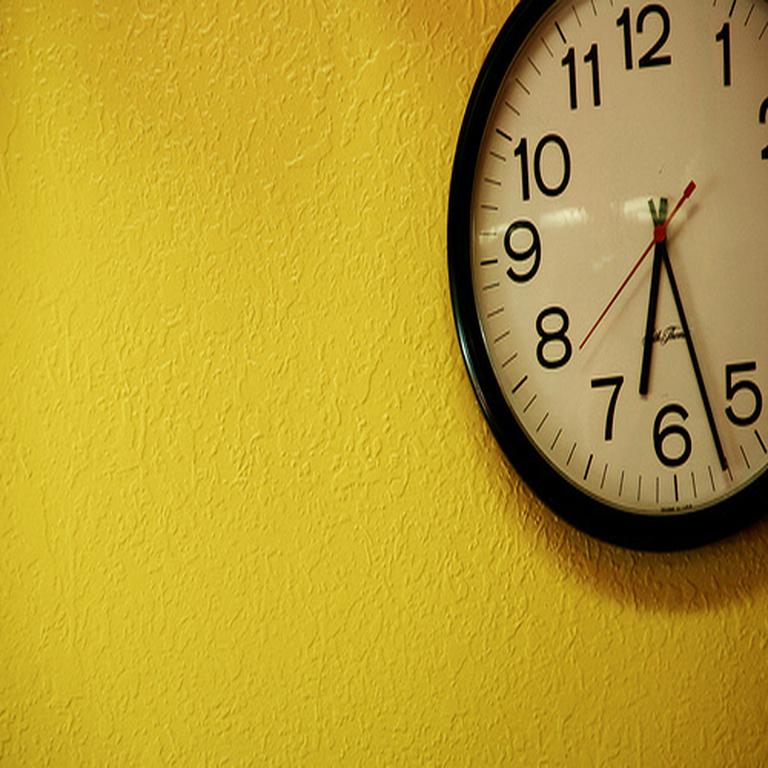}
    \end{minipage}
    \begin{minipage}[b]{0.09425\textwidth}
    \centering
    {\tiny $\bm{x}_{10}$}
    \includegraphics[width=\textwidth]{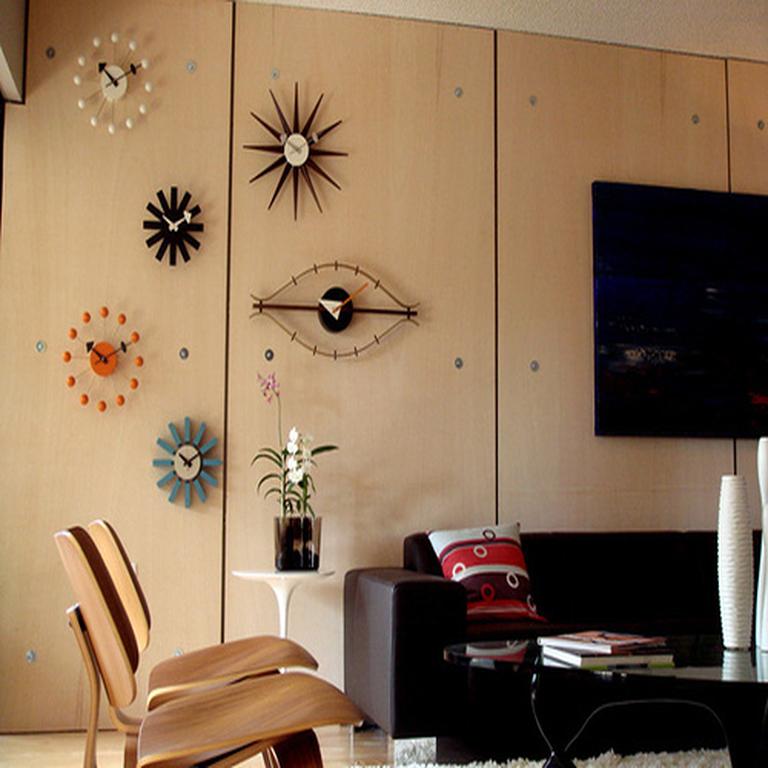}
    \end{minipage}
    \begin{minipage}[b]{0.09425\textwidth}
    \centering
    {\tiny Euclidean}
    \includegraphics[width=\textwidth]{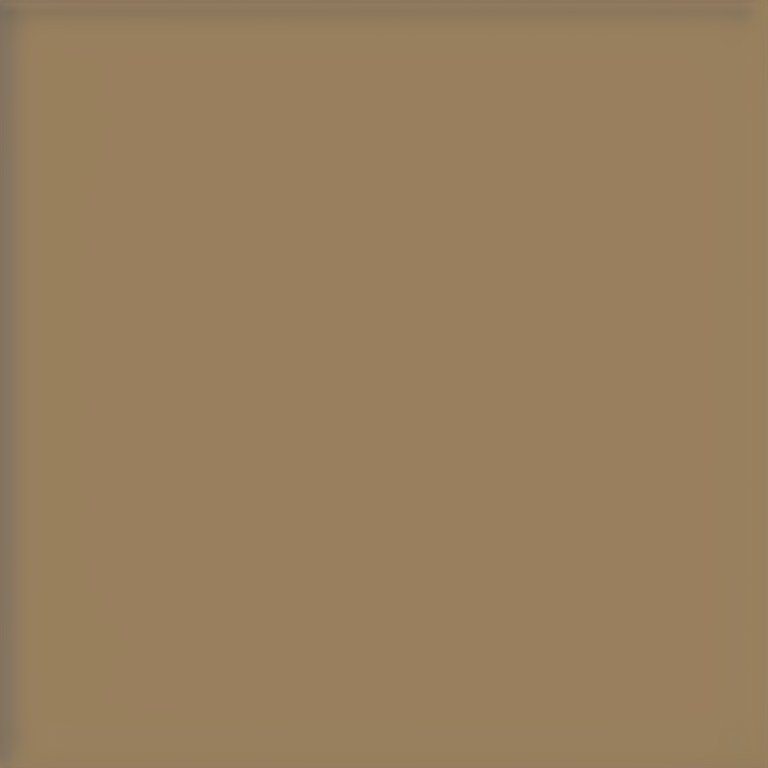}
    \end{minipage}
    \begin{minipage}[b]{0.09425\textwidth}
    \centering
    {\tiny S. Euclidean}
    \includegraphics[width=\textwidth]{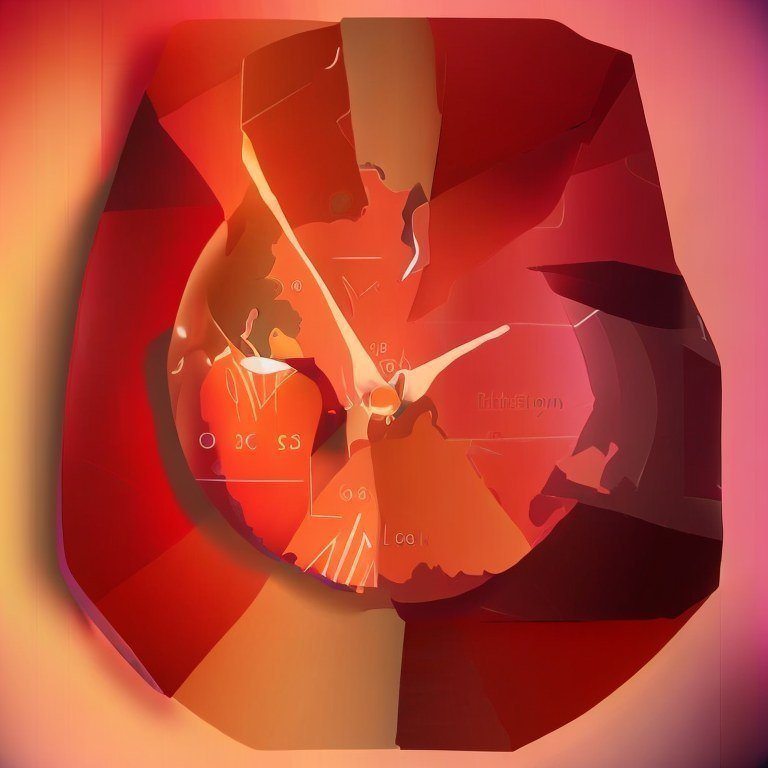}
    \end{minipage}
    \begin{minipage}[b]{0.09425\textwidth}
    \centering
    {\tiny M.n. Euclidean}
    \includegraphics[width=\textwidth]{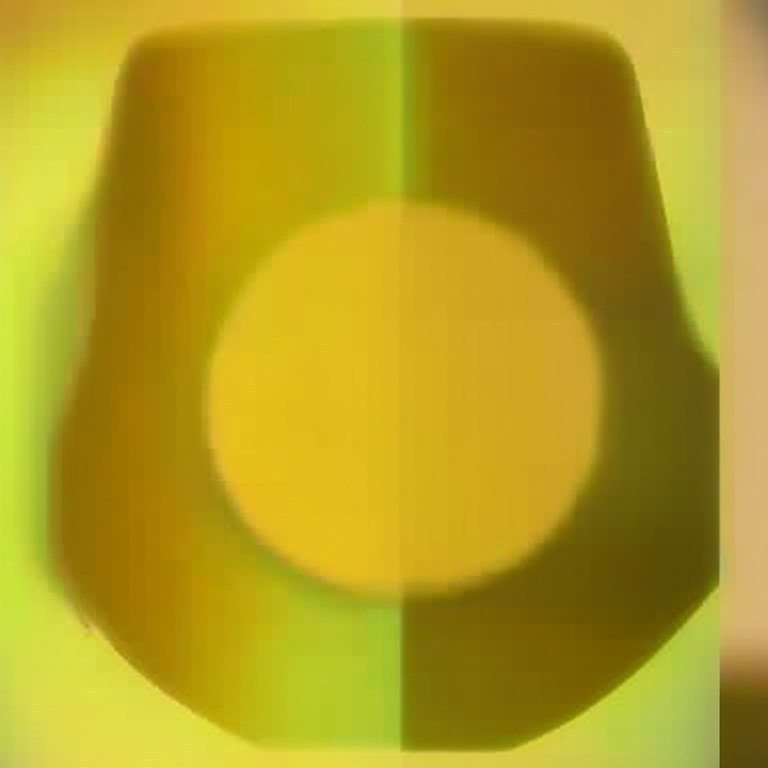}
    \end{minipage}
    \begin{minipage}[b]{0.09425\textwidth}
    \centering
    {\tiny NAO}
    \includegraphics[width=\textwidth]{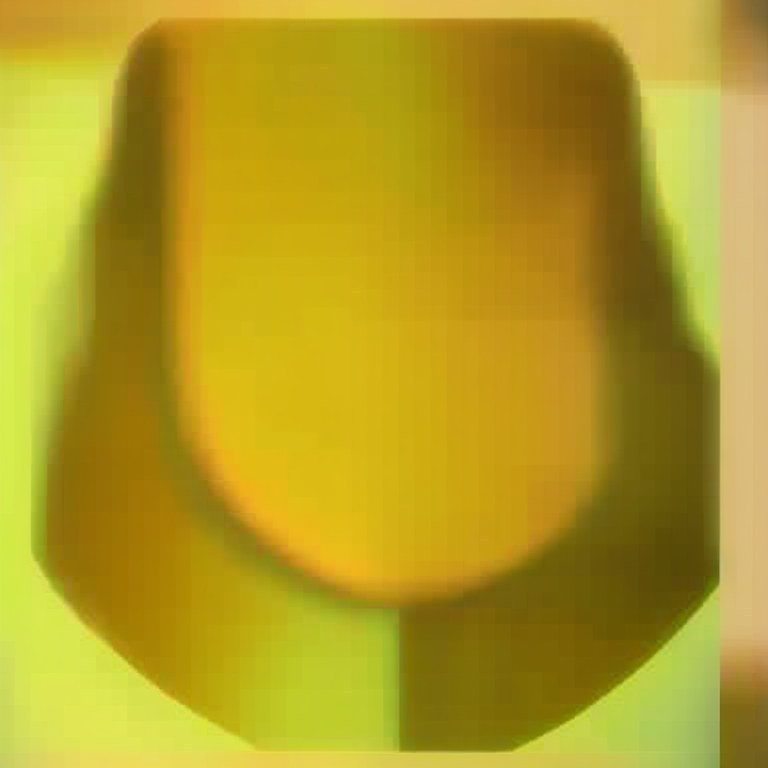}
    \end{minipage}
    \begin{minipage}[b]{0.09425\textwidth}
    \centering
    {\tiny LOL}
    \includegraphics[width=\textwidth]{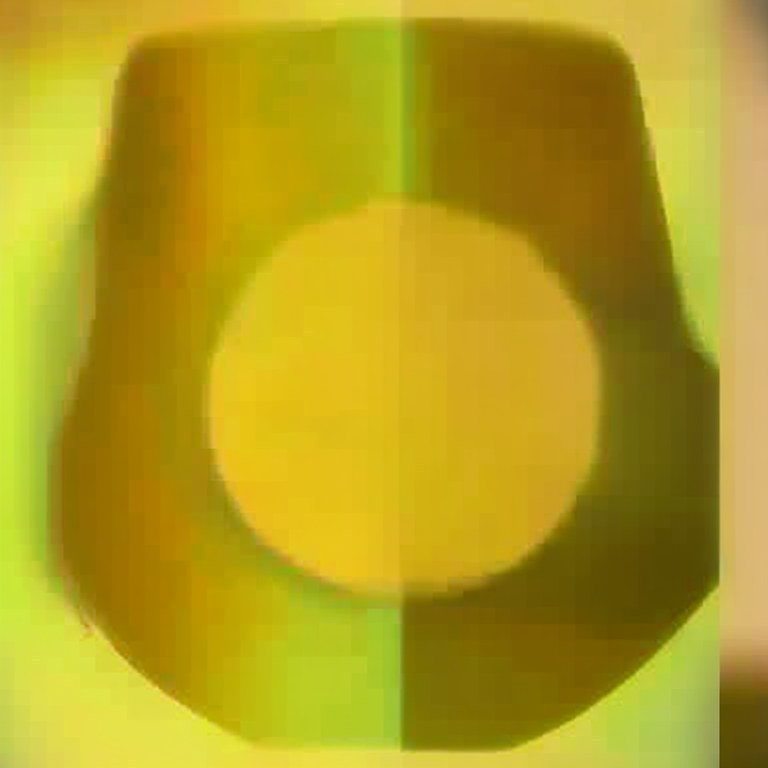}
    \end{minipage}
    \centering
    \begin{minipage}[b]{0.09425\textwidth}
    \centering
    \includegraphics[width=\textwidth]{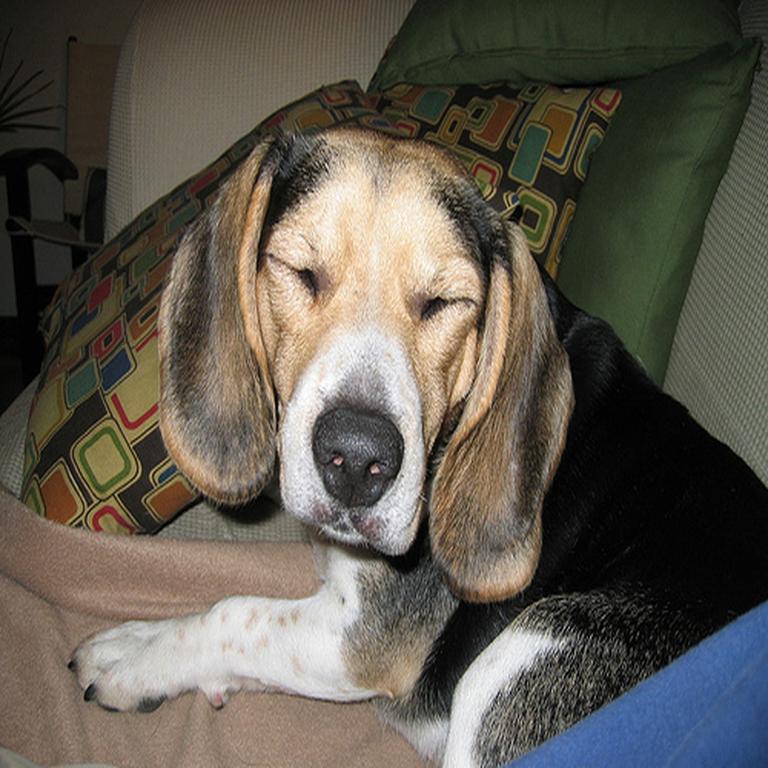}
    \end{minipage}
    \begin{minipage}[b]{0.09425\textwidth}
    \centering
    \includegraphics[width=\textwidth]{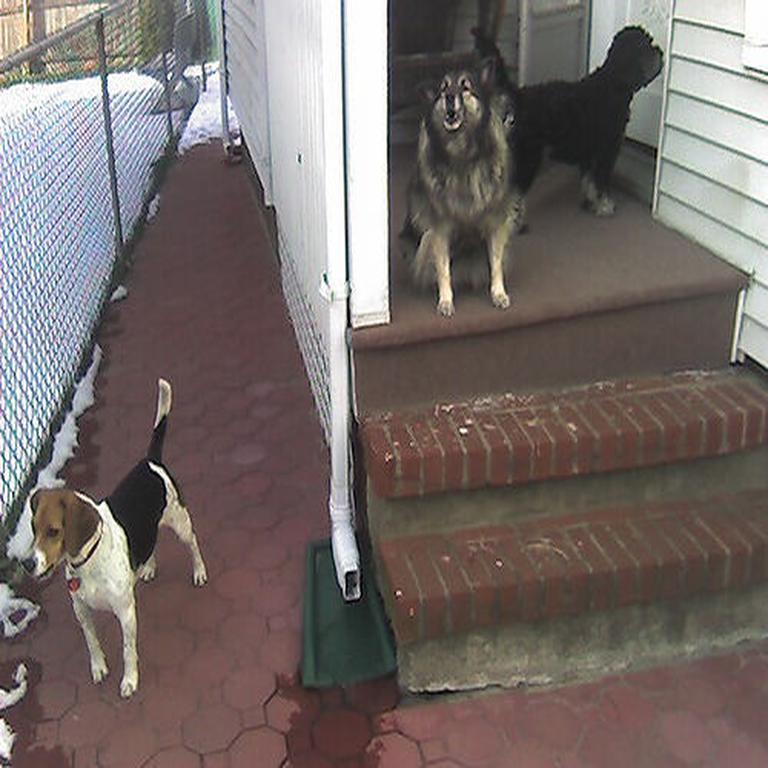}
    \end{minipage}
    \begin{minipage}[b]{0.09425\textwidth}
    \centering
    \includegraphics[width=\textwidth]{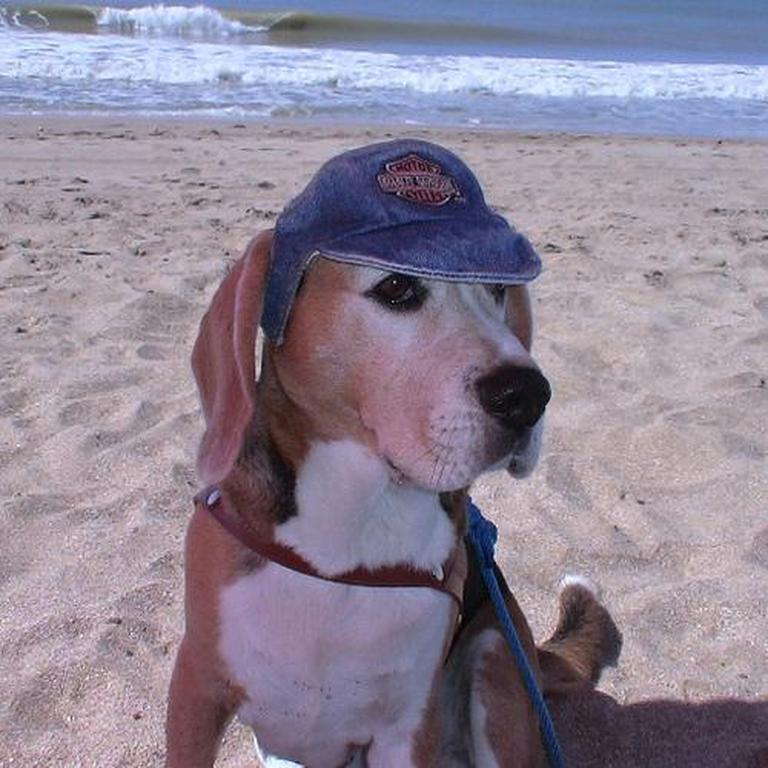}
    \end{minipage}
    \begin{minipage}[b]{0.09425\textwidth}
    \centering
    \includegraphics[width=\textwidth]{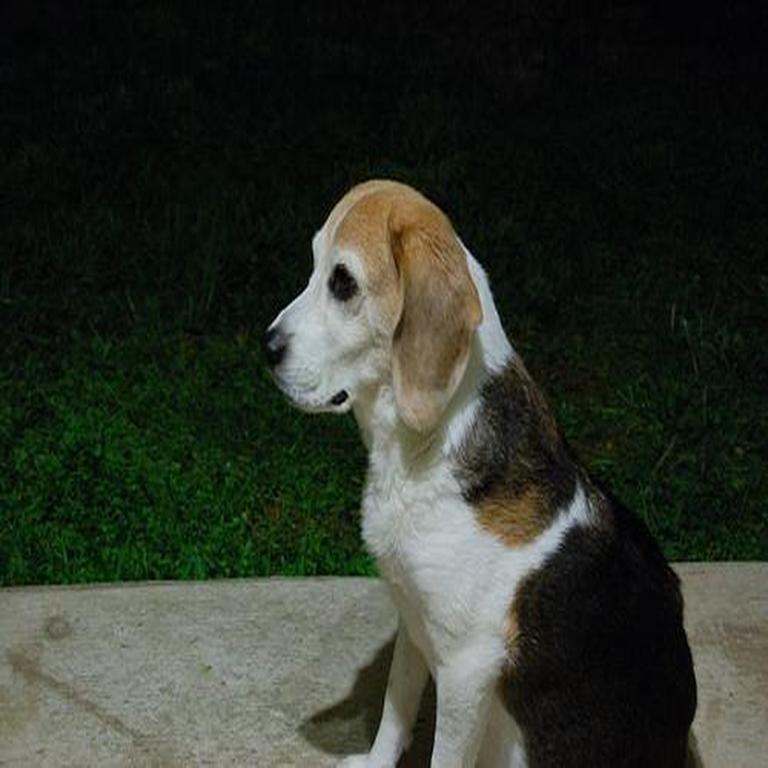}
    \end{minipage}
    \begin{minipage}[b]{0.09425\textwidth}
    \centering
    \includegraphics[width=\textwidth]{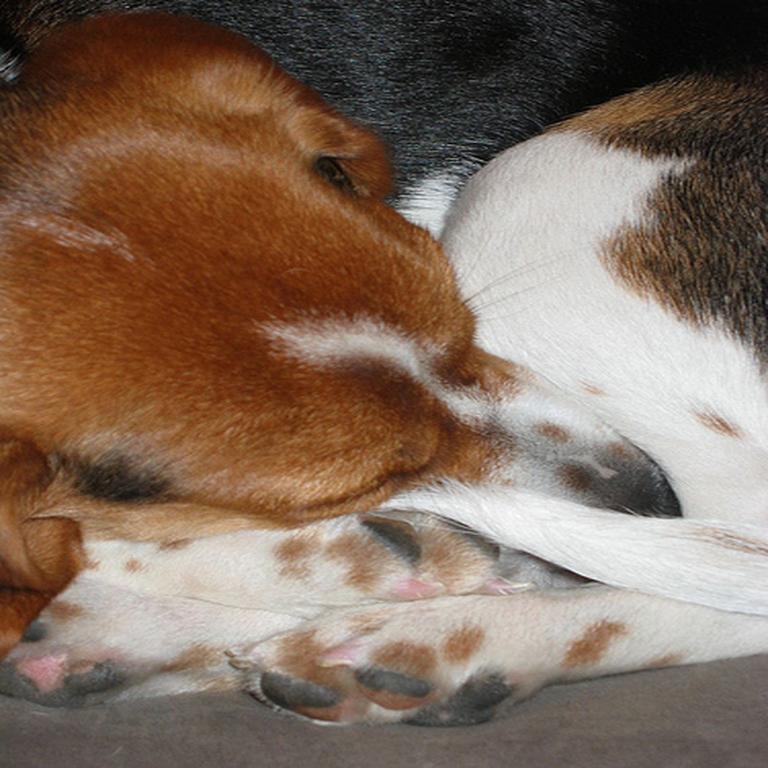}
    \end{minipage}
    \begin{minipage}[b]{0.09425\textwidth}
    \centering
    \includegraphics[width=\textwidth]{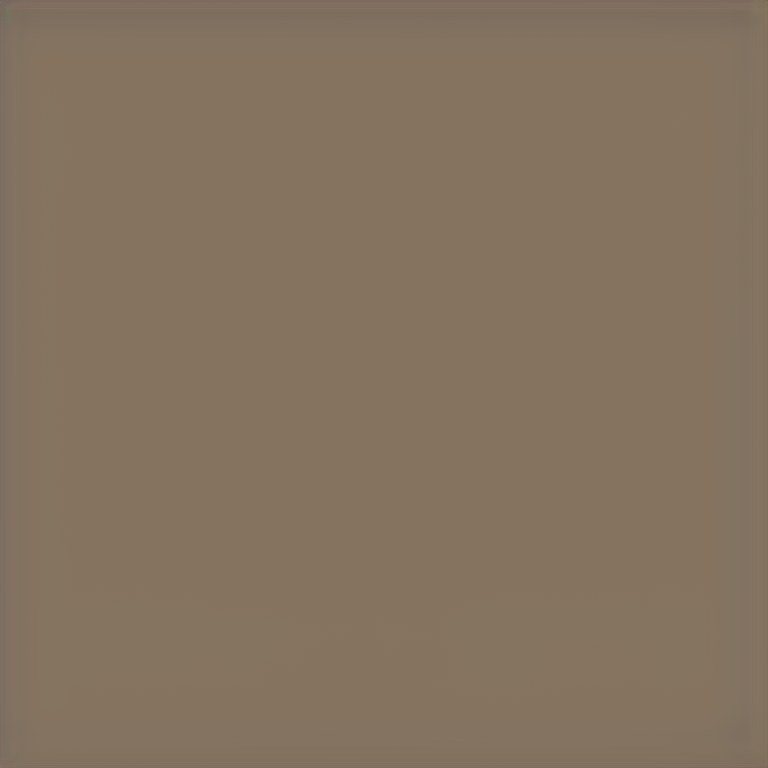}
    \end{minipage}
    \begin{minipage}[b]{0.09425\textwidth}
    \centering
    \includegraphics[width=\textwidth]{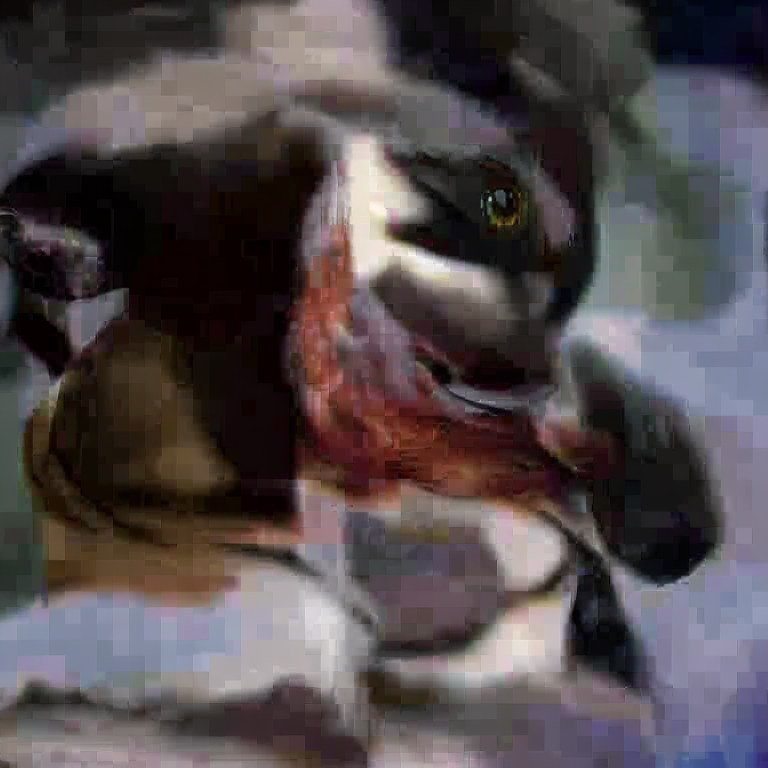}
    \end{minipage}
    \begin{minipage}[b]{0.09425\textwidth}
    \centering
    \includegraphics[width=\textwidth]{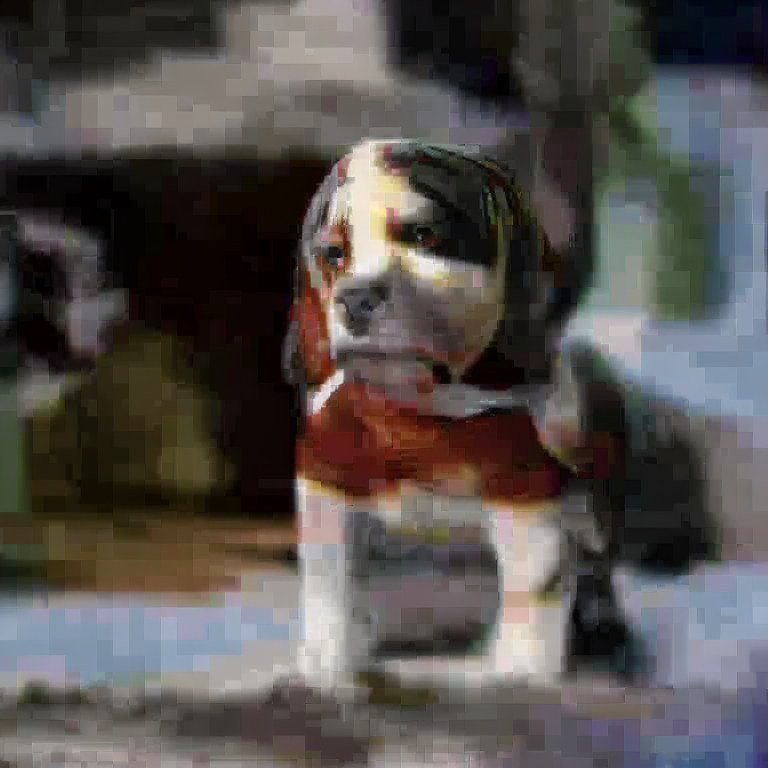}
    \end{minipage}
    \begin{minipage}[b]{0.09425\textwidth}
    \centering
    \includegraphics[width=\textwidth]{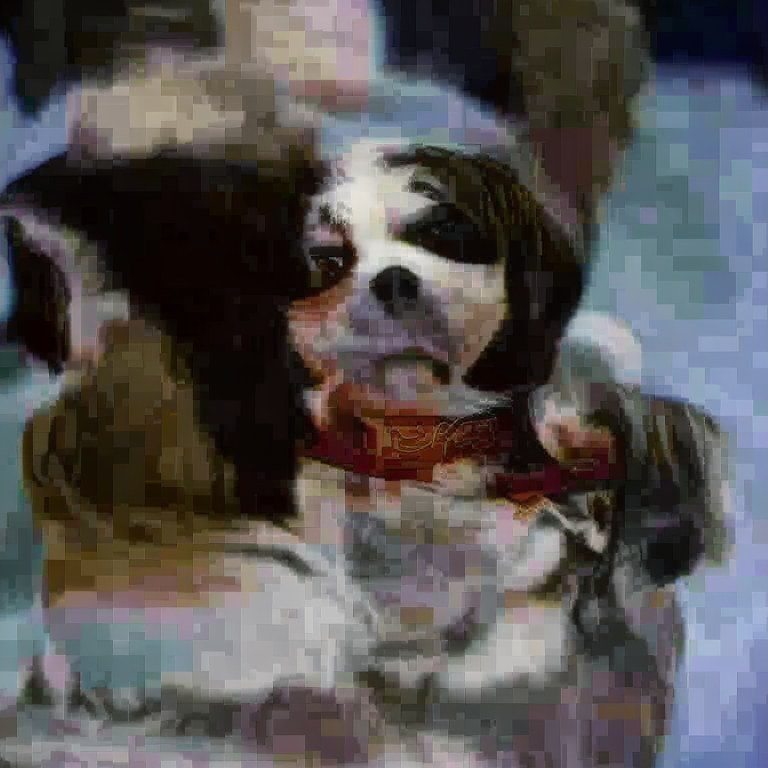}
    \end{minipage}
    \begin{minipage}[b]{0.09425\textwidth}
    \centering
    \includegraphics[width=\textwidth]{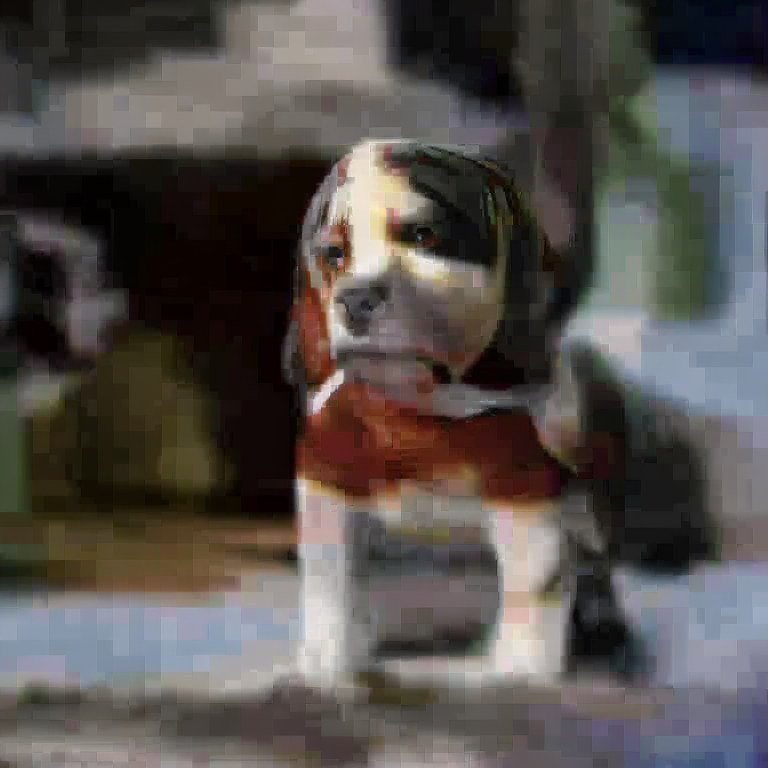}
    \end{minipage}
    \centering
    \begin{minipage}[b]{0.09425\textwidth}
    \centering
    {\tiny $\bm{x}_1$}
    \includegraphics[width=\textwidth]{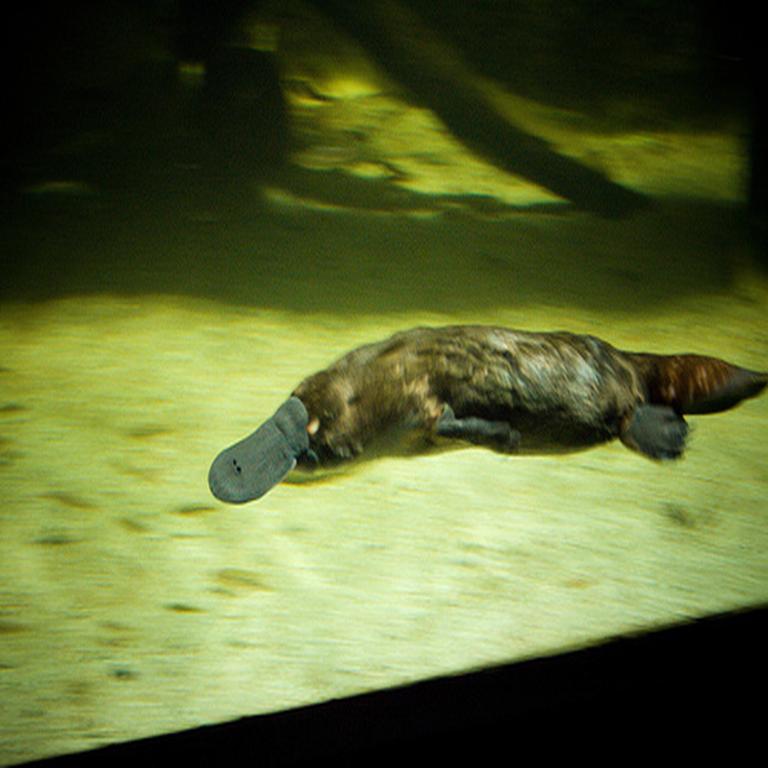}
    \end{minipage}
    \begin{minipage}[b]{0.09425\textwidth}
    \centering
    {\tiny $\bm{x}_5$}
    \includegraphics[width=\textwidth]{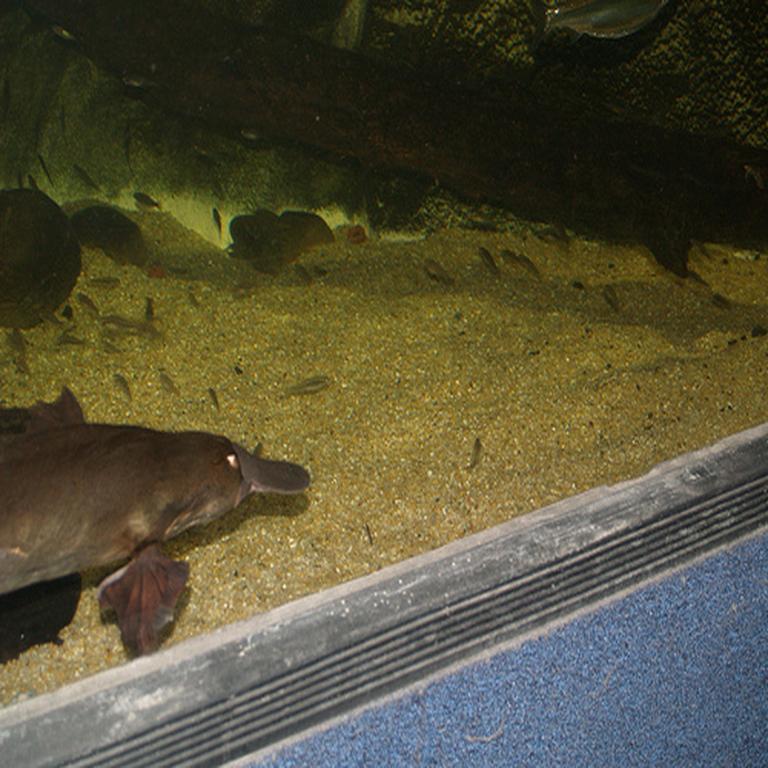}
    \end{minipage}
    \begin{minipage}[b]{0.09425\textwidth}
    \centering
    {\tiny $\bm{x}_{10}$}
    \includegraphics[width=\textwidth]{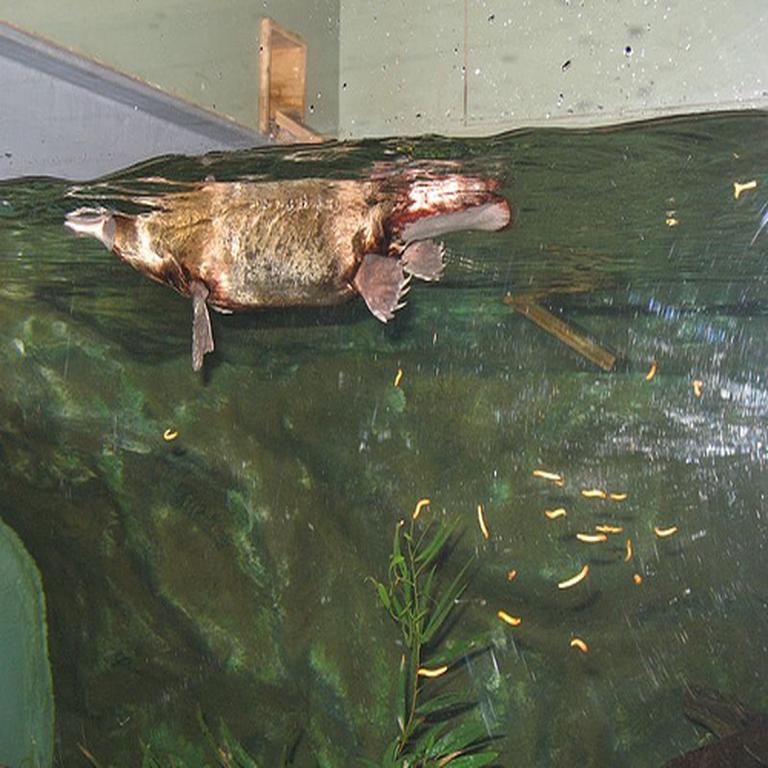}
    \end{minipage}
    \begin{minipage}[b]{0.09425\textwidth}
    \centering
    {\tiny $\bm{x}_{15}$}
    \includegraphics[width=\textwidth]{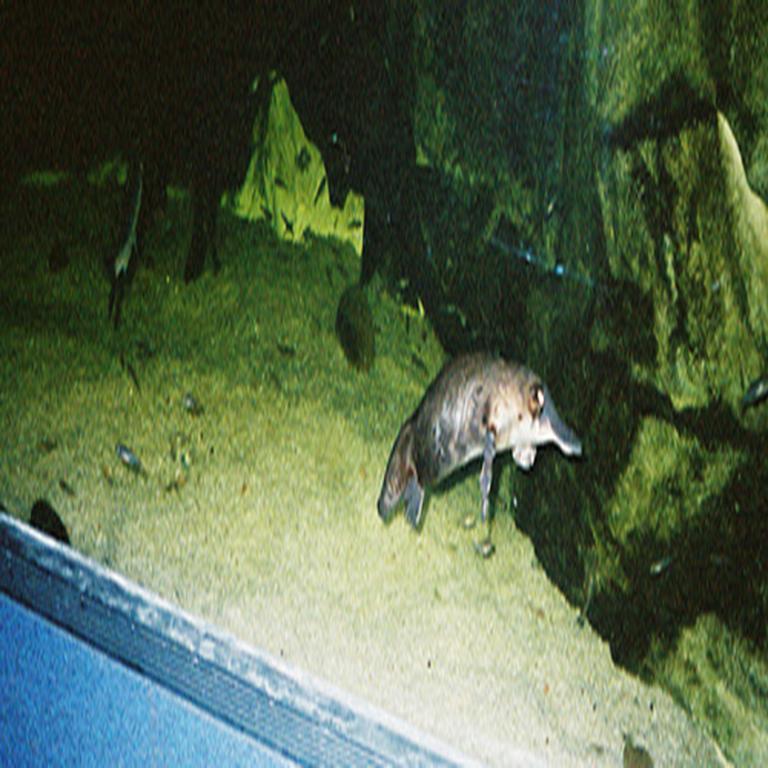}
    \end{minipage}
    \begin{minipage}[b]{0.09425\textwidth}
    \centering
    {\tiny $\bm{x}_{25}$}
    \includegraphics[width=\textwidth]{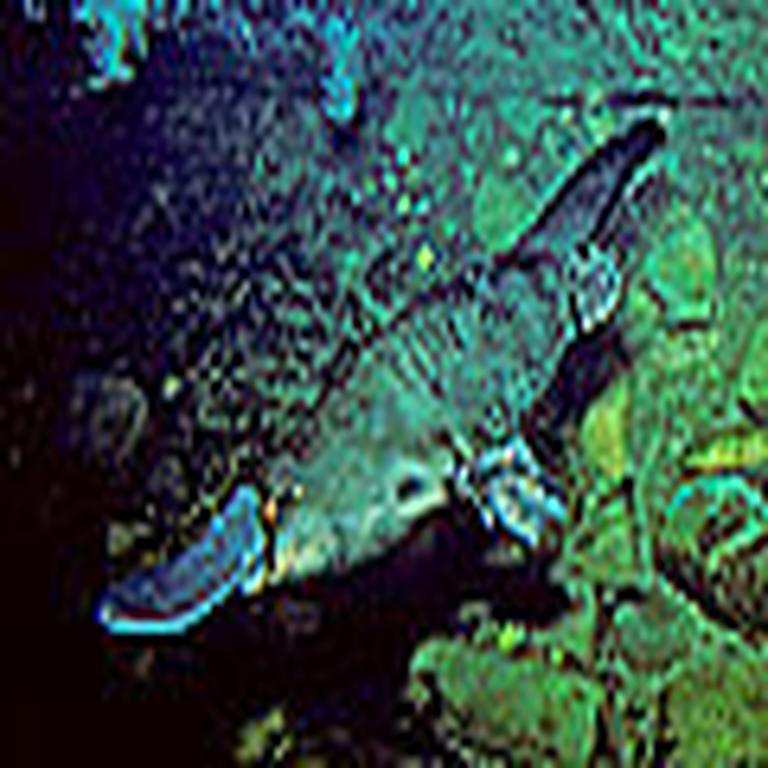}
    \end{minipage}
    \begin{minipage}[b]{0.09425\textwidth}
    \centering
    {\tiny Euclidean}
    \includegraphics[width=\textwidth]{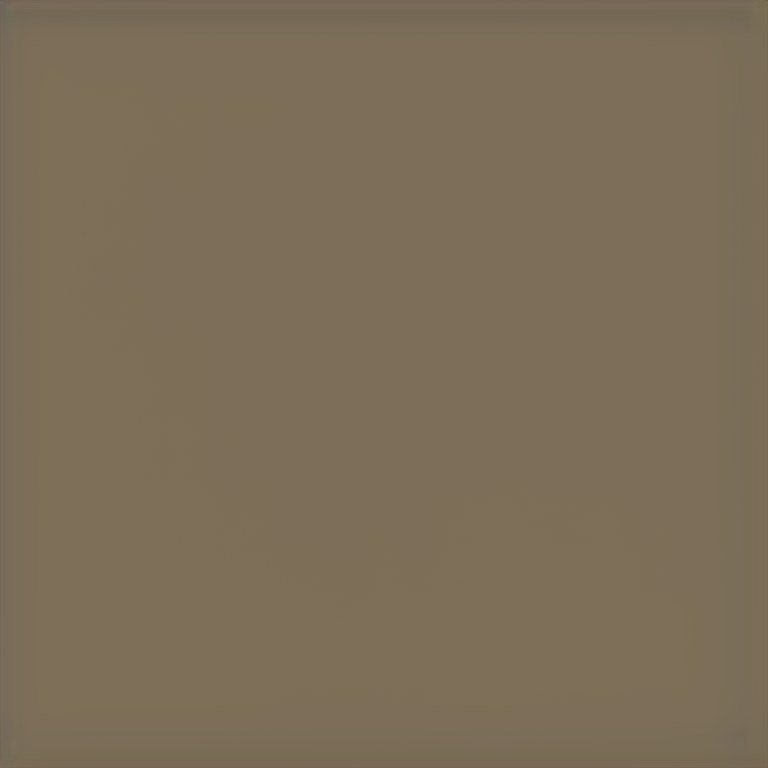}
    \end{minipage}
    \begin{minipage}[b]{0.09425\textwidth}
    \centering
    {\tiny S. Euclidean}
    \includegraphics[width=\textwidth]{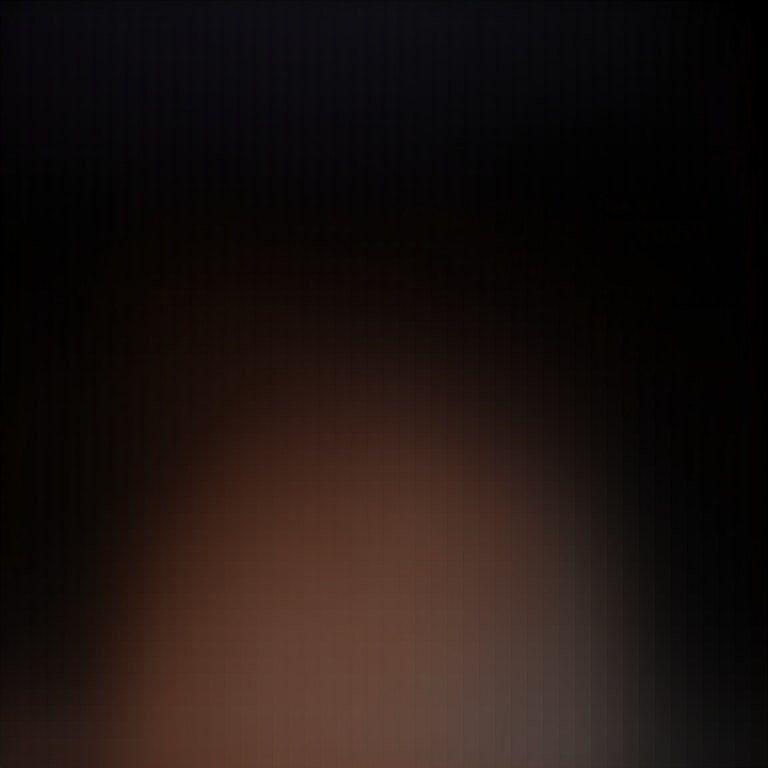}
    \end{minipage}
    \begin{minipage}[b]{0.09425\textwidth}
    \centering
    {\tiny M.n. Euclidean}
    \includegraphics[width=\textwidth]{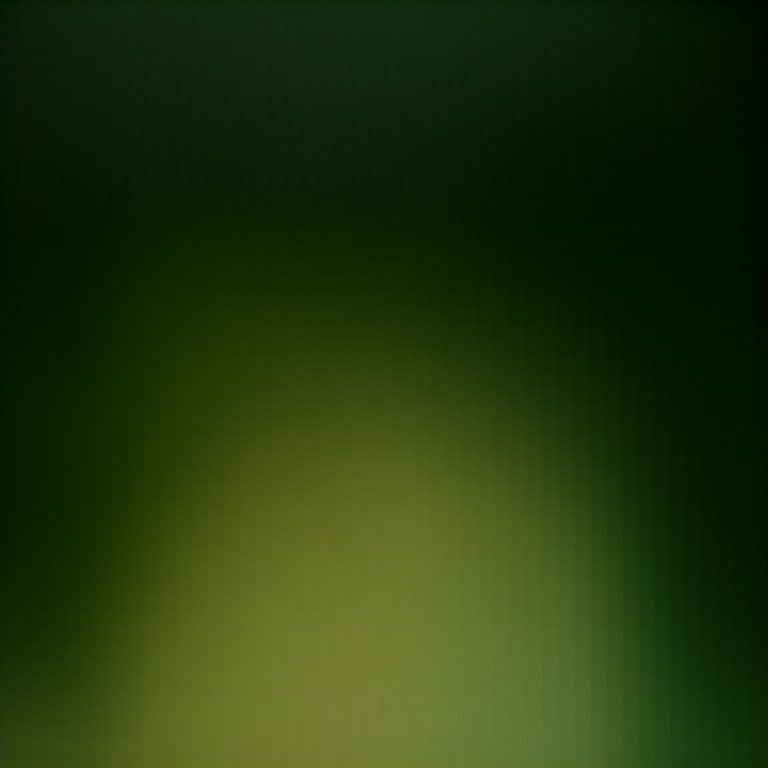}
    \end{minipage}
    \begin{minipage}[b]{0.09425\textwidth}
    \centering
    {\tiny NAO}
    \includegraphics[width=\textwidth]{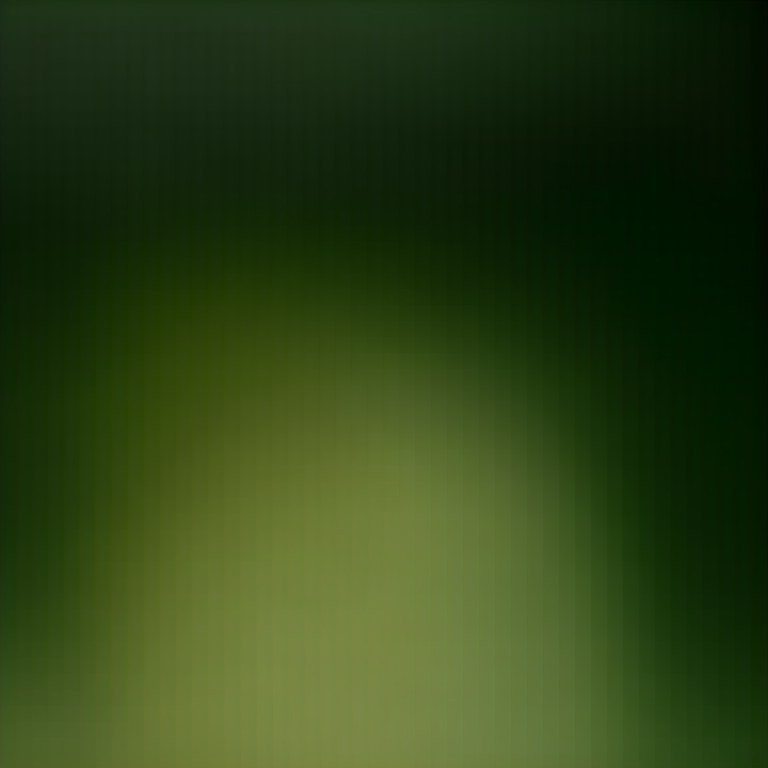}
    \end{minipage}
    \begin{minipage}[b]{0.09425\textwidth}
    \centering
    {\tiny LOL}
    \includegraphics[width=\textwidth]{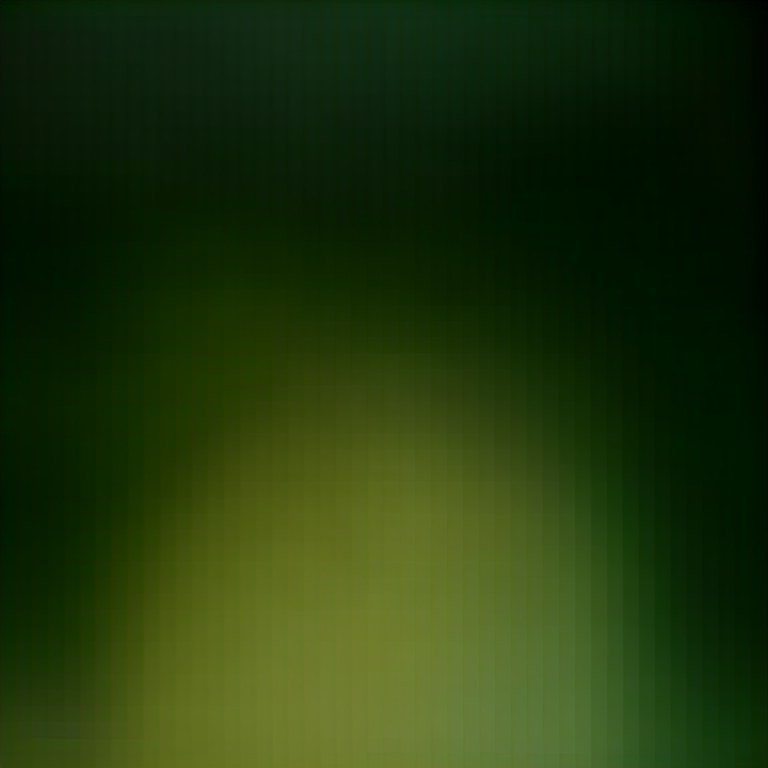}
    \end{minipage}
    \centering
    \begin{minipage}[b]{0.09425\textwidth}
    \centering
    \includegraphics[width=\textwidth]{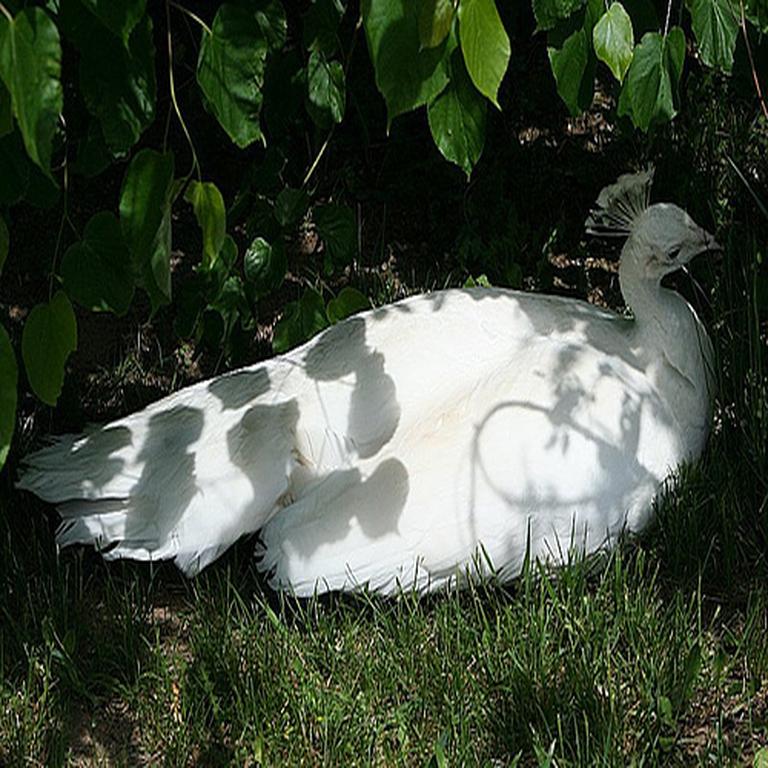}
    \end{minipage}
    \begin{minipage}[b]{0.09425\textwidth}
    \centering
    \includegraphics[width=\textwidth]{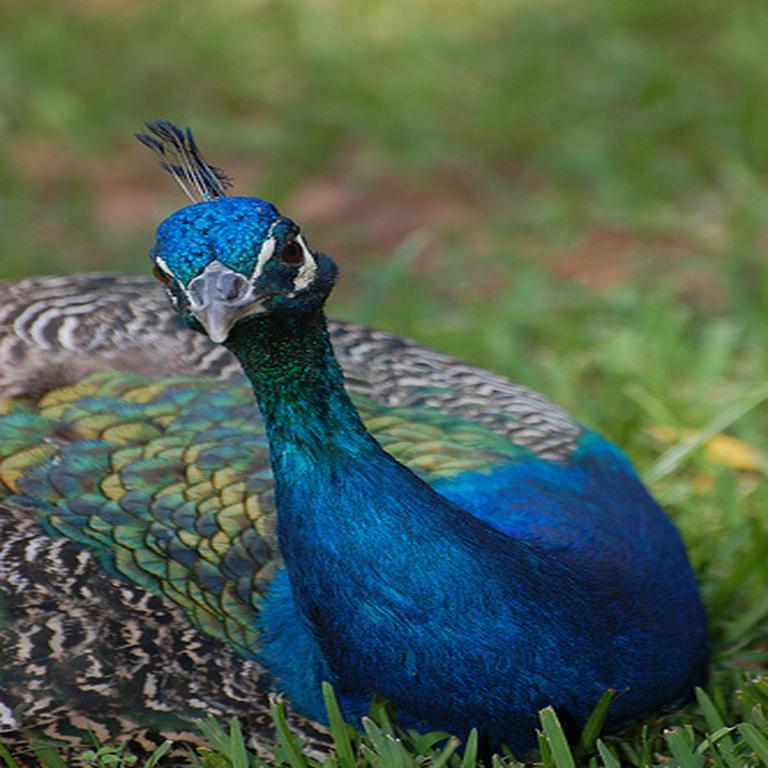}
    \end{minipage}
    \begin{minipage}[b]{0.09425\textwidth}
    \centering
    \includegraphics[width=\textwidth]{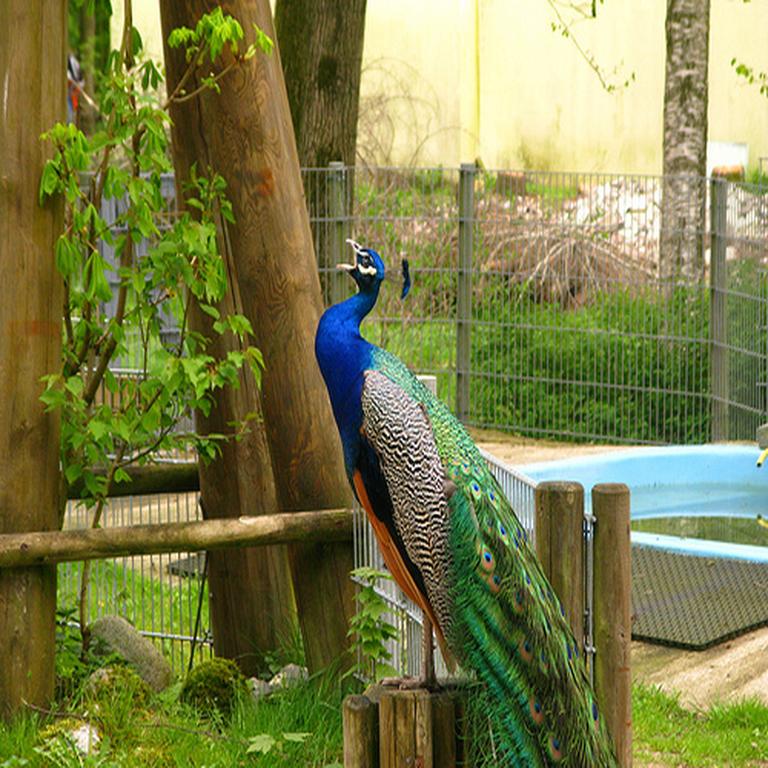}
    \end{minipage}
    \begin{minipage}[b]{0.09425\textwidth}
    \centering
    \includegraphics[width=\textwidth]{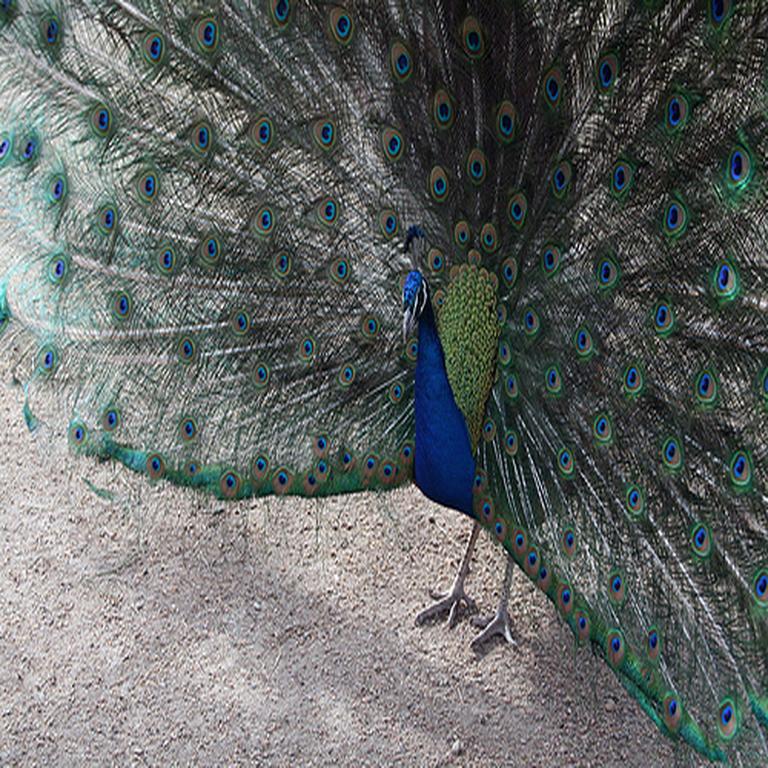}
    \end{minipage}
    \begin{minipage}[b]{0.09425\textwidth}
    \centering
    \includegraphics[width=\textwidth]{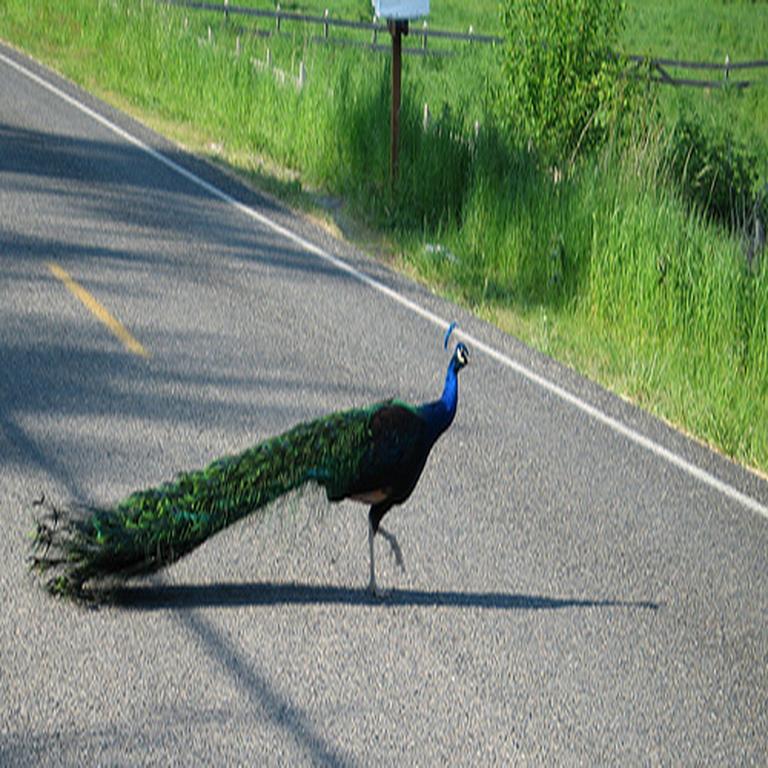}
    \end{minipage}
    \begin{minipage}[b]{0.09425\textwidth}
    \centering
    \includegraphics[width=\textwidth]{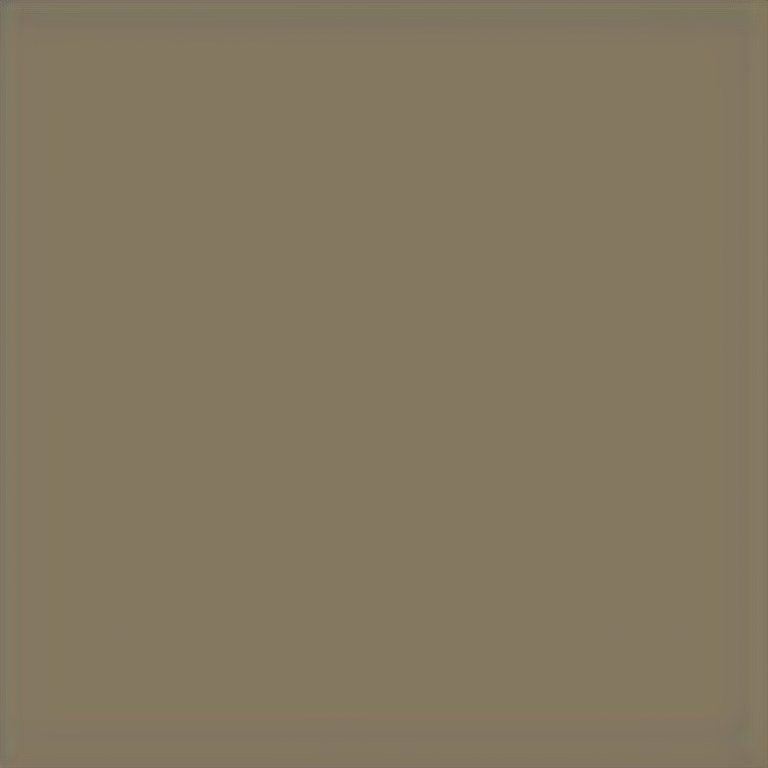}
    \end{minipage}
    \begin{minipage}[b]{0.09425\textwidth}
    \centering
    \includegraphics[width=\textwidth]{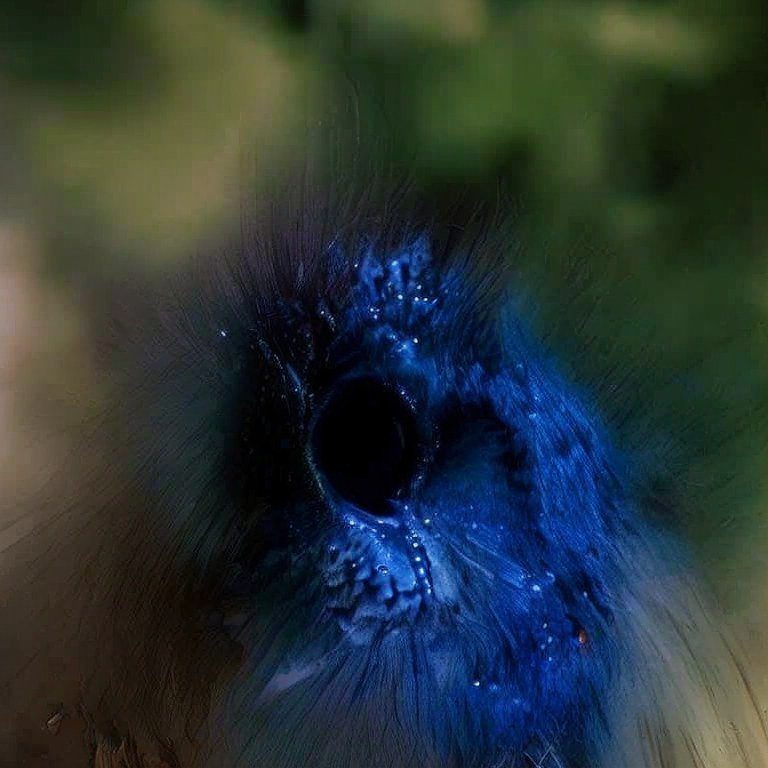}
    \end{minipage}
    \begin{minipage}[b]{0.09425\textwidth}
    \centering
    \includegraphics[width=\textwidth]{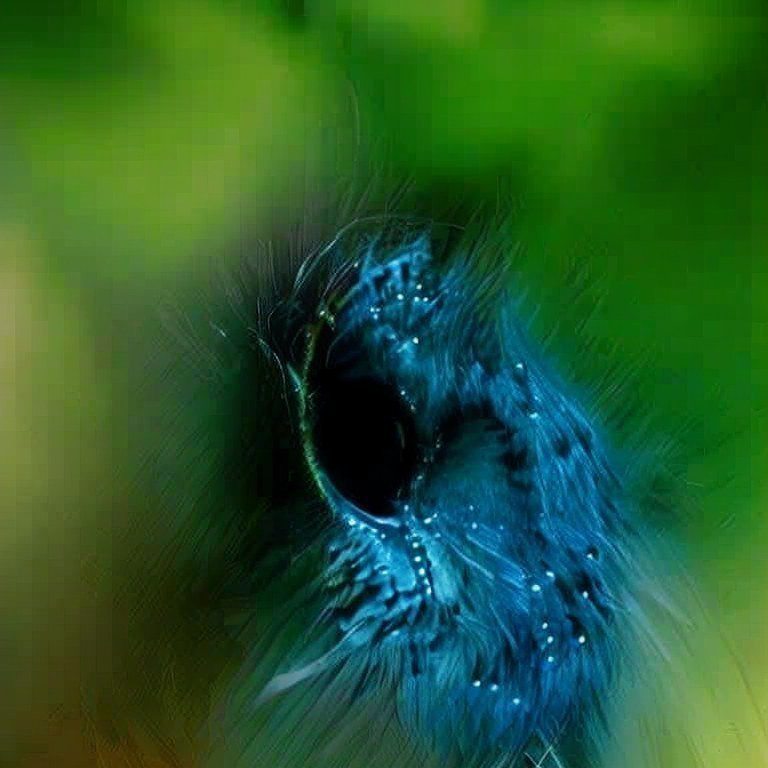}
    \end{minipage}
    \begin{minipage}[b]{0.09425\textwidth}
    \centering
    \includegraphics[width=\textwidth]{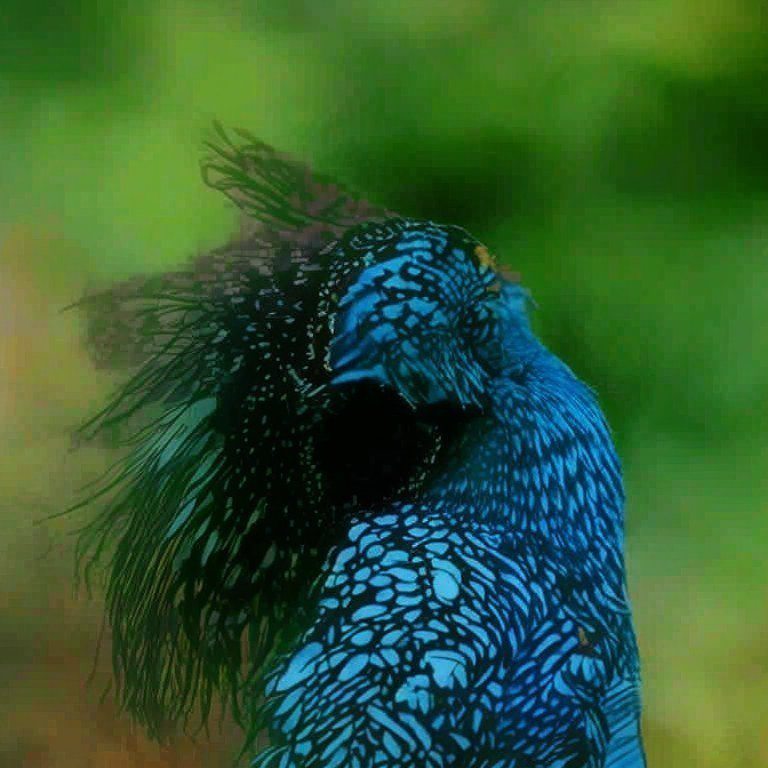}
    \end{minipage}
    \begin{minipage}[b]{0.09425\textwidth}
    \centering
    \includegraphics[width=\textwidth]{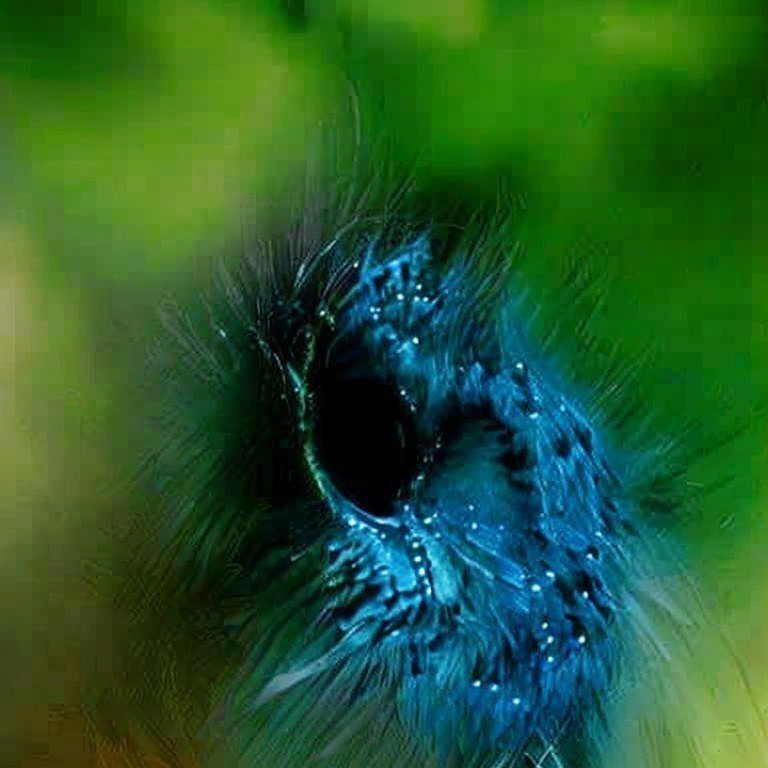}
    \end{minipage}
    \caption{
    \textbf{Centroid determination of many latents}.
    The centroid of the latents as determined using the different methods, with the result shown in the respective (right-most) plot.
    Ten and 25 latents are used in the first two and second two examples (rows), respectively, and the plots show a subset of these in the left-most plots.
    We noted during centroid determination of many latents (such as 10 and 25) that the centroids were often unsatisfactory using all methods; blurry, distorted etc. For applications needing to find meaningful centroids of a large number of latents, future work is needed.  
    The diffusion model Stable Diffusion 2.1~\citep{rombach2022high} is used in this example. 
    }
    \label{fig:wall_clock}
\end{figure}

\end{document}